\def\BibTeX{{\rm B\kern-.05em{\sc i\kern-.025em b}\kern-.08em
    T\kern-.1667em\lower.7ex\hbox{E}\kern-.125emX}}
\def\final{1}  % set this to 1 to get a comment-free version
\newcommand{\lnote}[1]{[{\small Luis: \bf #1}]}
\newcommand{\jnote}[1]{[{\small Joe: \bf #1}]}
\newcommand{\vnote}[1]{[{\small Navin: \bf #1}]}
\newcommand{\nnote}[1]{[{\small Navin: \bf #1}]}
\newcommand{\anonnote}[1]{[{\small anon: \bf #1}]}
\newcommand{\sidecomment}[1]{\marginpar{\tiny #1}}
\newcommand{\details}[1]{{\color{blue}\ [[#1]] }}
\newcommand{\lnote}[1]{}
\newcommand{\jnote}[1]{}
\newcommand{\anonnote}[1]{}
\newcommand{\vnote}[1]{}
\newcommand{\nnote}[1]{}
\newcommand{\sidecomment}[1]{}
\newcommand{\details}[1]{}
\newcommand{\myparagraph}[1]{\paragraph{#1}}
\newenvironment{proofidea}{\noindent{\textit{Proof idea.}}}{\hfill$\square$\medskip}
\renewcommand{\dim}{{n}}
\newcommand{\an}{an}
\newcommand{\RR}{\ensuremath{\mathbb{R}}}
\newcommand{\QQ}{\ensuremath{\mathbb{Q}}}
\newcommand{\ones}{\ensuremath{\mathbbm{1}}}
\newcommand{\expectation}{\operatorname{\mathbb{E}}}
\newcommand{\e}{\expectation}
\newcommand{\conv}{\operatorname{conv}}
\newcommand{\suchthat}{\mathrel{:}}
\newcommand{\norm}[1]{{\lVert#1\rVert}}
\newcommand{\diag}{\operatorname{diag}}
\newcommand{\eps}{\epsilon}
\newcommand{\ind}{\mathbbm{1}}
\newcommand{\cov}{\operatorname{Cov}}
\newcommand{\density}{\rho}
\newcommand{\stable}{\operatorname{Stable}}
\newcommand{\optionalbreak}{}
\newcommand{\poly}{\operatorname{poly}}
\newcommand{\polylog}{\operatorname{polylog}}
\newcommand{\abs}[1]{\lvert#1\rvert}
\newcommand{\lrabs}[1]{\left \lvert #1 \right \rvert}
\newcommand{\vol}{\operatorname{vol}}
\newcommand{\expdist}[1]{\mathrm{Exp({#1})}}
\newcommand{\noname}[1]{}
\newcommand{\polar}{\circ}
\newcommand{\dimension}{n}
\newcommand{\measure}{\mathbb{P}}
\newcommand{\pr}{\mathbb{P}}
\newcommand{\inner}[2]{\langle{#1},{#2}\rangle}
\newcommand{\E}{\mathbb{E}}
\newcommand{\R}{\mathbb{R}}
\newcommand{\var}{\operatorname{var}}
\newcommand{\Var}{\var}
\newcommand{\cum}{\kappa}
\newcommand{\centroid}{\Gamma}
\newcommand{\hX}{\hat{X}}
\newcommand{\hE}{\hat{E}}
\newcommand{\deq}{\overset d =}
\newcommand{\giventhat}{\mid}
\newcommand{\ud}{\mathop{}\!\mathrm{d}}
\def\reals{\mathbb{R}}
\def\R{\reals}
\def\N{\mathbb{N}}
\def\eps{\epsilon}
\def\poly{\mathrm{poly}}
\def\Z{\mathbb{Z}}
\def\argmax{\mathrm{argmax}}
\def\argmin{\mathrm{argmin}}
\def\Var{\mathrm{Var}}
\def\Poisson{\mathsf{Poisson}}
\def\Pois{\Poisson}
\def\Bin{\mathrm{Bin}}
\def\Multinom{\mathrm{Multinom}}
\def\Normal{\mathcal{N}}
\def\Norm{\Normal}
\def \deq{\overset d =}
\def\m{\mathrm{m}}
\def\ds{\rule{0pt}{1.5ex}}
\DeclareMathOperator{\sign}{sgn}
\newcommand{\spn}[1]{\mathrm{span}\left(#1\right)}
\newcommand{\sgn}[1]{\mathrm{sgn}\left( #1 \right)}
\renewcommand{\det}[1]{\mathrm{det}\left( #1 \right)}
\renewcommand{\vec}[1]{\mathrm{vec}\left( #1 \right)}
\newcommand{\angles}[1]{\left\langle #1 \right\rangle}
\newcommand{\dist}{\mathrm{dist}}
\newcommand{\Vr}[1]{\mathrm{Var}\left(#1\right)}
\newcommand{\Conjug}[1]{}
\newtheorem*{claim*}{Claim}
\renewcommand{\d}{\operatorname{d}}
\newcommand{\means}{m}
\DeclareRobustCommand{\stirling}{\genfrac\{\}{0pt}{}}
\newcommand{\prob}[1]{{P}\left(#1\right)}
\newcommand{\cumtns}[2]{\ensuremath{\kappa_{#1}^{#2}}}
\declaretheorem[name=Theorem,numberwithin=section]{theorem}
\newtheorem{lemma}{Lemma}
\newtheorem{proposition}{Proposition}
\newtheorem{corollary}{Corollary}
\newtheorem{definition}{Definition}
\newtheorem{problem}{Problem}
\newif\iflong
\newif\ifshort
\newif\ifextras
\begin{document}

%
% First, declare the parts of your title page 
%

\author{Joseph Timothy Anderson}
\title{Geometric Methods for Robust Data Analysis in High Dimension}
\authordegrees{B.S, M.S}  % Degrees thus far, not including this one.
\unit{Department of Computer Science and Engineering}

\coadvisorname{Luis Rademacher}
\coadvisorname{Anastasios Sidiropolous}
\member{Mikhail Belkin}
\member{Facundo M\'emoli}
%\member{Yet another dude}      % Normally you will have advisor + 2 members

%
% The following creates the title page
%

\maketitle

%
% The following creates a page used to copyright your dissertation
% (This is optional and depends on the traditions of your dept.)
%

\disscopyright

%
% Abstract goes here.
%

\begin{abstract}
  %&latex
%  The dissertation abstract can only be 350 words.

%\startsinglespace

Data-driven applications are growing.
%In recent years, machine learning and artificial intelligence have become integrated into mainstream technology, science, and business domains.
Machine learning and data analysis now finds both scientific and industrial application in biology, chemistry, geology, medicine, and physics.
These applications rely on large quantities of data gathered from automated sensors and user input.
Furthermore, the \emph{dimensionality} of many datasets is extreme: more details are being gathered about single user interactions or sensor readings.
%People, their behaviors, and every kind of natural phenomenon are scrutinized and digitized as points in high dimensions, to be stored and analyzed later by politicians, marketers, and scientists.
%The goal is to extract meaning and patterns from our data and to apply the learned information to other subjects or situations.
%Biologists investigate new genes through data encoding, financiers strive to interpolate and capitalize on market behavior, physicists detect patterns in radiation and magnetic fields, and software engineers extract both human and software behavior through modeling data gathered from real-world sources.
All of these applications encounter problems with a common theme: \emph{use observed data to make inferences about the world}.
%Machine learning is rapidly gaining importance and there have been many exciting advances recently in learning algorithms (that is, algorithms whose goal is to analyze sets of data and find predictive patterns).
Our work obtains the first provably efficient algorithms for Independent Component Analysis (ICA) in the presence of heavy-tailed data.
The main tool in this result is the centroid body (a well-known topic in convex geometry), along with optimization and random walks for sampling from a convex body.
This is the first algorithmic use of the centroid body and it is of independent theoretical interest, since it effectively replaces the estimation of covariance from samples, and is more generally accessible.
We demonstrate that ICA is itself a powerful geometric primitive.
That is, having access to an efficient algorithm for ICA enables us to efficiently solve other important problems in machine learning.
The first such reduction is a solution to the open problem of efficiently learning the intersection of $\dim+1$ halfspaces in $\RR^\dim$, posed in \cite{FJK}.
This reduction relies on a non-linear transformation of samples from such an intersection of halfspaces (i.e. a \emph{simplex}) to samples which are approximately from a linearly transformed product distribution.
Through this transformation of samples, which can be done efficiently, one can then use an ICA algorithm to recover the vertices of the intersection of halfspaces.
%At this point, we treat the class of solutions to the ICA problem as an algorithmic primitive and demonstrate that in this way, if one has an efficient solution to solve ICA, then one can learn a simplex efficiently, needing only a simple pre-processing of the input samples.

Finally, we again use ICA as an algorithmic primitive to construct an efficient solution to the widely-studied problem of learning the parameters of a Gaussian mixture model.
Our algorithm again transforms samples from a Gaussian mixture model into samples which fit into the ICA model and, when processed by an ICA algorithm, result in recovery of the mixture parameters.
Our algorithm is effective even when the number of Gaussians in the mixture grows with the ambient dimension, even polynomially in the dimension.
%That is, given samples, each drawn randomly from one of $m$ Gaussian distributions, we determine the mean and covariance of each Gaussian, and also the probability of obtaining a sample from that particular Gaussian; the algorithm requires only polynomially many samples in the dimension, number of Guassians, and other natural parameters of the problem.
%The transformation of samples is ``Poissonization''.
%transforming the mixture samples into ones which are a linearly transformed product distribution.
In addition to the efficient parameter estimation, we also obtain a complexity lower bound for a low-dimension Gaussian mixture model.
%that the problem is \emph{generically hard} in low dimensions, i.e. the means of the Gaussians in the mixture will almost always be too degenerate for \emph{any} algorithm to be able to distinguish the exact parameters of the mixture.

\end{abstract}

%
% Uncomment the three lines below to generate the external abstract.  Two
% copies of this must be turned in to the graduate school.  These lines can
% be placed pretty much anywhere, since the page numbering should be
% independent of the rest of the thesis
%

%\begin{externalabstract}
%  \input{abstract}
%\end{externalabstract}

%
%  My Dedication
%

\dedication{\textit{For my father, \\ who always believed that we can do better.}}

%
% Bring in Acknowledgement and Vita from separate files named ``ack.tex''
% and ``vita.tex''.
%

\begin{acknowledgements}
  %I thank the many that made this work possible.
  There are many who make this work possible.

  I thank my excellent faculty committee: Misha Belkin, Facundo M\'emoli, Tasos Sidiropoulos, and Luis Rademacher.
  Much of this thesis was in collaboration with them, and I am in their debt.
  Special thanks go to Luis Rademacher for his years of advice, support, and teaching.

  The departments at Saint Vincent College and The Ohio State University have provided me with excellent environments for research and learning.

  Throughout my education I've had many helpful friends and collaborators;
  they each provided interesting and spirited discussion and insight.

  I am particularly grateful to Br. David Carlson for his instruction, advice, support, and friendship over the years.
  A large portion of my early adulthood was spent in his tutelage, and I am convinced nothing could have made it a better experience.

  I thank my family, especially my parents and sister, for their unfathomable support and encouragement.

  Finally, I thank my wife Sarah for her unfailing love and being an inspiring example of a smart, kind, and wonderful person.

  This work was supported by NSF grants CCF 1350870, and CCF 1422830.
  
\end{acknowledgements}

\begin{vita}

\dateitem{23 February 1990}{Born - Jeannette, PA}

\dateitem{2012}{B.S. Computing \& Information Science and Mathematics \\ Saint Vincent College}

\dateitem{2016}{M.S. Computer Science \& Engineering\\ The Ohio State University}

\dateitem{2012-present}{Graduate Teaching Associate\\
			 The Ohio State University.}

\begin{publist}

\researchpubs

\pubitem{J. Anderson, N. Goyal, A. Nandi, L. Rademacher.
\newblock ``Heavy-Tailed Analogues of the Covariance Matrix for ICA''.
\newblock {\em Thirty-First AAAI Conference on Artificial Intelligence (AAAI-17)}, Feb. 2017.}

\pubitem{J. Anderson, N. Goyal, A. Nandi, L. Rademacher.
\newblock ``Heavy-Tailed Independent Component Analysis''.
\newblock {\em 56th Annual IEEE Symposium on Foundations of Computer Science (FOCS)}, IEEE 290-209, 2015.}

\pubitem{J. Anderson, M. Belkin, N. Goyal, L. Rademacher, J. Voss.
\newblock ``The More The Merrier: The Blessing of Dimensionality for Learning Large Gaussian Mixtures''.
\newblock {\em Conference on Learning Theory (COLT)}, JMLR W\&CP 35:1135-1164, 2014.}

\pubitem{J. Anderson, N. Goyal, L. Rademacher.
\newblock ``Efficiently Learning Simplices''.
\newblock {\em Conference on Learning Theory (COLT)}, JMLR W\&CP 30:1020-1045, 2013.}

\pubitem{J. Anderson, M. Gundam, A. Joginipelly, D. Charalampidis.
\newblock ``FPGA implementation of graph cut based image thresholding''.
\newblock {\em Southeastern Symposium on System Theory (SSST)}, March. 2012.}

\end{publist}

\begin{fieldsstudy}

\majorfield{Computer Science and Engineering}

\begin{studieslist}
\studyitem{Theory of Computer Science}{Luis Rademacher}
\studyitem{Machine Learning}{Mikhail Belkin}
\studyitem{Mathematics}{Facundo M\'emoli}
\end{studieslist}

%%
%% Note:  If there were only one field of study, the following list 
%%        would best be done using the following command:
%%
%%	\onestudy{Only Topic}{Only Professor}
%%

%\begin{studieslist}
%\studyitem{Topic 1}{Prof.\ Big Dude}
%\studyitem{Topic 2}{Prof.\ Other Dude}
%\studyitem{Topic 3}{Prof.\ Another Dude}
%\end{studieslist}
\end{fieldsstudy}

\end{vita}

%
% Make the Table of Contents and other good stuff
%

\tableofcontents
%\listoftables
\listoffigures

%
% The following is a list of chapters.  Each is brought in from a
% separate file using the \include{} command.
%

%\startsinglespace
\chapter{Introduction}
\label{intro.ch}

%One of the most important challenges in the study of algorithms is how to scale with the dimension in which a problem is posed.
The ``curse of dimensionality'' is a well-known problem encountered in the study of algorithms.
This curse describes a phenomenon found in many fields of applied mathematics, for instance in numerical analysis, combinatorics, and computational geometry.
In machine learning, many questions arise regarding how well existing algorithms scale as the dimension of the input data increases.
These questions are well motivated in practice, as data includes more details about observed objects or events.
The primary contribution of this research is to demonstrate that several important problems in machine learning are, in fact, efficiently solvable in high-dimension.
%There are two main contributions to this work: 
%We make to main contributions in this work.
This work outlines two main contributions toward this goal.
First, we use the \emph{centroid body}, from convex geometry, as a new algorithmic tool, giving the first algorithm for Independent Component Analysis which can tolerate heavy-tailed data.
Second, we show that ICA itself can be used as an effective algorithmic primitive and that, with access to an efficient ICA algorithm, one can efficiently solve two other important open problems in machine learning: learning an intersection of $\dim+1$ halfspaces in $\RR^\dim$, and learning the parameters of a Gaussian Mixture Model.
%This work presents new algorithms which improve the state-of-the-art for several well-known problems in machine learning.

The fact that ICA can be used as a new algorithmic primitive, as we demonstrate, introduces new understanding of statistical data analysis.
The second contribution in particular, where we show an efficient algorithm to learn the vertices of a simplex, has a key step in the reduction where one pre-processes the input in a carefully chosen -- but simple -- manner which results in data that will fit the standard ICA model.
This pre-processing step is a random non-linear scaling inspired by the study of $\ell_p$ balls \cite{barthe2005probabilistic}.
This scaling procedure is non-trivial and serves as an example of when one can exploit the \emph{geometry} of a problem to gain statistical insight, and to frame the problem in a new light which has been thoroughly studied.

\section{Organization of this Thesis}

The rest of this thesis is organized as follows.

Chapter~\ref{ch:htica} will introduce the signal separation framework that will be of importance throughout this work: Independent Component Analysis (ICA).
We will develop the general theory behind ICA and our method for improving the state-of-the art ICA method, extending the framework to be approachable when the input data has ``heavy-tailed'' properties.
Our algorithm for ICA is theoretically sound, and comes with provable guarantees for polynomial time and sample complexity.
We then present a more practical variation on our new ICA algorithm, and demonstrate its effectiveness on both synthetic and real-world heavy-tailed data.
This chapter is based on \cite{anon_htica} and \cite{experimental_htica}.

Chapter~\ref{ch:simplex} details an important first step in demonstrating that ICA can be used an effective algorithmic primitive.
This chapter shows that ICA can be used to recover an arbitrary simplex with polynomial sample size.
This chapter is based on work published in \cite{anderson2013efficient}.

Chapter~\ref{ch:gmm} presents a second reduction to ICA which recovers the parameters of a Gaussian Mixture Model efficiently in high dimension.
Furthermore, we give a complexity lower bound for learning a Gaussian Mixture in low dimension which also yields a complexity lower bound for ICA itself in certain situations.
This chapter is based on \cite{anderson2014more}.

\chapter{Robust signal separation via convex geometry}\label{ch:htica}
%\section{Introduction}
\details{
	noise robustness, outliers, polytope learning, robust statistics (refer to the Huber book?), mention why we don't use the floating body(?), talk about prewhitening approaches and why they fail here, talk about mixtures of heavy-tailed distributions(?),
	compare with isotropic PCA (which uses a reweighting similar to our Gaussian damping), preprocessing by symmetrization. Clarify what kind of r.v.s are the $s_i$'s: do we assume that they have density? Does the 
	density have to be non-zero everywhere? 

	Confusingly, in some ICA literature the word ``heavy-tailed'' is used to mean distribution with
	positive Kurtosis. This should be clarified right in the beginning and in the abstract too.
	If you do a search for ``heavy-tailed'' the second result is the Wikipedia page on kurtosis.

	Examples and utility of heavy-tailed distributions.
}

The blind source separation problem is the general problem of recovering 
underlying ``source signals'' that have been mixed in some unknown way and are presented to an observer. 
Independent component analysis (ICA) is a popular model for blind source separation where the mixing is performed linearly. 
Formally, if $S$ is an $\dim$-dimensional random vector from an unknown product distribution and $A$ is an invertible linear transformation, one is tasked with recovering the matrix $A$ and the signal $S$, using only access to i.i.d. samples of the transformed signal, namely $X = AS$.
 % with $\poly(n)$ sample and time complexity.
Due to natural ambiguities, the recovery of $A$ is possible only up to 
the signs and permutations of the columns. Moreover, for the recovery to be possible the distributions of 
the random variables $S_i$ must not be a Gaussian distribution. 
ICA has applications in diverse areas such as neuroscience, signal processing, statistics, machine learning.
There is vast literature on ICA; see, e.g., \cite{Comon94, ICA01, ComonJutten}.

Since the formulation of the ICA model, a large number of algorithms have been devised employing a diverse set of 
techniques.
Many of these existing algorithms break the problem into two phases: first, find a transformation which, when applied to the observed samples, gives a new distribution which is \emph{isotropic}, i.e. a rotation of a (centered) product distribution; second, one typically uses an optimization procedure for a functional applied to the samples, such as the fourth directional moment, to recover the axes (or basis) of this product distribution.

% We now briefly survey such algorithms. \nnote{...}
To our knowledge, all known efficient algorithms for ICA with provable guarantees require higher moment assumptions such as finiteness of the fourth or higher moments for each component $S_i$.
%This holds not only for the algorithms with finite sample guarantees but also for algorithms for which only asymptotic consistency is proven\lnote{what about mutual information based algorithms?}\nnote{afaik, they have no guarantees unless some further assumptions are made}.
Some of the most relevant works, e.g. algorithms of \cite{dl95, FJK}, explicitly require the fourth moment to be finite.
Algorithms in \cite{Yeredor, GVX}, which make use of the characteristic function also seem to require at least the fourth moment to be finite: while the characteristic function exists for distributions without moments, the algorithms in these papers use the second or higher derivatives of the (second) characteristic function, and for this to be well-defined one needs the moments of that order to exist.
Furthermore, certain anticoncentration properties of these derivatives are  needed which require that fourth or higher moments exist.

Thus the following question arises: is ICA provably efficiently solvable when the moment condition is weakened so that, say, only the second moment exists, or even when no moments exist? 
By \emph{heavy-tailed ICA} we mean the ICA problem with weak or no moment conditions (the precise
moment conditions will be specified when needed). 

Our focus in this chapter will be efficient algorithms for heavy-tailed ICA with provable guarantees and finite sample analysis.
While we consider this problem to be interesting in its own right, it is also of interest in practice
in a range of applications, e.g. \cite{Kidmose01, KidmoseThesis, shereshevski2001super, ChenBickel04, chen2005consistent, sahmoudi2005blind, wang2009ica}. \nnote{perhaps mention the areas in these papers?}
The problem could also be interesting from the perspective of robust statistics because of the following
 informal connection: 
algorithms solving heavy-tailed ICA might work by focusing on samples in a small (but not low-probability) region in order to get reliable statistics
about the data and ignore the long tail. Thus 
if the data for ICA is corrupted by outliers, the outliers are less likely to affect such an algorithm. 
%% that solve this problem could be useful for ICA
%% when the data has been corrupted by outliers. This is because such algorithms could be more robust to outliers 
%% data because they focus on the samples in a small region in order to get reliable statistics about the data. %%We will elaborate on this informal connection in Sec.~xxx. 
\nnote{revisit}

In this work, \emph{heavy-tailed} distributions on the real line are %% , roughly speaking, those whose tail probabilities do not have at least an exponential decay; consequently,
those for which low order moments are not finite. Specifically, we will be interested in the case when the fourth
or lower order moments are not finite as this is the case that is not covered by previous algorithms. 
% \nnote{ more precise? This definition is different from the one on Wikipedia}
We hasten to clarify that in some ICA literature the word heavy-tailed is used with a different and less standard meaning, namely distributions with positive kurtosis; this meaning will not be used in the present work.

\nnote{It seems to me that this sentence does not fit anywhere: perhaps even more troublesome in the ICA context, however, is that if the first moment of the distribution diverges, one cannot directly put the distribution into isotropic position because the ``center'' is no longer defined.}
%\jnote{may want to move the preceeding sentence to the paragraph above, but i'm not sure.}

Heavy-tailed distributions arise in a wide variety of contexts including signal processing and finance;
see \cite{nolan:2015, Rachev2003} for an extensive bibliography.
Some of the prominent examples of heavy-tailed distributions are the Pareto distribution with shape parameter
$\alpha$ which has moments of order less than $\alpha$, the Cauchy distributions, which has moments of order
less than $1$; many more examples can be found on the Wikipedia page for heavy-tailed distributions.
An abundant (and important in applications) supply of heavy-tailed distributions comes from stable distributions;
see, e.g., \cite{nolan:2015}. There is also some theoretical work on learning mixtures of heavy-tailed 
distributions, e.g., \cite{DasguptaHKS05, ChaudhuriR08a}. 

In several applied ICA models with heavy tails it is reasonable to assume that the distributions have finite first moment.
In applications to finance (e.g., \cite{Chen2007594}), heavy tailed distributions are commonly used to model catastrophic but somewhat unlikely scenarios. A standard measures of risk in that literature, the so called conditional value at risk \cite{RockafellarUryasev}, is only finite when the first moment is finite. Therefore, it is reasonable to assume for some of our results that the distributions have finite first moment.

%\anonnote{need to find affine equivariant truncation that plays well with independence.}
\iffalse
\begin{theorem}
Let $X = AS$ where $A \in \RR^{\dim \times \dim}$ and $S \in \RR^\dim$ is a random vectors whose coordinates are mutually independent.
Mention explicitly that $S$ may be heavy-tailed?
Explicit non-Gaussianity?
\lnote{No: assumptions like non-gaussianity will come from section reweighting. symmetry missing}
Then Algorithm X can learn the columns of $A$ up to an additive $\eps$ error with probability $1-\delta$ given $poly(n)$ samples of $X$.
\end{theorem}
\fi

\section{Main result}
Our main result is an efficient algorithm that can recover the mixing matrix $A$ in the model $X=AS$ when each $S_i$ has $1+\gamma$ moments for a constant $\gamma > 0$.
The following theorem states more precisely the guarantees of our algorithm.
The theorem below refers to the algorithm \emph{Fourier PCA} \cite{GVX} which solves ICA under the 
fourth moment assumption. The main reason to use this algorithm is that finite sample guarantees have been proved
for it; we could have plugged in any other algorithm with such guarantee. The theorem below also refers to 
\emph{Gaussian damping}, which is an algorithmic technique we introduce in this chapter and will be explained 
shortly. \nnote{We will assume, for 
simplicity, that our probability distributions have density, although we believe this assumption is not essential.}
%\begin{theorem}\label{thm:putting_together}
\begin{restatable}[Heavy-tailed ICA]{theorem}{main}\label{thm:putting_together}
Let $X=AS$ be an ICA model such that the distribution of $S$ is absolutely continuous, for all $i$ we have $\e (\abs{S_i}^{1+\gamma}) \leq M < \infty$ and normalized so that $\e \abs{S_i} = 1$, and the columns of $A$ have unit norm. 
Let $\Delta > 0$ be such that for each $i \in [n]$ if $S_i$ has finite fourth moment then its fourth cumulant satisfies 
$\abs{\cum_4(S_i)} \geq \Delta$.
Then, given $0<\eps \leq \dim^2$, $\delta > 0$, $s_M \geq \sigma_{\max}(A)$, $s_m \leq \sigma_{\min}(A)$, Algorithm~\ref{alg:orthogonalization_uniform} combined with Gaussian damping and Fourier PCA\lnote{add formal algorithm environment} outputs
$b_1, \ldots, b_n \in \R^n$ such that there are signs $\alpha_i \in \{-1,1\}$ and a permutation
$\pi:[n] \to [n]$ satisfying
\(
    \norm{A_i - \alpha_i b_{\pi(i)}} \le  \epsilon,
\)
with $\poly_\gamma(n, M, 1/s_m, s_M, 1/\Delta, R, 1/R,1/\epsilon, 1/\delta)$ time and sample complexity and 
with probability at least
$1-\delta$. Here $R$ is a parameter of the distributions of the $S_i$ as described below. The degree of the 
polynomial is $O(1/\gamma)$. 
%\end{theorem}
\end{restatable}

We note here that the assumption that $S$ has an absolutely continuous distribution is mostly for convenience in the analysis of Gaussian damping and not essential. In particular, it is not used in Algorithm~\ref{alg:orthogonalization_uniform}.

Intuitively, $R$ in the theorem statement above measures how large a ball we need to restrict the 
distribution to, which has at least a constant (actually $1/\poly(n)$ suffices) probability mass 
and, moreover, each $S_i$ when restricted to the interval $[-R, R]$ has fourth cumulant at least 
$\Omega(\Delta)$. We show that all sufficiently large $R$ satisfy the above conditions and we can efficiently
compute such an $R$; see the discussion 
after Theorem~\ref{thm:ICA-orthogonal-damping} (the restatement in Sec.~\ref{sec:gaussian_damping}). 
For standard heavy-tailed distributions, such as the Pareto distribution, 
$R$ behaves nicely. For example, consider the 
Pareto distribution with shape parameter $= 2$ and scale parameter $=1$, i.e. the distribution with density 
$2/t^3$ for $t \geq 1$ and $0$ otherwise. For this distribution it's easily seen that 
$R = \Omega(\Delta^{1/2})$ suffices for the cumulant condition to be satisfied.

Theorem~\ref{thm:putting_together} requires that the $(1+\gamma)$-moment of the components $S_i$ be finite.
However, if the matrix $A$ in the ICA model is unitary (i.e. $A^TA = I$, or in other words, $A$ is a rotation matrix) then we do not need any moment assumptions:

\begin{restatable}{theorem}{gaussiandamping}
  \label{thm:ICA-orthogonal-damping}%
Let $X=AS$ be an ICA model such that $A \in \R^{n\times n}$ is unitary (i.e., $A^TA = I$) and 
the distribution of $S$ is absolutely continuous.
Let $\Delta > 0$ be such that for each $i \in [n]$ if $S_i$ has finite fourth moment then $\abs{\cum_4(S_i)} \geq \Delta$.
Then, given $\eps, \delta > 0$,
Gaussian damping combined with Fourier PCA outputs
$b_1, \ldots, b_n \in \R^n$ such that there are signs $\alpha_i \in \{-1,1\}$ and a permutation
$\pi:[n] \to [n]$ satisfying \(
% \begin{align} 
    \norm{A_i - \alpha_i b_{\pi(i)}} \le  \epsilon,
% \end{align}
\)
in $\poly(n, R, 1/\Delta, 1/\epsilon, 1/\delta)$ time and sample complexity and 
with probability at least
$1-\delta$. Here $R$ is a parameter of the distributions of the $S_i$ as described above. 
\iffalse
------------------------------------
Given an ICA model $X = AS$\lnote{with density?} where $A$ is a unitary matrix (i.e., $A^TA = I$) and we make no assumptions about the 
existence of $\e \abs{S_i}^{r}$ for all $i,r$. \lnote{language here needs to be more theorem-like} 
%moments of any order of the components of $S$.
Let $R>0$ be such that for each $i$ the random variable $S_{i,R}$ satisfies 
$\abs{\cum_{4}(S_{i,R})} \ge \Delta$. 
Then for $\epsilon > 0$, Gaussian damping combined
with Fourier PCA will recover vectors $\{b_1, \ldots, b_n\}$
  such that there exist signs $\alpha_i = \pm 1$ satisfying
  \begin{align*}
    \norm{A_i - \alpha_i b_i} \le \epsilon
  \end{align*}
  using $\poly(n, R, 1/K_{X_R}, 1/\Delta, 1/\epsilon, 1/\delta)$ samples %% $(nM_4/\Delta)^{C_2}/\epsilon^2$ 
  with probability $1-\delta$. The running time of the
algorithm is also of the same form. 
%% $(nM_4/\Delta)^{C_4}/\epsilon^2$. Here $C_1, C_2, C_3, C_4$ are positive constants, $C_3$ can
%% be chosen as large as possible and that will in turn affect $C_1, C_2, C_4$.
\fi
\end{restatable}

\myparagraph{Idea of the algorithm.} Like many ICA algorithms, our algorithm has two phases: first orthogonalize the independent components (reduce to the pure rotation case), and then determine the rotation.
In our heavy-tailed setting, each of these phases requires a novel approach and analysis in the heavy-tailed setting.

A standard orthogonalization algorithm is to put $X$ in isotropic position using the covariance matrix 
$\cov(X) := \E(XX^T)$. 
This approach requires finite second moment of $X$, which, in our setting, is not necessarily finite.
Our orthogonalization algorithm (Section \ref{sec:orthogonalization_uniform}) only needs finite 
$(1+\gamma)$-absolute moment and that each $S_i$ is symmetrically distributed. 
(The symmetry condition is not needed for our ICA algorithm, as one can reduce the general case to the symmetric case, see Section \ref{sec:symmetrization}). In order to understand the first absolute moment, it is helpful to look at certain convex bodies induced by the first and second moment. The directional second moment 
$\E_X \bigl((u^TX)^2\bigr)$ is a quadratic form in $u$ and its square root is the support function of a convex body, Legendre's inertia ellipsoid, up to some scaling factor (see \cite{MilmanPajor} for example). 
Similarly, one can show that the directional absolute first moment is the support function of a convex body, the centroid body. 
When the signals $S_i$ are symmetrically distributed, the centroid body of $X$ inherits these symmetries making it absolutely symmetric (see Section \ref{sec:preliminaries} for definitions) up to an affine transformation. 
In this case, a linear transformation that puts the centroid body in isotropic position also orthogonalizes the independent components (Lemma \ref{lemma:uniform-orthogonalizer}).
In summary, the orthogonalization algorithm is the following: find a linear transformation that puts the centroid body of $X$ in isotropic position. One such matrix is given by the inverse of the square root of the covariance matrix of the uniform distribution in the centroid body. Then apply that transformation to $X$ to orthogonalize the independent components.

\details{Moreover, we do not need to use the uniform distribution in the centroid body for this purpose. 
We can use any distribution that has the same symmetries restricted to the centroid body. In particular, we could use $X$ itself restricted to the centroid body. 
There is one important caveat: the mass of $X$ inside its centroid body could be very small. 
We fix this issue by scaling up the centroid body appropriately. In summary, the orthogonalization algorithm is the following: find a linear transformation that puts $X$ restricted to a scaling of the centroid body in isotropic position. Then apply that transformation to $X$ to orthogonalize the independent components.}

We now discuss how to determine the rotation (the second phase of our algorithm). The main idea is to reduce heavy-tailed case to a case where all moments exist and to use an existing ICA algorithm (from \cite{GVX} in our case) to handle the resulting ICA instance. We use \emph{Gaussian damping} to achieve such a reduction. 
By Gaussian damping we mean to multiply the density of the orthogonalized ICA model by a spherical Gaussian density. 

We elaborate now on our contributions that make the algorithm possible.

\myparagraph{Centroid body and orthogonalization.}
The centroid body of a compact set was first defined in \cite{petty1961}. It is defined as the convex set whose support function equals the directional absolute first moment of the given compact set. We generalize the notion of centroid body to any probability measure having finite first moment (see Section \ref{sec:preliminaries} for the background on convexity and Section \ref{sec:centroidbody} for our formal definition of the centroid body for probability measures). 
\nnote{The following text can be included (I started writing it but didn't finish) For putting the centroid body 
in isotropic position we need uniformly random samples from the centroid body. There are well-known algorithms to 
do this if we have membership access to the centroid body, that is to say, there
is an efficient algorithm that given a point answers whether the point is in the body. We give such an algorithm using the ellipsoid algorithm and results from \cite{GLS}.}
In order to put the centroid body in approximate isotropic position, we estimate its covariance matrix. For this, we use uniformly random samples from the centroid body.
There are known methods to generate approximately random points from a convex body given by a membership oracle. We implement an efficient membership oracle for the centroid body of a probability measure with $1+\gamma$ moments. The implementation works by first implementing a membership oracle for the polar of the centroid body via sampling and then using it via the ellipsoid method (see \cite{GLS}) to construct a membership oracle for the centroid body.
\details{Is there an argument without polarity? Would need uniform convergence.}%
As far as we know this is the first use of the centroid body as an algorithmic tool.
%As far as we know this is the first use of the centroid body as an analog of the covariance matrix for statistical estimation.

An alternative approach to orthogonalization in ICA one might consider is to use the empirical covariance matrix of $X$ even when the distribution is heavy-tailed. A specific problem with this approach is that when the second moment does not exist, the diagonal entries would be very different and grow without bound. This problem gets worse when one collects more samples. This wide range of diagonal values makes the second phase of an ICA algorithm very unstable.

\myparagraph{Linear equivariance and high symmetry.} 
A fundamental property of the centroid body, for our analysis, is that the centroid body is \emph{linearly equivariant}, that is, if one applies an invertible linear transformation to a probability measure then the corresponding centroid body transforms in the same way (already observed in \cite{petty1961}). In a sense that we make precise (Lemma \ref{lem:orthogonalizer}), high symmetry and linear equivariance of an object defined from a given probability measure are sufficient conditions to construct from such object a matrix that orthogonalizes the independent components of a given ICA model. This is another way to see the connection between the centroid body and Legendre's ellipsoid of inertia for our purposes: Legendre's ellipsoid of inertia of a distribution is linearly equivariant and has the required symmetries.

\myparagraph{Gaussian damping.} 
Here we confine ourselves to the special case of ICA when the ICA matrix is unitary, that is $A^TA = I$.  
A natural idea to deal with heavy-tailed distributions is to truncate the 
distribution in far away regions and hope that the truncated distribution still gives us a way to extract 
information. In our setting, this could mean, for example, that we consider the random variable obtained from $X$ 
conditioned on the even that $X$ lies in the ball of radius $R$ centered at the origin. Instead of the ball we 
could restrict to other sets. Unfortunately, in general the resulting random variable does not come from an 
ICA model (i.e., does not have independent components in any basis). Nevertheless one may still be able to use this 
random variable for recovering $A$. We do not know how to get an algorithmic handle on it even in the case of 
unitary $A$. Intuitively, 
restricting to a set breaks the product structure of the distribution that is crucial for recovering the 
independent components. 

We give a novel technique to solve heavy-tailed ICA for unitary $A$. No moment assumptions on the components
are needed for our technique.
We call this technique \emph{Gaussian damping}. Gaussian damping can also
be thought of as restriction, but instead of being restriction to a set it is a ``restriction to a
spherical Gaussian distribution.'' Let us explain. Suppose we have a distribution on $\R^n$ with density 
$\rho_X(\cdot)$. If we restrict this distribution
to a set $A$ (which we assume to be nice: full-dimensional and without any measure theoretic issues) then the
density of the restricted distribution is $0$ outside $A$ and is proportional to $\rho_X(x)$ for $x \in A$. One
can also think of the density of the restriction as being proportional to the product of $\rho_X(x)$ and the
density of the uniform distribution on $A$. 
In the same vein, by restriction to the Gaussian distribution with density proportional to 
$e^{-\norm{x}^2/R^2}$ we simply mean the 
distribution with density proportional to $\rho_X(x) \, e^{-\norm{x}^2/R^2}$. In other words, the density of the 
restriction is obtained by multiplying the two densities. By Gaussian damping of a distribution we mean 
the distribution obtained by this operation.

Gaussian damping provides a tool to solve the ICA problem for unitary $A$ by virtue of the following properties: 
(1) \emph{The damped distribution has finite moments of all orders.} 
This is an easy consequence of the fact that Gaussian density decreases super-polynomially. More precisely, 
one dimensional moment of order $d$ given by the integral $\int_{t \in \R} t^d \rho(t) e^{-t^2/R^2} \, dt$ is 
finite for all $d \geq 0$ for any distribution. 
(2) \emph{Gaussian damping retains the product structure.} Here we use the property of spherical Gaussians 
that it's the (unique) class of spherically symmetric distributions with independent components,
i.e., the density factors: $e^{-\norm{x}^2/R^2} = e^{-x_1^2/R^2}\dotsm e^{-x_n^2/R^2}$ (we are hiding a normalizing
constant factor). Hence the damped density also factors when expressed in terms of the components of 
$s = A^{-1}x$ (again ignoring normalizing constant factors):
\begin{align*}
\rho_X(x) e^{-\norm{x}^2/R^2} 
= \rho_S(s) e^{-\norm{s}^2/R^2} = \rho_{S_1}(s_1) e^{-x_1^2/R^2} \dotsm \rho_{S_n}(S_n) e^{-x_n^2/R^2}.
\end{align*}
Thus we have converted our heavy-tailed ICA model $X=AS$ into another ICA model $X_R = A S_R$ where $X_R$ 
and $S_R$ are obtained by Gaussian damping of $X$ and $S$, resp. To this new model we can apply the existing
ICA algorithms which require at least the fourth moment to exist. This allows us to estimate matrix $A$ (up
to signs and permutations of the columns). 

It remains to explain how we get access to the damped random variable $X_R$. This is done by a simple rejection
sampling procedure. Damping does not come free and one has to pay for it in terms of higher sample 
and computational complexity, but this increase in complexity is mild in the sense that the dependence on 
various parameters of the problem is still of similar nature as for the non-heavy-tailed case. 
A new parameter $R$ is introduced here which 
parameterizes the Gaussian distribution used for damping. 
We explained the intuitive meaning of $R$ after the statement of Theorem~\ref{thm:putting_together}
and that it's a well-behaved quantity for standard distributions. 

%% In particular, $R$ needs to satisfy the condition that for any component distribution without finite fourth moment, 
%% denote its density by $\rho(\cdot)$, the fourth cumulant of the damped density, which is proportional to $\rho(t) e^{-t^2/R^2}$ goes
%% to infinity as $R$ goes to infinity. This turns out to be surprisingly subtle. 

\myparagraph{Gaussian damping as contrast function.}
Another way to view Gaussian damping is in terms of \emph{contrast functions}, a general idea that in particular
has been used fruitfully in the ICA literature. Briefly, given a function $f: \R \to \R$, for $u$ on the unit 
sphere in $\R^n$, we compute $g(u) := \E f(u^T X)$. Now the properties of the function $g(u)$ as $u$ varies 
over the unit sphere, such as its local extrema, can help us infer properties of the underlying distribution.
In particular, one can solve the ICA problem for appropriately chosen \emph{contrast function} $f$. In ICA, 
algorithms with provable guarantees use contrast functions such as moments or cumulants (e.g., \cite{dl95, FJK}). 
Many other contrast functions are also used. 
\nnote{References? This could be a good place to refer to the Belkin et al. papers.}
Gaussian damping furnishes a novel class of contrast functions that also leads to provable guarantees. 
E.g., the function given by $f(t) = t^4 e^{-t^2/R^2}$ is in this class. We do not use the contrast function view 
in this work.

\myparagraph{Previous work related to damping.} To our knowledge the damping technique, and more generally
the idea of reweighting the data, is new in the context of ICA. But the general idea of reweighting is not new 
in other somewhat related contexts as we now discuss.
In robust statistics (see, e.g.,~\cite{Huber}), reweighting idea is used for outlier removal by giving less 
weight to far away data points. Apart from this high-level similarity we are not aware of any closer 
connections to our setting; in particular, the weights used are generally different. 

Another related work is \cite{BrubakerVempala}, on isotropic PCA, affine invariant clustering, and learning mixtures of Gaussians. This work uses Gaussian reweighting. However, again we are unaware of any more specific connection to our problem. 

Finally, in \cite{Yeredor, GVX} a different reweighting, using a ``Fourier weight'' $e^{i u^T x}$ (here $u \in \R^n$
is a fixed vector and $x \in \R^n$ is a data point) is used in the computation of the covariance matrix. This 
covariance matrix is useful for solving the ICA problem. But as discussed before, results here do not seem to be
amenable to our heavy-tailed setting.

\nnote{Discussion of how to get a handle on the centroid body: dual body and ellipsoid.
Discussion of floating body versus centroid.
Discussion of why dual characterization of cvar is not used.}

\iffalse
\myparagraph{Organization.} After preliminaries in the next section, in Sec.~\ref{sec:centroidbody} we 
define the centroid body and prove some useful properties. In Sec.~\ref{subsec:membership-centroid-body}
we show how to construct the membership oracle for the centroid body of a distribution produced by 
a symmetric ICA model. In Sec.~\ref{sec:orthogonalization_uniform} we use this membership oracle to compute
the covariance matrix of the uniform distribution on the centroid body and using this matrix we orthogonalize
the independent components. Finally, in Sec.~\ref{sec:symmetrization} we show why working with symmetric ICA
model is without loss of generality. 
\ifshort{Because of the space constraints, we omit all the proofs in the
following sections. Gaussian damping, which was discussed in some detail in the introduction, has to be omitted
from this extended abstract; also omitted is the composition of orthogonalization with Gaussian damping to 
prove our main theorem. Our choice of sections to include in this extended abstract does not reflect the 
relative importance of the ideas in these sections and was rather dictated by space constraints.}\fi
\fi

\section{Preliminaries}
\label{sec:preliminaries}

In this section, we review technical definitions and results which will be used in subsequent sections.

For a random vector $X \in \RR^\dim$, we denote the distribution function of $X$ as $F_X$ and, if it exists, the density is denoted $f_X$.

For a real-value random variable $X$, \emph{cumulants} of
$X$ are polynomials in the moments of $X$. For $j \geq 1$, the $j$th
cumulant is denoted $\cum_j(X)$. Denoting $\m_j := \E{X^j}$. 
examples: $\cum_1(X) = \m_1, \cum_2(X) = \m_2 - \m_1^2, \cum_3(X) =
\m_3 - 3 \m_2\m_1 + 2\m_1^3$. In general, cumulants can be defined 
as the coefficients in the logarithm of the moment generating function of $X$:
\begin{align*}
\log(\mathbb{E}_X(e^{tX})) = \sum_{i=1}^{\infty}\cum_i(X) \frac{t^i}{i!}. 
\end{align*}
The first two cumulants are the same as the expectation and the
variance, resp. Cumulants have the property that for two independent
r.v.s $X, Y$ we have $\cum_j(X+Y) = \cum_j(X) + \cum_j(Y)$ (assuming
that the first $j$ moments exist for both $X$ and $Y$).  Cumulants are order-$j$ homogeneous, i.e.~if $\alpha \in \R$ and $X$ is a r.v., then
$\cum_j(\alpha X) = \alpha^j\cum_j(X)$.  The first two
cumulants of the standard Gaussian distribution are the mean $0$ and the 
variance
$1$, and all subsequent Guassian cumulants have value $0$. 

An $\dim$-dimensional \emph{convex body} is a compact convex subset of $\RR^\dim$ with non-empty interior.
We say a convex body $K \subseteq \RR^\dim$ is \emph{absolutely symmetric} if $(x_1, \dots, x_{\dim}) \in K \Leftrightarrow (\pm x_1, \dots, \pm x_{n}) \in K$.
Similarly, we say random variable $X$ (and its distribution) is absolutely symmetric if, for any choice of signs $\alpha_i \in \{-1, 1\}$, $(x_1, \dots, x_n)$ has the same distribution as $(\alpha_1 x_1, \dots, \alpha_n x_n)$.
We say that \an\ $\dim$-dimensional random vector $X$ is \emph{symmetric} if %$X \deq -X$.
$X$ has the same distribution as $-X$.
Note that if $X$ is symmetric with independent components (mutually independent coordinates) then its components are also symmetric.

For a convex body $K$, the Minkowski functional of $K$ is $p_{K}(x) = \inf\{ \alpha \geq 0 \suchthat \alpha x \in K \}$.
The Minkowski sum of two sets $A, B \subseteq \RR^{\dim}$ is the set $A + B = \{ a + b \suchthat a \in A, \; b \in B\}$.

The singular values of a 
matrix $A \in \R^{m \times n}$ will be ordered in the decreasing order: $\sigma_1 \geq
\sigma_2 \geq \ldots \geq \sigma_{\min(m, n)}$. By $\sigma_{\min}(A)$ we mean $\sigma_{\min(m,n)}$. 

We say that a matrix $A \in \R^{n\times n}$ is \emph{unitary} if $A^TA = I$, or in other words $A$ is a 
rotation matrix. (Normally for matrices
with real-valued entries such matrices are called orthogonal matrices and the word unitary is reserved for  
their complex counterparts, however the word orthogonal matrix can lead to confusion in the present work.)
For matrix $C \in \R^{n \times n}$, denote by $\norm{C}_2$ the spectral norm and
by $\norm{C}_F$ the Frobenius norm.
\iflong
We will need the following inequality about the stability of matrix inversion (see for example 
% \cite[Sec. 5.8]{horn37topics}
\cite[Chapter III, Theorem 2.5]{stewart1990matrix}).

\begin{lemma}\label{lem:inversion}
Let $\norm{\cdot}$ be a matrix norm such that $\norm{AB} \leq \norm{A} \norm{B}$. Let matrices $C, E \in \R^{n\times n}$ be such that $\norm{C^{-1} E}_2 \leq 1$, and
let $\tilde{C} = C + E$. Then
\begin{equation}
\frac{\norm{\tilde{C}^{-1} - C^{-1}}} {\norm{C^{-1}}} \leq \frac{\norm{C^{-1} E}}{1 - \norm{C^{-1} E}}.
\end{equation}
\end{lemma}

This implies that if $\norm{E}_2 = \norm{\tilde{C} - C}_2 \leq 1/(2 \norm{C^{-1}}_2)$, then
\begin{equation}\label{eq:inverse-stability}
\norm{\tilde{C}^{-1} - C^{-1}}_2 \leq 2 \norm{C^{-1}}_{2}^{2} \norm{E}_{2}.
\end{equation}

\subsection{Heavy-Tailed distributions}

In general, heavy-tailed distributions are those whose tail probabilities go to zero more slowly than an inverse exponential.
Formally, one may write that the distribution of $X$ is heavy-tailed if $e^{\lambda x} P(X > x)$ diverges as $x$ approaches infinity.
However, for our applications, we need only care about the existence of higher moments of the distribution.
The primary statistical assumption made in the following work assumes only that $\e X^{1+\gamma} < \infty$ for some $\gamma > 0$.
The addition of 1 in the above is primarily so that means are well-defined.
Naturally, from this assumption, one can construct heavy-tailed distributions, e.g. with density proportional to $1/x^{2+\gamma}$, which will have first moment, up to the $1+\gamma$ moment, but no higher.

\subsection{Stable Distributions}\label{sec:stable-distributions}

The \emph{Stable Distributions} are an important class of probability distributions characterized by one key property: the family itself is closed under addition and scaler multiplication.
That is, if you have two random variables $X$ and $Y$ with the same stable distribution, $X+Y$ will also be a stable distribution with slightly different parameters.
Two well-known members of this family are the Gaussian and Cauchy distribution, both of which will be important throughout this work.

Stable distributions are fully characterized by four parameters and can be written as $\stable(\alpha, \beta, c, \mu)$ where $\alpha \in (0,2]$ is the stable parameter ($\alpha = 2$ for Gaussian and $\alpha = 1$ for Cauchy), $\beta \in [-1,1]$ is a skewness parameter, $c \in (0,\infty)$ is scale, and $\mu \in \RR$ is location.
The PDF and CDF are not expressible analytically in general, but stable distributions are those for which the characteristic function can be written as \[ \phi(x; \alpha, \beta, c, \mu) = \exp\left( ixu - \abs{ct}^{\alpha} (1 - i\beta \sgn{x} \Phi \right) \] where $\sgn{}$ is the sign function and
\[
  \Phi = 
  \begin{cases}
    \tan(\pi \alpha / 2) \; \text{ if } \alpha \neq 1 \\
    -\frac{2}{\pi} \log \abs{t} \; \text{ otherwise}
  \end{cases}.
\]

A useful property which will be used later, and gives a formalization of the closure property mentioned above is the following: if $X_1, \dots X_n$ are iid with stable density $\density(x; \alpha, \beta, c, \mu)$ and we let \[ Y = \sum_{i=1}^{n} k_i (X_i - \mu) \] where $k_i \in \RR$, then $Y$ will have density \[ s^{-1} \density(x/s; \alpha, \beta, c, 0) \] where $$s = \left( \sum_{i=1}^{n} \abs{k_i}^{\alpha} \right).$$

\subsection{Convex optimization}

% \lnote{paraphrase results from GLS}
% \nnote{are the following definitions verbatim from GLS? If so, they should be labeled as such.}
% \nnote{introduce the notation for rational numbers}
%\lnote{given that this is so long, it should probably be an appendix now}

We need the following result: Given a membership oracle for a convex body $K$ one can implement efficiently a membership oracle for $K^\polar$, the polar of $K$. 
This follows from applications of the ellipsoid method from \cite{GLS}.
Specifically, we use the following facts: (1) a validity oracle for $K$ can be constructed from a membership oracle for $K$ \cite[Theorem 4.3.2]{GLS}; (2) a membership oracle for $K^\polar$ can be constructed from a validity oracle for $K$ \cite[Theorem 4.4.1]{GLS}.

The definitions and theorems in this section all come (occasionally with slight rephrasing) from \cite{GLS} except for the notion of $(\epsilon, \delta)$-weak oracle. 
[The definitions below use rational numbers instead of real numbers. 
This is done in \cite{GLS} as they work out in detail the important low level issues of how the numbers 
in the algorithm are represented as general real numbers cannot be directly handled by computers. 
These low-level details can also be worked out for the arguments here, but as is customary, we will 
not describe these and use real numbers for the sake of exposition.]
In this section $K \subseteq \R^n$ is a convex body. For $y \in \R^n$, the distance of $y$ to $K$ is given by
$d(y, K) := \min_{z \in K}\,\norm{y-z}$. 
Define $S(K,\eps) := \{y \in \RR^n \suchthat d(y,K) \leq \eps \}$
and
$S(K,-\eps) := \{y \in \RR^n \suchthat S(y,\eps) \subseteq K \}$. Let $\QQ$ denote the set of rational numbers.
%% For convenience, we will sometimes use $K_\eps := S(K, \eps)$ and $K_{-\eps} := S(K, -\eps)$ to abbreviate the above when there is no ambiguity.
%For consistency, denote the ball around $x$ of radius $r$ as $S(x,r) := S(\{x\}, r)$.
%% \nnote{as far as I can see, S(x,r) in the last part of the above definition
%% is not used in the paper. If that's correct it should be removed.}

\begin{definition}[\cite{GLS}]\label{def:eps-weak-oracle}
The \emph{$\epsilon$-weak membership problem} for $K$ is the following:
Given a point $y \in \QQ^n$ and a rational number $\eps > 0$, 
% \lnote{membership?}
%% oracle}
%%for a set $K\subseteq \RR^n$ is an oracle that 
either (i) assert that $y \in S(K,\eps)$, or (ii) assert that $y \not \in S(K,-\eps)$.
%% \begin{enumerate}
%% \item asserts that $y \in S(K,\eps)$, or
%% \item asserts that $y \not \in S(K,-\eps)$.
%% \end{enumerate}
An \emph{$\epsilon$-weak membership oracle} for $K$ is an oracle that solves the weak membership problem for $K$.
For $\delta \in [0,1]$, an \emph{$(\eps, \delta)$-weak membership oracle} for $K$ acts as 
follows: Given a point $y \in \QQ^n$, with probability at least $1-\delta$ it solves the  
$\eps$-weak membership problem for $y, K$, and otherwise its output can be arbitrary.  
\end{definition}
%% \iffalse
%% \nnote{Is the weak optimization problem actually used in the paper?}
%% \begin{definition}[\cite{GLS}]
%% The \emph{Weak Optimization Problem} is the following:

%% Given a vector $c \in \QQ^n$ and a rational number $\eps > 0$, either
%% \begin{enumerate}
%% \item find a vector $y \in \QQ^n$ such that $y \in S(K,\eps)$ and $c^T x \leq c^T y + \eps$ for all $x \in S(K,-\eps)$ (i.e., $y$ is almost in $K$ and almost maximizes $c^T x$ over the points deep in $K$), or
%% \item assert that $S(K,-\eps)$ is empty.
%% \end{enumerate}
%% \end{definition}
%% \fi
%% \begin{definition}[\cite{GLS}]\label{def:wviol}
%% The \emph{Weak Violation Problem} is the following:

%% Given a vector $c \in \QQ^n$, a rational number $\gamma$, and a rational number $\eps > 0$, either
%% \begin{enumerate}
%% \item assert that $c^T x \leq \gamma + \eps$ for all $x \in S(K, -\eps)$, or
%% \item find a vector $y \in S(K, \eps)$ with $c^T y \geq \gamma - \eps$.
%% \end{enumerate}
%% \end{definition}

\begin{definition}[\cite{GLS}]\label{def:wval}
The \emph{$\eps$-weak validity problem} for $K$ is the following: 
Given a vector $c \in \QQ^n$, a rational number $\gamma$, and a rational number $\eps > 0$, either
(i) assert that $c^T x \leq \gamma + \eps$ for all $x \in S(K, -\eps)$, or
(ii) assert that $c^T x \geq \gamma - \eps$ for some $x \in S(K, \eps)$.
%% \begin{enumerate}
%% \item assert that $c^T x \leq \gamma + \eps$ for all $x \in S(K, -\eps)$, or
%% \item assert that $c^T x \geq \gamma - \eps$ for some $x \in S(K, \eps)$.
%% \end{enumerate}
The notion of $\epsilon$-weak validity oracle and $(\epsilon, \delta)$-weak validity oracle can be defined
similarly to Def.~\ref{def:eps-weak-oracle}. 
\end{definition}

%% \begin{definition}[\cite{GLS}]\label{def:eps-weak-val-oracle}
%% Given a vector $c \in \QQ^n$, a rational number $\gamma$, and a rational number $\eps > 0$, either, an \emph{$\eps$-weak validity
%% % \lnote{membership?}
%% oracle} for a set $K\subseteq \RR^n$ is an oracle that either
%% \begin{enumerate}
%% \item assert that $c^T x \leq \gamma + \eps$ for all $x \in S(K, -\eps)$, or
%% \item assert that $c^T x \geq \gamma - \eps$ for some $x \in S(K, \eps)$.
%% \end{enumerate}
%% Furthermore, for $\delta>0$, an \emph{$(\eps, \delta)$-weak validity oracle} for a set $K\subseteq \RR^n$ acts as 
%% follows: For a given point $y \in \QQ^n$, with probability at least $1-\delta$ its output satisfies the conditions of a $\eps$-weak validity oracle, and otherwise its output can be arbitrary.  
%% \end{definition}

%For an algorithm that uses access to an oracle which solves one of the above problems, we say the algorithm is \emph{oracle-polynomial} time if it runs in polynomial time with polynomially many calls to the oracle.

\begin{definition}[{\cite[Section 2.1]{GLS}}]
We say that an oracle algorithm is an \emph{oracle-polynomial time algorithm} for a certain problem defined on a class of convex sets if the running time of the algorithm is bounded by a polynomial in the encoding length of $K$ and in the encoding length of the possibly existing further input, for every convex set $K$ in the given class.
\end{definition}
The \emph{encoding length} of a convex set will be specified below, depending on how the convex set is presented.
\nnote{check that this has been done}

\begin{theorem}[Theorem 4.3.2 in \cite{GLS}]\label{thm:wmem-to-wviol}
Let $R > r > 0$ and $a_0 \in \R^n$. 
There exists an oracle-polynomial time algorithm that solves the weak validity problem for every convex body $K \subseteq \RR^n$ contained in the ball of radius $R$ and containing a ball of radius $r$ centered at $a_0$ given by a weak membership oracle. The encoding length of $K$ is $n$ plus the length of the binary encoding of $R, r$, and $a_0$.
\end{theorem}
We remark that the Theorem~4.3.2 as stated in \cite{GLS} is stronger than the above statement 
in that it constructs a weak violation oracle (not defined here) 
which gives a weak validity oracle which suffices for us.
The algorithm given by Theorem 4.3.2 makes a polynomial (in the encoding length of $K$) number of queries to
the weak membership oracle. 

%% \nnote{I think the above theorem needs more details (even though GLS stated it as above): 
%% Basically epsilon should be there, both in quantifying the oracles and also in the running time.
%% I think the former is quite important as otherwise I don't know how the error parameter of the 
%% input oracle relates to that of the output oracle. I suppose that to get a certain error parameter for the
%% output oracle we need much smaller error parameter for the input oracle.}

% Note that Theorem~\ref{thm:wmem-to-wviol}

%% \nnote{Need to define oracle-polynomial time; also it would be useful to have a more explicit expression
%% for the run time than just saying that it's a polynomial. In particular, what's the dependence on r and R?}\lnote{I believe this is fixed now.}
% \nnote{do we want to state the theorem above? The weak violation problem has not been defined.}

\iffalse
\begin{theorem}[Corollary 4.3.12 in \cite{GLS}]\label{thm:ellipsoid}
There exists an oracle-polynomial time algorithm called the \emph{Ellipsoid Algorithm}  that solves the weak optimization problem for every centered convex body $K \subseteq \RR^n$ contained in the ball of radius $R$ and containing a ball of radius $r$ centered at $a_0$ given by a weak membership oracle. The encoding length of $K$ is $n$ plus the length of the binary encoding of $R, r$ and $a_0$.
\end{theorem}
\fi

\begin{lemma}[Lemma 4.4.1 in \cite{GLS}]\label{lem:wviol-polar-wmem}
There exists an oracle-polynomial time algorithm that solves the weak membership problem for $K^\polar$, where $K$ is a convex body contained in the ball of radius $R$ and containing a ball of radius $r$ centered at $0$ given by a weak validity oracle.
The encoding length of $K$ is $n$ plus the length of the binary encoding of $R$ and $r$.
\end{lemma}
Our algorithms and proofs will need more quantitative details from the proofs of the 
above theorem and lemma. These will be mentioned when we need them.

\iflong
\subsection{Algorithmic convexity}
We state here a special case of a standard result in algorithmic convexity: There is an efficient algorithm to estimate the covariance matrix of the uniform distribution in a centrally symmetric convex body given by a weak membership oracle.
\nnote{Does the body need to be centrally symmetric? Also need to say that by the covariance matrix of a convex body we mean the 
covariance matrix of the uniform distribution on the body.}
The result follows from the random walk-based algorithms to generate approximately uniformly random points from a convex body \cite{DBLP:journals/jacm/DyerFK91, MR1608200, DBLP:journals/jcss/LovaszV06}. 
Most papers use access to a membership oracle for the given convex body. In this work we only have access to an $\eps$-weak membership oracle. 
As discussed in \cite[Section 6, Remark 2]{DBLP:journals/jacm/DyerFK91}, essentially the same algorithm implements efficient sampling when given an $\eps$-weak membership oracle.
The problem of estimating the covariance matrix of a convex body was introduced in \cite[Section 5.2]{MR1608200}. 
That paper analyzes the estimation from random points. 
There has been a sequence of papers studying the sample complexity of this problem \cite{MR1665576, MR1694526, MR2190337, MR2276533, MR2276637, MR2956207, adamczak, MR3127875}.
\nnote{I'm not sure we need all these references. Maybe just refer to one of the latest ones and
say see references therein.}

\begin{theorem}\label{thm:covariance_estimation}
Let $K \subseteq \RR^\dim$ be a centrally symmetric convex body given by a weak membership oracle so that $r B_2^\dim \subseteq K \subseteq R B_2^\dim$. Let $\Sigma = \cov(K)$. Then there exists a randomized algorithm that, when given access to the weak membership of $K$ and inputs $r, R, \delta, \eps_c >0$, it outputs a matrix $\tilde \Sigma$ such that with probability at least $1-\delta$ over the randomness of the algorithm\details{and for any response by the weak membership oracle to queries in the ambiguous region}, we have 
\begin{equation}\label{equ:covariance}
(\forall u \in \RR^\dim) \qquad (1-\eps_c) u^T \Sigma u \leq u^T \tilde \Sigma u \leq (1+\eps_c) u^T \Sigma u.
\end{equation}
The running time of the algorithm is $\poly(\dim, \log(R/r), 1/\eps_c, \log(1/\delta))$.
\end{theorem}
Note that \eqref{equ:covariance} implies $\norm{\tilde \Sigma - \Sigma}_2 \leq \eps_c \norm{\Sigma}_2$. \details{\eqref{equ:covariance} $\implies$ $\abs{u^T \tilde \Sigma u - u^T \Sigma u} \leq \eps_c u^T \Sigma u \implies \abs{u^T (\tilde \Sigma -  \Sigma) u} \leq \eps_c u^T u \norm{\Sigma}$. The claim follows from the Rayleigh quotient characterization of eigenvalues and the spectral norm.}
The guarantee in \eqref{equ:covariance} has the advantage of being invariant under linear transformations in the following sense: 
if one applies an invertible linear transformation $C$ to the underlying convex body, the covariance matrix and its estimate become $C \Sigma C^T$ and $C \tilde \Sigma C^T$, respectively. These matrices satisfy 
\begin{equation*}
(\forall u \in \RR^\dim) \qquad (1-\eps_c) u^T C \Sigma C^T u \leq u^T C \tilde \Sigma C^T u \leq (1+\eps_c) u^T C \Sigma C^T u.
\end{equation*}
This fact will be used later. \nnote{Where is this used? As far as I can see, only the sentence right after
the theorem statement gets used.}
\fi

\subsection{The centroid body}\label{sec:centroidbody}

The main tool in our orthogonalization algorithm is the centroid body of a distribution, which we use as a first moment analogue of the covariance matrix.
In convex geometry, the centroid body is a standard (see, e.g., \cite{petty1961,MilmanPajor,gardner1995geometric})  convex body associated to (the uniform distribution on) a given convex body.
Here we use a generalization of the definition from the case of the uniform distribution on a convex body to more general probability measures. \nnote{why switch the language to probability measures?}
Let $X \in \RR^\dim$ be a random vector with finite first moment, that is, for all $u \in \RR^\dim$ we have $\e (\abs{\inner{u}{X}}) < \infty$. Following \cite{petty1961}, consider the function
\(
h(u) = \e (\abs{\inner{u}{X}})
\).
Then it is easy to see that $h(0) = 0$, $h$ is positively homogeneous, and $h$ is subadditive. Therefore, it is the support function of a compact convex set \cite[Section 3]{petty1961}, \cite[Theorem 1.7.1]{MR1216521}, \cite[Section 0.6]{gardner1995geometric}. \nnote{I think the fact that the centroid body of a distribution is convex is a key fact and did not appear in a previous paper even though the proof is basically the same as that for convex bodies (correct? I actually haven't checked it); so it should be emphasized by making it a lemma/prop. and the proof included although it can go to the appendix. Also perhaps explain why it works for the first moment and what happens for higher and lower ones.}
This justifies the following definition:
\begin{definition}[Centroid body%% , {\cite[Section 3, centroid surface]{petty1961}} {\cite[page 67, zonoid of inertia]{MilmanPajor}, }
]\label{def:centroid-body2}
Let $X \in \RR^\dim$ be a random vector with finite first moment, that is, for all $u \in \RR^\dim$ we have $\e (\abs{\inner{u}{X}}) < \infty$. 
The \emph{centroid body} of $X$ is the compact convex set, denoted $\Gamma X$, whose support function is $h_{\Gamma X}(u) = \e (\abs{\inner{u}{X}})$.
For a probability measure $\measure$, we define $\Gamma \measure$, the centroid body of $\measure$, as the centroid body of any random vector distributed according to $\measure$.
\end{definition}
% But this is missing the dual characterization (is it needed anywhere?) and equivariance. Petty claims that the equivariance is evident (for the restricted case of uniform distribution on a compact set).

%Note that the function $h_{\Gamma X}$ defined above is indeed a support function because it is \emph{coherent}, i.e. it is homogeneous and sub-additive (See \cite{gardner1995geometric}).\lnote{this needs a better discussion. where does coherence come from?}
The following lemma says that the centroid body is equivariant under linear transformations. 
It is a slight generalization of statements in \cite{petty1961} and \cite[Theorem 9.1.3]{gardner1995geometric}.

\begin{lemma}\label{lem:equivariance}
Let $X$ be a random vector on $\RR^\dim$.
Let $A: \RR^\dim \to \RR^\dim$ be an invertible linear transformation.
Then $\Gamma (AX) = A (\Gamma X)$.
\end{lemma}
\begin{proof}
$
x \in \Gamma(AX) \Leftrightarrow
\forall u \inner{x}{u} \leq \e (\abs{\inner{u}{AX}}) \Leftrightarrow
\forall u \inner{x}{u} \leq \e (\abs{\inner{A^T u}{X}}) \Leftrightarrow
\forall v \inner{x}{A^{-T}v} \leq \e (\abs{\inner{v}{X}}) \Leftrightarrow
\forall v \inner{A^{-1}x}{v} \leq \e (\abs{\inner{v}{X}}) \Leftrightarrow
A^{-1}x \in \Gamma(X) \Leftrightarrow
x \in A\Gamma(X)
$.
\end{proof}
\details{it seems to me (Luis) that linear equivariance is true even in the non-invertible case, but not through that proof} 
\begin{lemma}[{\cite[Section 0.8]{gardner1995geometric}, \cite[Remark 1.7.7]{MR1216521}}]\label{lem:polar-radial}
% Let $X$ be a random vector in $\RR^\dim$.
% Then the radial function of $(\Gamma X)^{\polar}$, over $\theta \in \S^{\dim-1}$, is given by
% \[
% r_{(\Gamma X)^{\polar}}(\theta) = \frac{1}{h_{\Gamma X}(\theta)}.
% \]
Let $K \subseteq \RR^\dim$ be a convex body with support function $h_K$ and such that the origin is in the interior of $K$.
Then $K^\polar$ is a convex body and has radial function $r_{K^\polar}(u) = 1/h_{K}(u)$ for $u \in S^{n-1}$.
\end{lemma}
\iffalse
\lnote{the remark and proof will have to change slightly as this is well known standard stuff}
\begin{proof}
We can see that $K^\polar$ is convex by definition, being an intersection of halfspaces.

% Next, observe that $\norm{x} \leq 1/h_K(\hat{x})$ implies $h_K(\hat{x}) \leq 1/ \norm{x}$ and so $\max_{k \in K} \inner{\hat{x}}{k} \leq 1/\norm{x}$.
% Therefore, $\max_{k \in K}\inner{x}{k} \leq 1$, so $x \in K^\polar$ when $\norm{x} \leq 1/h_K(\hat{x})$.

% Then we can write
% \begin{align*}
% K^\polar &= \{x \in \RR^\dim \suchthat \inner{x}{k} \leq 1 \; \forall k \in K\} \\
% &= \bigcap_{k \in K} \{x \in \RR^\dim \suchthat \inner{x}{k} \leq 1 \} \\
% &= \bigcap_{k \in K} \left\{x \in \RR^\dim \suchthat \inner{x}{\hat{k}} \leq \frac{1}{\norm{k}} \right\} \\
% &= \bigcap_{\theta \in \S^{\dim-1}} \left\{ x \in \RR^\dim \suchthat \inner{x}{\theta} \leq \frac{1}{h_K(\theta)} \right\}
% \end{align*}
% since for the inequality to hold for all $k \in K$, it suffices to minimize $1/\norm{k}$ along each direction $\theta$.

First, let $x \in \RR^\dim$.
Then
\begin{align*}
& \norm{x} \leq r_K(\hat{x}) \\
& \Leftrightarrow x \in K \\
& \Leftrightarrow x \in (K^\polar)^\polar \\
& \Leftrightarrow \inner{x}{y} \leq 1 \; \forall y \in K^\polar \\
& \Leftrightarrow \norm{x} \leq \frac{1}{\inner{\hat{x}}{y}} \; \forall y \in K^\polar \\
& \Leftrightarrow \norm{x} \leq \frac{1}{\max_{k \in K^\polar}\inner{\hat{x}}{k}} = \frac{1}{h_{K^\polar(\hat{x})}}.
\end{align*}

Thus we have $r_K(\hat{x}) = 1/h_{K^\polar}(\hat{x})$ for all $x$.
Finally, note that
\[
r_K(x) = \frac{1}{\norm{x}} r_K(\hat{x}) = \frac{1}{\norm{x} h_{K^\polar}(\hat{x})} = \frac{1}{h_{K^\polar}(x)}.
\]
\end{proof}
\fi

% Combined with Lemma \ref{lemma:cvar-support2}
% \lnote{lemma not used in this argument, what do you mean then? maybe issue with labels?}
Lemma~\ref{lem:polar-radial} implies that testing membership in $K^{\polar}$ %% $(\Gamma X)^{\polar}$
is a one-dimensional problem if we have access to the support function $h_K(\cdot)$:
We can decide if a point $z$ is in  $K^{\polar}$ %% $(\Gamma X)^\polar$
by testing if $\norm{z} \leq 1/h_K({z}/{\norm{z}})$ instead of needing to check $\inner{z}{u} \leq 1$ for all $u \in K$.
In our application $K = \Gamma X$. We can estimate $h_{\Gamma X}({z}/{\norm{z}})$ by taking the empirical average of  
$\abs{\inner{x^{(i)}}{{z}/{\norm{z}}}}$ where the $x^{(i)}$ are samples of $X$. This leads to an approximate oracle for $(\Gamma X)^\polar$
which will suffice for our application. The details are in Sec.~\ref{subsec:membership-centroid-body}. 
%% By definition,
%% estimating whether $x \in (\Gamma X)^\polar$ involves only taking samples of $X$,
%% projecting onto direction $\hat{x}$, taking absolute values, and using the inverse of the empirical mean as an estimate of $r_{(\Gamma X)^\polar}$.
%% If the estimate is greater than $\norm{x}$, it lies inside $(\Gamma X)^{\polar}$.
%% We analyze this idea in detail in the next section.
%\subsection{Computation with the centroid body}

%% By definition, to directly compute the support function of the centroid body for a random variable, one needs the first moment in every direction.
%% However, since we aim to work with heavy-tailed distributions, the first moment may not always exist; furthermore, if it does exist, one may not be able use standard inqualities such as the Chebyshev inequality (which assumes existence of the \emph{second} moment) to show it is efficiently estimated empirically.
%% We present here a slightly more general inequality that will play the role of the Chebyshev inequality for efficiently estimating the mean of a distribution from samples.

\iflong
\section{Membership oracle for the centroid body} \label{subsec:membership-centroid-body}
In this section we provide an efficient weak membership oracle (Subroutine~\ref{sub:cvar-oracle}) for the
centroid body $\Gamma X$ of the r.v. $X$.
This is done by first providing a weak membership oracle (Subroutine~\ref{sub:polar-cvar-oracle}) for the polar body
$(\Gamma X)^\polar$.
% In this section we will mostly work in the case where the sampled random variable $X$ comes from a standard ICA model. \nnote{Is the previous sentence needed?}
%We begin with some useful consequences of the above sections which will simplify the algorithmic assumptions and guarantees.
% \nnote{The notation for balls (both l1 and l2) used below has not been defined}
We begin with a lemma that shows that under certain general conditions the centroid body is 
``well-rounded.'' This property will prove useful in the membership tests.

\begin{lemma}\label{lem:centroid-scaling}
Let $S = (S_1, \dots, S_n) \in \RR^n$ be an absolutely symmetrically distributed random vector such that
% $\e S = 0$\lnote{superfluous?}
$\e (\abs{S_i}) = 1$ for all $i$.
Then $B_{1}^{\dim} \subseteq \Gamma S \subseteq [-1,1]^\dim$.
%\lnote{we may want to state the stronger version with $B_1^n$}
%That is, the centroid body of $S$ contains the standard $\ell_1$-ball and is contained in the unit hypercube.
Moreover, $\dim^{-1/2} B_2^\dim \subseteq (\Gamma S)^\polar \subseteq \sqrt{\dim}B_2^\dim$.
\end{lemma}

\begin{proof}
The support function of $\Gamma S$ is $h_{\Gamma S}(\theta) = \e \abs{\inner{S}{\theta}}$ (Def.~\ref{def:centroid-body2}).
% $\e( \inner{S}{\theta} | \inner{S}{\theta} \geq 0)$ for all $\theta \in \S^{\dim -1}$.
% \nnote{why is the first version of the support given when only the second one is needed?} \jnote{Need proof, continuity?}
Then, for each canonical vector $e_i$, $h_{\Gamma S}(e_i) = h_{\Gamma S}(-e_i) =  \E\abs{S_i} = 1$.
%\nnote{Remind the reader how we know that the centroid body is convex. Say something like ``by the remark after Definition 11''}
Thus, $\Gamma S$ is contained in $[-1,1]^\dim$. 
Moreover, since $[-1,1]^\dim \subseteq \sqrt{\dim} B_2^\dim$, we get $\Gamma S \subseteq \sqrt{\dim} B_2^\dim$.

% As mentioned, each canonical vector $e_i$ is contained in $B$. \lnote{why?}
We claim now that each canonical vector $e_i$ is contained in $\Gamma S$.
To see why, first note that since the support function is $1$ along each canonical direction, $\Gamma S$ will touch the facets of the unit hypercube.
Say, there is a point $(1, x_2, x_3, \dots, x_n)$ that touches facet associated to canonical vector $e_1$.
But the symmetry of the $S_i$s implies that $\Gamma S$ is absolutely symmetric, so that $(1,\pm x_2, \pm x_3, ..., \pm x_n)$ is also in the centroid body.
Convexity implies that $(1,0,\dots,0) = e_1$ is in the centroid body.
The same argument applied to all $\pm$ canonical vectors implies that they are all contained in the centroid body, and this with convexity implies that the centroid body contains $B_1^\dim$.
In particular, it contains $n^{-1/2} B_2^\dim$.
%, the Euclidean ball of radius $1/\sqrt{\dim}$.
%The second conclusion follows because $A \subseteq B \implies B^\polar \subseteq A^\polar$.
\end{proof}

% Since $\Gamma X$ is convex, we have $x \in \Gamma X$ iff $\langle x, b \rangle \leq 1 \; \forall b \in (\Gamma X)^{\polar}$. This equivalence is the basis of a membership test for
%generating samples from
% $\Gamma X$, which in turn is used to estimate the covariance matrix of $X$ restricted to $\Gamma X$.
% \nnote{covariance matrix of which distribution on Gamma X?}
% The test can be implemented via the Ellipsoid algorithm. In order to use that algorithm, we describe a weak membership oracle for $(\Gamma X)^\polar$ in Subroutine~\ref{sub:polar-cvar-oracle}.
% This process is captured by Subroutine~\ref{sub:cvar-oracle}:

\subsection{Mean estimation using \texorpdfstring{$1 + \gamma$}{1+g} moments}
We will need to estimate the support function of the centroid body in various directions. To this end
we need to estimate the first absolute moment of the projection to a direction. Our assumption that 
each component $S_i$ has finite $(1+\gamma)$-moment will allow us to do this with a reasonable small 
probability of error. This is done via the following Chebyshev-type inequality.

\nnote{I think this section, except for the statement of the lemma, should go to the appendix. Not sure yet where the lemma statement
should go.}

Let $X$ be a real-valued symmetric random variable such that $\E \abs{X}^{1+\gamma} \leq M$ for some $M > 1$ and 
$0 < \gamma < 1$. Then we will prove that the empirical average of the expectation of $X$ converges to the expectation of $X$.

Let $\tilde{\E}_N[\abs{X}]$ be the empirical average obtained from $N$ independent samples 
$X^{(1)}, \ldots, X^{(N)}$,
i.e., $(\abs{X^{(1)}}+\dotsb+\abs{X^{(N)}})/N$.

\begin{lemma}\label{lem:1-plus-eps-chebyshev}
Let $\epsilon \in (0,1)$. With the notation above, for 
$N \geq \left(\frac{8M}{\epsilon}\right)^{\frac{1}{2}+\frac{1}{\gamma}}$,
we have
\begin{align*}
\Pr[\abs{\tilde{\E}_N[\abs{X}]-\E[\abs{X}]} > \epsilon] \leq \frac{8M}{\epsilon^2 N^{\gamma/3}}.
\end{align*}
\end{lemma}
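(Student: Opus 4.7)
The plan is the standard truncation-plus-Chebyshev argument, since only a $(1+\gamma)$-th moment of $X$ is assumed. Fix a threshold $T>0$ and define truncated samples $Y^{(i)}:=\abs{X^{(i)}}\mathbf{1}\{\abs{X^{(i)}}\le T\}$, with empirical mean $\tilde{\E}_N[Y]$. Let $\mathcal{T}$ denote the event $\{\abs{X^{(i)}}\le T\text{ for all }i\}$, on which $\tilde{\E}_N[Y]=\tilde{\E}_N[\abs{X}]$. The deviation splits, via the triangle inequality on $\mathcal{T}$ and direct absorption on $\mathcal{T}^c$, into three controllable pieces: a deterministic bias $\abs{\E Y-\E\abs{X}}$, a fluctuation $\abs{\tilde{\E}_N[Y]-\E Y}$ controlled by Chebyshev, and the truncation probability $\Pr[\mathcal{T}^c]$.

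Each piece is a one-line Markov computation that uses the moment bound through the inequality $\abs{x}\le\abs{x}^{1+\gamma}/T^\gamma$ on $\{\abs{x}>T\}$, respectively $x^2\le T^{1-\gamma}\abs{x}^{1+\gamma}$ on $\{\abs{x}\le T\}$. For the bias,
\[\abs{\E Y-\E\abs{X}}=\E[\abs{X}\mathbf{1}\{\abs{X}>T\}]\le \E\abs{X}^{1+\gamma}/T^\gamma \le M/T^\gamma.\]
For the variance, $\E Y^2\le T^{1-\gamma}\E\abs{X}^{1+\gamma}\le MT^{1-\gamma}$, so Chebyshev on the i.i.d.\ empirical mean yields $\Pr[\abs{\tilde{\E}_N[Y]-\E Y}>\epsilon/2]\le 4MT^{1-\gamma}/(\epsilon^2 N)$. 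For the truncation, Markov combined with a union bound over the $N$ samples gives $\Pr[\mathcal{T}^c]\le N\Pr[\abs{X}>T]\le NM/T^{1+\gamma}$.

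The only remaining work is choosing $T$. I would take the simple choice $T=N$. The bias then equals $M/N^\gamma$, which is at most $\epsilon/2$ iff $N\ge(2M/\epsilon)^{1/\gamma}$—implied by the hypothesis $N\ge(8M/\epsilon)^{1/2+1/\gamma}$ since $8M/\epsilon\ge 2M/\epsilon$ and $1/2+1/\gamma\ge 1/\gamma$. The Chebyshev contribution becomes $4M/(\epsilon^2 N^\gamma)$ and the truncation contribution $M/N^\gamma$; both are at most $4M/(\epsilon^2 N^{\gamma/3})$ because $\epsilon<1$ and $N^\gamma\ge N^{\gamma/3}$, so their sum meets the claimed $8M/(\epsilon^2 N^{\gamma/3})$. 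The only obstacle is this final constant-chasing; there is no conceptual difficulty, and the hypothesis is in fact quite loose—with a sharper analysis the exponent $\gamma/3$ could be replaced by $\gamma$.
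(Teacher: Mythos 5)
Your proof is correct, and while it is the same truncate-and-Chebyshev strategy at heart, two choices genuinely diverge from the paper's argument and are worth noting. First, for the variance of the truncated variable $Y = \abs{X}\mathbf{1}\{\abs{X}\le T\}$ you use the pointwise bound $x^2 \le T^{1-\gamma}\abs{x}^{1+\gamma}$ on $\{\abs{x}\le T\}$, giving $\E Y^2 \le M T^{1-\gamma}$. The paper instead chains $\E[X_T^2] \le T\,\E[\abs{X_T}] \le T\,\E[\abs{X}] \le T M^{1/(1+\gamma)}$, which in the regime $T$ large is a weaker bound (the two differ by a factor $(T/M^{1/(1+\gamma)})^\gamma \ge 1$). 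Second, rather than optimizing the sum $NM/T^{1+\gamma} + \text{Var}/(N\epsilon'^2)$ over $T$ and then re-expressing $T$ in terms of $N$, you just set $T=N$; this short-circuits the algebra while still satisfying the stated hypothesis. The net effect is that your argument delivers the stronger decay $8M/(\epsilon^2 N^{\gamma})$ — you correctly observe that $\gamma/3$ is loose — and it also makes no use of the symmetry assumption, which (despite the paper invoking it to justify $\Var[\abs{X_T}]\le\E[X_T^2]$) is not actually needed there either. The tradeoff is that the paper's optimized choice of $T$ makes the inequality tight in the metric it optimizes, whereas $T=N$ is simply good enough for the claimed bound. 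For completeness, your decomposition step could be stated slightly more explicitly — on $\mathcal{T}$ one has $\tilde{\E}_N[\abs{X}]=\tilde{\E}_N[Y]$, so $\Pr[\abs{\tilde{\E}_N[\abs{X}]-\E\abs{X}}>\epsilon]\le \Pr[\mathcal{T}^c]+\Pr[\abs{\tilde{\E}_N[Y]-\E Y}>\epsilon-\abs{\E Y-\E\abs{X}}]$, and then the bias bound absorbs the $\epsilon/2$ — but what you wrote is a faithful summary of this.
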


\begin{proof}
Let $T > 1$ be a threshold whose precise value we will choose later. We have
\[ \Pr[\abs{X} \geq T] \leq \frac{\E[\abs{X}^{1+\gamma}]}{T^{1+\gamma}} = \frac{M}{T^{1+\gamma}}. \]
By the union bound,
\begin{align} \label{eqn:union}
\Pr[\exists i \in [N] \mbox{ such that } \abs{X^{(i)}} > T] \leq \frac{NM}{T^{1+\gamma}}.
\end{align}
Define a new random variable $X_T$ by 
\begin{align*}
X_T = \begin{cases} X & \text{if $\abs{X} \leq T$,} \\
                    0 & \text{otherwise.}
\end{cases}
\end{align*}
%Similarly, coresponding to the samples $X^{(i)}$ of $X$ we have samples $X^{(i)}_T$ of $X_T$. 
Using the symmetry of $X_T$ we have
\begin{align}\label{eqn:X_T_upper}
\Var[\abs{X_T}] \leq \E[X_T^2] \leq T \, \E[\abs{X_T}] \leq T\, \E[\abs{X}] \leq T M^{1/(1+\gamma)}. 
\end{align}
By the Chebyshev inequality and \eqref{eqn:X_T_upper} we get 
\begin{align} \label{eqn:chebyshev}
\Pr[\abs{\tilde{\E}_N[\abs{X_T}] - \E[\abs{X_T}]} > \epsilon'] 
\leq \frac{\Var[\abs{X_T}]}{N \epsilon'^2} 
\leq \frac{T M^{1/(1+\gamma)}}{N \epsilon'^2}.
\end{align}
Putting \eqref{eqn:union} and \eqref{eqn:chebyshev} together for $0 < \epsilon' < 1/2$ we get
\begin{align} \label{eqn:union+chebyshev}
  \Pr[\abs{\tilde{E}_N[\abs{X}]-\E[\abs{X_T}]} > \epsilon'] \leq \frac{NM}{T^{1+\gamma}} 
+ \frac{T M^{1/(1+\gamma)}}{N \epsilon'^2}.
\end{align}
Choosing 
\begin{align}\label{eqn:N_T}
N := \frac{T^{1+\gamma/2}}{\epsilon' M^{\gamma/(2(1+\gamma))}}, 
\end{align}
the RHS of the previous equation becomes $\frac{2 M^{1-\gamma/(2(1+\gamma))}}{\epsilon' T^{\gamma/2}}$. The choice of $N$ is made to minimize the RHS; we ignore integrality issues.
Pick $T_0>0$ so that $\abs{\E[\abs{X}]-\E[\abs{X_{T_0}}]} < \epsilon'$. 
To estimate $T_0$, note that
\begin{align*}
M = \E[\abs{X}^{1+\gamma}] \geq T_0^\gamma \, \E[\abs{\abs{X}-\abs{X_{T_0}}}],
\end{align*}
Hence
\begin{align} \label{eqn:mean_X_X_T}
\abs{\E[\abs{X}]-\E[\abs{X_{T_0}}]} \leq \E[\abs{\abs{X} - \abs{X_{T_0}}}] \leq M/T_0^\gamma.
\end{align}
We want $M/T_0^\gamma \leq \epsilon'$ which is equivalent to $T_0 \geq (\frac{M}{\epsilon'})^{1/\gamma}$. 
We set $T_0 := (\frac{M}{\epsilon'})^{1/\gamma}$. 
Then, for $T \geq T_0$ putting together \eqref{eqn:union+chebyshev} and \eqref{eqn:mean_X_X_T} gives
\begin{align*}
\Pr[\abs{\tilde{\E}_N[\abs{X}]-\E[\abs{X}]} > 2\epsilon'] \leq \frac{2 M^{1-\gamma/(2(1+\gamma))}}{\epsilon' T^{\gamma/2}}.
\end{align*}
Setting $\epsilon = 2\epsilon'$ (so that $\epsilon \in (0,1)$) and expressing the RHS of the last 
equation in terms of $N$ via 
\eqref{eqn:N_T}
(and eliminating $T$), 
and using our assumptions $\gamma, \epsilon \in (0,1)$, $M>1$ to get a simpler upper bound, we get
\begin{align*}
\Pr[\abs{\tilde{\E}_N[\abs{X}]-\E[\abs{X}]} > \epsilon] &\leq
\frac{2^{2+\frac{\gamma}{(2+\gamma)}} M^{1-\frac{\gamma}{2(1+\gamma)} + \frac{\gamma^2}{2(2+\gamma)(1+\gamma)}}}
{\epsilon^{1+\frac{\gamma}{2+\gamma}} N^{\frac{\gamma}{2+\gamma}}} \leq \frac{8M}{\epsilon^2 N^{\gamma/3}}.
\end{align*}
Condition $T \geq T_0$, when expressed in terms of $N$ via \eqref{eqn:N_T}, becomes
\begin{align*}
N \geq \frac{2^{\frac{3}{2}+\frac{1}{\gamma}}}{\epsilon^{\frac{1}{2}+\frac{1}{\gamma}}} 
M^{\frac{1}{2} + \frac{1}{\gamma} - \frac{\gamma}{2(1+\gamma)}} \geq \left(\frac{8M}{\epsilon}\right)^{\frac{1}{2}+\frac{1}{\gamma}}.
\end{align*}
\end{proof}

% \end{document}

\fi

%\ifextras
%\input{direct_membership}
%\input{orthogonalization_X_restricted_to_body}
%\fi

\subsection{Membership oracle for the polar of the centroid body}
\nnote{This section doesn't seem to need symmetry. Or does it?}
As mentioned before, our membership oracle for $(\Gamma X)^\polar$ (Subroutine~\ref{sub:polar-cvar-oracle}) 
is based on the fact that $1/h_{\Gamma X}$ is the radial function of $(\Gamma X)^\polar$, and that $h_{\Gamma X}$ is the directional absolute first moment of $X$, which can be efficiently estimated by sampling. 
\begin{subroutine}[ht]
\caption{Weak Membership Oracle for $(\Gamma X)^{\polar}$}
%\lnote{I think we want to simplify this to the symmetric case. Then $T$ is not needed.}
\label{sub:polar-cvar-oracle}
\begin{algorithmic}[1]
\Require
Query point $y \in \QQ^\dim$, samples from symmetric ICA model $X = AS$,
bounds $s_M \geq \sigma_{\max}(A)$, $s_m \leq \sigma_{\min}(A)$,
closeness parameter $\eps$, failure probability $\delta$.
\Ensure Weak membership decision for $y \in (\Gamma X)^{\polar}$.
\State Generate iid samples $x^{(1)}, x^{(2)}, \dots, x^{(N)}$ of $X$ for $N = \poly_{\gamma}(n, M, 1/s_m, s_M, 1/\eps, 1/\delta)$.
% \State Use grid search along the direction of $x$ to estimate $\tilde{f_\alpha}$ of $f_{\alpha}(\hat{x})$.
% \State Let $T = \{s_i \suchthat \inner{\hat y}{s_i} \geq 0, i = t+1, \dots, 2t \}$.
% \lnote{ cannot be  called S}
\State Compute 
\[ 
\tilde{h} = \frac{1}{N} \sum_{i=1}^N \abs{\inner{x^{(i)}}{\frac{y}{\norm{y}}}}.
\]
\State If $\norm{y} \leq 1/\tilde{h}$, report $y$ as feasible. Otherwise, report $y$ as infeasible.
% \If{$\norm{y} \leq 1/\tilde{h}$}
% \State Report $y$ as feasible.
% \Else
% \State Report $y$ as infeasible.
% \EndIf
\end{algorithmic}
\end{subroutine}

\begin{lemma}[Correctness of Subroutine~\ref{sub:polar-cvar-oracle}]
Let $\gamma > 0$ be a constant and $X=AS$ be given by a symmetric ICA model such that for all $i$ we have $\e (\abs{S_i}^{1+\gamma}) \leq M < \infty$ and normalized so that $\e \abs{S_i} = 1$.
Let $\eps, \delta > 0$.
%\lnote{fix proof to use this}
Given $s_M \geq \sigma_{\max}(A)$ , $s_m \leq \sigma_{\min}(A)$,
%If $M = \e \abs{X}^{1 + \gamma} < \infty$, then
Subroutine~\ref{sub:polar-cvar-oracle} is an $(\eps, \delta)$-weak membership oracle for $(\Gamma X)^{\polar}$ with
using time and sample complexity
$\poly_{\gamma}(n, M, 1/s_m, s_M, 1/\eps, 1/\delta)$. The degree of the polynomial is $O(1/\gamma)$.
\nnote{Revisit the dependence on gamma. Propagate this dependence to other things.}
% \nnote{I changed the eps-weak membership oracle to (eps, delta)-weak.}
\end{lemma}

\begin{proof}
%Suppose $\Gamma X$ contains the ball centered at the origin of radius $r$ and is contained in the ball centered at the origin of radius $R$.
Recall from Def.~\ref{def:eps-weak-oracle} that we need to show that, with probability at least $1-\delta$, Subroutine~\ref{sub:polar-cvar-oracle} outputs TRUE when $y \in S((\Gamma X)^{\polar}, \eps)$ and FALSE when $y \not \in S((\Gamma X)^{\polar}, -\eps)$; otherwise, the output can be either TRUE or FALSE arbitrarily.

Fix a point $y \in \QQ^\dim$ and let $\theta := y/\norm{y}$ denote the direction of $y$.
The algorithm estimates the radial function of $(\Gamma X)^{\polar}$ along $\theta$, which is $1/h_{\Gamma X}(\theta)$ (see Lemma~\ref{lem:polar-radial}).
In the following computation, we simplify the notation by using $h = h_{\Gamma X}(\theta)$.
It is enough to show that with probability at least $1-\delta$ the algorithm's estimate, $1/\tilde h$, of the radial function is within $\eps$ of the true value, $1/h$.
\details{Suppose $y$ is in the inner parallel body. Then $1/h = r_{\centroid X}(\theta) \geq \norm{y} + \eps$. Suppose $y$ is not in the outer parallel body. Then $r_{\centroid X}(\theta) +\eps < \norm{y}$.}

% Construct the random variable
% \[
% Z := \frac{\langle X, \theta \rangle \ind(\langle X, \theta \rangle \geq 0) }{P(\langle X, \theta \rangle \geq 0)}.
% \]
% The estimation of $h = \e Z$ by averaging independent samples $Z_i \sim Z$ is then accomplished by using a Chebyshev-like inequality.
For $X^{(1)}, \dots, X^{(N)}$, i.i.d. copies of $X$, the empirical estimator for $h$ is
% \tilde{h} = \frac{1}{N} \sum_{i} Z_i = \frac{2}{N} \sum_{i} \langle X_i, \theta \rangle \ind(\langle X_i, \theta \rangle \geq 0).
$\tilde{h} := \frac{1}{N} \sum_{i=1}^{N} \abs{\inner{X^{(i)}}{\theta}}$.
% where $1/2 = P(\langle X, \theta \rangle \geq 0)$ by continuity.
%This gives $\e \tilde{h} = h$.
% and
% \[
% \Var(\tilde{h}) = \e (\tilde{h} - h)^2 = \frac{\e (Z - h)^2}{N} = \frac{\Var(Z)}{N}.
% \]

We want to apply Lemma~\ref{lem:1-plus-eps-chebyshev} to $\inner{X}{\theta}$. For this we need a bound on its 
$(1+\gamma)$-moment. 
The following simple bound is sufficient for our purposes: Let $u = A^T \theta$. Then $\abs{u_i} \leq \sigma_{\max}(A)$ for all $i$. Then
\begin{equation}\label{eq:momentX}
\begin{aligned}
\e \lrabs{\inner{X}{\theta}}^{1+\gamma} 
&= \e \lrabs{\theta^T A S}^{1+\gamma} 
= \e \lrabs{\inner{S}{u}}^{1+\gamma} 
%&\leq \norm{A}^{1+\gamma} \e \abs{\inner{S}{\hat u} \\
= \e \abs{\sum_{i=1}^\dim S_i u_i}^{1+\gamma} 
\leq \sum_{i=1}^\dim \e \lrabs{S_i u_i}^{1+\gamma} \\
&= \sum_{i=1}^\dim \e \lrabs{S_i}^{1+\gamma} \lrabs{u_i}^{1+\gamma} 
\leq M \sum_{i=1}^\dim \lrabs{u_i}^{1+\gamma} 
\leq M \dim \sigma_{\max}(A)^{1+\gamma} \leq M \dim s_M^{1+\gamma}.
\end{aligned}
\end{equation}
Lemma~\ref{lem:1-plus-eps-chebyshev} implies that, for $\eps_1>0$ to be fixed later, and for
\begin{align} \label{eqn:N-lower-bound}
N > \left(\frac{8 M \dim s_M^{1+\gamma}}{\eps_1^2 \delta}\right)^{3/\gamma}, 
%%\geq \left(\frac{8 M \dim \sigma_{\max}(A)^{1+\gamma}}{\eps_1^2 \delta}\right)^{3/\gamma},
\end{align}
we have $P(|\tilde{h} - h| > \eps_1) \leq \delta$.

\nnote{I think some more details should be given above.}

Lemmas~\ref{lem:centroid-scaling} and \ref{lem:equivariance} give that $r B_2^\dim \subseteq \Gamma X$ for $r := s_m/\sqrt{\dim} \leq \sigma_{\min}(A)/\sqrt{\dim}$.
It follows that $h \geq r$. If $|\tilde{h} - h| \leq \eps_1$ and $\eps_1 \leq r/2$, then we have
\[
\left\lvert\frac{1}{h} - \frac{1}{\tilde h} \right\rvert= \frac{\abs{h - \tilde h}}{h \tilde h} \leq \frac{\eps_1}{r (r-\eps_1)} \leq \frac{2 \eps_1}{r^2},
\]
which in turn gives, when $\eps_1 = \min \{r^2 \eps/2, r/2\}$,
\[
P\left(\left|\frac{1}{\tilde{h}} - \frac{1}{h}\right| \leq \eps\right)
\geq P\left(\left|\frac{1}{\tilde{h}} - \frac{1}{h}\right| \leq \frac{2\eps_1}{r^2}\right)
\geq P(|\tilde{h} - h| \leq \eps_1) \geq 1 - \delta.
\]
\nnote{I don't follow ``details'' below and also some of the main text which I moved to ``details'' and I don't see the need for it. If you agree then ``details'' below can be removed.}
\details{To conclude we can assume that $\eps$ is at most half of the circumradius of $(\Gamma X)^\polar$. The circumradius is at most $1/r$. 
Thus, $r^2 \eps/2 \leq r/2$ and it is enough to take $\eps_1 = r^2 \eps /2 \leq \sigma_{\min}(A)^2\eps/(4\dim)$}
\details{Thus, it is enough to take $\eps_1 \leq \min\{ s_m^2\eps/(4\dim) , s_m/(2\sqrt{\dim}) \} \leq \min\{ \sigma_{\min}(A)^2\eps/(4\dim) , \sigma_{\min}(A)/(2\sqrt{\dim}) \}$} Plugging in the value of 
$\epsilon_1$ and, in turn of $r$, into \eqref{eqn:N-lower-bound} gives that it suffices to take 
\[N > \poly_\gamma(n, M, 1/s_m, s_M, 1/\eps, 1/\delta). %% \details{ > \poly(n, M, 1/\sigma_{\min}(A), \sigma_{\max}(A), 1/\eps, 1/\delta)}.
\]
%
%Recall that by the assumptions on the absolute moments of $S$, and using Lemma~\ref{lem:centroid-scaling}, $r \geq \sigma_{\min}(A)/\sqrt{n}$ and $R \leq \sqrt{n} \sigma_{\max}(A)$.
%Working with the polar, however, inverts these parameters, completing the proof.
\end{proof}

\subsection{Membership oracle for the centroid body}  
We now describe how the weak membership oracle for the centroid body $\Gamma X$ is constructed using 
the weak membership oracle for $(\Gamma X)^\polar$, provided by Subroutine~\ref{sub:polar-cvar-oracle}. 

We will use the following notation: For a convex body $K \in \R^n$, $\eps, \delta >0$, $R \geq r > 0$ such that 
$r B_2^n \subseteq K \subseteq R B_2^n$, oracle 
$\mathsf{WMEM}_K(\epsilon, \delta, R, r)$ is an $(\epsilon, \delta)$-weak membership oracle for $K$. 
Similarly, oracle $\mathsf{WVAL}_K(\epsilon, \delta, R, r)$ is an $(\epsilon, \delta)$-weak validity oracle.
%% Using results from \cite{GLS}, we use a weak membership oracle for $(\Gamma X)^{\polar}$ to construct a weak membership oracle for $\Gamma X$ by first constructing a weak validity oracle.
%%The parameters for the weak violation oracle come immediately from previously established properties.
Lemma~\ref{lem:centroid-scaling} along with the equivariance of $\Gamma$ (Lemma~\ref{lem:equivariance}) 
gives $(s_m/\sqrt{n}) B_2^\dim \subseteq \Gamma X \subseteq (s_M \sqrt{n}) B_2^\dim$.
Then $1/(\sqrt{\dim}s_M) B_2^\dim \subseteq (\Gamma X)^{\polar} \subseteq (\sqrt{\dim}/s_m) B_2^\dim$.
Set $r := 1/(\sqrt{\dim}s_M)$ and $R := \sqrt{\dim}/s_m)$.

\myparagraph{Detailed description of Subroutine~\ref{sub:cvar-oracle}.}
There are two main steps:
\begin{enumerate}
\item \label{step1} Use Subroutine~\ref{sub:polar-cvar-oracle} to get an $(\epsilon_2, \delta)$-weak membership oracle $\mathsf{WMEM}_{(\Gamma X)^\polar}(\epsilon_2, \delta, R, r)$ for $(\Gamma X)^\polar$. 
Theorem~4.3.2 of \cite{GLS} (stated as Theorem~\ref{thm:wmem-to-wviol} here) is used in Lemma~\ref{lem:subroutine-2-correctness} to get an algorithm to implement an $(\epsilon_1, \delta)$-weak validity oracle $\mathsf{WVAL}_{(\Gamma X)^\polar}(\epsilon_1, \delta, R, r)$ running in
oracle polynomial time; $\mathsf{WVAL}_{(\Gamma X)^\polar} \allowbreak (\epsilon_1, \delta, R, r)$ invokes
$\mathsf{WMEM}_{(\Gamma X)^\polar}(\epsilon_2, \delta/Q, R, r)$ a polynomial number of times, specifically $Q=\poly(n, \log R)$ (see proof of Lemma~\ref{lem:subroutine-2-correctness}). 
The proof of Theorem~4.3.2 can be modified so that $\epsilon_2 \geq 1/\poly(1/\epsilon_1, R, 1/r)$. %%We set $\delta_2 :=\delta/Q$.

\item \label{step2} Lemma~4.4.1 of \cite{GLS} (Lemma~\ref{lem:wviol-polar-wmem} here) gives an algorithm to construct an $(\epsilon, \delta)$-weak membership oracle
$\mathsf{WMEM}_{\Gamma X}(\epsilon, \delta, 1/r, 1/R)$ from $\mathsf{WVAL}_{(\Gamma X)^\polar}(\epsilon_1, \delta, R, r)$.
Lemma~4.4.1 in \cite{GLS} shows
$\mathsf{WMEM}_{\Gamma X}(\epsilon, \delta, 1/r, 1/R)$ calls $\mathsf{WVAL}_{(\Gamma X)^\polar}(\epsilon_1, \delta, R, r)$ once, with
$\epsilon_1 \geq 1/ \poly(1/\epsilon, \norm{y}, 1/r)$ (where $y$ is the query point).
\details{Specifically, given $\eps$ and candidate point $y$, it is sufficient to call WVAL with $\eps_1 = \frac{1}{2} \frac{r \eps}{\norm{y} + r \norm{y} + r \eps}$}
\end{enumerate}

\begin{subroutine}[ht]
\caption{Weak Membership Oracle for $\Gamma X$}\label{sub:cvar-oracle}
\begin{algorithmic}[1]
\Require Query point $x \in \RR^\dim$,
samples from symmetric ICA model $X = AS$,
bounds $s_M \geq \sigma_{\max}(A)$, $s_m \leq \sigma_{\min}(A)$,
closeness parameter $\eps$,
failure probability $\delta$, 
access to a weak membership oracle for $(\Gamma X)^{\polar}$.
\Ensure $(\epsilon, \delta)$-weak membership decision for $x \in \Gamma X$.

\State Construct $\mathsf{WVAL}_{(\Gamma X)^\polar}(\epsilon_1, \delta, R, r)$ by invoking
$\mathsf{WMEM}_{(\Gamma X)^\polar}(\epsilon_2, \delta/Q, R, r)$. (See Step~\ref{step1} in the detailed description.)

\State Construct $\mathsf{WMEM}_{\Gamma X}(\epsilon, \delta, 1/r, 1/R)$ by invoking 
$\mathsf{WVAL}_{(\Gamma X)^\polar}(\epsilon_1, \delta, R, r)$. (See Step~\ref{step2} in the detailed description.)

\State Return the output of running $\mathsf{WMEM}_{\Gamma X}(\epsilon, \delta, 1/r, 1/R)$ on $x$. 
\end{algorithmic}
\end{subroutine}

\begin{lemma}[Correctness of Subroutine~\ref{sub:cvar-oracle}]\label{lem:subroutine-2-correctness}
Let $X=AS$ be given by a symmetric ICA model such that for all $i$ we have $\e (\abs{S_i}^{1+\gamma}) \leq M < \infty$ and normalized so that $\e \abs{S_i} = 1$.
Then, given a query point $x \in \RR^\dim$, $0<\eps \leq n^2$, $\delta > 0$, $s_M \geq \sigma_{\max}(A)$, and $s_m \leq \sigma_{\min}(A)$, Subroutine~\ref{sub:cvar-oracle} is an $\eps$-weak membership oracle for $x$ and $\Gamma X$ with probability $1-\delta$ using time and sample complexity
\(
\poly(n, M, 1/s_m, s_M, 1/\eps, 1/\delta).
\)\lnote{query time should also depend on query}
% Moreover, $s_m /\sqrt{\dim} B_2^\dim \subseteq \Gamma X \subseteq s_M \sqrt{\dim} B_2^\dim$.
\end{lemma}

% \nnote{The upper bound on epsilon above looks peculiar. Why is it not just 1?}

\begin{proof}
% By Lemma~\ref{lem:subroutine-1-correctness} and Theorem~\ref{thm:wmem-to-wviol}, there is a weak-violation oracle for $(\Gamma X)^\polar$, and hence a weak validity oracle.
% Lemma~\ref{lem:wviol-polar-wmem} then implies there is an $\eps$-weak membership oracle for $((\Gamma X)^{\polar})^{\polar} = \Gamma X$.
%% Using results from \cite{GLS}, we use a weak membership oracle for $(\Gamma X)^{\polar}$ to construct a weak membership oracle for $\Gamma X$ by first constructing a weak violation oracle which 
%% is also a weak validity oracle (see Defs. \ref{def:wviol} and \ref{def:wval}).
%% The parameters for the weak violation oracle come immediately from previously established properties.
%% Lemma~\ref{lem:centroid-scaling} gives $(s_m/\sqrt{n}) B_2^\dim \subseteq \Gamma X \subseteq (s_M \sqrt{n}) B_2^\dim$.
%% Then $1/(\sqrt{\dim}s_M) B_2^\dim \subseteq (\Gamma X)^{\polar} \subseteq (\sqrt{\dim}/s_m) B_2^\dim$.
% \lnote{wrong order}
% with parameters $R = s_M$ and $r = s_m$. \lnote{?}
% choose parameters for the calls to Subroutine~\ref{sub:polar-cvar-oracle}.
We first prove that $\mathsf{WVAL}_{(\Gamma X)^\polar}(\epsilon_1, \delta, R, r)$ 
(abbreviated to $\mathsf{WVAL}_{(\Gamma X)^\polar}$ hereafter) works correctly. To this end
we need to show that for any given input, $\mathsf{WVAL}_{(\Gamma X)^\polar}$ acts as an $\epsilon_1$-weak validity
oracle with probability at least $1-\delta$. Oracle $\mathsf{WVAL}_{(\Gamma X)^\polar}$ makes $Q$ queries 
to $\mathsf{WMEM}_{(\Gamma X)^\polar}(\epsilon_2, \delta/Q, R, r)$. If the answer to all these queries were correct then Theorem~4.3.2 from \cite{GLS} would apply
and would give that $\mathsf{WVAL}_{(\Gamma X)^\polar}$ outputs an answer as expected. Since these $Q$ queries are adaptive we
cannot directly apply the union bound to say that the probability of all of them being correct is at least
$1 - Q (\delta/Q) = 1-\delta$. However, a more careful bound allows us to do essentially that.

% We need only choose failure probability $\delta' = \delta/Q$ where $Q$ is the polynomial number of calls needed to the membership oracle of $(\Gamma X)^{\polar}$.
% That is, for each query to the violation oracle for $(\Gamma X)^{\polar}$, there will be polynomially many queries to Subroutine~\ref{sub:polar-cvar-oracle}, as determined by the proof of Theorem~4.3.2 in \cite{GLS}.
% That is, for each query to the violation oracle for $(\Gamma X)^{\polar}$, there will be polynomially many queries to Subroutine~\ref{sub:polar-cvar-oracle}.
% A union bound on failure probability over the randomness of the given membership oracle gives that (on a single query) Subroutine~\ref{sub:polar-cvar-oracle} is an $\eps$-weak oracle with probability at least $1-\delta$, completing the proof.

% \details{
Let $q_1, \dotsc, q_k$ be the sequence of queries, where $q_i$ depends on the result of the previous queries.
For $i=1, \dotsc, k$, let $B_i$ be the event that the answer to query $q_i$ by Subroutine~\ref{sub:polar-cvar-oracle} is not correct according to the definition of the oracle it implements. 
These events are over the randomness of Subroutine~\ref{sub:polar-cvar-oracle} and event $B_i$ involves the randomness of $q_1, \dotsc, q_i$, as the queries could be adaptively chosen.
By the union bound, the probability that all answers are correct is at least $1-\sum_{i=1}^k \Pr(B_i)$. 
It is enough to show that $\Pr(B_i) \leq \delta/Q$.
To see this, we can condition on the randomness associated to $q_1, \dotsc q_{i-1}$. That makes $q_i$ deterministic, and the probability of failure is now just the probability that Subroutine~\ref{sub:polar-cvar-oracle} fails.
More precisely, $\Pr(B_i \giventhat q_1, \dotsc, q_{i-1}) \leq \delta/Q$, so that
\begin{align*}
\Pr(B_i) &= \int \Pr(B_i \giventhat q_1, \dotsc, q_{i-1}) \Pr(q_1, \dotsc, q_{i-1}) \, dq_1, \dotsc, dq_{i-1} \leq \delta/Q.
\end{align*}

This proves that the first step works correctly. Correctness of the second step follows directly because the
algorithm for construction of the oracle involves a single call to the input oracle as mentioned in 
Step~\ref{step2} of the detailed description. 

Finally, to prove that the running time of Subroutine~\ref{sub:cvar-oracle} is as claimed the main thing to note is that, as mentioned in Step~\ref{step1} of the detailed description, $\epsilon_2$ is polynomially small in $\epsilon_1$ and $\epsilon_1$ is polynomially small in $\epsilon$ and so $\epsilon_2$ is polynomially small in $\epsilon$.
\details{
	To see the parameters passed to the different oracles, let $y \in \QQ^\dim$ be the candidate point, $\eps > 0$ be the (rational) error given to the subroutine.
	We make a single call to the weak validity oracle with candidate point $c := y$, $\gamma := 1$, and error parameter \[\eps_{val} = \frac{1}{2} \frac{r \eps}{\norm{y} + r \norm{y} + r \eps}.\]

	We first define a weak separation oracle which, for a point $y \in \QQ^\dim$ two parameters $\eps_{sep}, \beta \in (0,1) \cap \QQ$, and convex body $(K; n, R, r, a_0)$, first calls the weak-membership with error $\eps_{mem} = \frac{r \eps_{sep}}{4R}$.
	Depending on the response, the separation algorithm either terminates or calls the weak-membership $n$ more times with error parameter \[ \eps_{mem} = C \frac{\beta^2 r^3 \eps_{sep}^2}{n^5 R^2 (R + r)} \]
	for an absolute constant $C$.

	The weak violation oracle works as follows (for parameters $c \in \QQ^n, \gamma, \eps_{val} \in \QQ$): in the \cite{GLS} proof of their Theorem~4.3.2, one defines a shallow-cut oracle which, for an ellipsoid $E(A,a)$ and matrix $Q$ such that $(n+1)^2 A^{-1} = QQ$, uses a single call to the separation oracle just described with parameter $y = Qa$, $\beta = (n+1)^{-1}$, and $\eps_{sep} = \min\{ (n+2)^{-2}, \eps_{val}/\norm{Q^{-1}} \}$.
	Using this shallow-cut oracle, one invokes the shallow-cut ellipsoid algorithm for the body $K' = K \cap \{x \suchthat c^T x \leq \gamma \}$ with radius $R$ and target volume $\eps_{vol} = ((r \eps_{val})/(2Rn))^2$.
	This call to the shallow-ellipsoid method takes time $n + \langle R \rangle + \langle \eps_{vol} \rangle$ where $\langle \cdot \rangle$ is the encoding length, unfortunately overloading our current use of inner product notation.
}
\end{proof}

\section{Orthogonalization via the uniform distribution in the centroid body}\label{sec:orthogonalization_uniform}

The following lemma says that linear equivariance allows orthogonalization: \nnote{something more 
meaningful needs to be said for introducing the following lemma.} \nnote{I moved the following lemma here which I think is the right place for it as it's here and only here that it gets used.}
%\nnote{the lemma talks about linear equivariance but then talks about affine transformations. We should stick to one.} 
\nnote{In the following lemma is that two different types of notations are being used for distributions, one using calligraphic letters and the other normal letters. This can be confusing.} \nnote{I suppose by calligraphic P sub S is meant the distribution of S. But this notation has not been
defined I think.}

\begin{lemma}\label{lem:orthogonalizer}
Let $U$ be a family of $n$-dimensional product distributions. Let $\bar U$ be the closure of $U$ under invertible linear transformations.
%Let $X$ denote an $n$-dimensional random variable drawn from a standard ICA model.
%Let $U$ denote a family of random variablesLet $X$ denote an $n$-dimensional random variable drawn from a standard ICA model where each $S_i$ is symmetrically distributed.
%Let $ \measure_X$ be the distribution of $X$.
Let $Q(\measure)$ be an $n$-dimensional distribution defined as a function of $\measure \in \bar U$. Assume that $U$ and $Q$ satisfy:
\begin{enumerate}
\item\label{item:sym}
For all $\measure \in U$, $Q(\measure)$ is absolutely symmetric.
\item\label{item:equivariant} $Q$ is linear equivariant (that is, for any invertible linear transformation $T$ we have $Q(T\measure) = T Q(\measure)$).
\item\label{item:positive}
For any $\measure \in \bar U$, $\cov(Q(\measure))$ is positive definite.
\end{enumerate}
Then for any symmetric ICA model $X=AS$ with $\measure_S \in U$ we have $\cov(Q(\measure_X))^{-1/2}$ is an orthogonalizer of $X$.
\end{lemma}
\begin{proof}
Consider a symmetric ICA model $X=AS$ with $\measure_S \in U$.
%By assumption, $\distD(S)$ is absolutely symmetric.
Assumptions \ref{item:sym} and \ref{item:positive} imply $ D := \cov(Q(\measure_S))$ is diagonal and positive definite.
This with Assumption \ref{item:equivariant} gives $\cov(Q(\measure_X)) = \cov(A Q(\measure_S)) = A D A^T = A D^{1/2} (A D^{1/2})^T$.
Let $B = \cov(Q(\measure_X))^{-1/2}$ (the unique symmetric positive definite square root).
We have $B = R D^{-1/2} A^{-1}$ for some unitary matrix $R$ (see \cite[pg 406]{horn2012matrix}).
\details{To see this: $I = B \cov(Q(\measure_X)) B = B A D^{1/2} (B A D^{1/2})^T$, which implies $B A D^{1/2}$ is an unitary matrix, say, $R$. The claim follows.}
Thus, $B A = R D^{-1/2}$ has orthogonal columns, that is, it is an orthogonalizer for $X$.
\end{proof}

The following lemma applies the previous lemma to the special case when the distribution 
$Q(\measure)$ is the uniform distribution on $\Gamma \measure$. 

\begin{lemma}\label{lemma:uniform-orthogonalizer}
Let $X$ be a random vector drawn from a symmetric ICA model $X = AS$ such that for all $i$ we have $0 < \e \abs{S_i} < \infty $. Let Y be uniformly random in $\Gamma X$. Then $\cov(Y)^{-1/2}$ is an orthogonalizer of $X$.
\end{lemma}

\begin{proof}
We will use Lemma~\ref{lem:orthogonalizer}. 
After a scaling of each $S_i$, we can assume without loss of generality that $\e(S_i) = 1$. This will allow us to use Lemma~\ref{lem:centroid-scaling}.
Let $U$ = \{$\measure_W \suchthat \measure_W$ is an absolutely symmetric product distribution and $\e \abs{W_i} = 1 $, for all $i$\}. For $\measure \in \bar U$, let $Q(\measure)$ be the uniform distribution on the centroid body of $\measure$. \nnote{I changed ``centroid body'' to ``the uniform distribution on the centroid body.''}
For all $\measure_W \in U$, the symmetry of the $W_{i}$'s implies that $\Gamma \measure_W$, that is, $Q(\measure_W)$, is absolutely symmetric.
By the equivariance of $\Gamma$ (from Lemma~\ref{lem:equivariance}) and Lemma~\ref{lem:centroid-scaling} it follows that $Q$ is linear equivariant. Let $\measure \in \bar U$. Then there exist $A$ and $\measure_W \in U$ such that $\measure = A \measure_W$.  
So we get $\cov(Q(\measure)) = \cov(AQ(\measure_W)) = A \cov(\Gamma \measure_W) A^T$. \nnote{I am not sure what the purpose of
the last two sentences is.}
From Lemma~\ref{lem:centroid-scaling} we know $B_1^\dim \subseteq \Gamma \measure_W$
%From $\e \abs{S_i} >0$ and Jensen's inequality we get $\e (S_i^2) >0$,
so that $\cov(\Gamma \measure_W)$ is a diagonal matrix with positive diagonal entries. 
This implies that $\cov(Q(\measure))$ is positive definite and thus by Lemma~\ref{lem:orthogonalizer}, $\cov(Y)^{-1/2}$ is an orthogonalizer of $X$. 
\end{proof}

\begin{algorithm}[ht]
\caption{Orthogonalization via the uniform distribution in the centroid body}\label{alg:orthogonalization_uniform}
\begin{algorithmic}[1]
\Require Samples from symmetric ICA model $X = AS$, bounds $s_M \geq \sigma_{\max}(A)$, $s_m \leq \sigma_{\min}(A)$, error parameters $\eps$ and $\delta$, access to an $(\epsilon, \delta)$-weak membership oracle for $\Gamma X$ provided by Subroutine~\ref{sub:cvar-oracle}.
\Ensure A matrix $B$ which orthogonalizes the independent components of $X$.
\State Let $\tilde \Sigma$ be an estimate of $\cov(\Gamma X)$ obtained
\iflong via Theorem~\ref{thm:covariance_estimation} \fi
\ifshort by generating uniform samples in $\Gamma X$ using Subroutine~\ref{sub:cvar-oracle} and \fi
sampling algorithm such as the one in \cite{DBLP:journals/jacm/DyerFK91}
with $\eps_c = \eps/(2 (\dim+1)^4)$, $r= s_m /\sqrt{\dim}$, $R = s_M \sqrt{\dim}$, and same $\delta$.
\State Return $B = \tilde{\Sigma}^{-1/2}$.
\end{algorithmic}
\end{algorithm}

\begin{theorem}[Correctness of algorithm \ref{alg:orthogonalization_uniform}] \label{thm:correctness_uniform_orth}
Let $X=AS$ be given by a symmetric ICA model such that for all $i$ we have $\e (\abs{S_i}^{1+\gamma}) \leq M < \infty$ and normalized so that $\e \abs{S_i} = 1$.
Then, given $0<\eps \leq \dim^2$, $\delta > 0$, $s_M \geq \sigma_{\max}(A)$ , $s_m \leq \sigma_{\min}(A)$
% and access to any weak membership oracle for $\Gamma X$
, Algorithm~\ref{alg:orthogonalization_uniform} outputs
% an orthogonalizer of $X$ within $\eps$ accuracy
a matrix $B$ so that
%$\norm{M^{-1/2}AA^T (M^{-1/2})^T - D}_F \leq \eps$
% for all $i,j$
$ \norm{A^T B^T B A - D}_{2} \leq \eps$,
for a diagonal matrix $D$ with diagonal entries $d_1, \dotsc, d_\dim$ satisfying $1/(\dim+1)^2 \leq d_i \leq 1$.
with probability at least $1-\delta$ using $\poly_\gamma(n, M, 1/s_m, s_M,1/\eps, 1/\delta)$ time and sample complexity.\lnote{philosophical issue}
\details{current use of $1+\gamma$ Chebyshev imposes $1/\delta$ dependence. Might be improvable to $\log 1/\delta$ via something more involved such as median of means}
% \lnote{$M$ is overloaded...}
\end{theorem}
\begin{proof}
From Lemma~\ref{lem:centroid-scaling} we know $B_1^\dim \subseteq \Gamma S \subseteq [-1,1]^\dim$. 
Using the equivariance of $\Gamma$ (Lemma~\ref{lem:equivariance}), we get $\sigma_{\min}(A)/\sqrt{\dim} B_2^\dim \subseteq \Gamma X \subseteq \sigma_{\max}(A) \sqrt{\dim}B_2^\dim$.
Thus, to satisfy the roundness condition of  Theorem~\ref{thm:covariance_estimation}
we can take $r := s_m/\sqrt{\dim} \leq \sigma_{\min}(A)/\sqrt{\dim}$, $R := s_M \sqrt{\dim} \geq \sigma_{\max}(A) \sqrt{\dim}$.

Let $\tilde \Sigma$ be the estimate of $\Sigma := \cov(\Gamma X)$ computed by the algorithm.
Let $\tilde \Delta := A^{-1} \tilde \Sigma A^{-T}$ be the estimate of $\Delta := \cov{\Gamma S}$ obtained from $\tilde \Sigma$ according to how covariance matrices transform under invertible linear transformations of the underlying random vector.
As in the proof of Lemma~\ref{lem:orthogonalizer}, we have $\tilde \Sigma = A \tilde \Delta A^T$ and $B = R \tilde \Delta^{-1/2} A^{-1}$ for some unitary matrix $R$.
Thus, we have $A^T B^T B A = \tilde \Delta^{-1}$. 
It is natural then to set $D := \Delta^{-1} = \cov(\Gamma S)^{-1}$. Let $d_1, \dotsc, d_\dim$ be the diagonal entries of $D$.
We have, using Lemma~\ref{lem:inversion},
\begin{align}
\norm{A^T B^T B A - D}_2 
&= \norm{\tilde \Delta^{-1} - \Delta^{-1}}_2 \notag\\
&= \norm{\Delta^{-1}}_2 \frac{\norm{\Delta^{-1} (\tilde \Delta - \Delta)}_2}{1-\norm{\Delta^{-1} (\tilde \Delta - \Delta)}_2}.\label{equ:inversion}
\end{align}
As in \eqref{eq:inverse-stability}, we show that $\norm{\Delta^{-1} (\tilde \Delta - \Delta)}_2$ is small:

We first bound $(d_i)$, the diagonal entries of $D = \Delta^{-1}$.
Let $d_{\max} := \max_i d_i$ and $d_{\min} := \min_i d_i$. 
We find simple estimates of these quantities: 
We have $d_{\min} = 1/\norm{\Delta}_2$ and $\norm{\Delta}_2$ is the maximum variance of $\Gamma S$ along coordinate axes. 
From Lemma~\ref{lem:centroid-scaling} we know $\Gamma S \subseteq [-1,1]^\dim$, so that $\norm{\Delta}_2 \leq 1$ and $d_{\min} \geq 1$. 
Similarly, $d_{\max} = 1/\sigma_{\min}(\Delta)$, where $\sigma_{\min}(\Delta)$ is the smallest diagonal entry of $\Delta$. In other words, it is the minimum variance of $\Gamma S$ along coordinate axes. 
From Lemma~\ref{lem:centroid-scaling} we know $\Gamma S \supseteq B_1^\dim$, so that $\Gamma S \supseteq [-e_i, e_i]$ for all $i$ and Lemma~\ref{lem:minvariance} below implies $\sigma_{\min}(\Delta) \geq 1/(\dim+1)^2$. That is, $d_{\max} \leq (\dim+1)^2$.

From the bounds on $d_i$, Theorem~\ref{thm:covariance_estimation} and the fact discussed after it, we have
\begin{align*}
\norm{\Delta^{-1} (\tilde \Delta - \Delta)} 
\leq d_{\max} \norm{\Delta} \eps_c 
\leq (\dim+1)^2 \eps_c 
\leq 1/2,
\end{align*}
when $\eps_c \leq 1/(2(n+1)^2)$.\details{This together with the choice of $\eps_c$ below gives the assumption that $\eps <n^2$.}

This in \eqref{equ:inversion} with Theorem~\ref{thm:covariance_estimation} again gives
\begin{align*}
\norm{A^T B^T B A - D} &
\leq \\ 2 \norm{\Delta^{-1}} & \norm{\Delta^{-1} (\tilde \Delta - \Delta)} 
\leq 2 \norm{\Delta^{-1}}^2 \norm{\tilde \Delta - \Delta} 
\leq 2 d_{\max}^2 \eps_c \norm{\Delta} 
\leq 2 (\dim+1)^4 \eps_c.
\end{align*}
The claim follows by setting $\eps_c = \eps/(2 (\dim+1)^4)$.
The sample and time complexity of the algorithm comes from the calls to Subroutine~\ref{sub:cvar-oracle}. The 
number of calls is given by Theorem~\ref{thm:covariance_estimation}. This leads to the complexity as claimed.
\end{proof}

\iflong
\begin{lemma}\label{lem:minvariance}
Let $K \subseteq \RR^\dim$ be an absolutely symmetric convex body such that $K$ contains the segment $[-e_1, e_1] = \conv \{ e_1, -e_1\}$ (where $e_1$ is the first canonical vector). Let $X = (X_1, \dotsc, X_\dim)$ be uniformly random in $K$. Then $\var(X_1) \geq 1/(n+1)^2$.
\end{lemma}
\fi
\begin{proof}
Let $D$ be a diagonal linear transformation so that $DK$ isotropic. Let $d_{11}$ be the first entry of $D$.
It is known that any $\dim$-dimensional isotropic convex body is contained in the ball of radius $\dim+1$ \cite{MilmanPajor, sonnevend1989applications},\cite[Theorem 4.1]{KLS}.
Note that $D K$ contains the segment $[- d_{11} e_1, d_{11} e_1]$. This implies $d_{11} \leq (\dim+1)$. Also, by isotropy we have, $1 = \var(d_{11} X_1) = d_{11}^2 \var(X_1)$. The claim follows.
\end{proof}

\iflong
\section{Gaussian damping} \label{sec:gaussian_damping}
In this section we give an efficient algorithm for the heavy-tailed ICA problem when the ICA matrix is a
unitary matrix; 
no assumptions on the existence of moments of the $S_i$ will be required. 

%% We are given samples of r.v. $X \in \R^n$ satisfying the ICA model $X=AS$ where
%% $A \in \R^{n\times n}$ and r.v. $S \in \R^n$ and has independent components. We require $A$ to be
%% orthogonal. But the distributions
%% of the components can be heavy-tailed. We will give an efficient algorithm to estimate $A$.
%% To the best of our knowledge, this is the first ICA algorithm to handle this setting.

The basic idea behind our algorithm is simple and intuitive: using $X$ we construct another ICA model $X_R = A S_R$, where $R > 0$ is a parameter which will be chosen later.
The components of $S_R$ have light-tailed distributions; in particular, all moments exist.
We show how to generate samples of $X_R$ efficiently using samples of $X$.
Using the new ICA model, the matrix $A$ can be estimated by applying existing ICA algorithms.

For a random variable $Z$ we will denote its the probability density function by $\rho_Z(\cdot)$.
The density of $X_R$ is obtained by multiplying the density of $X$ by a Gaussian damping factor.
More precisely,
\begin{align*}
\rho_{X_R}(x) \propto \rho_X(x) e^{-\norm{x}^2/R^2}.
\end{align*}
Define
\begin{align*}
K_{X_R} := \int_{\R^n} \rho_X(x) e^{-\norm{x}^2/R^2} \ud x,
\end{align*}
then
\begin{align*}
\rho_{X_R}(x) = \frac{1}{K_{X_R}} \rho_X(x) e^{-\norm{x}^2/R^2}.
\end{align*}

We will now find the density of $S_R$. Note that if $x$ is a value of $X_R$, and $s = A^{-1}x$ is the corresponding value of $S_R$,
then we have
\begin{align*}
\rho_{X_R}(x)
= \frac{1}{K_{X_R}} \rho_X(x) e^{-\norm{x}^2/R^2} 
= \frac{1}{K_{X_R}} \rho_S(s) e^{-\norm{As}^2/R^2}
= \frac{1}{K_{X_R}} \rho_S(s) e^{-\norm{s}^2/R^2} 
=: \rho_{S_R}(s),
\end{align*}
where we used that $A$ is a unitary matrix so that $\norm{As}=\norm{s}$. 
Also, $\rho_X(x) = \rho_S(s)$ follows from the change of variable formula and the fact that $\lrabs{\det A}=1$. 
We also used crucially the fact that the Gaussian distribution is spherically-symmetric.
We have now specified the new ICA model $X_R = A S_R$, and what remains is to show how to generate samples of $X_R$. 

\myparagraph{Rejection sampling.} Given access to samples from $\rho_X$ we will use rejection
sampling (see e.g. [Robert--Casella] \lnote{xxx}) to generate samples from $\rho_{X_R}$.

\begin{enumerate}
\item Generate $x \sim \rho_X$.
\item Generate $z \sim U[0,1]$.
\item If $z \in [0, e^{-\norm{x}^2/R^2}]$, output $x$; else, go to the first step.
\end{enumerate}

The probability of outputting a sample with a single trial in the above algorithm is $K_{X_R}$. Thus, the expected number of trials in the above algorithm for generating a sample is $1/K_{X_R}$.

We now choose $R$. %% This choice governs two things: the efficiency of the rejection sampling algorithm as
%% just noted; and the bound in \eqref{eqn:moment-bound} on the moments of the $s_j$.
There are two properties that we want $R$ sufficiently large so as to satisfy: (1) $K_{X_R} \geq C_1$ and
$\abs{\cum_4 S_{j, R}} \geq 1/n^{C_2}$ \jnote{Changed from $C$ to $C_2$} where $C_1 \in (0, 1/2)$ and $C_2 > 0$ are constants. Such a choice of $R$ exists and can be made efficiently; % (with a dependence on $R$); 
we outline this after the statement of Theorem~\ref{thm:ICA-orthogonal-damping}. \nnote{Actually, the statement about cumulants seems to require more care if some of the Si are not heavy-tailed}
Thus, the expected number of trials in rejection sampling before generating a sample is bounded above by
$1/C_1$. The lower bound on $K_{X_R}$ will also be useful in bounding the moments of the $S_{j,R}$, where 
$S_{j,R}$ is the random variable obtained by Gaussian damping of $S_j$ with parameter $R$, that is to say
\begin{align*}
\rho_{S_{j,R}}(x) \propto \rho_{S_j}(x) e^{-\norm{x}^2/R^2}`
\end{align*}
%% And \eqref{eqn:moment-bound1} becomes
%% \begin{align} \label{eqn:moment-bound2}
%% \E[S_j^4]
%% &\leq n^{c_1} R^4.
%% \end{align}

%% $\tau_j(\cdot)$ is the density of $S_j$ and $\tilde{\tau}_{R,j}(s_j)$ is the density of $\tilde{S}_{R,j}$ which is the $j$th component of $\tilde{S}_R$.

Define
\begin{align*}
K_{S_{j,R}} := \int_{\R}\rho_{s_j}(s_j) e^{-s_j^2/R^2} ds_j
\end{align*}
and let $K_{S_{{-j},R}}$ be the product of $K_{S_{k, R}}$ over $k \in [n]\setminus \{j\}$.
By $s_{-j} \in  \R^{n-1}$ we denote the vector $s \in \R^n$ with its $j$th element removed, then notice that
\begin{align} \label{eqn:KR-product}
K_{X_R} &= K_{S_R} = K_{S_{1, R}} K_{S_{2, R}} \ldots K_{S_{n, R}}, \\
K_{X_R} &= K_{S_{j, R}} K_{S_{{-j},S}}.
\end{align}

We can express the densities of individual components of $S_R$ as follows:
\begin{align*}
%%\tilde{\tau}_{R, j}(s_j) = \int_{\R^{n-1}}\tilde{\tau}(s) d s_{{-j}}
\rho_{S_{j,R}}(s_j) = \int_{\R^{n-1}} \rho_{S_R}(s) \, d s_{{-j}}
= \frac{1}{K_{X_R}} \int_{\R^{n-1}} \rho_S(s) e^{-\norm{s}^2/R^2} \, d s_{{-j}}
%&= \frac{1}{K_{X_R}} \tau_j(s_j) e^{-s_j^2/R^2}\int_{\R^{n-1}} \prod_{k \in [n]\setminus\{j\}}
%\left(\tau_{k}(s_k) e^{-s_j^2/R^2}\right) d s_{{-j}} \\
= \frac{K_{S_{{-j},R}} }{K_{X_R}} \rho_{S_{j}}(s_j) \, e^{-s_j^2/R^2}.
\end{align*}

This allows us to derive bounds on the moments of $S_{j,R}$:
\begin{align}
\E[S_{j,R}^4]
&= \frac{K_{S_{{-j},R}}}{K_{X_R}} \int_{\R} s_j^4 \, \rho_{S_j}(s_j) e^{-s_j^2/R^2} d s_j \nonumber \\
&\leq \frac{K_{S_{{-j},R}}}{K_{X_R}} \left(\max_{z \in \R} z^4 e^{-z^2/R^2}\right) \int_{\R} \rho_{S_j}(s_j) \, d s_j % \nonumber \\
< \frac{K_{S_{{-j},R}}}{K_{X_R}} R^4 %\nonumber \\
\leq \frac{1}{K_{X_R}} R^4 \nonumber \\
&\leq \frac{1}{C_1} R^4. \label{eqn:damping-moment-bound}
\end{align}

We now state Theorem~4.2 from \cite{GVX} in a special case by setting parameters $k$ and $k_i$ in that theorem
to $4$ for $i \in [n]$. The algorithm analyzed in Theorem~4.2 of \cite{GVX} is called Fourier PCA. 
 \begin{theorem}\cite{GVX}\label{thm:ICA}
  Let $X \in \R^n$ be given by an ICA model $X=AS$ where $A \in \R^{ n
    \times n}$ is unitary %% full rank with unit column vectors
  and the $S_i$ are mutually independent, $\E[S_i^4] \le M_4$ for some positive constant $M_4$,
and $\abs{\cum_{4}(S_i)} \ge \Delta$. For
  any $\epsilon > 0$ %% and
  %% \begin{align*}
  %%   \sigma = \frac{C_1 \Delta}{n^9}
  %% \end{align*}
  with probability at least $1-\delta$, Fourier PCA will recover vectors $\{b_1, \ldots, b_n\}$
  such that there exist signs $\alpha_i = \pm 1$ and a permutation $\pi:[n]\to [n]$ satisfying
  \begin{align*}
    \norm{A_i - \alpha_i b_{\pi(i)}} \le \epsilon,
  \end{align*}
%%   using $(nM_4/\Delta)^{C_2}/\epsilon^2$ samples with probability $1-1/n^{C_3}$. The running time of the
%% algorithm is $(nM_4/\Delta)^{C_4}/\epsilon^2$. Here $C_1, C_2, C_3, C_4$ are positive constants, $C_3$ can
%% be chosen as large as possible and that will in turn affect $C_1, C_2, C_4$.
  using $\poly(n, M_4, 1/\Delta, 1/\epsilon, 1/\delta)$ samples. %% $(nM_4/\Delta)^{C_2}/\epsilon^2$ 
The running time of the algorithm is also of the same form. 
\end{theorem}

\nnote{To check: Is there an upper bound needed on epsilon in the hypothesis of the previous theorem?}

Combining the above theorem with Gaussian damping gives the following theorem. As previously noted, 
since we are doing rejection 
sampling in Gaussian damping, the expected number of trials to generate $N$ samples of $X_R$ is $N/K_{X_R}$. 
One can similarly prove high probability guarantees for the number of trials needed to generate $N$ samples.

%% To apply the above theorem to our model $X_R = A \tilde{S}_R$, we choose
%% $M_4 = n^{c_1}R^4$ and $\Delta = 1/n^{c_2}$. This gives the upper bound of $(n R)^{c_3}/\epsilon^2$ on the number
%% of samples of $X_R$ needed and $(n R)^{c_4}/\epsilon^2$ on the running time given access to the samples
%% of $X_R$.
%% Now we account for the total number of trials needed to generate $(n R)^{c_4}/\epsilon^2$ samples of
%% $X_R$.

%\gaussiandamping*

\iffalse
\begin{theorem} \label{thm:ICA-orthogonal-damping}
Given an ICA model $X = AS$\lnote{with density?} where $A$ is a unitary matrix (i.e., $A^TA = I$) and we make no assumptions about the 
existence of $\e \abs{S_i}^{r}$ for all $i,r$. \lnote{language here needs to be more theorem-like} 
%moments of any order of the components of $S$.
Let $R>0$ be such that for each $i$ the random variable $S_{i,R}$ satisfies 
$\abs{\cum_{4}(S_{i,R})} \ge \Delta$. 
Then for $\epsilon > 0$, Gaussian damping combined
with Fourier PCA will recover vectors $\{b_1, \ldots, b_n\}$
  such that there exist signs $\alpha_i = \pm 1$ satisfying
  \begin{align*}
    \norm{A_i - \alpha_i b_i} \le \epsilon
  \end{align*}
  using $\poly(n, R, 1/K_{X_R}, 1/\Delta, 1/\epsilon, 1/\delta)$ samples %% $(nM_4/\Delta)^{C_2}/\epsilon^2$ 
  with probability $1-\delta$. The running time of the
algorithm is also of the same form. 
%% $(nM_4/\Delta)^{C_4}/\epsilon^2$. Here $C_1, C_2, C_3, C_4$ are positive constants, $C_3$ can
%% be chosen as large as possible and that will in turn affect $C_1, C_2, C_4$.
\end{theorem}
\fi

We remark that the choice of $R$ in the above theorem can be made algorithmically in an efficient way.
Theorem~\ref{thm:cumulant-damping} below shows that as we increase $R$ the cumulant $\cum_4(S_{j,R})$ goes to infinty.
This shows that for any $\Delta>0$ there exists $R$ so as to satisfy the condition of the above theorem, namely
$\abs{\cum_{4}(S_{i,R})} \ge \Delta$. 
We now briefly
indicate how such an $R$ can be found efficiently (same sample and computational costs as in 
Theorem~\ref{thm:ICA-orthogonal-damping} above):
For a given $R$, we can certainly estimate $K_{X_R} = \int_{\R^n} \rho_X(x) e^{-\norm{x}^2/R^2} dx$ from samples, 
i.e. by the empirical mean
$\frac{1}{N}\sum_{i \in [N]} e^{-\norm{x^{(i)}}^2/R^2}$ of samples $x^{(1)}, \ldots, x^{(N)}$.\lnote{what's the distribution of the samples? is this really efficient?} 
This allows us to search for $R$ so that $K_{X_R}$ is as large as we want. 
This also gives us an upper bound on the fourth moment via Eq. \eqref{eqn:damping-moment-bound}.
To ensure that the fourth cumulants of all $S_{i,R}$ are large, note that for $a \in \R^n$ we have 
$\cum_4(a_1S_{1,R} + \dotsb + a_n S_{n,R}) = \sum_{i\in [n]} a_i^4 \cum_4(S_{i,R})$. 
We can estimate this quantity empirically, and minimize 
over $a$ on the unit sphere (the minimization can be done, e.g., using the algorithm in \cite{FJK}). 
This would give an estimate of $\min_{i \in [n]}\cum_4(S_{i,R})$
and allows us to search for an appropriate $R$.
\jnote{As seen in our experiments, it may not be this simple, since the optimization suggested would require access to samples of $S$, or some self-reinforcing procedure that first found approximate $S$, then estimated $R$ from these samples, finding then even more accurate samples of $S$, and so on.}

For the algorithm to be efficient, we also need $K_{X_R} \geq C_1$. This is easily achieved as we can empirically
estimate $K_{X_R}$ using the number of trials required in rejection sampling, and search for sufficiently large $R$ that makes the estimate sufficiently larger than $C_1$.

\nnote{
TODO: (1) Show for some example heavy-tailed distributions (e.g., Pareto) what the value of a good $R$ is in the 
above theorem. 
(2) \cite{GVX} assumes that the $S_j$ are all centered. This is not true in our application. We can center our r.v. with
respect to the empirical mean (this would require recomputing of the fourth moment etc. for the shifted $S_j$), but that's not
exact centering because of the use of the empirical mean instead of the actual mean. Better way is to use symmetrization.}

\subsection{The fourth cumulant of Gaussian damping of heavy-tailed distributions}
%% Now we look at the behaviour of the fourth cumulant under Gaussian damping. The proof will be closely modelled on what
%% we did for truncation. 
%% For $R \geq 0$ and real-valued r.v. $X$ denote by $X_R$ the r.v. obtained by applying Gaussian damping with parameter $R$; 
%% i.e., if the density of $X$ is given by $\rho_X(x)$ then the density $\rho_X_R(x)$ of $X_R$ is proportional to 
%% $\rho_X(x) e^{-\norm{x}^2/R^2}$.
It is clear
that if r.v. $X$ is such that $\E (X^4) = \infty$ and $\E (X^2) < \infty$, then
$\cum_4(X_R) = \E (X_R^4) - 3 (\E (X_R^2))^2 \to \infty$ as $R \to \infty$. 
However, it does not seem clear when we have $\E (X^2) = \infty$ as well. We will show that in this case we also
get $\cum_4(X_R) \to \infty$ as $R \to \infty$.

We will confine our discussion to symmetric random variables for simplicity of exposition; for the purpose of
our application of the theorem this is w.l.o.g. by the argument in Sec.~\ref{sec:symmetrization}.

\begin{theorem}\label{thm:cumulant-damping}
Let $X$ be a symmetric real-valued random variable with $\E (X^4) = \infty$. 
Then $\cum_4(X_R) \to \infty$ as $R \to \infty$.
\end{theorem}
\begin{proof}
Fix a symmetric r.v. $X$ with $\E X^2 = \infty$; as previously noted, if $\E X^2 < \infty$ then the theorem is
easily seen to be true. %% We will assume that
%% $X$ has a non-zero density \nnote{is this really needed?}.
%Set $m_2(b) := \int_0^b x^2 \rho_X(x) dx$. 
Since $X$ is symmetric and we will be interested in the fourth cumulant, we can restrict our attention
to the positive part of $X$. So in the following we will actually assume that $X$ is a positive random variable. 
Fix $C > 100$ to be any large positive constant.
Fix a small positive constant $\eps_1 < \frac{1}{100 \, C}$. Also fix another small positive constant $\eps_2 < 1/10$.
Then there exists %% (unique) \nnote{uniqueness uses non-zeroness of the density, but do we need uniqueness?}
$a > 0$ such that
\begin{align} \label{eqn:a_eps1}
\Pr[X \geq a] =  \int_{a}^\infty \rho_X(x) dx \leq \eps_1.
\end{align}
\jnote{Equals and inequality were interchanged. Is this what was really intended?}

Let $\tilde{m}_2(R) := \int_0^\infty x^2\rho_X(x) e^{-x^2/R^2} dx$. 
%%, and $K_{X_R} := \int_0^\infty \rho_X(x) e^{-x^2/R^2}dx$.
Recall that $K_{X_R} = \int_{0}^{\infty} e^{-x^2/R^2} \rho(x) dx = \e e^{-X^2/R^2}$.
Note that if $R \geq a$ (which we assume in the sequel), then
\begin{align} \label{eqn:K_{X_R}}
1 > K_{X_R} > \frac{1-\eps_1}{e}. 
\end{align}
\details{For the second inequality, in the integral for $K_{X_R}$ restrict it to $x \in [0, a]$, now note that in this range the integrand $e^{-x^2/R^2}$ is $\geq 1/e$ (as in the previous like we assumed $R \geq a$), and the probability mass in this range is $1-\eps_1$ (by (16)).}
Since $\tilde{m}_2(R) \to \infty$ as $R \to \infty$, by choosing $R$ sufficiently large we can ensure that
\begin{align} \label{eqn:b_eps2damping}
\int_a^\infty x^2 \rho_X(x) e^{-x^2/R^2} dx \geq (1-\eps_2) \int_0^\infty x^2 \rho_X(x) e^{-x^2/R^2} dx.
\end{align}
 \details{ to see clearly, split the right side and combine non-constant terms on the left, they will go to infinity with $R$}
Moreover, we choose $R$ to be sufficiently large so that $\sqrt{C\tilde{m}_2(R)} > a$.
Then
\begin{align*}
\tilde{m}_2(R)
&= \int_0^\infty x^2 \rho_X(x) e^{-x^2/R^2} dx \\
&= \int_0^a x^2 \rho_X(x) e^{-x^2/R^2} dx \\ & \quad + \int_a^{\sqrt{C\tilde{m}_2(R)}} x^2 \rho_X(x) e^{-x^2/R^2} dx + \int_{\sqrt{C\tilde{m}_2(R)}}^\infty x^2 \rho_X(x) e^{-x^2/R^2} dx \\
&\leq \eps_2 \int_0^\infty x^2 \rho_X(x) e^{-x^2/R^2} dx \\ & \quad + \int_a^{\sqrt{C\tilde{m}_2(R)}} x^2 \rho_X(x) e^{-x^2/R^2} dx + \int_{\sqrt{C\tilde{m}_2(R)}}^\infty x^2 \rho_X(x) e^{-x^2/R^2} dx
\;\; \\ %% \text{(by \eqref{eqn:b_eps2damping})}\\
&\leq \eps_2\, \tilde{m}_2(R) + \eps_1 C \, \tilde{m}_2(R) + \int_{\sqrt{C\tilde{m}_2(R)}}^\infty x^2 \rho_X(x) e^{-x^2/R^2} dx \;\; \text{(by \eqref{eqn:a_eps1})} \\
&= (C\eps_1+\eps_2) \, \tilde{m}_2(R) + \int_{\sqrt{C \tilde{m}_2(R)}}^\infty x^2 \rho_X(x) e^{-x^2/R^2} dx.
\end{align*}
Summarizing the previous sequence of inequalities:
\begin{align} \label{eqn:estimate_second2}
\int_{\sqrt{C \tilde{m}_2(R)}}^\infty x^2 \rho_X(x) e^{-x^2/R^2} dx \geq (1- C\eps_1 - \eps_2) \, \tilde{m}_2(R).
\end{align}

Now
\begin{align*}
\E X_R^4 > K_{X_R} \, \E X_R^4 &= \int_0^\infty x^4 \rho_X(x) e^{-x^2/R^2} dx \;\;\text{(the inequality uses \eqref{eqn:K_{X_R}})}\\
%%&= \int_0^{\sqrt{C m_2(b)}} x^4 \rho_X(x) dx + \int_{\sqrt{C m_2(b)}}^b x^4 \rho_X(x) dx \\
&\geq \int_{\sqrt{C \tilde{m}_2(R)}}^\infty x^4 \rho_X(x) e^{-x^2/R^2} dx \\
&\geq C \tilde{m}_2(R) \int_{\sqrt{C \tilde{m}_2(R)}}^\infty x^2 \rho_X(x) e^{-x^2/R^2} dx \\
&\geq C \tilde{m}_2(R) (1- C\eps_1 - \eps_2) \, \tilde{m}_2(R) \;\;\text{(by \eqref{eqn:estimate_second2})} \\
&= C (1- C\eps_1 - \eps_2) \, \tilde{m}_2(R)^2 \\
&= C (1- C\eps_1 - \eps_2) \, (K_{X_R} \, \E X_R^2)^2 \\
&\geq  C (1- C\eps_1 - \eps_2) \left(\frac{1-\eps_1}{e}\right)^2 (\E X_R^2)^2 \;\;\text{(by \eqref{eqn:K_{X_R}})}.
\end{align*}

Now note that $C (1- C\eps_1 - \eps_2) \left(\frac{1-\eps_1}{e}\right)^2 > 10$ for our choice of the parameters. Thus
$\cum_4(X_R) = \E X_R^4 - 3 (\E X_R^2)^2 > 7 (\E X_R^2)^2$, and by our assumption
$\E X_R^2 \to \infty$ with $R \to \infty$.
\end{proof} 
\fi

\subsection{Symmetrization} \label{sec:symmetrization}
As usual we work with the ICA model $X = AS$. 
Suppose that we have an ICA algorithm that works when each of the component random variable $S_i$ 
is symmetric, i.e. its probability density function satisfies $\phi_i(y) = \phi_i(-y)$ for all $y$, 
with a polynomial dependence on the upper bound $M_4$ on the fourth moment of the $S_i$ and inverse
polynomial dependence on the lower bound $\Delta$ on the fourth cumulants of the $S_i$. Then we show
that we also have an algorithm without the symmetry assumption and with a similar dependence on $M_4$
and $\Delta$. 
We show that without loss of generality we may restrict our attention to symmetric densities, 
i.e. we can assume that each
of the $S_i$ has density function satisfying $\phi_i(y) = \phi_i(-y)$. To this end, let $S'_i$ be an 
independent copy of $S_i$ and set $\bar{S}_i := S_i - S'_i$. Similarly, let $\bar{X}_i := X_i - X'_i$. 
Clearly, the $S_i$ and $X_i$ have symmetric densities.
The new random variables still satisfy the ICA model: $\bar{X}_i = A \bar{S}_i$. 
Moreover, the moments and cumulants of the $\bar{S}_i$ behave similarly to those of the $S_i$: 
For the fourth moment, assuming it exists, we have
% \begin{align*}
\( \E[\bar{S}_i^4] = \E[(S_i-S'_i)^4] \leq 2^4 \E[S_i^4]\).
% \end{align*}
The inequality above can easily be proved using the binomial expansion and H\"older's inequality:
\begin{align*}
\E[(S_i-S'_i)^4] &= \\ \E[S_i^4] + & 4 \E[S_i^3]\,\E[S'_i] + 6 \E[S_i^2]\,\E[(S'_i)^2] + 4 \E[S_i]\,\E[(S'_i)^3] + 
\E[(S'_i)^4] \leq 16 \, \E[S_i^4].
\end{align*}
The final inequality follows from the fact that for each term in the LHS, e.g. $\E[S_i^3]\,\E[S'_i]$ we have 
$\E[S_i^3]\,\E[S'_i] \leq \E[S_i^4]^{3/4} \E[(S'_i)^4]^{1/4} = \E[S_i^4]$.

For the fourth cumulant, again assuming its existence, we have 
% \begin{align*}
\( \cum_4(\bar{S}_i) = \cum_4(S_i - S'_i) = \cum_4(S_i)+\cum_4(-S_i) = 2\, \cum_4(S_i) \).
% \end{align*}

Thus if the fourth cumulant of $S_i$ is away from $0$ then so is the fourth cumulant of $\bar{S}_i$.

\iflong
\section{Putting things together}

In this section we combine the orthogonalization procedure (Algorithm~\ref{alg:orthogonalization_uniform}
with performance guarantees in Theorem~\ref{thm:correctness_uniform_orth}) 
with ICA for unitary $A$ via Gaussian damping to prove our main theorem, Theorem \ref{thm:putting_together}.

%\main*

As noted in the introduction, intuitively, $R$ in the theorem statement above  measures how large a ball we need to restrict the 
distribution to so that there is at least a constant (or $1/\poly(n)$ if needed) probability mass in it 
and moreover each $S_i$ when restricted to the interval $[-R, R]$ has the fourth cumulant at least 
$\Omega(\Delta)$. 
Formally, $R > 0$ is such that $\int_{\R^n} \rho_{\hX}(x) e^{-\norm{x}^2/R^2} \ud x \geq p(n) > 0$, where 
$1/\poly(n) < p(n) < 1$ can be chosen, and for simplicity, we will fix to $1/2$. Moreover, $R$ satisfies that 
$\cum_4(S_{i,R}) \geq \Omega(\Delta/n^4)$ for all $u \in S^{n-1}$ and $i \in [n]$, where $S_{i,R}$ is the Gaussian damping
with parameter $R$ of $S_i$.

In Sec.~\ref{sec:symmetrization} we saw that the moments and cumulants of the non-symmetric random variable behave similarly to those of the symmetric random variable.
%First of all, 
So by the argument of Sec.~\ref{sec:symmetrization} we assume that our ICA model is symmetric.
%We make this assumption.
Theorem~\ref{thm:correctness_uniform_orth} shows that Algorithm~\ref{alg:orthogonalization_uniform} gives 
us a new ICA model with the ICA matrix having approximately orthogonal columns. 
We will apply Gaussian damping to this new ICA model. 
In Theorem~\ref{thm:correctness_uniform_orth}, it was convenient to use the normalization $\E\abs{S_i} = 1$ for all $i$. 
But for the next step of Gaussian damping we will use a different normalization, namely, the columns of the ICA matrix have unit length. 
This will require us to rescale (in the analysis, not algorithmically) the components of $S$ appropriately as we now describe. 

Algorithm~\ref{alg:orthogonalization_uniform} provides us with a matrix $B$ such that the columns of 
$C = BA$ are approximately orthogonal: $C^T C \approx D$ where $D$ is a diagonal matrix. 
Thus, we can rewrite our ICA model as $Y = C S$, where $Y = BX$. 
We rescale $C_i$, the $i$th column of $C$, 
by multiplying it by $1/\norm{C_i}$. Denoting by $L$ the diagonal matrix with the $i$th diagonal entry 
$1/\norm{C_i}$, the matrix obtained after the above rescaling of $C$ is $CL$ and we have 
$(CL)^T CL \approx I$. We can again rewrite our ICA model as $Y = (CL) (L^{-1}S)$. Setting $\hE := CL$ and $T := L^{-1}S$ we can rewrite our ICA model as $\hat{Y} = \hE T$. This is the model we will plug into the Gaussian damping procedure. Had Algorithm~\ref{alg:orthogonalization_uniform} provided us with 
perfect orthogonalizer $B$ (so that $C^T C = D$) we would obtain a model $Y = E T$ where $E$ is 
unitary. We do get however that $\hE \approx E$. 
To continue with a more standard ICA notation, from here on we will write $\hX = \hE S$ for $\hat{Y} = \hE T$
and $X = ES$ for $Y = E T$. 

Applying Gaussian damping to $X = ES$ gives us a new ICA model $X_R = E S_R$ as we saw in 
Sec.~\ref{sec:gaussian_damping}. 
But the model we have access to is $\hX = \hE S$. 
We will apply Gaussian damping to it to get the r.v. $\hX_R$. 
 Formally, ${\hX}_R$ is defined starting with the model $\hX = \hE S$
just as we defined $X_R$ starting with the model $X = ES$ (recall that for a random variable $Z$, we denote
its probability density function by $\rho_Z(\cdot)$):
\begin{align*}
\rho_{{\hX}_R}(x) := \frac{1}{K_{{\hX}_R}} \rho_{\hX}(x) e^{-\norm{x}^2/R^2},
\end{align*}
where $K_{{\hX}_R} := \int_{\R^n} \rho_{\hX}(x) e^{-\norm{x}^2/R^2} \ud x$. The parameter $R$ has been chosen so that
$K_{{\hX}_R} > C_1 := 1/2$ and $\cum_4\,\inner{\hX_R}{u} \geq \Delta$ for all $u \in \S^{n-1}$. By the discussion 
after Theorem~\ref{thm:ICA-orthogonal-damping} (the restatement in Sec.~\ref{sec:gaussian_damping}), 
this choice of $R$ can be made efficiently. (The discussion there is in terms of the directional moments of
$S$, but note that the directional moments of $X$ also give us directional moments of $S$. We omit further 
details.)
But now since the matrix $\hE$ in our ICA model is only approximately unitary, after applying Gaussian damping the obtained random variable ${\hX}_R$ is not given by an ICA model (in particular, it may not have independent coordinates in any basis), although it is close to $X_R$ in a sense to be made precise soon.

Because of this, Theorem~\ref{thm:ICA} is not directly usable for plugging in the samples of ${\hX}_R$. 
To address this discrepancy we will need a robust version of Theorem~\ref{thm:ICA} which also requires us
to specify in a precise sense that ${\hX}_R$ and $X_R$ are close. To this end,
we need some standard terminology from probability theory.
The characteristic function of r.v. $X \in \R^n$ is defined to be $\phi_X(u) = \E (e^{i u^TX})$, 
where $u \in \R^n$.
The cumulant generating function, also known as the second characteristic function,
is defined by $\psi_X(u) = \log \phi_X(u)$. 
The algorithm in \cite{GVX} estimates the second derivative of $\psi_X(u)$ and computes its eigendecomposition. 
(In \cite{Yeredor} and \cite{GVX}, 
this second derivative is interpreted as a kind of covariance matrix of $X$ but with the twist that 
a certain ``Fourier'' weight is used in the expectation computation for the covariance matrix. We will 
not use this interpretation here.\lnote{is this comment relevant? remove?}) 
Set $\Psi_{X}(u) := D^2 \psi_X(u)$, the Hessian matrix of $\psi_X(u)$. We can now state the robust 
version of Theorem~\ref{thm:ICA}.
\nnote{maybe mention somewhere here that we do not have to use GVX and could also use other papers}

\begin{theorem}\label{thm:ICA-robust}
Let $X$ be an $\dim$-dimensional random vector given by an ICA model $X=AS$ where $A \in \R^{ n \times n}$ is unitary 
%% full rank with unit column vectors
  and the $S_i$ are mutually independent, $\E[S_i^4] \le M_4$
and $\abs{\cum_{4}(S_i)} \ge \Delta$ for positive constants $M_4$ and $\Delta$. 
Also let $\eps_{\ref{thm:ICA-robust}} \in [0,1]$.  
Suppose that we have another random variable $\hX$ that is close to $X$ in the following sense:
\begin{align*}
\lrabs{\Psi_{\hX}(u)-\Psi_X(u)} \leq \eps_{\ref{thm:ICA-robust}},
\end{align*}
for any $u \in \R^n$ with $\norm{u} \leq 1$.
Moreover, $\E[\inner{X}{u}^4] \le M_4$ for $\norm{u} \leq 1$.
When Fourier PCA is given samples of $\hX$ it will recover 
vectors $\{b_1, \ldots, b_n\}$
  such that there exist signs $\alpha_i \in \{-1, 1\}$ and a permutation $\pi:[n]\to [n]$ satisfying
  \begin{align*}
    \norm{A_i - \alpha_i b_{\pi(i)}} \le \epsilon_{\ref{thm:ICA-robust}} \left(\frac{M_4}{\delta \Delta}\right)^5,
  \end{align*}
  in $\poly(n, M_4, 1/\Delta, 1/\epsilon_{\ref{thm:ICA-robust}}, 1/\delta_{\ref{thm:ICA-robust}})$ samples and time complexity and 
 with probability at least $1-\delta_{\ref{thm:ICA-robust}}$.
\end{theorem}
While this theorem is not stated in \cite{GVX}, it is easy to derive from their proof of 
Theorem~\ref{thm:ICA}; we now briefly sketch the proof of Theorem~\ref{thm:ICA-robust} indicating the changes
one needs to make to the proof of Theorem~\ref{thm:ICA} in \cite{GVX}. 

\begin{proof}
Ideally, for input model $X=AS$ with $A$ unitary, algorithm Fourier PCA  would proceed by diagonalizing 
$\Psi_X(u)$. But it can only compute an approximation 
$\tilde{\Psi}_X(u)$ which is the empirical estimate
for $\Psi_X(u)$. For all $u$ with $\norm{u} \leq 1$, it is shown that with high probability we have
\begin{align} \label{eqn:empirical-sigma}
\norm{\tilde{\Psi}_X(u)-\Psi_X(u)}_F<\epsilon.
\end{align}
Then, a matrix perturbation 
argument is invoked to show that if the diagonalization procedure used in Fourier PCA is applied to 
$\tilde{\Psi}_X(u)$ instead of $\Psi_X(u)$, 
one still recovers a good approximation of $A$. This previous step uses a random $u$ chosen from a Gaussian 
distribution so that the eigenvalues of $\Psi_X(u)$ are sufficiently spaced apart for the eigenvectors to be 
recoverable (the assumptions on the distribution ensure that the requirement of $\norm{u} \leq 1$ 
is satisfied with high probability).
The only property of $\tilde{\Psi}_X(u)$ used in this argument is
\eqref{eqn:empirical-sigma}. To prove Theorem~\ref{thm:ICA-robust}, we show that the estimate 
$\tilde{\Psi}_{\hX}$ is also good: 
\begin{align*}
\norm{\tilde{\Psi}_{\hX}(u) - \Psi_X(u)}_F 
< \norm{\tilde{\Psi}_{\hX}(u) - \Psi_{\hX}(u)}_F + \norm{\Psi_{\hX}(u) - \Psi_X(u)}_F 
< 2\epsilon,
\end{align*}
where we ensured that $\norm{\tilde{\Psi}_{\hX}(u) - \Psi_{\hX}(u)}_F < \epsilon$ by taking sufficiently many samples of $\hat{X}$ to get a good estimate with probability at least $\delta$; as in \cite{GVX}, 
a standard concentration argument shows that 
$\poly(n, M_4, 1/\Delta, 1/\epsilon, 1/\delta)$ samples suffice for this purpose. 
%%\lnote{But $\hat X$ may not satisfy the assumptions of that argument, as it is not an ICA model. So this may need more justification.} 
Thus the diagonalization procedure can be applied to $\tilde{\Psi}_{\hX}(u)$.
The upper bound of $2\epsilon$ above
translates into error 
$< \epsilon \left(\frac{M_4}{\delta \Delta}\right)^5$ in the final recovery guarantee, with the extra factor coming from the eigenvalue gaps of $ \Psi_{X}(u)$.\lnote{Too sketchy} 
\end{proof}

%% Thus, if we could compute $\Psi_{X_R}(u)$ we could use it in the \cite{GVX} algorithm. Of course, this is not
%% exactly computable as we only have access to samples of $X_R$. \cite{GVX} compute an estimate 
%% $\tilde{\Psi}_{X_R}(u)$ and show that if the estimate is good ***need to state precise estimate*** 
%% then the final output is close to the ICA 
%% matrix. For us, there is an additional complication: We do not have samples from $X_R$ but only from 
%% $\hat{X}_R$ and thus what we can compute is $\tilde{\Psi}_{\hat{X}_R}(u)$, which is an empirical estimate
%% of $\Psi_{\hat{X}_R}(u)$. But here the fact that $\hat{E} \approx E$ will come to our rescue. 
To apply Theorem~\ref{thm:ICA-robust} to our situation, we need
\begin{align} \label{eqn:Sigma_estimate}
\tilde{\Psi}_{\hat{X}_R}(u) \approx \Psi_{X_R}(u).
\end{align}
This will follow from the next lemma. 

Note that
\begin{align} \label{eqn:expansion-Sigma}
\Psi_{X}(u) = D^2 \psi_X(u) = \frac{D^2 \phi_X(u)}{\phi_X(u)} - \frac{(D \phi_X(u))^T(D \phi_X(u))}{\phi_X(u)^2}.
\end{align}
(The gradient $D \phi_X(u)$ is a row vector.)
Thus, to show \eqref{eqn:Sigma_estimate} it suffices to show that each expression on the RHS of the previous 
equation is appropriately close: 
%% In the following we prove $\phi_{X_R}(u) \approx \phi_{\hat{X}_R}(u)$. We also 
%% need $D^2 \phi_X(u) \approx D^2 \phi_{\hX}(u)$ and $D \phi_X(u) \approx D \phi_{\hX}(u)$. 

\begin{lemma} \label{lem:Sigma-estimates}
Let $\lambda \in [0,1/2]$, and let $E, \hE \in \R^{n \times n}$ such that $E$ is unitary and 
$\norm{E-\hE}_2 \leq \lambda^2/3$. Let $X_R$ and ${\hX}_R$ be the random variables obtained by 
applying Gaussian damping to the ICA models $X=ES$ and $\hX = \hE S$, resp.  
Then, for $\norm{u} \leq 1$, we have 
\begin{align*}
\lrabs{\phi_{X_R}(u) - \phi_{\hX_R}(u)} &\leq R \lambda^2/3 + 4\lambda + \frac{4\lambda}{K_{X_R}}, \\
\norm{D \phi_{X_R}(u) - D \phi_{\hX_R}(u)} &\leq O(n\lambda R^2), \\
\norm{D^2\phi_{X_R}(u) - D^2\phi_{\hX_R}(u)}_F &\leq O(n^2\lambda R^3).
\end{align*}
\end{lemma}

\begin{proof}
We will only prove the first inequality; proofs of the other two are very similar and will be omitted. 
In the second equality in the displayed equations below we use that 
$\int_{\R^n}e^{iu^Tx} e^{-\norm{x}^2/R^2}\rho_{\hX}(x)\, dx = \int_{\R^n}e^{iu^T\hE s} e^{-\norm{\hE s}^2/R^2}\rho_{S}(s) \, ds$.
One way to see this is to think of the two integrals as 
expectations: $\E\left(e^{iu^T\hX} e^{-\norm{\hX}^2/R^2}\right) = \E\left( e^{iu^T\hE S} e^{-\norm{\hE S}^2/R^2}\right)$.
\begin{align}
|\phi_{X_R}(u) & - \phi_{\hX_R}(u)| \nonumber \\
&= \lrabs{ \frac{1}{K_{X_R}}\int_{\R^n}e^{iu^Tx} e^{-\norm{x}^2/R^2}\rho_X(x) dx
- \frac{1}{K_{\hX_R}}\int_{\R^n}e^{iu^Tx} e^{-\norm{x}^2/R^2}\rho_{\hX}(x) dx} \nonumber \\
&= \lrabs{ \frac{1}{K_{X_R}}\int_{\R^n}e^{iu^TEs} e^{-\norm{Es}^2/R^2}\rho_S(s) ds
- \frac{1}{K_{\hX_R}}\int_{\R^n}e^{iu^T\hE s} e^{-\norm{\hE s}^2/R^2}\rho_{S}(s) ds} \nonumber \\
&\leq \frac{1}{K_{X_R}} \lrabs{ \int_{\R^n} e^{iu^TEs} e^{-\norm{Es}^2/R^2}\rho_S(s) ds 
- \int_{\R^n}e^{iu^T\hE s} e^{-\norm{\hE s}^2/R^2}\rho_{S}(s) ds} \nonumber \\
& + \lrabs{\frac{1}{K_{X_R}} - \frac{1}{K_{\hX_R}}} \cdot \lrabs{\int_{\R^n}e^{iu^T\hE s} e^{-\norm{\hE s}^2/R^2}\rho_{S}(s) ds} \nonumber \\
&\leq \frac{1}{K_{X_R}} \underbrace{\int_{\R^n} \lrabs{e^{iu^TEs} e^{-\norm{Es}^2/R^2} - e^{iu^T\hE s} e^{-\norm{\hE s}^2/R^2}}\rho_{S}(s) ds}_{G} \nonumber
\\
&+  \underbrace{\lrabs{\frac{1}{K_{X_R}} - \frac{1}{K_{\hX_R}}} K_{\hX_R}}_{H}. %\nonumber %\label{eqn:characteristic-diff}
\end{align}
%\nnote{please check the change of variable step above when we went from x to s}

Now 
\begin{align*}
G = \int_{\R^n} \lrabs{ e^{iu^T(\hE-E) s} e^{(\norm{E s}^2 - \norm{\hE s}^2)/R^2} - 1} e^{-\norm{Es}^2/R^2} \rho_S(s) ds.
\end{align*}

We have 
\begin{align*}
\lrabs{ e^{iu^T(\hE-E) s} e^{(\norm{E s}^2 - \norm{\hE s}^2)/R^2} - 1}
\leq \lrabs{ e^{iu^T(\hE-E) s} -1} + \lrabs{e^{(\norm{E s}^2 - \norm{\hE s}^2)/R^2} - 1},
\end{align*}

and so 
\begin{align} \label{eqn:G-decomposition}
G \leq & \int_{\R^n} \lrabs{ e^{iu^T(\hE-E) s} -1}\, e^{-\norm{Es}^2/R^2} \rho_S(s) ds \\ & + 
\int_{\R^n} \lrabs{e^{(\norm{E s}^2 - \norm{\hE s}^2)/R^2} - 1}\, e^{-\norm{Es}^2/R^2} \rho_S(s) ds.
\end{align}

For the first summand in \eqref{eqn:G-decomposition} note that
\begin{align*}
\lrabs{ e^{iu^T(\hE-E) s} -1} \leq \lrabs{ u^T (\hat{E}-E) s}.
\end{align*}
This follows from the fact that for real $\theta$ we have 
\begin{align*}
\lrabs{e^{i \theta}-1}^2 = (\cos\theta -1)^2 + \sin^2\theta = 2 - 2\cos\theta= 4\sin^2(\theta/2) \leq \theta^2.
\end{align*}

So, 
\begin{align*}
&\int_{\R^n} \lrabs{u^T (\hat{E}-E) s} e^{-\norm{Es}^2/R^2} \rho_S(s) ds \\
& \leq \norm{u} \norm{\hat{E}-E}_2 \int_{\R^n} \norm{s} e^{-\norm{s}^2/R^2} \rho_S(s) ds  
\;\;\;\text{(using $\norm{Es}=\norm{s}$ as $E$ is unitary)} \\
& \leq \norm{u} \norm{\hat{E}-E}_2 \left(\max_{z \in \R} z e^{-z^2/R^2}  \right) \int_{\R^n} \rho_S(s) ds  \\
& \leq \norm{u} \norm{\hat{E}-E}_2 R  \\
&\leq R \lambda^2/3,
\end{align*}
where the last inequality used our assumption that $\norm{u} \leq 1$. 

We will next bound second summand in \eqref{eqn:G-decomposition}.
Note that $\norm{Es} = \norm{s}$ and 
$\norm{Es}^2 - \norm{\hE s}^2 \leq (\norm{E}+\norm{\hE})(\norm{E-\hat{E}})\norm{s}^2$. 
Since $\norm{E-\hat{E}} \leq \lambda^2/3 << 1$, we get 
$\norm{Es}^2 - \norm{\hE s}^2 \leq (\norm{E}+\norm{\hE})(\norm{E-\hat{E}}) \norm{s}^2 \leq \lambda^2 \norm{s}^2$. We will use that $e^\lambda - 1 < \lambda + \lambda^2$
for $\lambda \in [0,1/2]$ which is satisfied by our assumption. 
Now the second summand in \eqref{eqn:G-decomposition} can be bounded as follows. 

\begin{align*}
& \int_{\R^n} \lrabs{e^{(\norm{E s}^2 - \norm{\hE s}^2)/R^2} - 1} e^{-\norm{Es}^2/R^2} \rho_S(s) ds \\
&\leq  \int_{\R^n} \lrabs{e^{\lambda^2 \norm{s}^2/R^2}-1} e^{-\norm{s}^2/R^2} \rho_S(s) ds \\
&\leq  \int_{\R^n} \lrabs{e^{\lambda^2 \norm{s}^2/R^2}-1} e^{-\norm{s}^2/R^2} \rho_S(s) ds \\
&=  \int_{\norm{s} \leq R/\sqrt{\lambda}} \lrabs{e^{\lambda^2 \norm{s}^2/R^2}-1} e^{-\norm{s}^2/R^2} \rho_S(s) ds 
\\ & \quad +  \int_{\norm{s} > R/\sqrt{\lambda}} \lrabs{e^{\lambda^2 \norm{s}^2/R^2}-1} e^{-\norm{s}^2/R^2} \rho_S(s) ds \\
&\leq  \int_{\norm{s} \leq R/\sqrt{\lambda}} (\lambda+\lambda^2) e^{-\norm{s}^2/R^2} \rho_S(s) ds 
+  \int_{\norm{s} > R/\sqrt{\lambda}} e^{-(1-\lambda^2)\norm{s}^2/R^2} \rho_S(s) ds \\
&\leq (\lambda+\lambda^2) K_{X_R} +  e^{-(1-\lambda^2)/\lambda} \\
&\leq 2\lambda K_{X_R} + 2\lambda \;\;\;\text{(using $\lambda < 1/2$)}. 
\end{align*}

Combining our estimates gives
\begin{align*}
G \leq \frac{R \lambda^2}{3 K_{X_R}} + 2\lambda + \frac{2\lambda}{K_{X_R}}.
\end{align*}

Finally, to bound $H$, note that 
\begin{align*}
\frac{1}{K_{X_R}}\lrabs{K_{X_R}-K_{\hX_R}} &= \frac{1}{K_{X_R}} \lrabs{ \int_{\R^n} e^{-\norm{x}^2/R^2}\rho_X(x) dx
- \int_{\R^n} e^{-\norm{x}^2/R^2}\rho_{\hX}(x) dx} \\
&= \frac{1}{K_{X_R}} \lrabs{ \int_{\R^n} e^{-\norm{Es}^2/R^2}\rho_S(s) ds - \int_{\R^n} e^{-\norm{\hE s}^2/R^2}\rho_{S}(s) ds}.
\end{align*} 
This we just upper-bounded above by $2\lambda + \frac{2\lambda}{K_{X_R}}$. 

Thus we have the final estimate
\begin{align*}
&\lrabs{\phi_{X_R}(u) - \phi_{\hX_R}(u)} \leq G + H \leq R \lambda^2/3 + 4\lambda + \frac{4\lambda}{K_{X_R}}.
\end{align*}

The proofs of the other two upper bounds in the lemma follow the same general pattern with slight changes.
\end{proof}

We are now ready to prove Theorem~\ref{thm:putting_together}.
\begin{proof}[Proof of Theorem~\ref{thm:putting_together}]
We continue with the context set after the statement of Theorem~\ref{thm:putting_together}. The plan 
is to apply Theorem~\ref{thm:ICA-robust} to ${\hX}_R$ and $X_R$. To this end we begin by showing that the premise
of Theorem~\ref{thm:ICA-robust} is satisfied.

Theorem~\ref{thm:correctness_uniform_orth}, with $\delta_{\ref{thm:correctness_uniform_orth}}=\delta/2$ and
$\epsilon_{\ref{thm:correctness_uniform_orth}}$ to be specified,
provides us with a matrix $B$ such that the columns of $BA$ are approximately orthogonal:
$\norm{(BA)^TBA - D}_2 \leq \epsilon_{\ref{thm:correctness_uniform_orth}}$ for some diagonal matrix $D$. 
Now set $\hE := B A L$, where $L = \diag(L_1, \dots, L_\dim) = \diag(1/\sqrt{d_1}, \ldots, 1/\sqrt{d_n})$. Theorem~\ref{thm:correctness_uniform_orth}
implies that
\begin{align} \label{eqn:L_i}
1 \leq L_i \leq (n+1).
\end{align}
Then
\begin{align*}
\norm{\hE^T\hE - I}_2 &= \norm{(BAL)^T(BAL) - I}_2 \\
&= \norm{L^T(BA)^TBA L - L^TDL}_2
\leq \norm{L}_2^2 \norm{(BA)^TBA - D}_2 \leq (n+1)^2\epsilon_{\ref{thm:correctness_uniform_orth}},
\end{align*}
because $L_i \leq (n+1)$ by \eqref{eqn:L_i}.
For $\hE$ as above, there exists a unitary $E$ such that 
\begin{align} \label{eqn:EhE}
\norm{E-\hE}_2 \leq (n+1)^2 \epsilon_{\ref{thm:correctness_uniform_orth}},
\end{align}
by
Lemma~\ref{lem:procrust} below.

By our choice of $R$, the components of $S_R$ satisfy 
$\cum_4(S_{i,R}) \geq \Delta$ and 
$\E\, S_{i,R}^4 \leq  2 R^4$ (via \eqref{eqn:damping-moment-bound} and our choice $C_1=1/2$). 
Hence $M_4 \leq 2 R^4$. 
The latter
bound via \eqref{eqn:EhE} gives $\E[\inner{X}{u}^4] \le 2 (1 + (n+1)^2 \epsilon_{\ref{thm:correctness_uniform_orth}}) R^4$ for all $u \in \S^{n-1}$.
%% that is, $L^{-1}S_R$, satisfy 
%% $\E (L_i^{-1}S_{i,R})^4 \leq  2 R^4$ (via \eqref{eqn:damping-moment-bound} and our choice $C_1=1/2$) and
%% $\cum_4(L_i^{-1}S_{i,R}) \geq \Delta/(n+1)^4$.

Finally, Lemma~\ref{lem:Sigma-estimates} %% and Lemma~\ref{lem:procrust} below 
with \eqref{eqn:expansion-Sigma} \
and simple estimates give 
% \begin{align*}
$
\norm{\Psi_{\hX_R} - \Psi_{X_R}}_F \leq O(n^4 R^4 \epsilon_{\ref{thm:correctness_uniform_orth}}^{1/2}).
$
% \end{align*}

We are now ready to apply Theorem~\ref{thm:ICA-robust} with 
$\epsilon_{\ref{thm:ICA-robust}} = O(n^4 R^4 \epsilon_{\ref{thm:correctness_uniform_orth}}^{1/2})$ and $\delta_{\ref{thm:ICA-robust}}=\delta/2$. 
This gives that
Fourier PCA produces output $b_1, \ldots, b_n$ such that there are signs $\alpha_i = \pm 1$ and permutation $\pi:[n] \to [n]$ such that 

\begin{align} \label{eqn:application-ICA-robust}
    \norm{A_i - \alpha_i b_{\pi(i)}} \le  O(n^4 R^4 \epsilon_{\ref{thm:correctness_uniform_orth}}^{1/2}) \left(\frac{R^4}{\delta \Delta}\right)^5,
\end{align}
with $\poly(n, 1/\Delta, R, 1/R, 1/\epsilon_{\ref{thm:correctness_uniform_orth}}, 1/\delta)$ sample and time complexity. 
%% By \eqref{eqn:damping-moment-bound}
%% we know that $M_4 \leq R^4/K_{\hat{X}_R} = O(R^4)$, so $M_4$ in the upper bounds on the error and complexity 
%% can be replaced by $O(R^4)$. 
Choose $\epsilon_{\ref{thm:correctness_uniform_orth}}$ so that the RHS of \eqref{eqn:application-ICA-robust} is $\epsilon$.
%%  that is
%% \begin{align*}
%% \epsilon_{\ref{thm:correctness_uniform_orth}}^{1/2} = O(\epsilon) \frac{1}{n^4 R^4} \left(\frac{\delta \Delta}{R^4}\right)^5. 
%% \end{align*}

%%In Theorem~\ref{thm:correctness_uniform_orth} choosing $\eps_{\ref{thm:correctness_uniform_orth}} = \epsilon_{\ref{thm:correctness_uniform_orth}}/3$, 
The number of samples and
time needed for orthogonalization is 
\\ $\poly_\gamma(n, M, 1/s_m, S_M, 1/\epsilon_{\ref{thm:correctness_uniform_orth}}, 1/\delta)$. 
Substituting the value of $\epsilon_{\ref{thm:correctness_uniform_orth}}$ the previous bound becomes 
$\poly_\gamma(n, M, 1/s_m, S_M, 1/\Delta, R, 1/R,1/\epsilon, 1/\delta)$. 
The probability of error, coming from the applications of Theorem~\ref{thm:correctness_uniform_orth} 
and \ref{thm:ICA-robust} is at most 
$\delta/2 +\delta/2 = \delta$. 
\end{proof}

\iffalse
-----------------------------------------------------------------------------------------
\begin{lemma}\label{lem:procrust}
Suppose that $\hE \in \R^{n\times n}$ is such that $\norm{\hE^T\hE - I}_2 \leq \epsilon$ where 
$\epsilon \in [0,1/2]$. Then there exists a unitary matrix $E \in \R^{n\times n}$ such that $\norm{E-\hE}_2 \leq 2\epsilon$. 
\end{lemma}
\begin{proof}
This is related to a special case of the so-called orthogonal Procrustes problem \cite[Section 12.4.1]{MR1417720}, where one looks for a unitary matrix $E$ that minimizes $\norm{E-\hE}_F$. A formula for an optimal $E$ is $E = \hE(\hE^T \hE)^{-1/2}$. 
Although we do not need the fact that this $E$ minimizes $\norm{E-\hE}_F$, it is good for our purpose:
\begin{align*}
\norm{E-\hE}_2 &= \norm{\hE(\hE^T \hE)^{-1/2} - \hE}_2 \leq \norm{\hE}_2 \norm{({\hE}^T\hE)^{-1/2}-I}_2.
\end{align*} 
Let $U \Psi V^T$ be the singular value decomposition of $\hE$, with singular values $(\sigma_i)$. 
Then $\hE^T\hE = V \Psi^2 V^T$ and hence 
\begin{align}\label{eqn:procrust}
\norm{\hE^T\hE - I}_2 = \max_i\abs{1-\sigma_i^2} \leq \epsilon.
\end{align}
Now $({\hE}^T\hE)^{-1/2}= V \Psi^{-1} V^T$ and hence using \eqref{eqn:procrust} and $\epsilon \in [0,1/2]$ 
we get
\begin{align*}
\norm{(\hE^T\hE)^{-1/2} - I}_2 = \max_i\abs{1-1/\sigma_i} \leq \epsilon/\sqrt{1-\epsilon} < 2\epsilon.
\end{align*}
\lnote{where did $\norm{\hE}_2$ go?}
\nnote{make sure that this assumption
on epsilon is propagated to the root} 
\end{proof}
-----------------------------------------------------------------------------
\fi

\begin{lemma}\label{lem:procrust}
Let $\hE \in \R^{n\times n}$ be such that $\norm{\hE^T\hE - I}_2 \leq \epsilon$. Then there exists a unitary matrix $E \in \R^{n\times n}$ such that $\norm{E-\hE}_2 \leq \epsilon$. 
\end{lemma}
\begin{proof}
This is related to a special case of the so-called orthogonal Procrustes problem \cite[Section 12.4.1]{MR1417720}, where one looks for a unitary matrix $E$ that minimizes $\norm{E-\hE}_F$. A formula for an optimal $E$ is $E = UV^T$, where $U \Sigma V^T$ is the singular value decomposition of $\hE$, with singular values $(\sigma_i)$. Although we do not need the fact that this $E$ minimizes $\norm{E-\hE}_F$, it is good for our purpose:
\[
\norm{\hE - E}_2 = \norm{U \Sigma V^T - U V^T}_2 = \norm{\Sigma - I}_2 = \max_i \abs{\sigma_i -1}.
\]
By our assumption
\[
\norm{\hE^T \hE - I}_2 = \norm{V \Sigma^2 V^T - I}_2 = \norm{\Sigma^2 - I}_2 = \max_i \abs{\sigma_i^2 -1} = \max_i (\sigma_i + 1) \abs{\sigma_i - 1} \leq \eps.
\]
This implies $\max_i \abs{\sigma_i -1} \leq \eps$. The claim follows.
\end{proof}
\fi
% \nocite{*}
\iffalse
\myparagraph{Acknowledgements.} 
The problem considered here first came to our attention during discussions with Santosh Vempala. 
We also thank him for some early discussions. This material is based upon work supported by the National Science Foundation under Grants No. 1350870 and 1422830.
\fi

\section{Improving orthogonalization}\label{sec:orthogonalization}

%\section{New approach to orthogonalization}

As noted above, the technique in \cite{anon_htica}, while being provably efficient and correct, suffers from practical implementation issues.
Here we discuss two alternatives: orthogonalization by \emph{centroid body scaling} and orthogonalization by using the empirical covariance.
The former, orthogonalization via centroid body scaling, uses the samples already present in the algorithm rather than relying on a random walk to draw samples which are approximately uniform in the algorithm's approximation of the centroid body (as is done in \cite{anon_htica}).
This removes the dependence on random walks and the ellipsoid algorithm; instead, we use samples that are distributed according to the original heavy-tailed distribution but \emph{non-linearly scaled} to lie inside the centroid body.
We prove in Lemma \ref{lemma:cvar-orthogonalizer} that the covariance of this subset of samples is enough to orthogonalize the mixing matrix $A$.
Secondly, we prove that one can, in fact, ``forget'' that the data is heavy tailed and orthogonalize by using the empirical covariance of the data, even though it diverges, and that this is enough to orthogonalize the mixing matrix $A$. However, as observed in experimental results, 
in general this has a downside compared to orthogonalization via centroid body in that it could cause numerical instability during
the ``second'' phase of ICA as the data obtained is less well-conditioned. 
%, i.e. recovering the rotation.
This is illustrated directly in the table in Figure~\ref{fig:stable-parameter-estimation-and-error} containing the singular value and 
condition number of the mixing matrix $BA$ in the approximately orthogonal ICA model.

\subsection{Orthogonalization via centroid body scaling}\label{sec:centroidbodyscaling}

In \cite{anon_htica}, another orthogonalization procedure, namely \emph{orthogonalization via the uniform distribution in the centroid body} is theoretically proven to work. 
Their procedure does not suffer from the numerical instabilities and composes well with the second phase of ICA algorithms. 
An impractical aspect of that procedure is that it needs samples from the uniform distribution in the centroid body.

We described orthogonalization via centroid body in Section \ref{intro.ch}, except for the estimation of $p(x)$, the Minkowski functional of the centroid body. The complete procedure is stated in Subroutine \ref{sub:centroid}. 

We now explain how to estimate the Minkowski functional. The Minkowski functional was informally described in Section \ref{intro.ch}. 
The Minkowski functional of $\Gamma X$ is formally defined by $p(x) := \inf \{t>0 \suchthat x \in t \Gamma X\}$.
Our estimation of $p(x)$ is based on an explicit linear program (LP) \eqref{eq:lpminkowski} that gives the Minkowski functional of the centroid body of a finite sample of $X$ \emph{exactly} and then arguing that a sample estimate is close to the actual value for $\Gamma X$. For clarity of exposition, we only analyze formally a special case of LP \eqref{eq:lpminkowski} that decides \emph{membership} in the centroid body of a finite sample of $X$ (LP \eqref{eq:lp}) and approximate membership in $\Gamma X$. 
This analysis is in Section \ref{sec:direct_membership}. 
Accuracy guarantees for the approximation of the Minkowski functional follow from this analysis.

\begin{subroutine}[ht]
\caption{Orthogonalization via centroid body scaling}
\label{sub:centroid}
\begin{algorithmic}[1]
\Require
Samples $(X^{(i)})_{i=1}^N$ of ICA model $X = AS$ so each $S_i$ is symmetric with $(1+\gamma)$ moments.
\Ensure 
Matrix $B$ approximate orthogonalizer of $A$ %so that $BA$ has approximately orthogonal columns.
\For{$i = 1:N$}, 
\State Let $\lambda^*$ be the optimal value of \eqref{eq:lpminkowski} with $q = X^{(i)}$. Let $d_i = 1/\lambda^*$. Let $Y^{(i)} = \frac{\tanh d_i}{d_i} X^{(i)}$.
\EndFor
\State Let $C = \frac{1}{N} \sum_{i=1}^N Y^{(i)} {Y^{(i)}}^T$.
%\State 
Output $B = C^{-1/2}$.
\end{algorithmic}
\end{subroutine}
%xxx $Y$ be a scaling of $X$ as described in Section \ref{sec:intro}, that is,
\iffalse
%Centroid sscaling lemma
\begin{lemma}[\cite{anon_htica}]\label{lem:orthogonalizer}
Let $U$ be a family of $n$-dimensional product distributions. Let $\bar U$ be the closure of $U$ under invertible linear transformations.
%Let $X$ denote an $n$-dimensional random variable drawn from a standard ICA model.
%Let $U$ denote a family of random variablesLet $X$ denote an $n$-dimensional random variable drawn from a standard ICA model where each $S_i$ is symmetrically distributed.
%Let $ \measure_X$ be the distribution of $X$.
Let $Q(\measure)$ be an $n$-dimensional distribution defined as a function of $\measure \in \bar U$. Assume that $U$ and $Q$ satisfy:
\begin{enumerate}
\item\label{item:sym}
For all $\measure \in U$, $Q(\measure)$ is absolutely symmetric.
\item\label{item:equivariant} $Q$ is linear equivariant (that is, for any invertible linear transformation $T$ we have $Q(T\measure) = T Q(\measure)$).
\item\label{item:positive}
For any $\measure \in \bar U$, $\cov(Q(\measure))$ is positive definite.
\end{enumerate}
Then for any symmetric ICA model $X=AS$ with $\measure_S \in U$ we have $\cov(Q(\measure_X))^{-1/2}$ is an orthogonalizer of $X$.
\end{lemma}
\fi

\begin{lemma}\label{lemma:cvar-orthogonalizer}
Let $X$ be a random vector drawn from an ICA model $X=AS$ such that for all $i$ we have $\e \abs{S_i} = 1$ and $S_i$ is symmetrically distributed. 
%Assume that there exist $t > 0$ such that for all $i$ we have $\Pr(\abs{S_i} \geq t) \geq 1/t$.
% \lnote{first moment assumption} \jnote{taken care of in ICA model definition}
% Denote $\cvar_{\eps}(X)$ by $K$, and let $Y$ be either $X \cdot \ind(X \in K)$ or uniformly distributed in $K$.;
Let $Y = \frac{\tanh p(X)}{p(X)} X$ where $p(X)$ is the Minkoswki functional of $\Gamma X$.
%$X$ restricted to $ \alpha \Gamma X$, for some $\alpha>0$, that is, $X_\alpha = (X | X \in \alpha \Gamma X)$.
%Further assume that $\Gamma X$ is contained by the ball of radius $R$ and contains a ball of radius $r$.
\details{$\centroid X$ contains $A B_1^\dim$}
%\lnote{doesn't quite work: Support of X could be disjoint from $\Gamma X$? Needs more assumptions}
%\jnote{we can fix the disjoint issue by the scaling}
Then $\cov(Y)^{-1/2}$ is an orthogonalizer of $X$.
\end{lemma}

\begin{proof}
We will be applying Lemma \ref{lem:orthogonalizer}.
%involving family of $\dim$-dimensional product distributions. 
Let $U$ denote the set of absolutely symmetric product distributions $\measure_W$ over $\RR^\dim$ such that $\e \abs{W_i} = 1$ for all $i$.
%\begin{align*}
%\[  U = \{ \measure_W \suchthat  \text{$\measure_W$ is an absolutely symmetric product distribution and $\e \abs{W_i} = 1$,} \}. \]
%&\text{$\Pr(\abs{W_i} \geq t) \geq 1/t$  for all $i$}\}.
%\end{align*}
%$ (S_1,...,S_n)$ with the assumption that $\e \abs{S_i} = 1$ be the set of absolutely symmetric product distributions and
%There exists an $A$ such that $V=AW$, $\measure_{V} = A \measure_W = \measure_{AW} $ and $\measure_{V} \in \bar U$. 
For $\measure_V \in \bar U$, let $Q(\measure_V)$ be equal to the distribution obtained by scaling $V$ as described earlier, that is, distribution of $\alpha V$,  where $\alpha = \frac{\tanh p(V)}{p(V)} $, $p(V)$ is the Minkoswki functional of $\Gamma \measure_V $.
%On scaling $V$ along each ray by $\alpha$, the points at infinity are mapped to the boundary of $\Gamma \measure_V$, the origin remains the same and remaining points are interpolated linearly along that ray.

For all $\measure_W \in U$, $W_i$ is symmetric and $\e \abs{W_i} = 1$ which implies that $\alpha W$, that is, $Q(\measure_W)$ is absolutely symmetric.
Let $\measure_V \in \bar U$. Then $Q(\measure_V)$ is equal to the distribution of $\alpha V$. For any invertible linear transformation $T$ and measurable set  $\mathcal{M}$, we have 
$Q(T\measure_V)(\mathcal{M}) = Q(\measure_{TV})(\mathcal{M})
=\measure_{ \alpha TV}(\mathcal{M})
=\measure_{\alpha V}(T^{-1}\mathcal{M})
=TQ(\measure_{V})(\mathcal{M})$.
\details{$\alpha= \frac{\tanh p(TV)}{p(TV)}, p(TV) =$  Minkoswki functional of $\Gamma TV$, that is, $T \Gamma V$ from Lemma \ref{lem:equivariance} (linear equivariance of $\Gamma$). $p(T^{-1}TV) =$ Minkoswki functional of $T^{-1}T \Gamma V$}
Thus $Q$ is linear equivariant. 
Let $\measure \in \bar U$. Then there exist $A$ and $\measure_W \in U$ such that $\measure = A \measure_W$.  
We get $\cov(Q(\measure)) = \cov(AQ(\measure_W))$. Let $W_\alpha = \alpha W$.
Thus, $\cov(AQ(\measure_W)) = A \e (W_\alpha {W}_\alpha^T ) A^T$ where $\e (W_\alpha {W}_\alpha^T)$ is a diagonal matrix with elements $\e (\alpha^2 W_i^2 )$ which are non-zero because we assume $\e \abs{W_i} = 1$. 
This implies that $\cov(Q(\measure))$ is positive definite and thus by Lemma \ref{lem:orthogonalizer}, $\cov(Y)^{-1/2}$ is an orthogonalizer of $X$. 
\end{proof}

\subsection{Orthogonalization via covariance}\label{sec:covariance}

Here we show the somewhat surprising fact that  orthogonalization of heavy-tailed signals is sometimes possible by using the ``standard'' approach: inverting the empirical covariance matrix.
The advantage here, is that it is computationally very simple, specifically that having heavy-tailed data incurs very little computational penalty on the process of orthogonalization alone.
It's standard to use covariance matrix for \emph{whitening} when the second moments of all independent components 
exist \cite{ICA01}:
Given samples from the ICA model $X = AS$, we compute the empirical covariance matrix $\tilde\Sigma$ which 
tends to the true covariance matrix as we take more samples and set 
$B=\tilde\Sigma^{-1/2}$. Then one can show that $BA$ is a rotation matrix, and thus by pre-multiplying the data by
$B$ we obtain an ICA model $Y = BX = (BA)S$, where the mixing matrix $BA$ is a rotation matrix, and this model is then amenable to various algorithms. 
In the heavy-tailed regime where the second moment does not exist for some of the components, 
there is no true covariance matrix and the empirical covariance diverges as we take more samples. 
However, for any fixed number of samples one can still compute the empirical covariance matrix. 
In previous work (e.g., \cite{ChenBickel04}), the empirical covariance matrix was used for whitening in the heavy-tailed regime with good empirical performance; \cite{ChenBickel04} also provided some theoretical analysis to explain this surprising performance. 
However, their work (both experimental and theoretical) was limited to some very special cases 
(e.g., only one of the components is heavy-tailed, or there are only two components both with stable
distributions without finite second moment).

We will show that the above procedure 
(namely pre-multiplying the data by $B:=\tilde\Sigma^{-1/2}$)  
``works" under considerably more general conditions, namely
if $(1+\gamma)$-moment exists for $\gamma > 0$ for each independent component $S_i$. 
By ``works" we mean that instead of whitening the data (that is $BA$ is rotation matrix) it does something slightly weaker
but still just as good for the purpose of applying ICA algorithms in the next phase. It \emph{orthogonalizes} the 
data, that is now $BA$ is close to a matrix whose columns are orthogonal.
%(as opposed to orthonormal in the case of whitening). 
In other words, $(BA)^T(BA)$ is close to a diagonal matrix (in a sense made precise in 
Theorem~\ref{thm:alg-1-correctness}).
% Denote the columns of $BA$ by $(BA)_1, \ldots, (BA)_n$. 
% If we have an ICA model $Y = BX = (BA)S$ such that $(BA)$ has orthogonal columns, then by rescaling the $S_i$ by a factor of $\norm{(BA)_i}_2$ and column $i$ of $BA$ by $1/\norm{(BA)_i}_2$ for $i \in [n]$ we get a new ICA model where the mixing matrix is a rotation matrix. 
%Thus orthogonalization achieves the most important aspect of whitening. 

\iflong
Let $X$ be a real-valued symmetric random variable such that $\E (\abs{X}^{1+\gamma}) \leq M$ for some $M > 1$ and 
$0 < \gamma < 1$. The following lemma from \cite{anon_htica} says that the empirical average of the absolute value of $X$ converges to the expectation of $|X|$. The proof, which we omit, follows an argument similar to the proof of the Chebyshev's inequality. 
Let $\tilde{\E}_N[\abs{X}]$ be the empirical average obtained from $N$ independent samples 
$X^{(1)}, \ldots, X^{(N)}$,
i.e., $(\abs{X^{(1)}}+\dotsb+\abs{X^{(N)}})/N$.

\iffalse
\begin{lemma}\label{lem:1-plus-eps-chebyshev}
Let $\epsilon \in (0,1)$. With the notation above, for 
$N \geq \left(\frac{8M}{\epsilon}\right)^{\frac{1}{2}+\frac{1}{\gamma}}$,
we have
%\begin{align*}
$\Pr[\abs{\tilde{\E}_N[\abs{X}]-\E[\abs{X}]} > \epsilon] \leq \frac{8M}{\epsilon^2 N^{\gamma/3}}$.
%\end{align*}
\end{lemma}
\fi
\fi
\begin{theorem}[Orthogonalization via covariance matrix]\label{thm:alg-1-correctness}
Let $X$ be given by ICA model $X=AS$. 
Assume that there exist $t, p, M > 0$  and $\gamma \in (0,1)$
such that for all $i$ we have

(a) $\e (\abs{S_i}^{1+\gamma}) \leq M < \infty$,

(b) (normalization) $\e \abs{S_i} = 1$, and

(c) $\Pr(\abs{S_i} \geq t) \geq p$.
Let $x^{(1)}, \dotsc, x^{(N)}$ be i.i.d. samples according to $X$. Let $\tilde{\Sigma} = (1/N) \sum_{k=1}^N x^{(k)} {x^{(k)}}^T$ and $B = \tilde{\Sigma}^{-1/2}$.
% \jnote{$M$ is overloaded here}
Then for any $\eps, \delta \in (0,1)$, $\norm{(BA)^T B A - D}_{2} \leq \eps$
for a diagonal matrix $D$ with diagonal entries $d_1, \dotsc, d_\dim$ satisfying $0 < d_i, 1/d_i \leq \max\{ 2/pt^2, N^4 \}$ for all $i$
with probability $1-\delta$ when
$N \geq \poly(n, M, 1/p, 1/t, 1/\eps, 1/\delta)$.
% \[
% N \geq \max \left\{ \bigg( \frac{24n^2 \eta^2}{\eps_1^2 \delta} \bigg)^{3/\gamma}, \bigg(\frac{8 \eta^2}{\eps_1}\bigg)^{\frac{1}{2} + \frac{1}{\gamma}}, \frac{8}{p} \ln\left(\frac{3n}{\delta}\right), \frac{n}{3\delta} \right\}.
% \]
%$\norm{M^{-1/2}AA^T (M^{-1/2})^T - D}_F \leq \eps$
\end{theorem}
\iflong
\begin{proofidea}
For $i\neq j$ we have $\e(S_i S_j) = 0$ (due to our symmetry assumption on $S$) 
and $\e(\abs{S_i S_j}) = \e(\abs{S_i})\e(\abs{S_j}) < \infty$.
%When $A = I$, the off-diagonal entries of $M$ converge to 0 while the diagonal entries of $M$ are bounded away from $0$.
We have $(BA)^T BA = L^{-1}$, where $L = (1/N) \sum_{k=1}^N s^{(k)} {s^{(k)}}^T$. The off-diagonal entries of $L$ 
converge to $0$: We have $L_{i,j} = \E S_i S_j = (\E S_i)(\E S_j)$. Now by our assumption that $(1+\gamma)$-moments 
exist, Lemma~\ref{lem:1-plus-eps-chebyshev} is applicable and implies that empirical average $\tilde\E S_i$ tends
to the true average $\E S_i$ as we increase the number of samples. The true average is $0$ because of our assumption
of symmetry (alternatively, we could just assume that the $X_i$ and hence $S_i$ have been centered). 
The diagonal entries of $L$ are bounded away from $0$: This is clear when the second moment is finite,
and follows easily by hypothesis (c) when it is not. \nnote{is the last statement correct}
Finally, one shows that if in $L$ the diagonal entries highly dominate the off-diagonal entries, then the same
is true of $L^{-1}$.
\nnote{is that how the proof goes}
\end{proofidea}
\fi

\iflong

\begin{proof}
\lnote{fix indices}
We have $(BA)^T BA = L^{-1}$, where $L = (1/N) \sum_{k=1}^N s^{(k)} {s^{(k)}}^T$.
By assumption, $\e L_{ij} = 0$ for $i \neq j$.
Note that $\e \abs{s_i s_j}^{1+\gamma} \leq M^2$ and so by Lemma~\ref{lem:1-plus-eps-chebyshev}, for $i \neq j$,
\[
P(\abs{L_{ij}} > \eps_1) \leq \frac{8 M^2}{\eps_1^2 N^{\gamma/3}}
\]
when $N \geq (\frac{8 M^2}{\eps_1})^{\frac{1}{2} + \frac{1}{\gamma}}$.

Now let $D := \diag(L^{-1}_{11}, L^{-1}_{22}, \dots, L^{-1}_{\dim \dim})$.
Then when $\abs{L_{ij}} < \eps_1$ for all $i \neq j$, we have $\norm{L - D^{-1}}_{2} \leq \norm{L - D^{-1}}_{F} \leq n \eps_1$.
The union bound then implies
\begin{equation}\label{eq:off-diagonal-union-bound}
  \begin{aligned}
 P( \norm{L - D^{-1}}_{2} < n\eps_1 ) & \geq P( \norm{L - D^{-1}}_{F} < n\eps_1 ) \\
 & \geq P( \forall i\neq j, \abs{L_{ij}} \leq \eps_1) \\
 & \geq 1 - \frac{8 \dim^2 M^2}{\eps_1^2 N^{\gamma/3}}
  \end{aligned}
\end{equation}
when $N \geq (\frac{8 M^2}{\eps_1})^{\frac{1}{2} + \frac{1}{\gamma}}$.
% Choose
% \[
% \eps_1 := \frac{t^4 p^2}{4n} \cdot \frac{\eps}{2}
% \]
% and
% \[
% N \geq N_1 := \max \bigg\{ \bigg(\frac{1024 n^4 \eta^2}{t^8 p^4 \eps^2 \delta}\bigg)^{3/\gamma}, \bigg(\frac{64 n \eta^2}{t^4 p^2 \eps}\bigg)^{\frac{1}{2} + \frac{1}{\gamma}} \bigg\}
% N \geq N_1 := \max\bigg\{ \bigg( \frac{24n^2 \eta^2}{\eps_1^2 \delta} \bigg)^{3/\gamma}, \bigg(\frac{8 \eta^2}{\eps_1}\bigg)^{\frac{1}{2} + \frac{1}{\gamma}} \bigg\}
% \]
% so that
% \[
% P\bigg(\norm{L-D^{-1}}_{2} > n \eps_1 \bigg) \leq \frac{\delta}{3}.
% \]

Next, we aim to bound $\norm{D}_2$ which can be done by writing
\begin{equation}
\norm{D}_2 = \frac{1}{\sigma_{\min}(D^{-1})} = \frac{1}{\min_{i \in [n]} L_{ii}}
\end{equation}
where $L_{ii} = (1/N) \sum_{k=1}^{N} {s_i^{(k)}}^2$.
Consider the random variable $\ind(s_i^2 \geq t^2)$.
We can calculate $\e \sum_j \ind({s_i^{(j)}}^2 \geq t^2) \geq Np$ and use a Chernoff bound to see
\begin{equation}
P\left(\sum_{k \in [N]} \ind({s_i^{(k)}}^2 \geq t^2) \leq \frac{Np}{2} \right) \leq \exp\bigg(-\frac{Np}{8}\bigg)
\end{equation}
and when $\sum_{k \in [N]} \ind({s_i^{(k)}}^2 \geq t^2) \geq \frac{Np}{2}$, we have $L_{ii} \geq t^2 p/2$.
Then with probability at least $1-n\exp(-Np/8)$, all entries of $D^{-1}$ are at least $t^2p/2$.
Using this, if $N \geq N_1 := (8/p)\ln(3n/\delta)$ then $\norm{D}_{2} \leq 2/pt^2$ with probability at least $1 - \delta/3$.

Similarly, suppose that $\norm{D}_{2} \leq 2/pt^2$ and choose $\eps_1 = \min\{ \frac{t^4 p^2}{4n} \cdot \frac{\eps}{2}, \frac{1}{pt^2} \}$ and
\[
N_2 := \max\bigg\{ \bigg( \frac{24n^2 M^2}{\eps_1^2 \delta} \bigg)^{3/\gamma}, \bigg(\frac{8 M^2}{\eps_1}\bigg)^{\frac{1}{2} + \frac{1}{\gamma}} \bigg\}
\]
so that when $N \geq N_2$, we have
% Note that by our choice of $\eps_1$\lnote{move choice here},
$\norm{L - D^{-1}}_{2} \leq 1/(2 \norm{D}_{2})$ and $\norm{L - D^{-1}}_{2} \leq t^4 p^2 \eps / 8$ with probability at least $1-\delta/3$.
Invoking (\ref{eq:inverse-stability}), when $N \geq \max\{N_1, N_2\}$, we have
\begin{equation}\label{eq:inverse-bound}
\norm{L^{-1} - D}_{2} \leq 2 \norm{D}_{2} \norm{L-D^{-1}}_{2} \leq 2 \frac{4}{p^2 t^4} \frac{t^4 p^2 \eps}{8} = \eps
\end{equation}
with probability at least $1 - 2 \delta/3$.

Finally, we upper bound $1/d_i$ for a fixed $i$ by using Markov's inequality:
% \lnote{add details, fix i}
\begin{equation}
\begin{split}
P\left(\frac{1}{d_i} > N^5 \right) &= P(L_{ii} > N^4) = P\bigg( \sum_{j}^{N} {s_i^{(j)}}^2 > N^5 \bigg) \\
&\leq N P(S_i^2 > N^4 ) \leq N P(\abs{S_i} > N^2) \\ 
&\leq N \frac{\e \abs{S_i}}{N^2} = \frac{1}{N}
\end{split}
\end{equation}
so that $1/d_i \leq N^4$ for all $i$ with probability at least $1-\delta/3$ when $N \geq N_3 := n/3\delta$.
Therefore, when $N \geq \max\{N_1, N_2, N_3 \}$, we have $\norm{L^{-1} - D}_{2} \leq \eps$, $d_i \leq 2/pt^2$, and $1/d_i \leq N^4$ for all $i$ with overall probability at least $1-\delta$.
% \lnote{ add $1-\delta/3$ partial claim}
\end{proof}

We used Lemma~\ref{lem:inversion}. 
\iffalse
\begin{lemma}\label{lem:inversion}
Let $\norm{\cdot}$ be a matrix norm such that $\norm{AB} \leq \norm{A} \norm{B}$. Let matrices $C, E \in \R^{n\times n}$ be such that $\norm{C^{-1} E}_2 \leq 1$, and
let $\tilde{C} = C + E$. Then
\begin{equation}
\frac{\norm{\tilde{C}^{-1} - C^{-1}}} {\norm{C^{-1}}} \leq \frac{\norm{C^{-1} E}}{1 - \norm{C^{-1} E}}.
\end{equation}
\end{lemma}

This implies that if $\norm{E}_2 = \norm{\tilde{C} - C}_2 \leq 1/(2 \norm{C^{-1}}_2)$, then
\begin{equation}\label{eq:inverse-stability}
\norm{\tilde{C}^{-1} - C^{-1}}_2 \leq 2 \norm{C^{-1}}_{2}^{2} \norm{E}_{2}.
\end{equation}
\fi
\fi

In Theorem~\ref{thm:alg-1-correctness}, the diagonal entries are lower bounded, which avoids some degeneracy, but they could still grow quite large because of the heavy tails.
%Fortunately, the experimental results suggest that this will not often cause problems. xxx
This is a real drawback of orthogonalization via covariance. HTICA, using the more sophisticated \emph{orthogonalization via centroid body scaling} does not have this problem. We can see this in the right table of Figure \ref{fig:stable-parameter-estimation-and-error}, where the condition number of ``centroid'' is much smaller than the condition number of ``covariance.''
\iffalse
Figure \ref{fig:sigma_min} compares the experimental performance of orthogonalization via covariance matrix and centroid body.\lnote{improve}

\begin{figure}[ht]
  \centering
  	\includegraphics[width=0.32\textwidth]{}
    \includegraphics[width=0.32\textwidth]{}
%    \caption{The smallest singular value of the orthogonalization matrix computed by using the covariance matrix and the centroid body. The choice of exponents for the distributions is $\eta = (6,6,6,6,6,6,6,6,2.1,2.1)$ in the left figure and $\eta = (2.1,2.1,2.1,2.1,2.1,2.1,2.1,2.1)$ on the right.}\label{fig:sigma_min}
    \caption{Orthogonalization quality, shown by plotting $\sigma_{\min}(\hat M)$ (top) and $\sigma_{\max}(M)/\sigma_{\min}(M)$ (bottom), where $M = BA$ and $\hat M$ is $M$ with normalized columns. The choice of exponents for the distributions is $\eta = (6,6,6,6,6,6,6,6,2.1,2.1).$
    }\label{fig:sigma_min}
\end{figure}
\fi

%\subsection{Implementation of orthogonalization via $X$ restricted to the centroid body}\label{sec:centroidbody}

%\input{../direct_membership}

\subsection{New Membership oracle for the centroid body}\label{sec:direct_membership}
%We will see now how to implement an $\eps$-weak membership oracle for $\Gamma X$ directly.
We will now describe and theoretically justify a new and practically efficient $\eps$-weak membership oracle for $\Gamma X$, which is a black-box that can answer approximate membership queries in $\Gamma X$.
More precisely:
\begin{definition}%[\cite{GLS}]\label{def:eps-weak-oracle}
The \emph{$\epsilon$-weak membership problem} for $K\subseteq \RR^n$ is the following:
Given a point $y \in \QQ^n$ and a rational number $\eps > 0$, 
% \lnote{membership?}
%% oracle}
%%for a set $K\subseteq \RR^n$ is an oracle that 
either (i) assert that $y \in K_\eps$, or (ii) assert that $y \not \in K_{-\eps}$.
%% \begin{enumerate}
%% \item asserts that $y \in S(K,\eps)$, or
%% \item asserts that $y \not \in S(K,-\eps)$.
%% \end{enumerate}
An \emph{$\epsilon$-weak membership oracle} for $K$ is an oracle that solves the weak membership problem for $K$.
For $\delta \in [0,1]$, an \emph{$(\eps, \delta)$-weak membership oracle} for $K$ acts as 
follows: Given a point $y \in \QQ^n$, with probability at least $1-\delta$ it solves the  
$\eps$-weak membership problem for $y, K$, and otherwise its output can be arbitrary.  
\end{definition}
We start with an informal description of the algorithm and its correctness.
%The oracle is much simpler and practical than the one described in \cite{anon_htica}. In particular, it does not need to deal with polarity algorithmically, as the polar is computed explicitly as part of the analysis

The algorithm implementing the oracle (Subroutine \ref{sub:centroid-oracle}) is the following: 
Let $q \in \RR^\dim$ be a query point. 
Let $X_1, \dotsc, X_N$ be a sample of random vector $X$.
Given the sample, let $Y$ be uniformly distributed in $\{X_1, \dotsc, X_N\}$. 
Output YES if $q \in \Gamma Y$, else output NO.

Idea of the correctness of the algorithm: If $q$ is not in $(\Gamma X)_\eps$, then there is a hyperplane separating $q$ from $(\Gamma X)_\eps$. 
Let $\{x \suchthat a^T x = b \}$ be the hyperplane, satisfying $\norm{a} = 1$, $a^T q > b$ and $a^T x \leq b$ for every $x \in (\Gamma X)_\eps$.
Thus, we have $h_{(\Gamma X)_\eps}(a) \leq b$ and $h_{\Gamma X}(a) \leq b - \eps$.
We have \[ h_{\Gamma Y}(a) = \e(\abs{a^T Y}) = (1/N) \sum_{i=1}^N \abs{a^T X_i}. \]
\iflong
By Lemma \ref{lem:1-plus-eps-chebyshev}, 
\else
By \cite[Lemma 14]{anon_htica},
\fi
$(1/N) \sum_{i=1}^N \abs{a^T X_i}$ is within $\eps$ of $\e \abs{a^T X} = h_{\Gamma X}(a) \leq b - \eps$ when $N$ is large enough with probability at least $1-\delta$ over the sample $X_1, \dotsc, X_N$. 
In particular, $h_{\Gamma Y}(a) \leq b$, which implies $q \notin \Gamma Y$ and the algorithm outputs NO, with probability at least $1-\delta$. 

If $q$ is in $(\Gamma X)_{-\eps}$, let $y = q + \eps \hat q \in \Gamma X$.  
\iflong
We will prove the following claim: 
\fi

\iflong
Informal claim (Lemma \ref{lem:centroid_approximation}): 
\else
Claim: 
\fi
For $p \in \Gamma X$, for large enough $N$ and with probability at least $1-\delta$ there is $z \in \Gamma Y$ so that $\norm{z - p} \leq \eps/10$. 

This claim applied to $p=y$ to get $z$, convexity of $\Gamma Y$ and the fact that $\Gamma Y$ contains $B \simeq \sigma_{\min}(A) B_2^\dim$ 
\iflong
(Lemma \ref{lem:innerball}) 
\fi
imply that $q \in \conv (B \cup \{ z \}) \subseteq \Gamma Y$ and the algorithm  outputs YES.

\iflong
We will prove the claim now. Let $p \in \Gamma X$. 
By the dual characterization of the centroid body (Proposition \ref{prop:dualcharacterization}), 
there exists a function $\lambda:\RR^\dim \to \RR$ such that $p = \e (\lambda(X) X)$ with $-1 \leq \lambda \leq 1$.
%there exists an $\dim$-dimensional probability measure $\nu$ such that $p = \int x \ud \nu(x)$ with $\ud \nu/ \ud \measure_X \leq 2$. 
Let 
%\[
%z = \frac{1}{N} \sum_{i=1}^N X_i \frac{\ud \nu}{\ud \measure_X}(X_i).
%\]
\(
z = \frac{1}{N} \sum_{i=1}^N \lambda(X_i) X_i.
\)
%We have $\e_{X_i} (X_i \frac{\ud \nu}{\ud \measure_X}(X_i)) = p$ and $\e_{X_i} ( \abs{X_i \frac{\ud \nu}{\ud \measure_X}(X_i)}^{1+\gamma}) \leq 2^{1+\gamma}\e_{X_i} ( \abs{X_i}^{1+\gamma}) \leq 2^{1+\gamma} M$. 
We have $\e_{X_i} (\lambda(X_i) X_i) = p$ and $\e_{X_i} ( \abs{\lambda(X_i) X_i}^{1+\gamma} ) \leq \e_{X_i} ( \abs{X_i}^{1+\gamma} ) \leq M$.
\iflong
By Lemma \ref{lem:1-plus-eps-chebyshev}
\else
By \cite[Lemma 14]{anon_htica}
\fi
and a union bound over every coordinate we get $\pr( \norm{p - z} \geq \eps) \leq \delta$ for $N$ large enough. 
\fi 

\iflong
\subsubsection{Formal Argument}
\else
We conclude with the main formal claims of the argument and a precise description of the oracle below:
\fi

\iffalse

\begin{lemma}[\cite{anon_htica}]\label{lem:centroid-scaling}
Let $S = (S_1, \dots, S_n) \in \RR^n$ be an absolutely symmetrically distributed random vector such that
% $\e S = 0$\lnote{superfluous?}
$\e (\abs{S_i}) = 1$ for all $i$.
Then $B_{1}^{\dim} \subseteq \Gamma S \subseteq [-1,1]^\dim$.
%\lnote{we may want to state the stronger version with $B_1^n$}
%That is, the centroid body of $S$ contains the standard $\ell_1$-ball and is contained in the unit hypercube.
Moreover, $\dim^{-1/2} B_2^\dim \subseteq (\Gamma S)^\polar \subseteq \sqrt{\dim}B_2^\dim$.
\end{lemma}

\begin{lemma}[\cite{anon_htica}]\label{lem:equivariance}
Let $X$ be a random vector on $\RR^\dim$.
Let $A: \RR^\dim \to \RR^\dim$ be an invertible linear transformation.
Then $\Gamma (AX) = A (\Gamma X)$.
\end{lemma}

\fi

\begin{lemma}\label{lem:innerball}
Let $S = (S_1, \dots, S_n) \in \RR^n$ be an absolutely symmetrically distributed random vector such that
% $\e S = 0$\lnote{superfluous?}
$\e (\abs{S_i}) = 1$ and $\e (\abs{S_i}^{1+\gamma}) \leq M < \infty$ for all $i$. 
Let $S^{(i)}, i=1, \dotsc, N$ be a sample of i.i.d. copies of $S$. Let $Y$ be a random vector, uniformly distributed  in $S^{(1)}, \dotsc, S^{(N)}$. 
Then $ (1-\eps') B_{1}^{\dim} \subseteq \Gamma Y$ whenever
\[
N \geq \left(\frac{16 M \dim^4}{(\eps')^2} \delta'\right)^{\frac{1}{2} + \frac{3}{\gamma}}.
\]
\end{lemma}

%\begin{proofidea}
%From Lemma \ref{lem:centroid-scaling} we know $\pm e_i \in \Gamma S$. 
%It is enough to apply Lemma \ref{lem:centroid_approximation} to $\pm e_i$. 
%This gives, for any $\theta \in S^{\dim-1}$,  $h_{\Gamma Y} (\theta) \geq h_{\Gamma S} (\theta) -\eps \geq h_{B_1^\dim} (\theta) -\eps \geq (1-\sqrt{\dim} \eps) h_{B_1^\dim}(\theta)$. In particular, $\Gamma Y \supseteq (1-\sqrt{\dim}\eps) B_1^\dim$.
%\end{proofidea}
\begin{proof}
From Lemma \ref{lem:centroid-scaling} we know $\pm e_i \in \Gamma S$. 
It is enough to apply Lemma \ref{lem:centroid_approximation} to $\pm e_i$ with $\eps = \eps'/\sqrt{\dim}$ and $\delta = \delta'/(2\dim)$.
This gives, for any $\theta \in S^{\dim-1}$,  $h_{\Gamma Y} (\theta) \geq h_{\Gamma S} (\theta) -\eps \geq h_{B_1^\dim} (\theta) -\eps \geq (1-\sqrt{\dim} \eps) h_{B_1^\dim}(\theta) = (1-\eps')h_{B_1^\dim}(\theta)$. 
In particular, $\Gamma Y \supseteq (1-\eps') B_1^\dim$.
\end{proof} 

\begin{proposition}[Dual characterization of centroid body]\label{prop:dualcharacterization}
%Let $X$ be a $\dim$-dimensional centrally symmetric random vector with finite first moment xxx. Then 
%\[
%\Gamma X = \{ \e Y \suchthat \text{$Y$ is a random vector such that } (\forall \text{ Borel }T \subseteq \RR^\dim) \measure_Y(T) \leq 2 \measure_X(T)\}.
%\]
Let $X$ be a $\dim$-dimensional random vector with finite first moment, that is, for all $u \in \RR^\dim$ we have $\e (\abs{\inner{u}{X}}) < \infty$. Then 
\iflong
\begin{equation}\label{equ:dualcentroid}
    \Gamma X = \{ \e \bigl(\lambda(X) X \bigr) \suchthat \text{$\lambda:\RR^n \to [-1,1]$ is measurable}\}.
\end{equation}
\else
    $\Gamma X = \{ \e \bigl(\lambda(X) X \bigr) \suchthat \text{$\lambda:\RR^n \to [-1,1]$ is measurable}\}$.
\fi
\end{proposition}
\begin{proof}
Let $K$ denote the rhs of the conclusion.% \eqref{equ:dualcentroid}. 
We will show that $K$ is a non-empty, closed convex set and show that $h_K = h_{\Gamma X}$, which implies \eqref{equ:dualcentroid}.

By definition, $K$ is a non-empty bounded convex set. To see that it is closed, let $(y_k)_k$ be a sequence in $K$ such that $y_k \to y \in \RR^\dim$. 
Let $\lambda_k$ be the function associated to $y_k \in K$  according to the definition of $K$.
Let $\measure_X$ be the distribution of $X$.
We have $\norm{\lambda_k}_{L^\infty(\measure_X)} \leq 1$ and, passing to a subsequence $k_j$, $(\lambda_{k_j})$ converges to $\lambda \in L^\infty(\measure_X)$ in the weak-$*$ topology $\sigma(L^\infty(\measure_X), L^1(\measure_X))$, where $-1 \leq \lambda \leq 1$.
\footnote{This is a standard argument, see \cite{MR2759829} for the background. Map $x \mapsto x_i$ is in $L^1(\measure_X)$. \cite[Theorem 4.13]{MR2759829} gives that $L^1(\measure_X)$ is a separable Banach space. \cite[Theorem 3.16]{MR2759829} (Banach-Alaoglu-Bourbaki) gives that the unit ball in $L^\infty(\measure_X)$ is compact in the weak-* topology. \cite[Theorem 3.28]{MR2759829} gives that the unit ball in $L^\infty(\measure_X)$ is metrizable and therefore sequentially compact in the weak-* topology. Therefore, any bounded sequence in $L^\infty(\measure_X)$ has a convergent subsequence in the weak-* topology.}
This implies $\lim_{j} \e (\lambda_{k_j} (X) X_i) = \lim_j \int_{\RR^\dim} \lambda_{k_j} (x) x_i \ud \measure_X(x) = \int_{\RR^\dim} \lambda(x) x_i \ud \measure_X(x) = \e (\lambda (X) X_i) $.
Thus, we have $y = \lim_j y_{k_j} = \lim_j \e ((\lambda_{k_j}(X) X) = \e (\lambda(X) X)$ and $K$ is closed.

To conclude, we compute $h_K$ and see that it is the same as the definition of $h_{\Gamma X}$. In the following equations $\lambda$ ranges over functions such that $\lambda:\RR^n \to \RR$ is Borel-measurable and $-1 \leq \lambda \leq 1$.
\begin{align*}
h_K(\theta)
&= \sup_{y \in K} \inner{y}{\theta} \\
&= \sup_{\lambda} \e( \lambda(X) \inner{X}{\theta}) \\
\intertext{and setting $\lambda^*(x) = \sign \inner{x}{\theta}$,}
&= \e( \lambda^*(X) \inner{X}{\theta}) \\
&= \e (\abs{\inner{X}{\theta}}).
\end{align*}
\end{proof}

\begin{lemma}[LP]\label{lemma:centroid-lp}
Let $X$ be a random vector uniformly distributed in $\{x^{(i)}\}_{i=1}^{N} \subseteq \RR^\dim$. Let $q \in \RR^\dim$.
Then: 
\begin{enumerate}
\item $\Gamma X = \frac{1}{N} \sum_{i=1}^N [-x^{(i)}, x^{(i)}]$.

\item 
Point $q~\in~\Gamma X$ iff there is a solution $\lambda~\in~\RR^N$ to the following linear feasibility problem:
\begin{equation}\label{eq:lp}
\iflong
\begin{aligned}
&  \frac{1}{N} \sum_{i=1}^N \lambda_i x^{(i)} = q \\
%& y_i \leq 2 f_X(x_i) \quad \forall i\\
% & \sum_i y_i = 1 \\
%& y_1 + \dots + y_N = 1 \\
& -1 \leq \lambda_i \leq 1 \quad \forall i.
\end{aligned}
\else
\frac{1}{N} \sum_{i=1}^N \lambda_i x^{(i)} = q,
%& y_i \leq 2 f_X(x_i) \quad \forall i\\
% & \sum_i y_i = 1 \\
%& y_1 + \dots + y_N = 1 \\
-1 \leq \lambda_i \leq 1 \quad \forall i.
\fi
\end{equation}
\item 
Let $\lambda^*$ be the optimal value of (always feasible) linear program
\begin{equation}\label{eq:lpminkowski}
\begin{aligned}
\iflong
&\lambda^* = \max \lambda \\
\text{s.t. } & \frac{1}{N} \sum_{i=1}^N \lambda_i x^{(i)} = \lambda q \\
% & \sum_i y_i = 1 \\
%& y_1 + \dots + y_N = 1 \\
& -1 \leq \lambda_i \leq 1 \quad \forall i
\else
\lambda^* = \max \lambda, 
\text{s.t. }  \frac{1}{N} \sum_{i=1}^N \lambda_i x^{(i)} = \lambda q, 
% & \sum_i y_i = 1 \\
%& y_1 + \dots + y_N = 1 \\
 \lambda \in [-1,1]^N
\fi
\end{aligned}
\end{equation}
with $\lambda^* = \infty$ if the linear program is unbounded. Then the Minkowski functional of $\Gamma X$ at $q$ is $1/\lambda^*$.
\end{enumerate}
\end{lemma}
%
%Note that in Lemma~\ref{lemma:centroid-lp}, $z(y) = \sum_{i} x_i P_Y(x_i)$ gives the mean of the sample $\{x_i\}$ over the discrete distribution given by $P_Y(x_i) = y_i$.
%This then enables one to take a finite sample $\{x_i\}$ from a general distribution and estimate membership in its centroid body.
\begin{proof}
\begin{enumerate}
\item
This is proven in \cite{MR0279689}.
It is also a special case of Proposition \ref{prop:dualcharacterization}. 
We include an argument here for completeness. 
Let $K := \frac{1}{N} \sum_{i=1}^N [-x^{(i)}, x^{(i)}]$. 
We compute $h_K$ to see it is the same as $h_{\Gamma X}$ in the definition of $\Gamma X$ (Definition \ref{def:centroid-body2}). 
As $K$ and $\Gamma X$ are non-empty compact convex sets, this implies $K = \Gamma X$.
We have 
\begin{align*}
h_K(y) 
&= \sup_{\lambda_i \in [-1,1]} \frac{1}{N} \sum_{i=1}^N \lambda_i x^{(i)} \cdot y \\
&= \max_{\lambda_i \in \{-1,1\}} \frac{1}{N} \sum_{i=1}^N \lambda_i x^{(i)} \cdot y \\
&= \frac{1}{N} \sum_{i=1}^N \abs{ x^{(i)} \cdot y }\\
&= \e (\abs{X \cdot y}).
\end{align*}
\item This follows immediately from part 1. 
\item This follows from part 1 and the definition of Minkowski functional.\qedhere
\end{enumerate}
\end{proof}
\begin{subroutine}[ht]
\caption{Weak Membership Oracle for $\Gamma X$}\label{sub:centroid-oracle}
\begin{algorithmic}[1]
\Require Query point $q \in \RR^\dim$,
samples from symmetric ICA model $X = AS$,
bounds $s_M \geq \sigma_{\max}(A)$, $s_m \leq \sigma_{\min}(A)$,
closeness parameter $\eps$,
failure probability $\delta$.
\Ensure $(\epsilon, \delta)$-weak membership decision for $q \in \Gamma X$.
\State Let $N = \poly(n, M, 1/s_m, s_M, 1/\eps, 1/\delta)$.
\State Let $(x^{(i)})_{i=1}^N$ be an i.i.d. sample of $X$. 
%Let $f_X$ be the uniform discrete density supported on $(x^{(i)})_{i=1}^N$.
\State Check the feasibility of linear program \eqref{eq:lp}. If feasible, output YES, otherwise output NO.
\end{algorithmic}
\end{subroutine}

\begin{proposition}[Correctness of Subroutine \ref{sub:centroid-oracle}]
Let $X=AS$ be given by an ICA model such that for all $i$ we have $\e (\abs{S_i}^{1+\gamma}) \leq M < \infty$, $S_i$ is symmetrically distributed and normalized so that $\e \abs{S_i} = 1$.
Then, given a query point $q \in \RR^\dim$,  $\eps>0$, $\delta > 0$, $s_M \geq \sigma_{\max}(A)$, and $s_m \leq \sigma_{\min}(A)$, Subroutine~\ref{sub:centroid-oracle} is an $\eps$-weak membership oracle for $q$ and $\Gamma X$ with probability $1-\delta$ using time and sample complexity
\(
\poly(n, M, 1/s_m, s_M, 1/\eps, 1/\delta).
\)\lnote{query time should also depend on query, unless we are talking about arithmetic operations}
% Moreover, $s_m /\sqrt{\dim} B_2^\dim \subseteq \Gamma X \subseteq s_M \sqrt{\dim} B_2^\dim$.
\end{proposition}
\begin{proof}
Let $Y$ be uniformly random in $(x^{(i)})_{i=1}^N$. There are two cases corresponding to the guarantees of the oracle: 
\begin{itemize}
\item Case $q \notin (\Gamma X)_\eps$. 
Then there is a hyperplane separating $q$ from $(\Gamma X)_\eps$.
Let $\{x \in \RR^\dim \suchthat a^T x = b \}$ be the separating hyperplane, parameterized so that $a \in \RR^\dim$, $b \in \RR$, $\norm{a} = 1$, $a^T q > b$ and $a^T x \leq b$ for every $x \in (\Gamma X)_\eps$. 
In this case $h_{(\Gamma X)_\eps}(a) \leq b$ and $h_{\Gamma X}(a) \leq b - \eps$.
At the same time, $h_{\Gamma Y}(a) = \e (\abs{a^T Y}) = (1/N) \sum_{i=1}^N \abs{a^T x^{(i)}}$.

We want to apply Lemma~\ref{lem:1-plus-eps-chebyshev} to $a^T X$ to get that $h_{\Gamma Y}(a) = (1/N) \sum_{i=1}^N \abs{a^T x^{(i)}}$ is within $\eps$ of $h_{\Gamma X}(a) = \e ( \abs{a^T X} )$. 
For this we need a bound on the $(1+\gamma)$-moment of $a^T X$. 
We use the bound given in \ref{eqn:N-lower-bound}.
\iffalse
Lemma~\ref{lem:1-plus-eps-chebyshev} implies that for
\begin{align} \label{eqn:N-lower-bound}
N \geq 
\max \left\{ \left(\frac{8 M \dim s_M^{1+\gamma}}{\eps^2 \delta}\right)^{3/\gamma}, \left( \frac{8M \dim s_M^{1+\gamma}}{\eps}\right)^{\frac{1}{2} + \frac{1}{\gamma}} \right\},
%%\geq \left(\frac{8 M \dim \sigma_{\max}(A)^{1+\gamma}}{\eps_1^2 \delta}\right)^{3/\gamma},
\end{align}
we have
\[
P\left(\abs{\sum_{i=1}^N \abs{a^T x^{(i)}} - \e (\abs{a^T X})} > \eps\right) \leq \delta.
\]
In particular, with probability at least $1-\delta$ we have $h_{\Gamma Y}(a) \leq b$, which implies $q \notin \Gamma Y$ and, by Lemma \ref{lemma:centroid-lp}, Subroutine \ref{sub:centroid-oracle} outputs NO.
\fi

\item Case $q \in (\Gamma X)_{-\eps}$.
Let $y = q + \eps \hat q = q(1+\frac{\eps}{\norm{q}})$.
Let $\alpha = 1+\frac{\eps}{\norm{q}}$.
Then $y \in \Gamma X$.
Invoke Lemma \ref{lem:centroid_approximation} for i.i.d. sample $(x^{(i)})_{i=1}^N$ of $X$ with $p = y$ and $\eps$ equal to some $\eps_1 > 0$ to be fixed later to conclude $y \in (\Gamma Y)_{\eps_1}$.
That is, there exist $z \in \Gamma Y$ such that 
\begin{equation}\label{equ:z}
\norm{z - y} \leq \eps_1.
\end{equation}
Let $w = z/\alpha$.
Given \eqref{equ:z} and the relationships $y=\alpha q$ and $z=\alpha w$, we have 
\begin{equation}\label{equ:perturbation}
\norm{w-q} \leq \norm{z - y} \leq \eps_1.
\end{equation}
From Lemma \ref{lem:innerball} with $\eps'= 1/2$ and equivariance of the centroid body 
(Lemma \ref{lem:equivariance}) 
we get $\Gamma Y \supseteq \frac{\sigma_{\min}(A) }{2 \sqrt{\dim}} B_2^\dim$. This and convexity of $\Gamma Y$ imply $\conv\{\frac{\sigma_{\min}(A) }{2 \sqrt{\dim}} B_2^\dim \cup \{z\}\} \subseteq \Gamma Y$. In particular, the ball around $w$ of radius
\[
r := \left(1-\frac{1}{\alpha}\right) \frac{\sigma_{\min}(A) }{2 \sqrt{\dim}}
\]
is contained in $\Gamma Y$. 
The choice $\eps_1 = r \geq $ and \eqref{equ:perturbation} imply $q \in \Gamma Y$ and Subroutine \ref{sub:centroid-oracle} outputs YES whenever
\[
N \geq \left( \frac{8 M \dim^2}{r^2 \delta} \right)^{\frac{1}{2} + \frac{1}{\gamma}}.
\]
To conclude, remember that $q \in (\Gamma X)_{-\eps}$. Therefore $\norm{q} + \eps \leq \sqrt{\dim} \sigma_{\max}(A)$ (from Lemma \ref{lem:centroid-scaling} and equivariance of the centroid body, Lemma \ref{lem:equivariance}). This implies
\begin{align*}
r 
&= \frac{\eps}{\norm{q} + \eps} \frac{\sigma_{\min}(A) }{2 \sqrt{\dim}} \\
&\geq  \frac{\eps \sigma_{\min}(A) }{2 \dim \sigma_{\max}(A)}
\end{align*}
\end{itemize}
The claim follows.
\end{proof}
\iflong
\begin{lemma}\label{lem:centroid_approximation}
Let $X$ be a $\dim$-dimensional random vector such that for all coordinates $i$ we have $\e (\abs{X_i}^{1+\gamma}) \leq M < \infty$.
Let $p \in \Gamma X$. 
Let $(X^{(i)})_{i=1}^N$ be an i.i.d. sample of $X$.
Let $Y$ be uniformly random in $(X^{(i)})_{i=1}^N$. Let $\eps>0$, $\delta>0$. If $N \geq \left( \frac{8 M \dim^2}{\eps^2 \delta} \right)^{\frac{1}{2} + \frac{3}{\gamma}}$, then, with probability at least $1-\delta$, $p \in (\Gamma Y)_\eps$.
\end{lemma}
\fi
\begin{proof}
%By Proposition \ref{prop:dualcharacterization}, there exists a probability measure $\nu$ such that $p = \int x \ud \nu(x)$ and, via the Radon-Nikodym theorem, there exists a density $f:\RR^\dim \to \RR$ for $\nu$ with respect to $\pr_X$ satisfying $f(x) \leq 2$ for any $x \in \RR^\dim$.
%%whose Radon-Nikodym derivative satisfies $\ud \nu/\ud \pr_X \leq 2$.
By Proposition \ref{prop:dualcharacterization}, there exists a measurable function $\lambda:\RR^\dim \to \RR$, $-1 \leq \lambda \leq 1$ such that $p = \e (X \lambda(X))$.
Let 
\[
z = \frac{1}{N} \sum_{i=1}^N X^{(i)} \lambda(X^{(i)}).
\]
By Proposition \ref{prop:dualcharacterization}, $z \in \Gamma Y$.

We have $\e_{X^{(i)}} (X^{(i)} \lambda (X^{(i)})) = p$ and, for every coordinate $j$,
\[
\e_{X^{(i)}} (\abs{X^{(i)}_j \lambda (X^{(i)})}^{1+\gamma}) 
\leq \e_{X^{(i)}}(\abs{X^{(i)}_j}^{1+\gamma}) 
\leq  M.
\]
By Lemma \ref{lem:1-plus-eps-chebyshev} and for any fixed coordinate $j$ we have, over the choice of $(X^{(i)})_{i=1}^N$,
\[
\pr( \abs{p_j - z_j} \geq \eps/\sqrt{\dim} ) 
\leq \frac{8 M}{(\eps/\sqrt{\dim})^2 N^{\gamma/3}} 
= \frac{8 M \dim}{\eps^2 N^{\gamma/3}}
\]
whenever $N \geq (8M\sqrt{n}/\eps)^{\frac{1}{2} + \frac{1}{\gamma}}$.
A union bound over $\dim$ choices of $j$ gives:
\[
\pr( \norm{p - z} \geq \eps) 
\leq \frac{8 M \dim^2}{\eps^2 N^{\gamma/3}}.
\]
So  
\(
\pr( \norm{p - z} \geq \eps) \leq \delta
\)
whenever
\[
N \geq \left( \frac{8 M \dim^2}{\eps^2 \delta} \right)^{3/\gamma}
\]
and $N \geq (8M\sqrt{n}/\eps)^{\frac{1}{2} + \frac{1}{\gamma}}$.
The claim follows.
%with probability at least $1-\delta$ there exists $z \in \Gamma Y$ such that $\norm{z-p} \leq \eps/xxx$.
\end{proof}

\section{Empirical Study}\label{sec:experiments}

%In this section, we demonstrate the effectiveness of our methods on both synthetic and real-world data, both of which capture the heavy-tailed nature of our assumptions, i.e. higher statistical moments diverge empirically.
In this section, we show experimentally that heavy-tailed data poses a significant challenge for current ICA algorithms, and compare them with HTICA in different settings.
%We compare existing methods to our HTICA algorithm in different settings.
We observe some clear situations where heavy-tails seriously affect the standard ICA algorithms, and that these problems are frequently avoided by using the heavy-tailed ICA framework.
In some cases, HTICA does not help much, but maintains the same performance of plain FastICA.

To generate the synthetic data, we create a simple heavy-tailed density function
%\begin{equation}\label{eq:experimental-density}
%f_{\eta}(x) \propto \frac{1}{(\abs{x}+1.5)^{\eta}},
$f_{\eta}(x)$ proportional to ${(\abs{x}+1.5)^{-\eta}}$,
%\end{equation}
which is symmetric, and for $\eta > 1$, $f_\eta$ is the density of a distribution which has finite $k<\eta-1$ moment.
The signal $S$ is generated with each $S_i$ independently distributed from $f_{\eta_i}$.
%The presented experiments will mostly focus on the case where $S \in \RR^{\dim}$ for $\dim = 10$ and each $S_i \sim f_{\eta_i}$ are independent and where the $\eta_i$ may differ for different $i$.
The mixing matrix 
%$A \in \RR^{\dim \times \dim}$ is generated by $A_{ij} \sim N(0,1)$ independently for all $i,j$, i.e., by sampling from the standard Gaussian.
$A \in \RR^{\dim \times \dim}$ is generated with each coordinate i.i.d. $\mathcal{N}(0,1)$, columns normalized to unit length.
%In the cases where we need $A$ to be orthogonal, we use new practical orthogonalization methods motivated by \cite{anon_htica}.
%For reproducibility, we use a common seed for all experiments.
To compare the quality of recovery, the columns of the estimated mixing matrix, $\tilde{A}$ are permuted to align with the closest matching column of $A$, via the Hungarian algorithm.
%Furthermore, our experiments use the Frobenius norm to measure the estimation error.
We use the Frobenius norm to measure the error, but all experiments were also performed using the well-known Amari index \cite{amari1996new}; the results have similar behavior and are not presented here.

%We first apply this method on synthetic data, where we can control the tails and make a more precise analysis of the affects on performance.

%\subsection{Gaussian damping: ICA when $A$ is orthogonal}
\subsection{Heavy-tailed ICA when $A$ is orthogonal: Gaussian damping and experiments}
\label{sec:ica-orthogonal}

% We first study the effectiveness of damping in the case where the signal $S$ is mixed by a matrix $A$ whose columns are orthogonal.
%We first study the effects of the Gaussian damping algorithm from \cite{anon_htica} and show that
Focusing on the third step above, where the mixing matrix already has orthogonal columns, ICA algorithms already suffer dramatically from the presence of heavy-tailed data.
As proposed in \cite{anon_htica}, Gaussian damping is a preprocessing technique that converts data from an ICA model $X=AS$, where 
$A$ is unitary (columns are orthogonal with unit $l_2$-norm)
to data from a related ICA model $X_R = A S_R$, where $R > 0$ is a parameter to be chosen.
The independent components of $S_R$ have finite moments of all orders and so the existing algorithms can 
estimate $A$.% up to natural ambiguities.  
%We explained the general idea of Gaussian damping in the introduction; we now give some more details. 
%The data for the model $X_R = A S_R$ is obtained by a simple rejection sampling procedure described below. 

%% Using samples of $X = AS$, we create a new distribution $X_R = A S_R$ where $R > 0$ is a parameter to be chosen.
%% The components of $S_R$ in this new model will have light-tails and thus, the matrix $A$ can be estimated by using existing ICA algorithms, such as those described above.

Using samples of $X$, we construct the damped random variable $X_R$, with pdf $\rho_{X_R}(x) \propto \rho_{X}(x) \exp({-\norm{x}^2/R^2})$.
To normalize the right hand side, we can estimate
\[K_{X_R} = \e \exp({-\norm{X}^2/R^2})\]
so that
\[ \rho_{X_R}(x) =  \rho_{X}(x) \exp({-\norm{x}^2/R^2})/K_{X_R}. \]
If $x$ is a realization of $X_{R}$, then $s = A^{-1}x$ is a realization of the random variable $S_R$ and we have that $S_R$ has pdf $\rho_{S_R}(s) = \rho_{X_R}(x)$.
To generate samples from this distribution, we use rejection sampling on samples from $\rho_{X}$. %, illustrated by Subroutine~\ref{sub:rejection}.
When performing the damping, we binary search over $R$ so that about 25\% of the samples are rejected.
For more details about the technical requirements for choosing $R$, see \cite{anon_htica}.

\iffalse
\begin{subroutine}[t]
\caption{Rejection Sampling}\label{sub:rejection}
\begin{algorithmic}[1]
\Require Parameter $R > 0$, access to random samples with pdf $\rho_{X}$.
\Ensure One random sample with pdf $\rho_{X_R}$.
\State Generate $x \sim \rho_{X}$.
\State Generate $z \sim U[0,1]$.
\State If $z \in [0, e^{-\norm{x}^2/R^2}]$, output $x$, else go to first step.
\end{algorithmic}
\end{subroutine}
\fi

% By this construction, the probability of returning a sample after a single iteration of Subroutine~\ref{sub:rejection} is $K_{X_R}$.
% This means that the expected number of trials of the subroutine to generate a single sample is $1/K_{X_R}$.

\begin{figure}[t]
  \centering
    \includegraphics[width=0.49\textwidth]{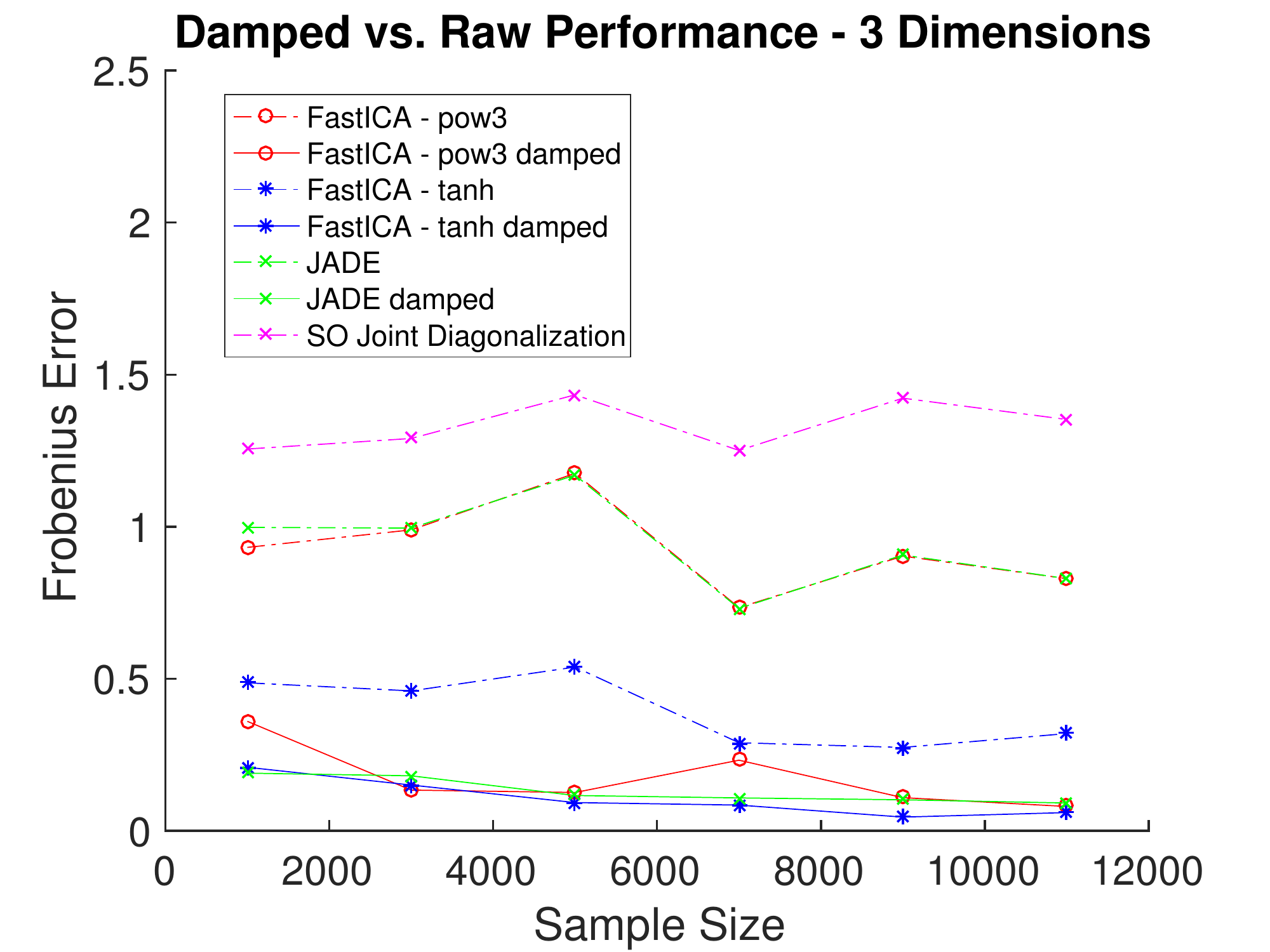}
    \includegraphics[width=0.49\textwidth]{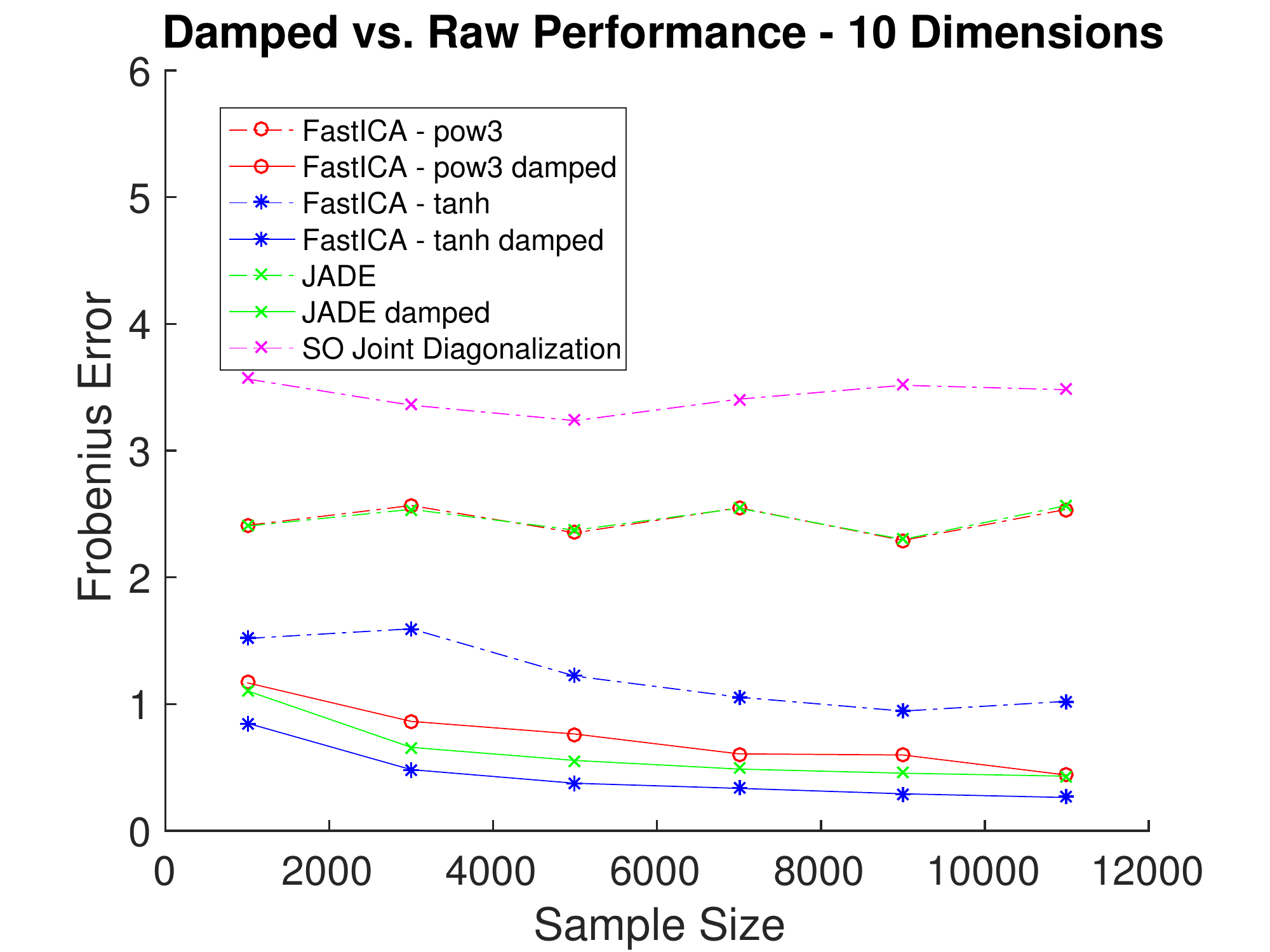}
    \caption{The error of ICA with and without damping (solid lines and dashed lines, resp.), with unitary mixing matrix. The error is averaged over ten trials, in 3 and 10 dimensions where $\eta = (6,6,2.1)$ and $\eta = (6,\dotsc,6,2.1,2.1)$, resp.
    }\label{fig:orthogonal-mixedexp}
\end{figure}

\begin{figure}[t]
  \centering
    \includegraphics[width=0.5\textwidth]{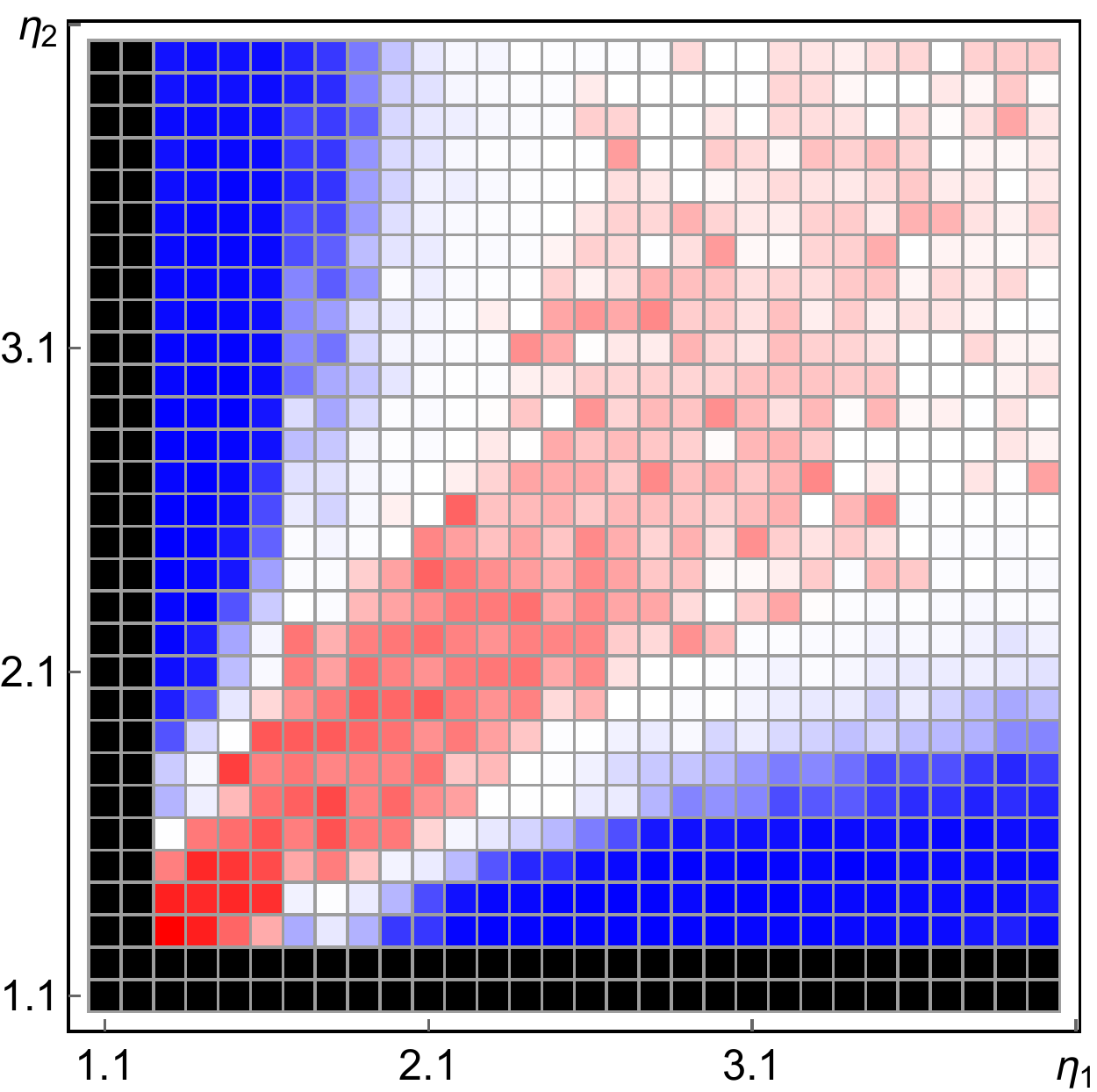}
  \caption{The difference between the errors of FastICA with and without damping in 2 dimensions, averaged over 40 trials. %, with parameter $\eta =(1+i/10,1+j/10)$, where $i,j = 1, \dotsc, 30$.
    For a single cell, the parameters are given by the coordinates, $\eta = (i,j)$. Red indicates that FastICA without damping does better than FastICA with damping, white indicates that the error difference is 0 and the blue indicates that FastICA with damping performs better than without damping. Black indicates that FastICA without damping failed (did not return two independent components).}
  \label{fig:2-dim-comparison}
\end{figure}

%In the setting where $A$ is unitary, we already observe a benefit in situations where heavy-tailed data is present.
Figure~\ref{fig:orthogonal-mixedexp} shows that, when $A$ is already a perfectly orthogonal matrix, but where $S$ may have heavy-tailed coordinates, several standard ICA algorithms perform better after damping the data. In fact, without damping, some do not appear to converge to a correct solution.
We compare ICA with and without damping in this case: (1) FastICA using the fourth cumulant (``FastICA - pow3''), (2) FastICA using $\log \cosh$ (``FastICA - tanh''), (3) JADE, and (4) Second Order Joint Diagonalization as in, e.g., \cite{cardoso89}
%as described in Section~\ref{sec:second-order} 
.

\subsection{Experiments on synthetic heavy-tailed data}\label{sec:synthetic-data}\label{sec:synthetic-data-experiments}

We now present the results of HTICA using different orthogonalization techniques: (1)  Orthogonalization via \emph{covariance} (Section \ref{sec:covariance} (2) Orthogonalization via the \emph{centroid} body (Section~\ref{sec:centroidbodyscaling}) (3) the ground truth, directly inverting the mixing matrix (\emph{oracle}), and (4) No orthogonalization, and also no damping (for comparison with plain FastICA) (\emph{identity}).

\begin{figure*}[t]
  \centering
    \includegraphics[width=0.32\textwidth]{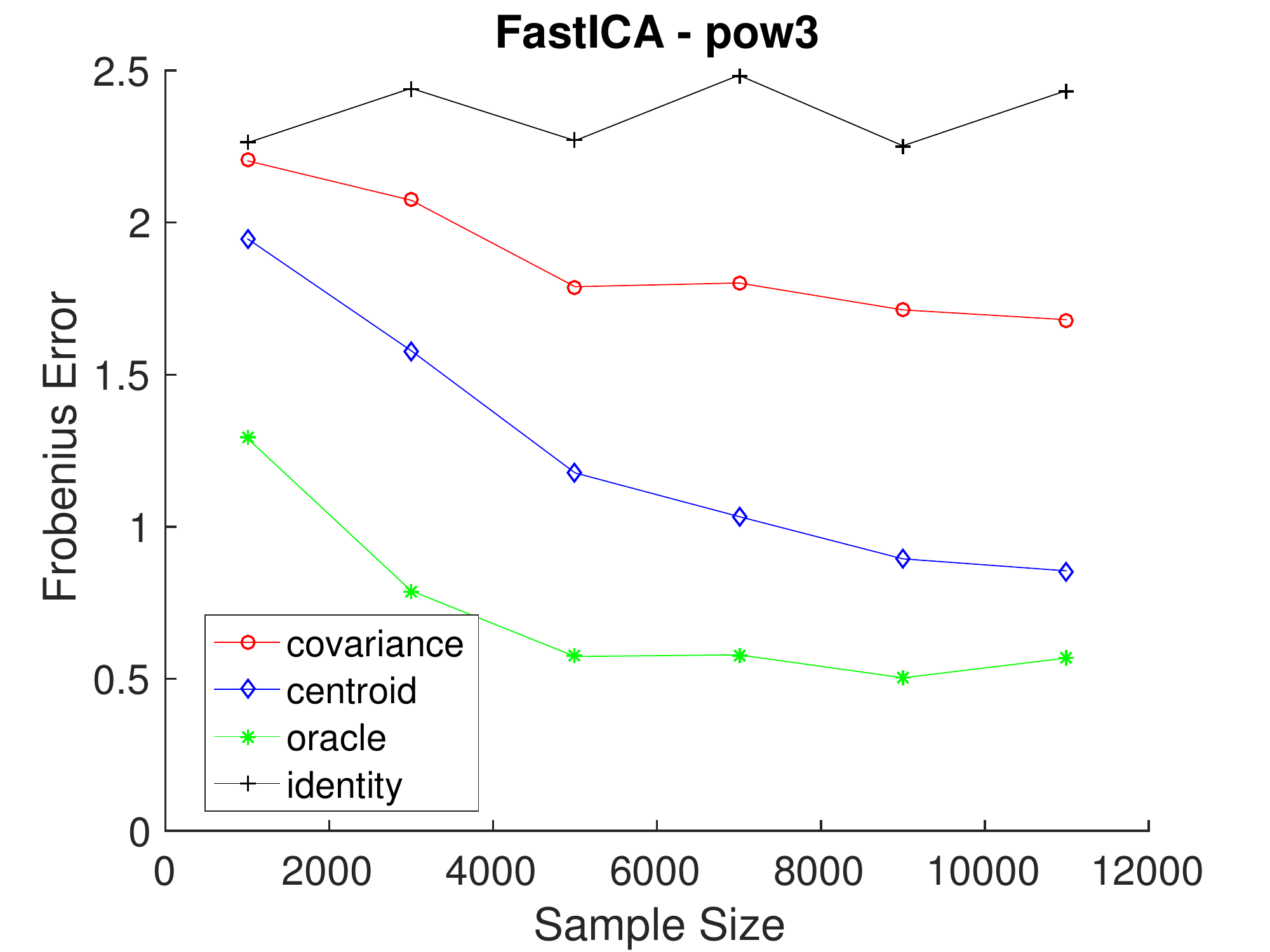}
    \includegraphics[width=0.32\textwidth]{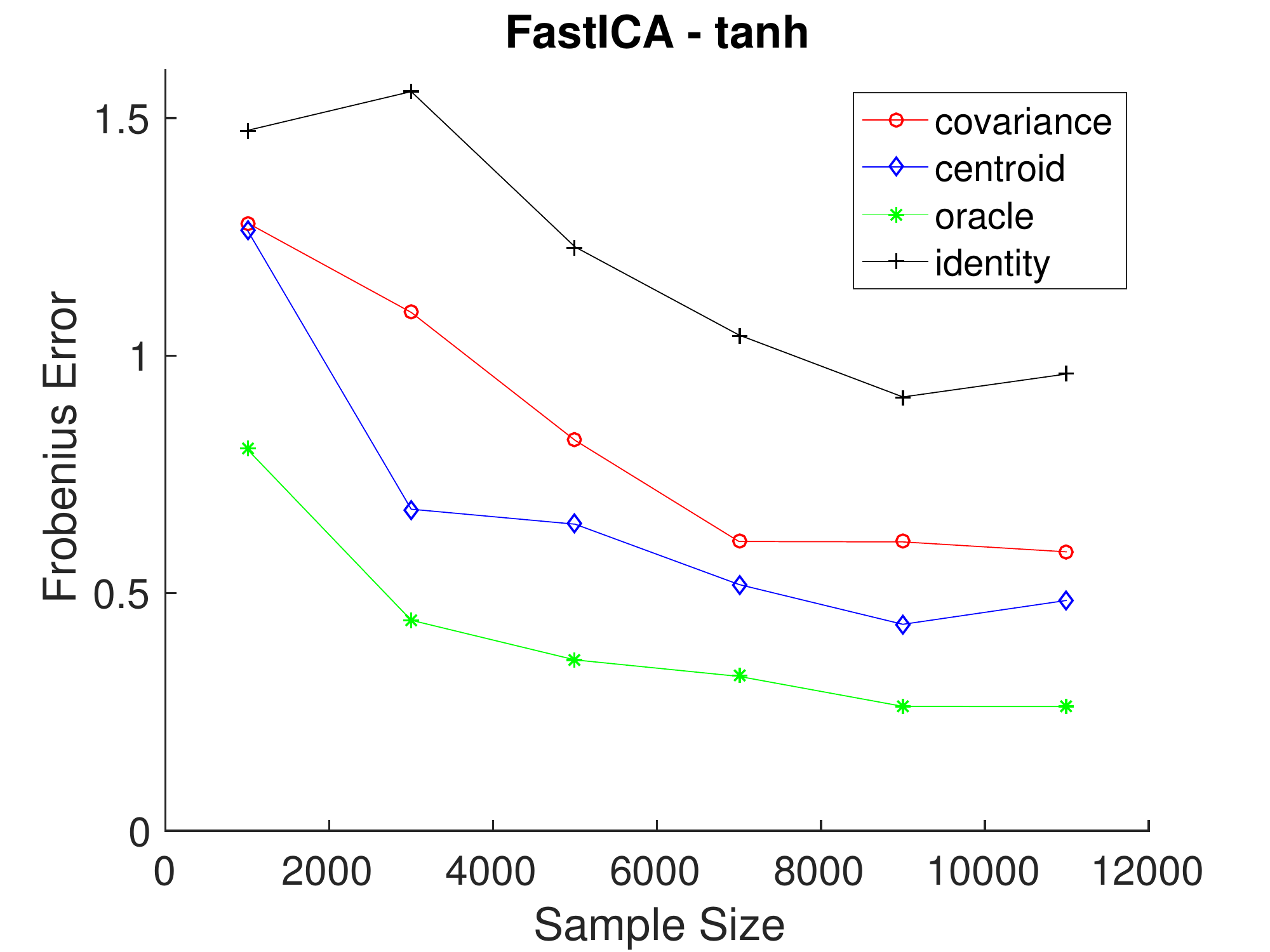}
    \includegraphics[width=0.32\textwidth]{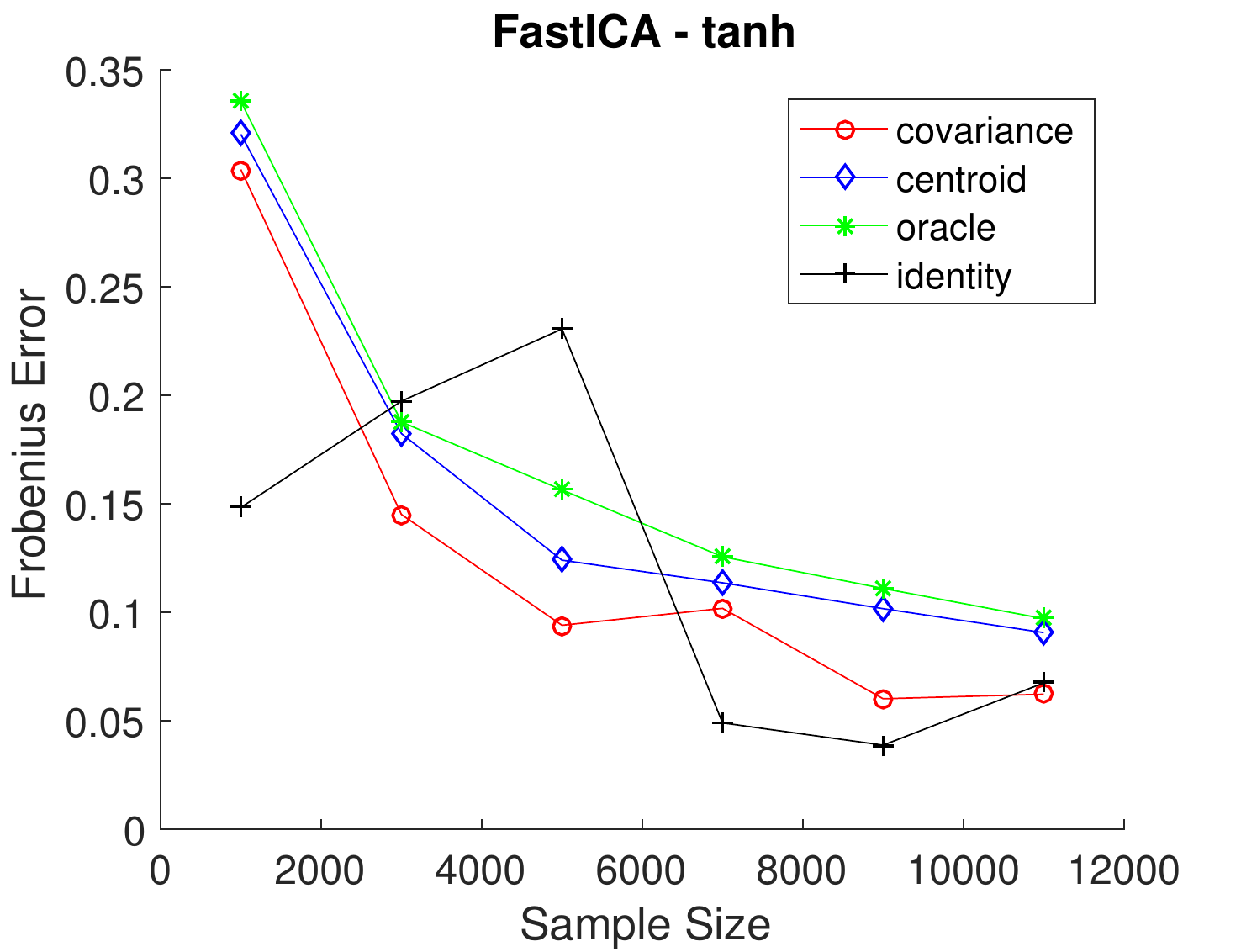}
    \caption{The Frobenius error of the recovered mixing matrix with the `pow3' and `tanh' contrast functions, on 10-dimensional data, averaged over ten trials. The mixing matrix $A$ is random with unit norm columns, not orthogonal.
    In the left and middle figures, the distribution has $\eta = (6,\dotsc,6,2.1,2.1)$ while in the right figure, $\eta = (2.1,\dotsc,2.1)$ (see Section \ref{sec:synthetic-data-experiments} for a discussion).
    }\label{fig:nonorthogonal-mixedexp}
\end{figure*}

\iffalse
\begin{figure}
  \centering
  	\includegraphics[width=0.35\textwidth]{}
    \includegraphics[width=0.35\textwidth]{}
    \caption{The Amari Index for the setup of Figure~\ref{fig:nonorthogonal-mixedexp}.}\label{fig:nonorthogonal-mixedexp-amari}
\end{figure}
\fi

The ``mixed'' regime in the left and middle of Figure~\ref{fig:nonorthogonal-mixedexp} (where some signals are \emph{not} heavy-tailed)
%and \ref{fig:nonorthogonal-mixedexp-amari}
%where $\eta = (6,6,6,6,6,6,6,6,2.1,2.1)$ 
demonstrates a very dramatic contrast between different orthogonalization methods, even when only two heavy-tailed signals are present.
%The graphs show the average error over 10 trials, each line representing a different method of orthogonalization before the ICA algorithm is run.
%That is, the ``covariance'' plot uses the covariance matrix as described in Section~\ref{sec:covariance}, the ``centroid'' line is the error after using the centroid body method described in Section~\ref{sec:centroidbody}, the ``oracle'' experiment directly orthogonalizes the true $A$, and the plot labeled ``none'' uses no orthogonalization and does not perform damping.

In the experiment with different methods of orthogonalization it was observed that when all exponents are the same or very close, orthogonalization via covariance performs better than orthogonalization via centroid and the true mixing matrix as seen in Figure~\ref{fig:nonorthogonal-mixedexp}.
A partial explanation is that, given the results in Figure~\ref{fig:orthogonal-mixedexp}, we know that equal exponents favor FastICA without damping and orthogonalization (\emph{identity} in Figure \ref{fig:nonorthogonal-mixedexp}). 
The line showing the performance with no orthogonalization and no damping (``identity'') behaves somewhat erratically, most likely due the presence of the heavy-tailed samples.
Additionally, damping and the choice of parameter $R$ is sensitive to scaling. A scaled-up distribution will be somewhat hurt because fewer samples will survive damping.
%In the experiment leading to Figure~\ref{fig:nonorthogonal-mixedexp}, the number of samples after damping was in the order ``oracle $\leq$ centroid $\leq$ covariance $\leq$ identity'' and the effect of having less samples prevails in the final error.
%We observe that the difference between covariance and oracle orthogonalization is that damping accepts more samples after covariance orthogonalization.
%This mainly arises as a consequence of scale induced by the orthogonalization methods, when damping scales the sample up, points are less likely to be accepted. 
\iffalse
\begin{figure}
  \centering
  	\includegraphics[width=0.32\textwidth]{figures/nonorthogonal_tanh_sameexp}
    \caption{The error after FastICA with the `tanh' nonlinearity. The error is averaged over ten trials, in 10 dimensions where $\eta = (2.1,2.1,2.1,2.1,2.1,2.1,2.1,2.1,2.1,2.1)$. 
    The mixing matrix $A$ is randomly generated with unit norm columns, not necessarily orthogonal.
    }\label{fig:nonorthogonal-anomaly}
\end{figure}
\fi

\subsection{ICA on speech data}\label{sec:voice-data}

While the above study on synthetic data provides interesting situations where heavy-tails can cause problems for ICA, we provide some results here which use real-world data, specifically human speech.
To study the performance of HTICA on voice data, we first examine whether the data 
%actually fits the model, i.e. satisfies the independence assumptions and that at least some of it 
is heavy-tailed.
The motivation to use speech data comes from observations by the signal processing community (e.g. \cite{kidmose2000alpha}) that speech data can be modeled by $\alpha$-stable distributions.
%Additionally, when the stability parameter $\alpha$ is less than 2, the data does not have second or higher moments, and thus fits our heavy-tail assumptions.
%Roughly speaking, the $\alpha$ parameter is analogous to the $\eta$ parameter in our synthetic power-law distributions.
For an $\alpha$-stable distribution, with $\alpha \in (0,2)$, only the moments of order less than $\alpha$ will be finite.
We present here some results on a data set of human speech according to the standard cocktail party model, from \cite{uky_data}.

\iflong
The physical setup of the experiments (the human speakers and microphones) is shown in Figure~\ref{fig:voice-data-layouts}.

\begin{figure}
  \centering
  \includegraphics[width=0.49\textwidth]{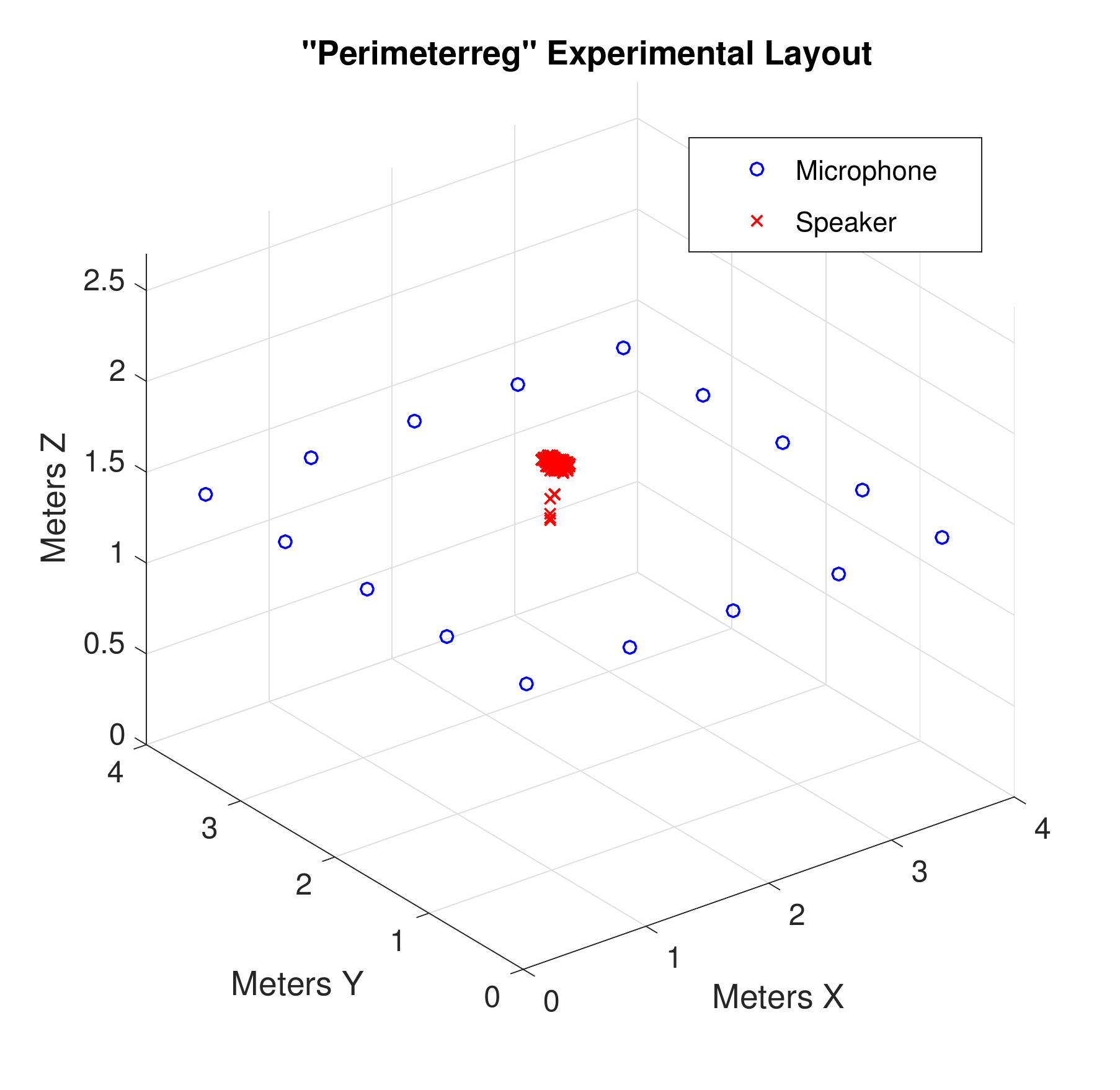}
  \includegraphics[width=0.49\textwidth]{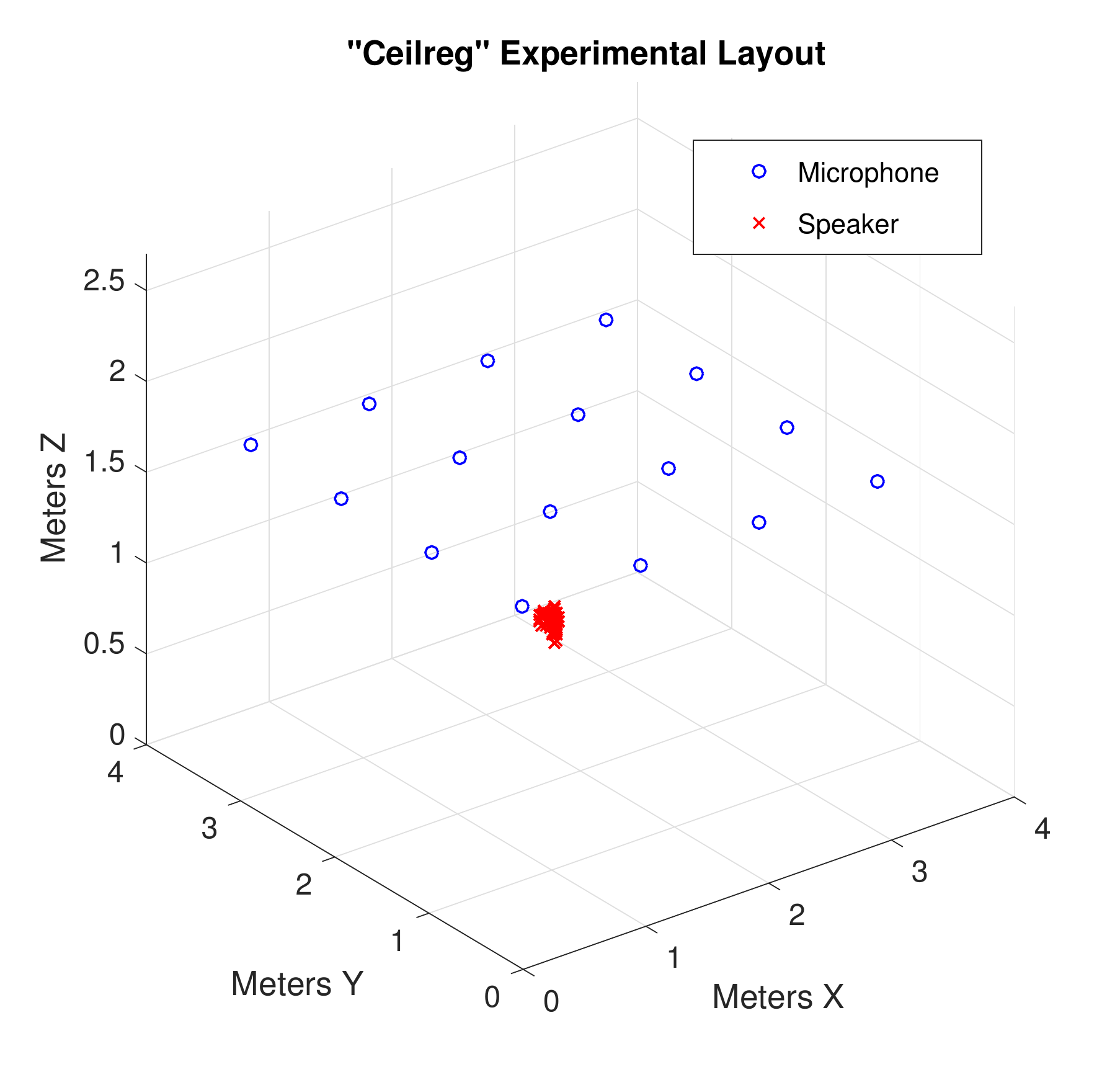}
  \caption{Microphone (blue circles) and human speaker (red ``$\times$'') layouts for the ``perimeterreg'' (left) and ``ceilreg'' (right) voice separation data sets.}
  \label{fig:voice-data-layouts}
\end{figure}
\fi

To estimate whether the data is heavy-tailed, as in \cite{kidmose2000alpha}, we estimate parameter $\alpha$ of a best-fit $\alpha$-stable distribution.
%that the voice data is heavy-tailed according to some $\alpha$-stable distribution and try to estimate the stability parameter $\alpha$, using the simple yet effective techniques from \cite{kidmose2000alpha} and cited works therein.
This estimate is in Figure~\ref{fig:stable-parameter-estimation-and-error} for one of the data sets collected.
We can see that the estimated $\alpha$ is  clearly in the heavy-tailed regime for some signals.
%named ``perimeterreg'' to reflect that the microphones were laid out in a regular grid along the perimeter of the room, 
%the estimated tail index is low enough that the data behaves heavy-tailed.
%Note also that the independence assumption between signals is satisfied because each speaker is independently reciting some predetermined text.
\jnote{could use more references here.}
\jnote{include in an appendix the actual estimator formula.}

\begin{figure}[t]
  \centering
  \includegraphics[width=0.49\textwidth]{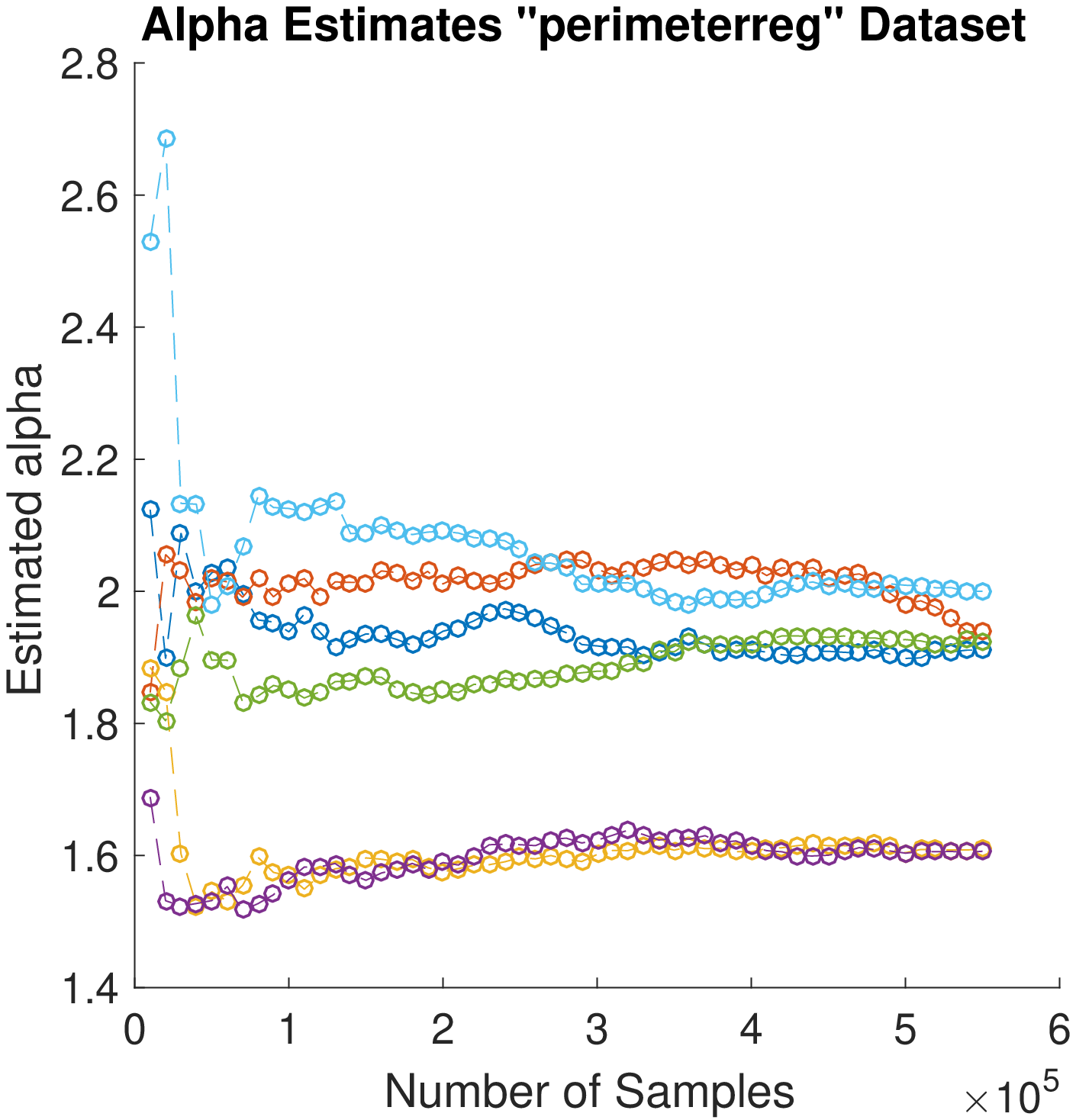}
  \includegraphics[width=0.49\textwidth]{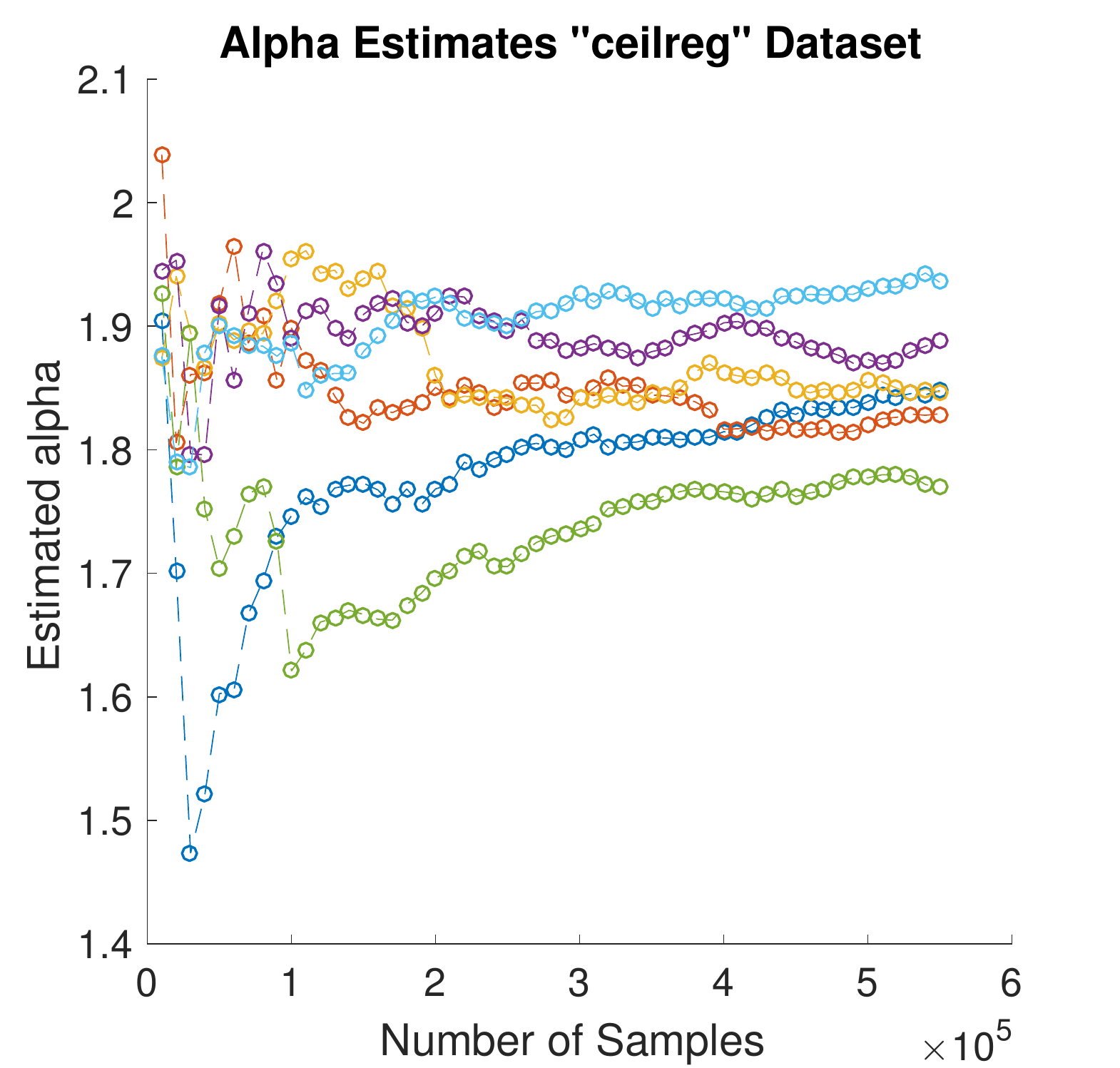}
  \caption{Stability parameter estimates for two voice recording data sets. Each line is one of the speech signals, we can see that the stability parameter is typically less than 2, and in the left data set, most of the signals have parameter close to 2, while in the right, the parameter is more commonly between 1.7 and 2.}
%\resizebox{0.45\textwidth}{!}{%
  %While not as obviously advantageous as on synthetic data, results seem more robust to the pitfalls encountered by regular FastICA.
  %HTICA is more robust than FastICA.
    %Each line is one signal, before mixing.
    %For several of the signals, the estimated $\alpha$ indicated significantly heavy tails for our purposes.
  %}
  \label{fig:stable-parameter-estimation-and-error}
\end{figure}

\begin{figure}[ht]
  \centering
  \includegraphics[width=0.5\textwidth]{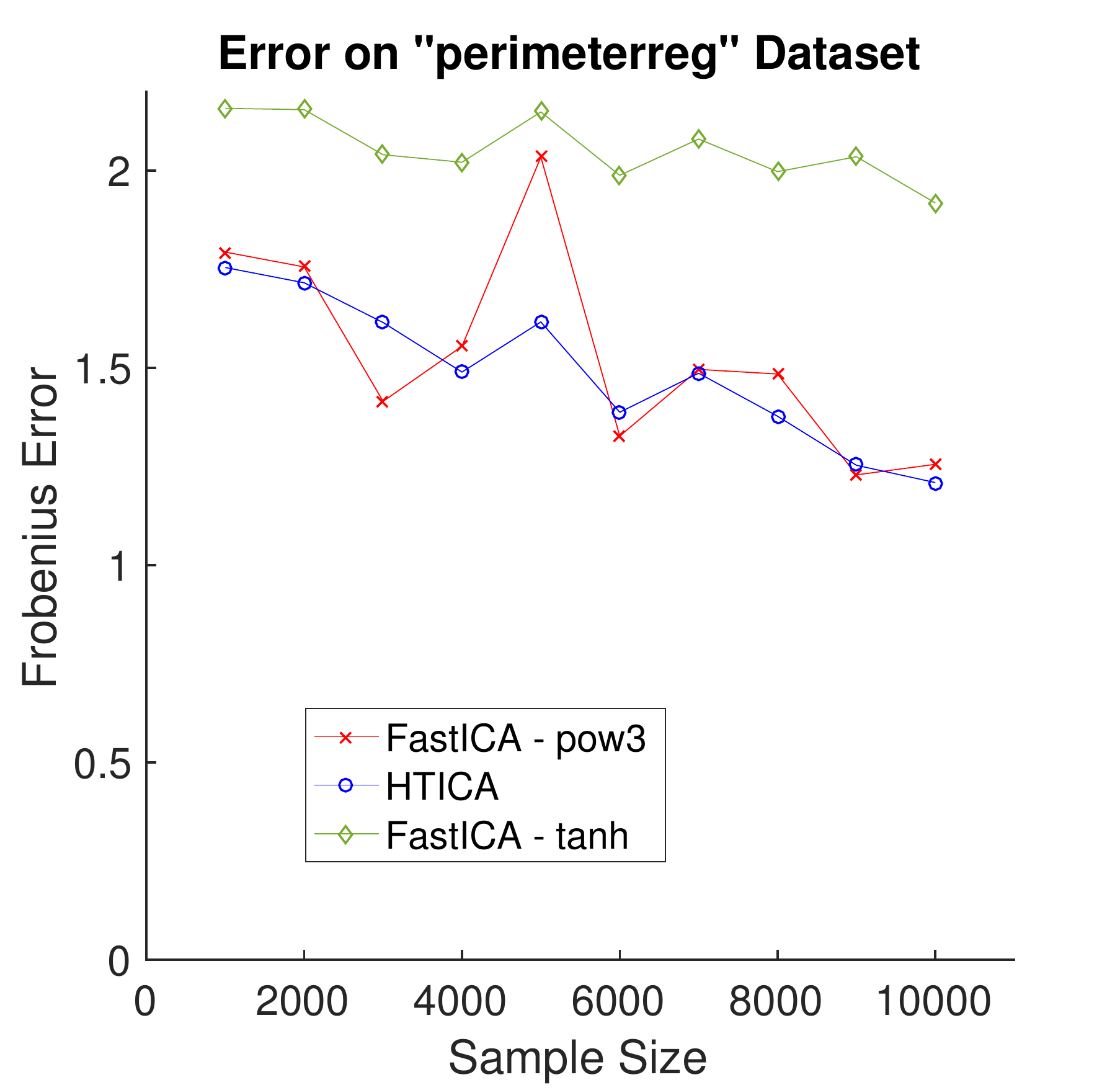}
  \caption{(Left): Error of estimated mixing matrix on the ``perimeterreg'' data, averaged over ten trials.}
  \label{fig:htica-error}
\end{figure}

\begin{figure}[ht]
  \centering
    \hspace{0.5cm}
    \resizebox{0.24\textwidth}{!}{%
      \begin{tabular}{|l|l|}
      \hline
      Signal & $\hat{\alpha}$ \\ \hline
      1 & 1.91 \\ \hline
      2 & 1.93 \\ \hline
      3 & 1.61 \\ \hline
      4 & 1.60 \\ \hline
      5 & 1.92 \\ \hline
      6 & 2.00 \\ \hline
      \end{tabular}
    }
    \hspace{0.7cm}
    \resizebox{0.5\textwidth}{!}{%
 \begin{tabular}{|c|l|l|l|l|l|}
\hline
\multicolumn{2}{|c|}{}               & \multicolumn{2}{l|}{$\sigma_{\min}$} & \multicolumn{2}{l|}{$\sigma_{\max}/\sigma_{\min}$} \\ \hline
\multicolumn{2}{|l|}{Orthogonalizer} & Centroid         & Covariance        & Centroid                & Covariance               \\ \hline
\multirow{6}{*}{Samples} & 1000  & 0.9302           & 0.9263            & 27.95                   & 579.95                   \\ \cline{2-6} 
                             & 3000  & 0.9603           & 0.9567            & 20.44                   & 410.11                   \\ \cline{2-6} 
                             & 5000  & 0.9694           & 0.9673            & 19.25                   & 490.11                   \\ \cline{2-6} 
                             & 7000  & 0.9739           & 0.9715            & 18.90                   & 347.68                   \\ \cline{2-6} 
                             & 9000  & 0.9790           & 0.9708            & 20.12                   & 573.18                   \\ \cline{2-6} 
                             & 11000 & 0.9793           & 0.9771            & 18.27                   & 286.34                   \\ \hline
\end{tabular}
}

  \caption{1) Stability parameter $\alpha$ estimates of each component in the ``perimeterreg'' data. Values below 2 are in the heavy-tailed regime.
    %This error is averaged over 10 trials.
  % Bottom: The empirical density of a single signal from the same dataset, and a best-fit stable distribution.
    2): Smallest singular value and condition number of the orthogonalization matrix $BA$ computed via the centroid body and the covariance.
  The data was sampled with parameter $\eta = (6,6,6,6,6,6,6,6,2.1,2.1)$.}
  \label{fig:tables}
\end{figure}

\iffalse
Once the stability parameters are estimated, we can compare the fitted stable distribution to the actual data, as seen in Figure~\ref{fig:stable-histfits}.

\begin{figure}
  \centering
  \includegraphics[width=0.35\textwidth]{}
  %\includegraphics[width=0.35\textwidth]{}
  \caption{Empirical distribution and estimated stable distribution from the ``perimeterreg'' dataset (top) and the ``ceilreg'' dataset (bottom). The ``perimeterreg'' estimated $\alpha$ parameter in this figure is 1.8 while the ``ceilreg'' estimated $\alpha$ parameter is 1.9.}
  \label{fig:stable-histfits}
\end{figure}
\fi

Using data from \cite{uky_data}, we perform the same experiment as in Section~\ref{sec:synthetic-data-experiments}: generate a random mixing matrix with unit length columns, mix the data, and try to recover the mixing matrix.
Although the mixing is synthetic, the setting makes the resulting mixed signals same as real.
Specifically, the experiment was conducted in a room with chairs, carpet, plasterboard walls, and windows on one side.
There was natural noise including vents, computers, florescent lights, and traffic noise through the windows.
%After the damping phase, approximately 75\% of the data was retained.

\iffalse
\begin{figure}
  \centering
  \includegraphics[width=0.35\textwidth]{figures/perimeterreg-error-seed-50-start-100000-samplingRate-1}
  %\includegraphics[width=0.35\textwidth]{}
  %\caption{Left: Error of estimated mixing matrix on the ``perimeterreg'' dataset, Right: Error of estimated mixing matrix on the ``ceilreg'' dataset.}
  \caption{Error of an estimated mixing matrix on the ``perimeterreg'' dataset. While not as obviously advantageous as on synthetic data, we still see results which are more robust to the pitfalls encountered by regular FastICA. This error is averaged over 10 trials.}
  \label{fig:voice-data-errors}
\end{figure}
\fi

Figure~\ref{fig:stable-parameter-estimation-and-error} demonstrates that HTICA (orthogonalizing with centroid body scaling, Section~\ref{sec:centroidbodyscaling}) applied to speech data yields some noticeable improvement in the recovery of the mixing matrix, primarily in that it is less susceptible to data that causes FastICA to have large error ``spikes.''
%However, the performance gain is less than on the synthetic data used in Figure~\ref{fig:nonorthogonal-mixedexp}.
%The most likely explanation for this is that since the data is from recorded speech, it is noisy and not strictly time-independent, and so does not precisely fit some aspects of the general ICA model.
Moreover, in many cases, running only FastICA on the mixed data failed to even recover all of the speech signals, while HTICA succeeded.
In these cases, we had to re-start FastICA until it recovered all the signals.
%Further benefits achieved by our method were observed when regular FastICA failed to even recover each distinct source in the mixed data, while our algorithm succeeded.
%In these cases, we restarted the FastICA algorithm until the random starting point for its internal optimization procedure led to all sources being recovered.

%\input{orthogonalization_and_damping}

\chapter{Learning simplices}\label{sec:simplex-learning}\label{ch:simplex}
\iffalse
Independent component analysis is a certain computational problem and an associated family of algorithms. Suppose that $X$ is a random $\dimension$-dimensional vector whose coordinates are independently distributed. The coordinates' distributions are unknown and not necessarily identical. The ICA problem can be stated as follows: given samples from an affine transformation $Y=AX+b$ of $X$, estimate $A$ and $b$ (up to a certain intrinsic indeterminacy: permutation and scaling of the columns of $A$). We will state more precisely below what is expected of an ICA algorithm.
\fi

We are given uniformly random samples from an unknown convex body in $\RR^\dim$, how many samples are needed to approximately reconstruct
the body? It seems intuitively clear, at least for $\dim = 2, 3$, that
if we are given sufficiently many such samples then we can reconstruct (or learn) the body with very little error. For general $n$, it is known to require $2^{\Omega(\sqrt{\dim})}$
samples~\cite{GR09} (see also \cite{KOR} for a similar lower bound in a different but related model of learning).
This is an information-theoretic lower bound and no computational considerations are involved. As mentioned
in \cite{GR09}, it turns out that if the body has few facets (e.g. polynomial in $\dim$), then polynomial in $\dim$ samples are
sufficient for approximate reconstruction.
This is an information-theoretic upper bound and no efficient algorithms (i.e., with running time poly$(n)$) are known.
(We remark that to our knowledge the same
situation holds for polytopes with poly$(n)$ vertices.)  Here, we study the
reconstruction problem for the special case when the input bodies are restricted to be (full-dimensional) simplices. We show that
in this case one can in
fact learn the body efficiently. More precisely, the algorithm  knows that the input body is a simplex but only up to an affine transformation, and the problem is to recover this affine transformation.
This answers a question of \noname{Frieze~et~al.~}\cite[Section 6]{FJK}.

The problem of learning a simplex is also closely related to the well-studied problem of learning intersections of half-spaces.
Suppose that
the intersection of $\dim+1$ half-spaces in $\RR^\dim$ is bounded, and we are given poly$(\dim)$ uniformly random samples from it. Then our
learning simplices result directly implies that we can learn the $\dim+1$ half-spaces. This also has the advantage of being a proper
learning algorithm, meaning that the output of the algorithm is a set of $\dim+1$ half-spaces, unlike many of the previous algorithms.

\paragraph{Previous work.}

Perhaps the first approach to learning simplices that comes to mind is to find a minimum volume simplex containing the samples. This can be shown
to be a good approximation to the original simplex. (Such minimum volume estimators have been studied in machine learning literature, see e.g. \cite{ScholkopfPSSW01} for the problem of estimating the support of a probability distribution. We are not aware of any
technique that applies to our situation and provides theoretical guarantees.)
However, the problem of finding a minimum volume simplex is in general
NP-hard~\cite{Packer}. This hardness is not directly applicable for our problem because our input is a random sample and not a general
point set. Nevertheless, we do not have an algorithm for directly finding a minimum volume simplex; instead we use ideas similar to those used in Independent Component Analysis (ICA).
ICA studies the following problem:
Given a sample from an affine transformation of a random vector with independently distributed coordinates, recover the affine transformation (up to some unavoidable ambiguities).
% Given a fixed linear combination of samples of independent random variables, recover the distributions of the random variables.
\noname{Frieze~et~al.~}\cite{FJK} gave an efficient algorithm for this problem
%(with some restrictions on the allowed distributions, but also with some weaker requirements than all samples being mutually  independent)
(with some restrictions on the allowed distributions, but also with some weaker requirements than full independence)
along with most of the details of a rigorous analysis (a complete analysis of a special case can be found in \noname{Arora et al.~}\cite{Arora2012}; see also \noname{Vempala and Xiao~}\cite{VempalaX11} for a generalization of ICA to subspaces along with a rigorous analysis). 
The problem of learning parallelepipeds from uniformly random samples is a special case of this problem. 
\cite{FJK} asked if one could learn other convex bodies, and in particular simplices, efficiently from uniformly random samples.
% xxx They mention the graph isomorphism issue.
\noname{Nguyen and Regev~}\cite{NguyenR09} gave a simpler and rigorous algorithm and analysis for the case of learning parallelepipeds with similarities to the popular FastICA algorithm of \noname{Hyv\"{a}rinen~}\cite{Hyvarinen99}.
The algorithm in \cite{NguyenR09} is a first order algorithm unlike Frieze~et~al.'s second order algorithm.

The algorithms in both \cite{FJK, NguyenR09} make use of the fourth moment function of the probability distribution. Briefly, the fourth moment in direction $u \in \RR^n$ is $\mathbb{E}(u\cdot X)^4$, where $X \in \RR^n$ is
the random variable distributed according to the input distribution. The moment function can be estimated from
the samples. The independent components of the distribution correspond to local maxima or minima of the
moment function, and can be approximately found by finding the local maxima/minima of the moment function
estimated from the sample.

%% We believe that a direct
%% second order algorithm similar to that of Frieze~et~al. would work for our problem; however, the analysis of this
%% algorithm turns out to be quite tedious. We instead give a somewhat indirect first order algorithm inspired by
%% FastICA~\cite{Hyvarinen99} and the algorithm of Nguyen and Regev~\cite{NguyenR09}. This leads to a simpler (but still fast)
%% algorithm and analysis. %% In
%% particular, no eigenvalue computations are needed.
%% xxx this is misleading, we need to invert a matrix while rounding, which is as bad as eigenvalues.

More information on ICA including historical remarks can be found in \cite{ICAbook, BlindSS}.
Ideas similar to ICA have been used in
statistics in the context of projection pursuit since the mid-seventies.
It is not clear how to apply ICA to the simplex learning problem directly as there is no clear independence among the components. Let us
note that \noname{Frieze et al.~}\cite{FJK} allow certain kinds of dependencies among the components, however this does not appear to be useful
for learning simplices.

Learning intersections of half-spaces is a well-studied problem in learning theory. The problem
of PAC-learning intersections of even two half-spaces is open, and there is evidence that it is
hard at least for sufficiently large number of half-spaces: E.g., \noname{Klivans and Sherstov~}\cite{KlivansS09} prove
that learning intersections of $n^\epsilon$ half-spaces in $\RR^\dim$ (for constant $\epsilon>0$) is
hard under standard cryptographic assumptions (PAC-learning is possible, however, if one also
has access to a membership oracle in addition to random samples \cite{KwekP98}). Because of this, much effort has been expended
on learning when the distribution of random samples is some simple distribution,
see e.g. \cite{KlivansS07, Vempala10, VempalaFocs10} and references therein. This line of work makes substantial progress
towards the goal of learning intersections of $k$ half-spaces efficiently, however it falls short of
being able to do this in time polynomial in $k$ and $n$; in particular, these algorithms do not seem to be
able to learn simplices.  The distribution of samples in
these works is either the Gaussian distribution or the uniform distribution over a ball.
\noname{Frieze et al.~}\cite{FJK} and \noname{Goyal and Rademacher~}\cite{GR09} consider the uniform distribution over
the intersection. Note that this requires that the intersection be bounded. Note also that one
only gets positive samples in this case unlike other work on learning intersections of
half-spaces. The problem of learning convex bodies can also be thought of as learning a distribution or density estimation
problem for a special class of distributions.

\noname{Gravin et al.~}\cite{Gravin12} show how to reconstruct a polytope with $N$ vertices in $\RR^n$, given its first $O(nN)$ moments
in $(n+1)$ random directions. In our setting, where we have access to only a polynomial number of random samples, it's not clear
how to compute moments of such high orders to the accuracy required for the algorithm of \cite{Gravin12} even for simplices.

A recent and parallel work of \noname{Anandkumar et al.~}\cite{AnandTensor} is closely related to ours. They show that tensor decomposition
methods can be applied to low-order moments of various latent variable models to estimate their parameters. The latent variable 
models considered by them include Gaussian mixture models, hidden Markov models and latent Dirichlet allocations. The tensor 
methods used by them and the local optima technique we use seem closely related. One could view our work, as well as theirs, as 
showing that the method of moments along with existing algorithmic techniques can be applied to certain unsupervised learning problems.
%See \cite{AnandHK12} for a recent review of applications of the method of moments in other learning problems.

In Section~\ref{sec:probabilistic-results}, we give the theorems and proofs which allow the efficient reduction to ICA.

\paragraph{Our results}

For clarity of the presentation, we use the following machine model for the running time: a random access machine that allows the following exact arithmetic operations over real numbers in constant time: addition, subtraction, multiplication, division and square root.

The estimation error is measured using total variation distance, denoted $d_{TV}$ (see Section \ref{sec:simplex-preliminaries}).

\begin{theorem} \label{thm:main}
There is an algorithm (Algorithm~\ref{alg:simplex} below) such that given access to random samples from a simplex
$S_{INPUT} \subseteq \RR^\dim $, with probability at least $1-\delta$ over the sample and the randomness of the algorithm, it outputs $n+1$ vectors that are the vertices of a simplex $S$ so that $d_{TV}(S, S_{INPUT}) \leq \eps$.
The algorithm runs in time polynomial in $\dim$, $1/\eps$ and $1/\delta$.
\end{theorem}

As mentioned earlier, our algorithm uses ideas from ICA. 
Our algorithm uses the third moment instead of the fourth moment used in certain versions of ICA. The third moment is not useful for learning symmetric bodies such as the cube as it is identically $0$. 
It is however useful for learning a simplex where it provides useful information, and is easier to handle than the fourth moment.
%% Perhaps the main new idea in our work
%% is that the third moment suffices for learning the simplex. We do not know if the fourth moment suffices as well.
%% Also, our algorithm is robust to precision issues. Many of the previous papers ignored this issue and required
%% exact eigenvalue computation.
%% xxx which paper, citation, I don't remember.
One of the main contributions of our work is the understanding of the third moment of a simplex and the structure of local maxima. This is more involved than in previous work as the simplex has no obvious independence structure, and the moment polynomial one gets has no obvious structure unlike for ICA.

The probability of success of the algorithm can be ``boosted'' so that the dependence of the running time on $\delta$ is only linear in $\log(1/\delta)$ as follows: The following discussion uses the space of simplices with total variation distance as the underlying metric space. Let $\eps$ be the target distance. Take an algorithm that succeeds with probability $5/6$ and error parameter $\eps'$ to be fixed later (such as Algorithm \ref{alg:simplex} with $\delta = 1/6$). Run the algorithm $t = O(\log 1/\delta)$ times to get $t$ simplices. By a Chernoff-type argument, at least $2t/3$ simplices are within $\eps'$ of the input simplex with probability at least $1- \delta/2$.

By sampling, we can estimate the distances between all pairs of simplices with additive error less than $\eps'/10$ in time polynomial
in $t, 1/\epsilon'$ and $\log{1/\delta}$ so that all estimates are correct with probability at least $1-\delta/2$.
For every output simplex, compute the number of output simplices within estimated distance $(2+1/10)\eps'$. With probability at least
$1-\delta$ both of the desirable events happen, and then necessarily there is at least one output simplex, call it $S$, that has $2t/3$ output simplices within estimated distance $(2+1/10)\eps'$. Any such $S$ must be within $(3+2/10) \eps'$ of the input simplex. Thus, set $\eps'=\eps/(3+2/10)$.

While our algorithm for learning simplices uses techniques for ICA, we have to do 
substantial work to make those techniques work for the simplex problem. 
We also show a more direct connection between the problem of learning a simplex and ICA: a randomized reduction from the problem of learning a simplex to ICA. The connection is based on a known representation of the uniform measure on a simplex as a normalization of a vector having independent coordinates. Similar representations are known for the uniform measure in an $n$-dimensional $\ell_p$ ball (denoted $\ell_p^n$) \cite{barthe2005probabilistic} and the \emph{cone measure} on the boundary of an $\ell_p^n$ ball \cite{schechtman1990volume, rachev1991approximate, MR1396997} (see Section \ref{sec:simplex-preliminaries}  for the definition of the cone measure). 
These representations lead to a reduction from the problem of learning an affine transformation of an $\ell_p^n$ ball to ICA. These reductions show connections between estimation problems with no obvious independence structure and ICA. 
They also make possible the use of any off-the-shelf implementation of ICA. 
However, the results here do not supersede our result for learning simplices because to our knowledge no rigorous analysis is available for 
the ICA problem when the distributions are the ones in the above reductions. 
% xxx something about vision?

\paragraph{Idea of the algorithm.}

 %% Many of the details of our proofs follow an outline similar to previous work.
 The new idea for the algorithm is that after putting
the samples in a suitable position (see below), the third moment of the sample can be used to recover the simplex using a simple 
FastICA-like algorithm. We outline our algorithm next.

As any full-dimensional simplex can be mapped to any other full-dimensional simplex by an invertible affine transformation, it is enough to determine the translation and linear transformation that would take the given simplex to some canonical simplex.
As is well-known for ICA-like problems (see, e.g., \cite{FJK}), %% [frieze, arora, regev, hyvarinen, comon],
this transformation can be determined \emph{up to a rotation} from the mean and the covariance matrix of the uniform distribution on the given simplex.
The mean and the covariance matrix can be estimated efficiently from a sample.
A convenient choice of an $n$-dimensional simplex is the convex hull of the canonical vectors in $\RR^{\dim+1}$.
We denote this simplex $\Delta_n$ and call it the \emph{standard simplex}.
So, the algorithm begins by picking an arbitrary invertible affine transformation $T$ that maps $\RR^\dim$ onto the hyperplane $\{x \in \RR^{\dim+1} \suchthat \ones \cdot x = 1 \}$. We use a $T$ so that $T^{-1}(\Delta_n)$ is an isotropic\footnote{See Section \ref{sec:simplex-preliminaries}.} simplex. In this case, the algorithm brings the sample set into isotropic position and embeds it in $\RR^{\dim+1}$ using $T$.
%In this case, the algorithm first brings the sample set into a convenient position by applying a translation and linear transform and embedding in $\RR^{\dim+1}$. The reason for this is that the $\dim$-dimensional simplex has a canonical representation in $\RR^{\dim+1}$ as the convex hull of the canonical unit vectors. This representation makes computations simpler.
After applying these transformations we may assume (at the cost of small errors in the final result) that our sample set is obtained by sampling from an unknown rotation of the standard simplex
that leaves the all-ones vector (denoted $\ones$ from now on) invariant (thus this rotation keeps the center of mass of the standard simplex fixed), and the problem is to recover this rotation.

%Thus we first translate the sample set
%so that the center of mass is at the origin; then we apply a linear transform to make it isotropic, i.e. with covariance matrix equal
%to the identity matrix, and then scale it so that

To find the rotation, the algorithm will find the vertices of the rotated simplex approximately. This can be done efficiently because of the following characterization of the vertices: Project the vertices of the simplex onto the hyperplane through the origin orthogonal to $\ones$ and normalize the resulting vectors. Let $V$ denote this set of $n+1$ points. Consider the problem of maximizing the third moment of the uniform distribution in the simplex along unit vectors orthogonal to $\ones$. Then $V$ is the complete set of local maxima and the complete set of global maxima (Theorem~\ref{thm:maxima}). A fixed point-like iteration (inspired by the analysis of FastICA~\cite{Hyvarinen99} and of gradient descent in \cite{NguyenR09}) starting from a random point in the unit sphere finds a local maximum efficiently with high probability. By the analysis of the coupon collector's problem, $O(\dim \log \dim)$ repetitions are highly likely to find all local maxima.

\paragraph{Idea of the analysis.}

In the analysis, we first argue that after putting the sample in isotropic position and mapping it through $T$, it is enough to analyze the algorithm in the case where the sample comes from a simplex $S$ that is close to a simplex $S'$ that is the result of applying a rotation leaving $\ones$ invariant to the standard simplex. The closeness here depends on the accuracy of the sample covariance and mean as an estimate of the input simplex's covariance matrix and mean. A sample of size $O(n)$ guarantees
(\cite[Theorem 4.1]{MR2601042}, \cite[Corollary 1.2]{1106.2775}) that the covariance and mean are close enough so that the uniform distributions on $S$ and $S'$ are close in total variation. We show that the subroutine that finds the vertices (Subroutine \ref{alg:vertex}), succeeds with some probability when given a sample from $S'$. By definition of total variation distance, Subroutine \ref{alg:vertex} succeeds with almost as large probability when given a sample from $S$ (an argument already used in \cite{NguyenR09}). As an additional simplifying assumption, it is enough to analyze the algorithm (Algorithm \ref{alg:simplex}) in the case where the input is isotropic, as the output distribution of the algorithm is equivariant with respect to affine invertible transformations as a function of the input distribution.

\section{Preliminaries}\label{sec:simplex-preliminaries}
An $\dim$-simplex is the convex hull of $\dim+1$ points in $\RR^{\dim}$ that do not lie on an $(\dim-1)$-dimensional affine hyperplane.
It will be convenient to work with the standard $\dim$-simplex
$\Delta^{\dim}$ living in $\RR^{\dim+1}$ defined as the convex hull of the $\dim+1$ canonical unit
vectors $e_1, \ldots, e_{\dim+1}$; that is
\begin{align*}
\Delta^{\dim} = \{(x_0, \ldots, x_{\dim}) \in \RR^{\dim+1} &\suchthat x_0 + \dotsb + x_{\dim} = 1 \optionalbreak 
\text{ and } x_i \geq 0 \text{ for all } i\}.
\end{align*}
The canonical simplex $\Omega^\dim$ living in $\RR^\dim$ is given by
\begin{align*}
\{(x_0, \dotsc, x_{\dim-1}) \in \RR^{\dim} &\suchthat x_0 + \dotsb + x_{\dim-1} \leq 1 \optionalbreak
\text{ and } x_i \geq 0 \text{ for all } i\}.
\end{align*}
Note that $\Delta^\dim$ is the facet of $\Omega^{\dim+1}$ opposite to the origin.

Let $B_n$ denote the $n$-dimensional Euclidean ball.

%% \begin{definition}
%% We say that two $\dim$-simplices $A$ and $B$ are $\epsilon$-close if there is a one-one mapping between the vertices of $A$ and $B$
%% such that matched vertices are within distance $\epsilon$. More formally, there is one-one map $f: V(A) \rightarrow V(B)$, such that
%% for all $v \in V(A)$ we have $\norm{v-f(v)}_2 \leq \epsilon$.
%% \end{definition}
%% xxx what's the use of this definition?

The complete homogeneous symmetric polynomial of degree $d$ in variables $u_0, \ldots, u_{\dim}$, denoted $h_n(u_0, \ldots, u_{\dim})$,
is the sum of all monomials of degree $d$ in the variables:
\begin{align*}
h_d(u_0, \ldots, u_\dim)
&= \sum_{k_0 + \dotsb + k_\dim = d} u_0^{k_0} \dotsm u_\dim^{k_\dim} \optionalbreak
= \sum_{0 \leq i_0 \leq i_1 \leq \dotsb \leq i_d \leq \dim} u_{i_0} u_{i_1} \dotsm u_{i_d}.
\end{align*}

Also define the $d$-th power sum as
$$
p_d(u_0, \ldots, u_\dim) = u_0^d + \ldots + u_\dim^d.
$$

For a vector $u = (u_0, u_1, \ldots, u_\dim)$, we define
\[
u^{(2)} = (u_0^2, u_1^2, \ldots, u_\dim^2).
\]
Vector $\ones$ denotes the all ones vector (the dimension of the vector will be clear from the context).

A random vector $X \in \RR^\dim$ is \emph{isotropic} if $\e(X)=0$ and $\e(X X^T) = I$. A compact set in $\RR^\dim$ is isotropic if a uniformly distributed random vector in it is isotropic. The inradius of an isotropic $n$-simplex is $\sqrt{(n+2)/n}$, the circumradius is $\sqrt{n(n+2)}$.

The total variation distance between two probability measures is $d_{TV}(\mu, \nu) = \sup_A \abs{\mu(A) - \nu(A)}$ for measurable $A$. For two compact sets $K, L \subseteq \RR^\dim$, we define the total variation distance $d_{TV}(K, L)$ as the total variation distance between the corresponding uniform distributions on each set. It can be expressed as
\[
    d_{TV}(K,L) =
    \begin{cases}
        \frac{\vol{K \setminus L}}{\vol K} & \text{if $\vol K \geq \vol L$}, \\
        \frac{\vol{L \setminus K}}{\vol L} & \text{if $\vol L > \vol K$.}
    \end{cases}
\]
This identity implies the following elementary estimate:
\begin{lemma}\label{lem:bmtv}
Let $K, L$ be two compact sets in $\RR^\dim$. Let $0 < \alpha \leq 1 \leq \beta$ such that
$
\alpha K \subseteq L \subseteq \beta K$.
Then $d_{TV}(K,L) \leq 2\left(1-(\alpha/\beta)^\dim\right)$.
\end{lemma}
\begin{proof}
We have $d_{TV}(\alpha K, \beta K) = 1-(\alpha/\beta)^\dim$. Triangle inequality implies the desired inequality.
\end{proof}

\begin{lemma}\label{lem:coupons}
Consider the coupon collector's problem with $n$ coupons where every coupon occurs with probability at least $\alpha$. Let $\delta >0$. Then with probability at least $1-\delta$ all coupons are collected after $\alpha^{-1} (\log n + \log 1/\delta)$ trials.
\end{lemma}
\begin{proof}
The probability that a particular coupon is not collected after that many trials is at most
\[
(1-\alpha)^{\alpha^{-1} (\log n + \log 1/\delta)} \leq e^{-\log n - \log 1/\delta} = \delta/n.
\]
The union bound over all coupons implies the claim.
\end{proof}

For a point $x \in \RR^{\dimension}$, $\norm{x}_p = (\sum_{i = 1}^{\dimension} |x_i|^p)^{1/p}$ is the standard $\ell_p$ norm.  The unit $\ell_p^n$ ball is defined by 
\[
B_p^{\dimension} = \{x \in \RR^{\dimension}: \norm{x}_p \leq 1 \}.
\]

%Of special interest for our random vectors is the $\mathrm{Gamma}$ distribution. 
The Gamma distribution is denoted as $\mathrm{Gamma}(\alpha, \beta)$ and has density $f(x; \alpha, \beta) = \frac{\beta^\alpha}{\Gamma(\alpha)} x^{\alpha - 1} e^{- \beta x } 1_{x \geq 0}$, 
with shape parameters $\alpha, \beta >0$.
%where the shape parameters $\alpha > 0$ and $\beta > 0$ can be used to specialize to several other important distributions; specifically, 
$\mathrm{Gamma}(1, \lambda)$ is the exponential distribution, denoted $\expdist{\lambda}$. 
The Gamma distribution also satisfies the following additivity property: If $X$ is distributed as $\mathrm{Gamma}(\alpha, \beta)$ and $Y$ is distributed as $\mathrm{Gamma}(\alpha', \beta)$, then $X+Y$ is distributed as $\mathrm{Gamma}(\alpha + \alpha', \beta)$.
% excess kurtosis $6/\alpha$. 

The cone measure on the surface $\partial K$ of centrally symmetric convex body $K$ in $\RR^{\dimension}$ \cite{barthe2005probabilistic, schechtman1990volume, rachev1991approximate, MR1396997} is 
defined by
\[
\mu_{K}(A) = \frac{\vol(ta; a \in A, 0 \leq t \leq 1)}{\vol(K)}.
\] 
It is easy to see that $\mu_{B_p^{\dimension}}$ is uniform on $\partial B_p^{\dimension}$ for $p \in \{1, 2, \infty\}$.

From \cite{schechtman1990volume} and \cite{rachev1991approximate} we have the following representation of the cone measure on $\partial B_p^n$:
\begin{theorem}\label{thm:cone}
Let $G_1, G_2, \dotsc, G_n$ be iid random variables with density proportional to $\exp(-\abs{t}^p)$. Then the random vector $X = G/\norm{G}_p$
%\[
%X = \frac{G}{\norm{G}_p} = \frac{G}{\left(\sum_{i=1}^n |G_i|^p\right)^{1/p}}
%\]
is independent of $\norm{G}_p$. Moreover, $X$ is distributed according to $\mu_{B_p^{\dimension}}$.
%the cone measure $\mu_{B_p^{\dimension}}$ is induced by $X$, where for any $A \subseteq \partial B_p^{\dimension}$, $P(X \subseteq A) = \mu_{B_p^{\dimension}}(A)$.
\end{theorem}

From \cite{barthe2005probabilistic}, we also have the following variation, a representation of the uniform distribution in $B_p^n$:
\begin{theorem} \label{theorem:fullnaor} 
Let $G = (G_1, \dotsc, G_{\dimension})$ be iid random variables with density proportional to $\exp(-|t|^p)$. Let $Z$ be a random variable distributed as $\expdist{1}$, independent of $G$. Then the random vector
\[
V = \frac{G}{\bigl( \sum_{i=1}^{\dimension} \abs{G_i}^p + Z\bigr)^{1/p}}
\]
is uniformly distributed in $B_p^n$.
 \end{theorem} 

 See e.g. \cite[Section 20]{MR1324786} for the change of variable formula in probability.

\section{Computing the moments of a simplex}\label{sec:moment}
The $k$-th moment $m_k(u)$ over $\Delta^\dim$ is the function
\[
u \mapsto \mathbb{E}_{X \in \Delta_\dim}((u\cdot X)^k).
\]
In this section we present a formula for the moment over $\Delta^\dim$.
Similar more general formulas appear in \cite{LasserreA}.
We will use the following result from \cite{GrundmannM78} for $\alpha_i\geq 0$:
\[
\int_{\Omega^{\dim+1}} x_0^{\alpha_0} \dotsm x_\dim^{\alpha_\dim} dx = \frac{\alpha_0! \dotsm \alpha_\dim!}{(\dim+1+ \sum_i \alpha_i)!}.
\]

From the above we can easily derive a formula for integration over $\Delta^\dim$:
$$
\int_{\Delta^\dim} x_0^{\alpha_0} \dotsm x_\dim^{\alpha_\dim} dx = \sqrt{n+1} \cdot \frac{\alpha_0! \dotsm \alpha_\dim!}{(\dim + \sum_i \alpha_i)!}.
$$

Now
\begin{align*}
\int_{\Delta^\dim} &(x_0 u_0 + \ldots + x_\dim u_\dim)^k dx \\
&= \sum_{k_0 + \dotsb + k_\dim = k} \binom{k}{k_0!, \ldots, k_\dim!} u_0^{k_0} \ldots u_\dim^{k_\dim} \int_{\Delta^\dim} x_0^{k_0} \ldots x_\dim^{k_\dim} dx \\
&= \sum_{k_0 + \dotsb + k_\dim = k} \binom{k}{k_0!, \ldots, k_\dim!} u_0^{k_0} u_0^{k_0} \ldots u_\dim^{k_\dim}
\frac{ \sqrt{\dim+1} \cdot k_0! \ldots k_\dim!}{(\dim + \sum_i k_i)!} \\
&= \frac{k! \sqrt{\dim+1}}{(\dim+k)!}  \sum_{k_0 + \dotsb + k_\dim = k} u_0^{k_0} \ldots u_\dim^{k_\dim} \\
&= \frac{k! \sqrt{\dim+1}}{(\dim+k)!} \; h_k(u).
\end{align*}

%% Now, fortunately, for $u \in U$ the above expression simplifies further when $n=3$. (A similar simplification occurs for $n=4$ as
%% well, but we don't need it.)
The variant of Newton's identities for the complete homogeneous symmetric polynomial gives the following relations which
can also be verified easily by direct computation:
$$ 3 h_3(u) = h_2(u) p_1(u) + h_1(u) p_2(u) + p_3(u),$$
$$ 2 h_2(u) = h_1(u) p_1(u) + p_2(u) = p_1(u)^2 + p_2(u).$$

Divide the above integral by the volume of the standard simplex $|\Delta_n|=\sqrt{\dim+1}/{\dim!}$
to get the moment:
\begin{eqnarray*}
m_3(u) & = & \frac{3! \sqrt{\dim+1}}{(\dim+3)!} h_3(u) / |\Delta_n| \\
       & = & \frac{2 (h_2(u) p_1(u) + h_1(u) p_2(u) + p_3(u))}{(\dim+1)(\dim+2)(\dim+3)}  \\
       & = & \frac{(p_1(u)^3 + 3 p_1(u) p_2(u) + 2 p_3(u))}{(\dim+1)(\dim+2)(\dim+3)} .
\end{eqnarray*}

\section{Subroutine for finding the vertices of a rotated standard simplex} \label{sec:standard}
In this section we solve the following simpler problem: Suppose we have $\text{poly}(\dim)$ samples from a rotated copy
$S$ of the standard simplex, where the rotation is such that it leaves $\ones$ invariant. The problem
is to approximately estimate the vertices of the rotated simplex from the samples.

We will analyze our algorithm in the coordinate system in which the input simplex is the standard simplex. This is only
for convenience in the analysis and the algorithm itself does not know this coordinate system.

%% We show that given the exact third moment function, we can find the canonical unit vectors (i.e., the
%% vertices of the standard simplex) efficiently using a gradient descent type algorithm. Moreover, the
%% algorithm to do this has two useful properties: (1) If the moment function is known only approximately
%% (and so the gradients are also only approximate), one still gets close to the canonical unit vectors.
%% (2) The algorithm we describe is for the coordinate system in which the simplex in question is the
%% standard simplex, but of course in reality we don't know this coordinate system. However, the algorithm
%% does not depend on which coordinate system we use, and converges to the vertices of the simplex in
%% question.

As we noted in the introduction, our algorithm is inspired by the algorithm of \noname{Nguyen and Regev~}\cite{NguyenR09} for
the related problem of learning hypercubes and also by the FastICA algorithm in \cite{Hyvarinen99}. New ideas are
needed for our algorithm for learning simplices; in particular, our update rule is different. With the right update
rule in hand the analysis turns out to be quite similar to the one in \cite{NguyenR09}.

We want to find local maxima of the sample third moment. A natural approach to do this would be to use gradient
descent or Newton's method (this was done in \cite{FJK}). Our algorithm, which only uses first order information, can
be thought of as a fixed point algorithm leading to a particularly simple analysis and fast convergence. Before stating
our algorithm we describe the update rule we use.

We will use the abbreviation $C_\dim = (\dim+1)(\dim+2)(\dim+3)/6$. Then, from the expression for $m_3(u)$ we get
\[
\nabla m_3(u) =  \frac{1}{6C_\dim} \left(3 p_1(u)^2 \ones + 3 p_2(u) \ones + 6 p_1(u) u + 6 u^{(2)}\right).
\]
Solving for $u^{(2)}$ we get
\begin{align}
u^{(2)} &= C_\dim \nabla m_3(u) - \frac{1}{2} p_1(u)^2 \ones - \frac{1}{2} p_2(u) \ones -  p_1(u) u \nonumber \\
       &=  C_\dim  \nabla m_3(u) - \frac{1}{2} (u \cdot \ones)^2 \ones - \frac{1}{2} (u \cdot u)^2 \ones - (u \cdot \ones) u.\label{equ:squaring}
\end{align}

While the above expressions are in the coordinate system where the input simplex is the canonical simplex, the
important point is that all terms in
the last expression can be computed in any coordinate system that is obtained by a rotation leaving $\ones$ invariant.
Thus, we can compute $u^{(2)}$ as well independently of what coordinate system we are working in. This immediately
gives us the algorithm below.
%An interesting aspect of the algorithm is that it does not restrict the intermediate
%points to the plane $\sum_i x_i = 1$. xxx should we keep the previous sentence?
We denote by $\hat{m}_3(u)$ the sample third moment, i.e., $\hat{m}_3(u)=\frac{1}{t}\sum_{i=1}^{t}(u \cdot r_i)^3$ for $t$ samples.
This is a polynomial in $u$, and the gradient is computed in the obvious way. Moreover, the gradient of the sample moment is clearly an unbiased estimator of the gradient of the moment; a bound on the deviation is given in the analysis (Lemma \ref{lem:gradient}).
For each evaluation of the gradient of the sample moment, we use a fresh sample.
%xxx specify the size of the sample and may be a bit of detail of how the gradient of the sample moment is computed
%%xxx clarify that it's not "any", but systems leaving ones invariant.

It may seem a bit alarming that the fixed point-like iteration is squaring the coordinates of $u$, leading to an extremely fast growth (see Equation \ref{equ:squaring} and Subroutine 1). But, as in other algorithms having quadratic convergence like certain versions of Newton's method, the convergence is very fast and the number of iterations is small. We show below that it is $O(\log(n/\delta))$, leading to a growth of $u$ that is polynomial in $n$ and $1/\delta$. The boosting argument described in the introduction makes the final overall dependence in $\delta$ to be only linear in
$\log (1/\delta)$.

We state the following subroutine for $\mathbb{R}^n$ instead of $\mathbb{R}^{n+1}$ (thus it is learning a rotated copy of
$\Delta^{n-1}$ instead of $\Delta^n$). This is for notational convenience so that we work with $n$ instead of $n+1$.

\begin{subroutine}
\caption{Find one vertex of a rotation of the standard simplex $\Delta^{n-1}$ via a fixed point iteration-like algorithm}\label{alg:vertex}
\begin{algorithmic}
\State Input: Samples from a rotated copy of the $n$-dimensional standard simplex (for a rotation that leaves $\ones$ invariant).
\State Output: An approximation to a uniformly random vertex of the input simplex.
\vspace{.2em}
\hrule
\vspace{.2em}
\State  Pick $u(1) \in S^{\dim-1}$, uniformly at random.
\myFor{$i=1$ to $r$}
%\State 
\begin{align*}
u(i+1) := &C_{\dim-1} \nabla \hat{m}_3(u(i)) - \frac{1}{2} (u(i) \cdot \ones)^2 \ones \optionalbreak
- \frac{1}{2} (u(i) \cdot u(i))^2 \ones - (u(i) \cdot \ones) u(i).
\end{align*}
\State Normalize $u(i+1)$ by dividing by $\norm{u(i+1)}_2$.
\myEndFor
\State Output $u(r+1)$.
\end{algorithmic}
\end{subroutine}

\begin{lemma}\label{lem:gradient}
%Algorithm~\ref{alg:vertex} finds a random vertex of the simplex from xxxpoly(n) samples with constant probability .
%% Given any constant $c>2$ there exists another constant $c'$ with the following property. Suppose that Algorithm 1 is given a sample of size $n^{c'}$ from a rotation of the standard $(\dim+1)$-dimensional simplex that leaves $\ones$ invariant, where $c$ is any positive constant. Then, with constant probability over the sample and the randomness of the algorithm, it outputs a vector within $\ell_2$-distance $O(1/\dim^{c})$ of a vertex of the input simplex. With respect of the process of picking a sample and running the algorithm, each vertex is equally likely to be the nearest.
Let $c>0$ be a constant, $\dim > 20$, and $0<\delta<1$. Suppose that Subroutine 1 uses a sample of size
$t = 2^{17}\dim^{2c+22}(\frac{1}{\delta})^2\ln\frac{2\dim^5 r}{\delta}$ for each evaluation of the gradient and runs for
$r =\log \frac{4(c+3) n^2\ln{n}}{\delta}$
iterations. Then with probability at least $1-\delta$
Subroutine 1 outputs a vector within distance $1/n^c$ from a vertex
of the input simplex.  With respect of the process of picking a sample and running the algorithm, each vertex is equally likely to be the nearest.

\end{lemma}

Note that if we condition on the sample, different vertices are not equally likely over the randomness of the algorithm. That is, if we try to find all vertices running the algorithm multiple times on a fixed sample, different vertices will be found with different likelihoods.
\begin{proof}
Our analysis has the same outline as that of \noname{Nguyen and Regev~}\cite{NguyenR09}.
This is because the iteration that we get is the same as that of \cite{NguyenR09} except that cubing is replaced by
squaring (see below); however some details in our proof are different.
In the proof below, several of the inequalities are quite loose and are so chosen to make the computations
simpler.

We first prove the lemma assuming that the gradient computations are exact and then show how to handle samples.
We will carry out the analysis in the coordinate system where the given simplex is the standard simplex. This is
only for the purpose of the analysis, and this coordinate system is not known to the algorithm.
Clearly, $u(i+1) =  (u(i)^2_1, \ldots, u(i)_{\dim}^2)$.
It follows that, $$u(i+1) = (u(1)_1^{2^i}, \ldots, u(1)_{\dim}^{2^i}).$$
Now, since we choose $u(1)$ randomly, with probability at least $(1-(n^2-n)\delta')$
 one of the
coordinates of $u(1)$ is greater than all the other coordinates in absolute value by a factor of at least $(1+ \delta')$,
where $0 < \delta' < 1$.
(A similar argument is made in \cite{NguyenR09} with different parameters.
 We briefly indicate the proof for our case:
The probability that the event in question does not happen is less than the probability that there are two coordinates $u(1)_a$
and $u(1)_b$ such that their absolute values are within factor $1+\delta'$, i.e.
$1/(1+\delta') \leq |u(1)_a|/|u(1)_b| < 1+\delta'$. The probability that for given $a, b$ this event happens can be seen as the
Gaussian area of the four sectors (corresponding to the four choices of signs of $u(1)_a, u(1)_b$) in the plane each with angle
less than $2\delta'$. By symmetry, the Gaussian volume of these sectors is $2\delta'/(\pi/2) < 2\delta'$.
The probability that
such a pair $(a,b)$ exists is less than $2 \binom{n}{2} \delta'$.)
Assuming this happens, then after $r$ iterations, the ratio between the largest coordinate (in absolute value) and the absolute value of any other coordinate is at least
$(1+\delta')^{2^r}$.
Thus, one of the coordinates is very
close to $1$ and others are very close to $0$, and so $u(r+1)$ is very close to a vertex of the input simplex.

Now we drop the assumption that the gradient is known exactly. For each evaluation of the gradient we use
a fresh subset of samples of $t$ points. Here $t$ is chosen so that each evaluation of the gradient is within $\ell_2$-distance
$1/\dim^{c_1}$ from its true value with probability at least $1-\delta''$, where $c_1$ will be set at the end of the proof.
An application of the Chernoff bound yields that we can take
$t = 200\dim^{2c_1+4}\ln\frac{2\dim^3}{\delta''}$; we omit the details. Thus all the $r$ evaluations of the gradient are
within distance $1/\dim^{c_1}$ from their true values with probability at least $1-r\delta''$.

We assumed that our starting vector $u(1)$ has a coordinate greater than every other coordinate by a factor
of
$(1+\delta')$ in absolute value; let us assume without loss of generality that this is the first coordinate.
Hence $|u(1)_1| \geq 1/\sqrt{n}$.
When expressing $u^{(2)}$ in terms of the gradient, the gradient gets multiplied by $C_{n-1} < n^3$ (we are assuming $n>20$),
keeping this in mind and letting $c_2 = c_1-3$ we get for $j \neq 1$
\begin{align*}
\frac{|u(i+1)_1|}{|u(i+1)_j|} \geq \frac{u(i)_1^2-1/n^{c_2}}{u(i)_j^2+1/n^{c_2}} \geq \frac{u(i)_1^2 (1-n^{-(c_2-1)})}{u(i)_j^2+1/n^{c_2}}.
\end{align*}

If $u(i)_j^2 > 1/\dim^{c_2-c_3}$, where $1\leq c_3 \leq c_2 -2$ will be determined later, then we get
\begin{align} \label{eq:if}
|u(i+1)_1|/|u(i+1)_j|
&> \frac{1-1/n^{c_2-1}}{1+1/n^{c_3}} \cdot \left(\frac{u(i)_1}{u(i)_j}\right)^2 \nonumber\\
%> (1-1/n^{c_2-1})(1-1/n^{c_3}) \left(\frac{u(i)_1}{u(i)_j}\right)^2
&>  (1-1/n^{c_3})^2 \left(\frac{u(i)_1}{u(i)_j}\right)^2.
\end{align}

Else,
\begin{align*}
|u(i+1)_1|/|u(i+1)_j|
&> \frac{1/n-1/n^{c_2}}{1/n^{c_2-c_3}+1/n^{c_2}} \\
&> \left(1-\frac{1}{n^{c_3}}\right)^2 \cdot n^{c_2-c_3 -1} \\
&> \frac{1}{2} n^{c_2-c_3-1},
\end{align*}
where we used $c_3 \geq 1$ and $n>20$ in the last inequality.

%%We choose $c_3$ so that $(1-\frac{1}{n^{c_3}})^2(1+\delta')^2 > (1+\delta'/2)^2$.

We choose $c_3$ so that
\begin{align}\label{eq:c3}
\left(1-\frac{1}{n^{c_3}}\right)^2(1+\delta') > (1+\delta'/2).
\end{align}
For this, $\delta' \geq 32/n^{c_3}$ or equivalently $c_3 \geq (\ln{(32/\delta')})/\ln{n}$ suffices.

For $c_3$ satisfying \eqref{eq:c3} we have $(1-\frac{1}{n^{c_3}})^2(1+\delta')^2 > (1+\delta')$.
It then follows from \eqref{eq:if} that
the first coordinate continues to remain the largest in absolute value by a factor of at least $(1+\delta')$ after each iteration.
Also, once we have $|u(i)_1|/|u(i)_j| > \frac{1}{2} n^{c_2-c_3-1}$, we
have  $|u(i')_1|/|u(i')_j| > \frac{1}{2} n^{c_2-c_3-1}$ for all $i'>i$.

\eqref{eq:if} gives that after $r$ iterations we have
\begin{align*}
\frac{|u(r+1)_1|}{|u(r+1)_j|}
&>  (1-1/n^{c_3})^{2+2^2+\ldots+2^r} \left(\frac{u(1)_1}{u(1)_j}\right)^{2^r} \\
&\geq (1-1/n^{c_3})^{2^{r+1}-2} (1+\delta')^{2^r}.
\end{align*}

Now if $r$ is such that $(1-1/n^{c_3})^{2^{r+1}-2} (1+\delta')^{2^r} > \frac{1}{2} n^{c_2-c_3-1}$, we will be guaranteed that
$|u(r+1)_1|/|u(r+1)_j| > \frac{1}{2} n^{c_2-c_3-1}$. This condition is satisfied if we have
$(1-1/n^{c_3})^{2^{r+1}} (1+\delta')^{2^r} > \frac{1}{2} n^{c_2-c_3-1}$, or equivalently
$((1-1/n^{c_3})^2 (1+\delta'))^{2^r} \geq \frac{1}{2} n^{c_2-c_3-1}$.
Now using \eqref{eq:c3} it suffices to choose $r$ so that
$(1+\delta'/2)^{2^r} \geq \frac{1}{2} n^{c_2-c_3-1}$. Thus we can take $r = \log(4(c_2-c_3)(\ln{n})/\delta')$.

%% After $r$ iterations, with $r$ such that $(1-\frac{1}{n^{c_3}})^{2r}(1+\delta')^{2^r} \geq \frac{1}{2} n^{c_2-c_3-1}$, we will be guaranteed that
%% $|u(r+1)_1|/|u(r+1)_j| > \frac{1}{2} n^{c_2-c_3-1}$. For this, $r = \log(2(c_2-c_3)(\ln{n})/\delta')$ suffices.

Hence we get $|u(r+1)_1|/|u(r+1)_j| > \frac{1}{2} n^{c_2-c_3-1}$.
It follows that for
$u(r+1)$, the $\ell_2$-distance from the vertex $(1, 0, \ldots, 0)$ is at most $8/n^{c_2-c_3-2} < 1/n^{c_2-c_3-3}$ for $\dim > 20$; we omit
easy details.

Now we set our parameters: $c_3 = 1+(\ln(32/\delta')/\ln{n})$ and $c_2 - c_3 - 3 = c$ and
$c_1 = c_2 + 3 = 7 + c + \ln(32/\delta')/\ln{n}$ satisfies all the constraints we imposed on $c_1, c_2, c_3$.
Choosing $\delta'' = \delta'/r$, we get that the procedure succeeds with probability at least
$1-(\dim^2-\dim)\delta' - r\delta'' > 1-\dim^2\delta'$. Now setting $\delta'=\delta/n^2$ gives the overall probability of error $\delta$,
and the number of samples and iterations as claimed in the lemma.
\end{proof}

\section{Learning simplices} \label{sec:general}

In this section we give our algorithm for learning general simplices, which uses the subroutine from the previous section.
The learning algorithm uses an affine map $T:\RR^{\dim} \to \RR^{\dim+1}$ that maps some isotropic simplex to the standard simplex.
We describe now a way of constructing such a map: Let $A$ be a matrix having as columns an orthonormal basis of $\ones^\perp$ in $\RR^{\dim+1}$.
To compute one such $A$, one can start with the $(n+1)$-by-$(n+1)$ matrix $B$ that has ones in the diagonal and first column, everything else is zero.
Let $QR=B$ be a QR-decomposition of $B$. By definition we have that the first column of $Q$ is parallel to $\ones$ and the rest of the columns span $\ones^\perp$.
Given this, let $A$ be the matrix formed by all columns of $Q$ except the first.
We have that the set $\{A^T e_i\}$ is the set of vertices of a regular $n$-simplex.
Each vertex is at distance
\[
\sqrt{\left(1-\frac{1}{n+1}\right)^2 + \frac{n}{(n+1)^2}} = \sqrt{\frac{n}{n+1}}
\]
from the origin, while an isotropic simplex has vertices at distance $\sqrt{n(n+2)}$ from the origin. So an affine transformation that maps an isotropic simplex in $\RR^\dim$ to the standard simplex in $\RR^{\dim+1}$ is $T(x) = \frac{1}{\sqrt{(n+1)(n+2)}} A x + \frac{1}{n+1} \ones_{\dim+1}$.

% xxx there's no $t_2$ now

\begin{algorithm}
\caption{Learning a simplex.}\label{alg:simplex}
\begin{algorithmic}
\State Input: Error parameter $\eps>0$. Probability of failure parameter $\delta>0$. Oracle access to random points from some $n$-dimensional simplex $S_{INPUT}$.

\State Output: $V =\{v(1), \dotsc, v(\dim+1)\} \subseteq \RR^{\dim}$ (approximations to the vertices of the simplex).
\vspace{.2em}
\hrule
\vspace{.2em}
\State Estimate the mean and covariance using $t_1=\poly(n,1/\eps, 1/\delta)$ samples $p(1), \dotsc, p(t_1)$:
\[
\mu = \frac{1}{t_1} \sum_i p(i),
\]
\[
\Sigma = \frac{1}{t_1} \sum_i (p(i)-\mu) (p(i)-\mu)^T.
\]
%\State Estimate the covariance matrix using $t_2=\poly(n,1/\eps)$ samples $q(1),\dotsc, q(t_2)$:
\State Compute a matrix $B$ so that $\Sigma = BB^T$ (say, Cholesky decomposition).
% or principal square root  xxx which one?).

\State Let $U = \emptyset$.
\myFor{$i=1$ to $m$ (with $m = \poly (n, \log 1/\delta))$}

\State Get $t_3=\poly(n, 1/\eps, \log 1/\delta)$ samples $r(1),\dotsc r(t_3)$ and use $\mu, B$ to map them to samples $s(i)$ from a nearly-isotropic simplex: $s(i) = B^{-1}(r(i)-\mu)$.

\State Embed the resulting samples in $\RR^{\dim+1}$ as a sample from an approximately rotated standard simplex:
Let $l(i) = T(s(i))$.

\State Invoke Subroutine \ref{alg:vertex} with sample $l(1), \dotsc, l(t_3)$ to get $u \in \RR^{\dim+1}$.
\State Let $\tilde u$ be the nearest point to $u$ in the affine hyperplane $\{x \suchthat x \cdot \ones = 1\}$. If $\tilde u$ is not within $1/\sqrt{2}$ of a point in $U$, add $\tilde u$ to $U$. (Here $1/\sqrt{2}$ is half of the edge length of the standard simplex.)
\myEndFor
\State Let
\begin{align*}
V &= B T^{-1}(U) + \mu \optionalbreak
= \sqrt{(n+1)(n+2)} B A^T\left(U-\frac{1}{n+1}\ones \right) + \mu.
\end{align*}
\end{algorithmic}
\end{algorithm}

To simplify the analysis, we pick a new sample $r(1), \dotsc, r(t_3)$ to find every vertex, as this makes every vertex equally likely to be found when given a sample from an isotropic simplex. (The core of the analysis is done for an isotropic simplex; this is enough as the algorithm's first step is to find an affine transformation that puts the input simplex in approximately isotropic position. The fact that this approximation is close in total variation distance implies that it is enough to analyze the algorithm for the case of exact isotropic position, the analysis carries over to the approximate case with a small loss in the probability of success. See the proof below for the details.) A practical implementation may prefer to select one such sample outside of the for loop, and find all the vertices with just that sample---an analysis of this version would involve bounding the probability that each vertex is found (given the sample, over the choice of the starting point of gradient descent) and a variation of the coupon collector's problem with coupons that are not equally likely.

\iffalse
\begin{theorem}
Algorithm \ref{alg:simplex}, with probability at least $1-\delta$ over the sample and the randomness of the algorithm, outputs $n+1$ vectors that are the vertices of a simplex $S$ so that $d_{TV}(S, S_{INPUT}) \leq \eps$. The algorithm runs in time polynomial in $\dim$, $1/\eps$ and $1/\delta$.
\end{theorem}
\fi

\begin{proof}[Theorem~\ref{thm:main}]
As a function of the input simplex, the distribution of the output of the algorithm is equivariant under invertible affine transformations.
Namely, if we apply an affine transformation to the input simplex, the distribution of the output is equally transformed.\footnote{To see this: the equivariance of the algorithm as a map between distributions is implied by the equivariance of the algorithm on any given input sample. Now, given the input sample, if we apply an affine transformation to it, this transformation is undone except possibly for a rotation by the step $s(i) = B^{-1}(r(i)-\mu)$. A rotation may remain because of the ambiguity in the characterization of $B$. But the steps of the algorithm that follow the definition of $s(i)$ are equivariant under rotation, and the ambiguous rotation will be removed at the end when $B$ is applied again in the last step.}
%This happens because the first step of the algorithm is to put the sample in isotropic position.
The notion of error, total variation distance, is also invariant under invertible affine transformations.
Therefore, it is enough to analyze the algorithm when the input simplex is in isotropic position.
In this case $\norm{p(i)} \leq n+1$ (see Section \ref{sec:preliminaries}) and we can set $t_1 \leq \poly(n,1/\eps', \log(1/\delta))$ so that $\norm{\mu} \leq \eps'$ with probability at least $1-\delta/10$ (by an easy application of Chernoff's bound), for some $\eps'$ to be fixed later.
Similarly, using results from %\cite[Corollary 1.2]{1106.2775},
\cite[Theorem 4.1]{MR2601042}, a choice of $t_1 \leq n {\eps'}^{-2} \polylog(1/\eps') \polylog(1/\delta)$ implies that the empirical second moment matrix \[\bar \Sigma = \frac{1}{t_1} \sum_i p(i) p(i)^T\] satisfies $\norm{\bar \Sigma - I} \leq \eps'$ with probability at least $1-\delta/10$. We have $\Sigma = \bar \Sigma - \mu \mu^T$ and this implies $\norm{\Sigma - I} \leq \norm{\bar \Sigma - I} + \norm{\mu \mu^T} \leq 2\eps'$.
% xxx chose  $t_1$ twice. xxx mention the prob. with which this happens)
Now, $s(1), \dotsc, s(t_3)$ is an iid sample from a simplex $S'=B^{-1}(S_{INPUT}-\mu)$. Simplex $S'$ is close in total variation distance to some isotropic simplex\footnote{The isotropic simplex $S_{ISO}$ will typically be far from the (isotropic) input simplex, because of the ambiguity up to orthogonal transformations in the characterization of $B$.} $S_{ISO}$. More precisely, Lemma~\ref{lem:tv} below shows that
\begin{equation}\label{equ:tv}
d_{TV}(S', S_{ISO}) \leq 12 \dim \eps',
\end{equation}
with probability at least $1-\delta/5$.

Assume for a moment that $s(1), \dotsc, s(t_3)$ are from $S_{ISO}$. The analysis of Subroutine \ref{alg:vertex} (fixed point-like iteration) given in Lemma~\ref{lem:gradient} would guarantee the following: Successive invocations to Subroutine~\ref{alg:vertex} find approximations to vertices of $T(S_{ISO})$ within Euclidean distance $\eps''$ for some $\eps''$ to be determined later and $t_3 = \poly(\dim,1/\eps'', \log 1/\delta)$. We ask for each invocation to succeed with probability at least $1-\delta/(20m)$ with $m = n (\log n + \log 20/\delta)$. Note that each vertex is equally likely to be found. The choice of $m$ is so that, if all $m$ invocations succeed (which happens with probability at least $1-\delta/20$), then the analysis of the coupon collector's problem, Lemma~\ref{lem:coupons}, implies that we fail to find a vertex with probability at most $\delta/20$. Overall, we find all vertices with probability at least $1-\delta/10$.

But in reality samples $s(1), \dotsc, s(t_3)$ are from $S'$, which is only \emph{close} to $S_{ISO}$. The estimate from \eqref{equ:tv} with appropriate $\eps' = \poly(1/n, \eps'', \delta)$ gives
\[
d_{TV}(S', S_{ISO}) \leq \frac{\delta}{10} \frac{1}{ t_3 m},
\]
which implies that the total variation distance between the joint distribution of all $t_3 m$ samples used in the loop and the joint distribution of actual samples from the isotropic simplex $S_{ISO}$ is at most $\delta/10$, and this implies that the loop finds approximations to all vertices of $T(S_{ISO})$ when given samples from $S'$ with probability at least $1-\delta/5$. The points in $U$ are still within Euclidean distance $
\eps''$ of corresponding vertices of $T(S_{ISO})$.

To conclude, we turn our estimate of distances between estimated and true vertices into a total variation estimate, and map it back to the input simplex. Let $S''=\conv T^{-1} U$. As $T$ maps an isotropic simplex to a standard simplex, we have that $\sqrt{(n+1)(n+2)} T$ is an isometry, and therefore the vertices of $S''$ are within distance $\eps''/\sqrt{(n+1)(n+2)}$ of the corresponding vertices of $S_{ISO}$. Thus, the corresponding support functions are uniformly within \[\eps''' = \eps''/\sqrt{(n+1)(n+2)}\] of each other on the unit sphere. This and the fact that $S_{ISO} \supseteq B_n$ imply
\[
(1 - \eps''') S_{ISO}\subseteq S'' \subseteq (1 + \eps''')S_{ISO}.
\]
Thus, by Lemma \ref{lem:bmtv}, $d_{TV}(S'', S_{ISO}) \leq
1 - (\frac{1-\eps'''}{1+\eps'''})^\dim \leq 1-(1-\eps''')^{2n}\leq 2n \eps''' \leq 2\eps''$
and this implies that the total variation distance between the uniform distributions on $\conv V$ and the input simplex is at most $2 \eps''$. Over all random choices, this happens with probability at least $1-2\delta/5$. We set $\eps'' = \eps/2$.
\end{proof}

\begin{lemma}\label{lem:tv}
Let $S_{INPUT}$ be an $\dim$-dimensional isotropic simplex. Let $\Sigma$ be an $\dim$-by-$\dim$ positive definite matrix such that $\norm{\Sigma - I} \leq \eps < 1/2$. Let $\mu$ be an $n$-dimensional vector such that $\norm{\mu} \leq \eps$. Let $B$ be an $n$-by-$n$ matrix such that $\Sigma = BB^T$. Let $S$ be the simplex $B^{-1}(S_{INPUT}-\mu)$. Then there exists an isotropic simplex $S_{ISO}$ such that $d_{TV}(S, S_{ISO}) \leq 6 \dim \eps$.
\end{lemma}
\begin{proof}
We use an argument along the lines of the orthogonal Procrustes problem (nearest orthogonal matrix to $B^{-1}$, already in \cite[Proof of Theorem 4]{NguyenR09}): Let $U D V^T$ be the singular value decomposition of $B^{-1}$. Let $R = U V^T$ be an orthogonal matrix (that approximates $B^{-1}$). Let $S_{ISO} = R S_{INPUT}$.

We have $S = UDV^T (S_{INPUT} - \mu)$. Let $\sigma_{min}$, $\sigma_{max}$ be the minimum and maximum singular values of $D$, respectively. This implies:
\begin{align}
\sigma_{min} UV^T (S_{INPUT} - \mu) &\subseteq S \subseteq \sigma_{max}UV^T (S_{INPUT} - \mu), \nonumber \\
\sigma_{min} (S_{ISO} - R\mu)  &\subseteq S \subseteq \sigma_{max}( S_{ISO} - R\mu).\label{equ:isotropy}
\end{align}
As $S_{ISO} \supseteq B_n$, $\norm{\mu} \leq 1$, $R$ is orthogonal and $S_{ISO}$ is convex, we have
\[
S_{ISO}-R\mu \supseteq (1-\norm{\mu}) S_{ISO}.
\]
Also,
\begin{align*}
S_{ISO} - R\mu &\subseteq S_{ISO} + \norm{\mu} B_n \\
&\subseteq S_{ISO} (1+ \norm{\mu}).
\end{align*}
This in \eqref{equ:isotropy} gives
\[
\sigma_{min} (1-\norm{\mu}) S_{ISO} \subseteq S \subseteq \sigma_{max}(1+\norm{\mu}) S_{ISO}.
\]
This and Lemma \ref{lem:bmtv} imply
\[
d_{TV}(S, S_{ISO}) \leq 2\left(1- \left(\frac{\sigma_{min} (1-\norm{\mu}) }{ \sigma_{max}(1+\norm{\mu})}\right)^\dim\right).
\]
The estimate on $\Sigma$ gives $\sigma_{min} \geq \sqrt{1-\eps}$, $\sigma_{max} \leq \sqrt{1+\eps}$.  Thus
\begin{align*}
d_{TV}(S, S_{ISO}) &\leq 2\left(1- \left(\frac{1-\eps}{1+\eps} \right)^{3\dim/2}\right) \\
& \leq 2\left(1- \left(1-\eps\right)^{3\dim}\right) \\
& \leq 6\dim \eps.
\end{align*}
% using $(1+x)^n - 1 \leq 2^n x$ for $0 \leq x \leq 1$
\end{proof}

\section{The local and global maxima of the 3rd moment of the standard simplex and the isotropic simplex} \label{sec:maxima}

%% Let
%% \[
%% h_k(x_1, \dotsc, x_d) = \sum_{1 \leq d_1 \leq d_2 \leq \dotsb \leq d_k \leq d} x_{d_1} x_{d_2} \dotsm x_{d_k}
%% \]
%% be the complete homogeneous symmetric polynomial of degree $i$. Let
%% \[
%% p_k (x_1, \dotsc, x_d) = \sum_{j=1}^d x^k_d.
%% \]

In this section we study the structure of the set of local maxima of the third moment as a function of the direction (which happens to be essentially $u \mapsto \sum u_i^3$ as discussed in Section \ref{sec:moment}). This is not necessary for our algorithmic result, however it gives insight into the geometry of the third moment (the location of local maxima/minima and stationary points) and suggests that more direct optimization algorithms like gradient descent and Newton's method will also work, although we will not prove that.

\begin{theorem}\label{thm:maxima}
Let $K \subseteq \RR^\dim$ be an isotropic simplex. Let $X$ be random in $K$. Let $V = \{x_i\}_{i=1}^{\dim+1} \subseteq \RR^\dim$ be the set of normalized vertices of $K$. Then $V$ is a complete set of local maxima and a complete set of global maxima of $F:S^{\dim-1} \to \RR$ given by $F(u) = \e ((u \cdot X)^3)$.
\end{theorem}
\noindent\emph{Proof idea:} Embed the simplex in $\RR^{\dim+1}$. Show that the third moment is proportional to the complete homogeneous symmetric polynomial of degree 3, which for the relevant directions is proportional to the sum of cubes. To conclude, use first and second order optimality conditions to characterize the set of local maxima.
\begin{proof}
Consider the standard simplex
\[
\Delta^n = \conv \{e_1, \dotsc, e_{\dim+1} \} \subseteq \RR^{\dim+1}
 \]
and identify it with $V$ via a linear map $A :\RR^{\dim+1} \to \RR^{\dim}$ so that $A(\Delta^n) = V$. Let $Y$ be random in $\Delta^n$. Consider $G: S^{\dim} \to \RR$ given by $G(v) = m_3(v) = \e ((v \cdot Y)^3)$. Let $U = \{ v \in \RR^{\dim+1} \suchthat v \cdot \ones =0, \norm{v}=1 \}$ be the equivalent feasible set for the embedded problem.
We have $G(v) = cF(A v)$ for any $v \in U$ and some constant $c > 0$ independent of $v$.
To get the theorem, it is enough to show that the local maxima of $G$ in $U$ are precisely the normalized versions of the projections of the canonical vectors onto the hyperplane orthogonal to $\ones = (1, \dotsc, 1)$. According to Section \ref{sec:moment}, for $v \in U$ we have
\[
G(v) \propto p_3(v).
\]
Using a more convenient but equivalent constant, we want to enumerate the local maxima of the problem
\begin{equation}\label{equ:opt}
\begin{aligned}
\max \frac{1}{3} p_3(v)& \\
\text{s.t.} \quad v \cdot v &= 1 \\
v \cdot \ones &= 0 \\
v &\in \RR^{\dim+1}.
\end{aligned}
\end{equation}
The Lagrangian function is
\[
L(v, \lambda_1, \lambda_2) = \frac{1}{3} \sum_i v_i^3 - \lambda_1 \sum_i v_i - \lambda_2\frac{1}{2} \left(\biggl(\sum_i v_i^2 \biggr) -1 \right).
\]
The first order condition is $\nabla_v L = 0$, that is,
\begin{equation}\label{equ:foc}
v_i^2 = \lambda_1 + \lambda_2 v_i \quad \text{for $i= 1, \dotsc, \dim+1$.}
\end{equation}
Consider this system of equations on $v$ for any fixed $\lambda_1$, $\lambda_2$. Let $f(x) = x^2$, $g(x)= \lambda_1 + \lambda_2 x$. The first order condition says $f(v_i) = g(v_i)$, where $f$ is convex and $g$ is affine. That is, the $v_i$s can take at most two different values. As our optimization problem \eqref{equ:opt} is symmetric under permutation of the coordinates, we conclude that, after putting the coordinates of a point $v$ in non-increasing order, if $v$ is a local maximum of \eqref{equ:opt}, then $v$ must be of the form
\[
v = (a, \dotsc, a, b, \dotsc, b),
\]
where $a>0>b$ and there are exactly $\alpha$ $a$s and $\beta$ $b$s, for $\alpha, \beta \in \{1, \dotsc, \dim\}$.

We will now study the second order necessary condition (SONC) to eliminate from the list of candidates all vectors with $\alpha >1$. It is easy to see that the surviving vectors are exactly the promised scaled projections of the canonical vectors. This vectors must all be local and global maxima: At least one of them must be a global maximum as we are maximizing a continuous function over a compact set and all of them have the same objective value so all of them are local and global maxima.

The SONC at $v$ asks for the Hessian of the Lagrangian to be negative semidefinite when restricted to the tangent space to the constraint set at $v$ \cite[Section 11.5]{MR2423726}.
We compute the Hessian (recall that $v^{(2)}$ is the vector of the squared coordinates of $v$):
\[
\nabla_v L = v^{(2)} - \lambda_1 \ones - \lambda_2 v
\]
\[
\nabla^2_v L = 2 \diag(v) - \lambda_2 I
\]
where $\diag(v)$ is the $(\dim + 1)$-by-$(\dim+1)$ matrix having the entries of $v$ in the diagonal and 0 elsewhere.

A vector in the tangent space is any $z \in \RR^{\dim+1}$ such that $z \cdot \ones = 0$, $v \cdot z = 0$, and definiteness of the Hessian is determined by the sign of $z^T \nabla^2_v L z$ for any such $z$, where
\[
z^T \nabla^2_v L z = \sum_{i=1}^{\dim+1} z_i^2 (2 v_i - \lambda_2).
\]
Suppose $v$ is a critical point with $\alpha \geq 2$. To see that such a $v$ cannot be a local maximum, it is enough to show $2a > \lambda_2$, as in that case we can take $z = (1, -1, 0,\dotsc, 0)$ to make the second derivative of $L$ positive in the direction $z$.

In terms of $\alpha, \beta, a, b$, the constraints of \eqref{equ:opt} are $\alpha a + \beta b = 0$, $\alpha a^2 + \beta b^2 = 1$, and this implies $a = \sqrt{\frac{\beta}{\alpha (\dim+1) }}$, $b = - \sqrt{\frac{\alpha}{\beta (\dim+1) }}$. The inner product between the first order condition \eqref{equ:foc} and $v$ implies $\lambda_2 = \sum v_i^3 = \alpha a^3 + \beta b^3$. It is convenient to consider the change of variable $\gamma = \alpha/(\dim+1)$, as now candidate critical points are parameterized by certain discrete values of $\gamma$ in $(0,1)$. This gives $\beta = (1-\gamma)(\dim+1)$, $ a = \sqrt{(1-\gamma)/(\gamma(\dim+1))}$ and
\begin{align*}
\lambda_2 &= (\dim+1)\biggl[\gamma \left(\frac{1-\gamma}{\gamma (\dim+1)}\right)^{3/2} \\
&\qquad - (1-\gamma)\left(\frac{\gamma}{(1-\gamma) (\dim+1)}\right)^{3/2}\biggr] \\
&= \frac{1}{\sqrt{(\dim+1) \gamma (1-\gamma)}} \left[(1-\gamma)^2 - \gamma^2\right] \\
&= \frac{1}{\sqrt{(\dim+1) \gamma (1-\gamma)}} [1- 2\gamma].
\end{align*}
This implies
\begin{align*}
2 a - \lambda_2 &= \frac{1}{\sqrt{(\dim+1) \gamma (1-\gamma)}} [2(1-\gamma) -1 + 2\gamma] \\
    &= \frac{1}{\sqrt{(\dim+1) \gamma (1-\gamma)}}.
\end{align*}
In $(0,1)$, the function given by $\gamma \mapsto 2a-\lambda_2 = \frac{1}{\sqrt{(\dim+1) \gamma (1-\gamma)}}$ is convex and symmetric around $1/2$, where it attains its global minimum value, $2/ \sqrt{\dim+1}$, which is positive.
\end{proof}

\section{Probabilistic results used}\label{sec:probabilistic-results}

%We now proceed to gather some relationships between various transformations of random vectors in the standard simplex and, later, $B_p^{\dimension}$. 
In this section we show the probabilistic results underlying the reductions from learning simplices and $\ell_p^n$ balls to ICA. The results are Theorems \ref{thm:simplexscaling} and \ref{thm:lpscaling}. They each show a simple non-linear rescaling of the respective uniform distributions that gives a distribution with independent components (Definition \ref{def:ic}).
%, so that ICA can be used to learn those components.

Theorem \ref{thm:simplexscaling} below gives us, in a sense, a ``reversal'' of the representation of the cone measure on $\partial B_p^n$, seen in Theorem \ref{thm:cone}. Given any random point in the standard simplex, we can apply a simple non-linear scaling and recover a distribution with independent components.  %% This idea is advanced with the goal of making our samples feasible for an ICA-based solution to the general problem of learning the $\ell_p^n$ ball. 
%This allows us to apply ICA on the new samples. 

\begin{definition}\label{def:ic}
We say that a random vector $X$ has \emph{independent components} if it is an affine transformation of a random vector having independent coordinates.
\end{definition}

\begin{theorem}\label{thm:simplexscaling} %\label{lemma:reverseschechtman}
Let $X$ be a uniformly random vector in the $(\dimension-1)$-dimensional standard simplex $\Delta_{n-1}$. Let $T$ be a random scalar distributed as $\mathrm{Gamma}(n, 1)$. Then the coordinates of $T X$ are iid as $\expdist{1}$.

Moreover, if $A:\mathbb{R}^{\dimension} \rightarrow \mathbb{R}^{\dimension}$ is an invertible linear transformation, then the random vector $TA(X)$ has independent components.
\end{theorem}
\begin{proof}
In the case where $p = 1$, Theorem \ref{thm:cone} restricted to the positive orthant implies that for random vector $G = (G_1, \dotsc, G_{\dimension})$, if each $G_i$ is an iid exponential random variable $\expdist{1}$, then $( G/\norm{G}_1, \norm{G}_1)$ has the same (joint) distribution as $(X,T)$. Given the measurable function $f(x,t) = xt$, $f(X,T)$ has the same distribution as $f( G/\norm{G}_1, \norm{G}_1)$. That is, $XT$ and $G$ have the same distribution\footnote{See \cite[Theorem 1.1]{MR1456629} for a similar argument in this context.}.

%Since $f$ is an affine map, and we already know that the coordinates of $XT$ will be independent, it is obvious that the coordinates of $f(XT)$ will also be independent in the linearly transformed simplex.
For the second part, we know $T A(X) = A(TX)$ by linearity. By the previous argument the coordinates of $TX$ are independent. This implies that $A(TX)$ has independent components.
\end{proof}

The next lemma complements the main result in \cite{barthe2005probabilistic}, Theorem 1 (Theorem \ref{theorem:fullnaor} elsewhere here). They show a representation of the uniform distribution in $B_p^n$, but they do not state the independence that we need for our reduction to ICA.
%They state independence for the sphere. However, their main result \cite[Theorem 1]{barthe2005probabilistic}, gives only the distribution, and the independence claim (next) does not follow from it explicitly.
\begin{lemma} \label{lemma:independence} %[independence]
Let $p \in [1, \infty)$. Let $G=(G_1, \dotsc, G_n)$ be iid random variables with density proportional to $\exp(-\abs{t}^p)$. Let $W$ be a nonnegative random variable with distribution $\expdist{1}$ and independent of $G$. Then the random vector
\[
\frac{G}{(\norm{G}_p^p + W)^{1/p}}
\]
is independent of $(\norm{G}_p^p + W)^{1/p}$.
\end{lemma}

\begin{proofidea}
We aim to compute the joint density, showing that it is a product of individual densities. To avoid complication, we raise everything to the $p$th power, which eliminates extensive use of the chain rule involved in the change of variables. 
%The resulting joint density with the extra random variable is analogous to the result in \cite{schechtman1990volume}.
\end{proofidea}

\begin{proof}
It is enough to show the claim conditioning on the orthant in which $G$ falls, and by symmetry it is enough to prove it for the positive orthant. Let random variable $H = (G_1^p, G_2^p, \dotsc, G_n^p)$.
Since raising (strictly) positive numbers to the $p$th power is injective, it suffices to show that the random vector 
\[
X = \frac{H}{\sum_{i=1}^n{H}_i + W}
\]
is independent of the random vector $Y = \sum_{i=1}^n{H}_i + W$. 
%Note that the transformations $h$ and $w$ are injective in the positive and negative orthants and away from the origin. This means if $X$ and $Y$ are independent, $h(X,Y)$ and $w(X,Y)$ will be as well. 
 
First, let $U$ be the interior of the support of $(X,Y)$, that is $U = \{ x \in \RR^{n} : x_i > 0, \sum_i x_i < 1 \}\times \{y \in \RR :y>0\}$  and consider $h: U \rightarrow \RR^{\dimension}$ and $w: U \rightarrow \RR$ where
% T = (h,w)
\[
h(x,y) = x y
\] and 
\[
w(x,y) = y - \sum_{i=1}^n h(x,y)_i = y - \sum_{i=1}^n x_i \cdot y = y\left(1 - \sum_{i=1}^n x_i\right).
\]
The random vector $(H,W)$ has a density $\density_{H,W}$ supported on $V  = \operatorname{int} \RR^{n+1}_+$ and 
$$(x,y) \mapsto (h(x,y), w(x,y))$$
is one-to-one from $U$ onto $V$. 
Let $J(x,y)$ be the determinant of its Jacobian. This Jacobian is
%To compute a change of variables, we write the Jacobian of $(x,y) \mapsto (h(x,y),w(x,y))$ as %$f_{h,w}(h(x,y), w(x,y))$ as:
\[
\begin{pmatrix}
y & 0 & \cdots & 0 & x_1 \\
0 & y & \cdots & 0 & x_2 \\
\vdots & & & & \vdots \\
0 & 0 & \cdots & y & x_n \\
-y & -y & \cdots & -y & 1 - \sum_{i=1}^n x_i \\
\end{pmatrix}
\]
which, by adding each of the first $n$ rows to the last row, reduces to
\[
\begin{pmatrix}
y & 0 & \cdots & 0 & x_1 \\
0 & y & \cdots & 0 & x_2 \\
\vdots & & & & \vdots \\
0 & 0 & \cdots & y & x_n \\
0 & 0 & \cdots & 0 & 1 \\
\end{pmatrix},
\]
the determinant of which is trivially $J(x,y) = y^n$.

We have that $J(x,y)$ is nonzero in $U$. Thus, $(X,Y)$ has density $\density_{X,Y}$ supported on $U$ given by
\begin{align*}
\density_{X,Y}(x,y) &= f_{H,W}\bigl(h(x,y),w(x,y)\bigr) \cdot \abs{J(x)}.
\end{align*}
%For the joint density of $X$ and $Y$ to exist at all, note that the transformation
%$$(x,y) \mapsto (h(x,y), w(x,y))$$
%is one-to-one from $S \times \RR$ onto $\RR^{n+1}$ away from the origin.

It is easy to see\footnote{See for example \cite[proof of Theorem 3]{barthe2005probabilistic}.} that each $H_i = G_i^p$ has density $\mathrm{Gamma}(1/p, 1)$ and thus $\sum_{i=1}^n H_i$ has density $\mathrm{Gamma}(n/p,1)$ by the additivity of the Gamma distribution.  We then compute the joint density
\begin{align*}
\density_{X,Y}(x,y) &= \density_{H,W}\bigl(h(x,y),w(x,y)\bigr) \cdot y^n \\
&= \density_{H,W}\Bigl(xy, y(1 - \sum\limits_{i=1}^n x_i)\Bigr) \cdot y^n.
\end{align*}
Since $W$ is independent of $H$,
\begin{align*}
\density_{X,Y}(x,y) &=  \density_{W}\bigg(y\Big(1 - \sum\limits_{i=1}^n x_i\Big)\bigg) \cdot y^n \prod_{i=1}^n \density_{H_i}(x_iy) 
\end{align*}
where
\begin{align*}
\prod_{i=1}^{\dimension}\Big(\density_{H_i}(x_iy)\Big) \cdot \density_{W}\bigg(y\Big(1 - \sum\limits_{i=1}^n x_i\Big)\bigg) \cdot y^n 
&\propto \prod_{i=1}^n \left[e^{-x_iy}(x_iy)^{\frac{1}{p} - 1}\right] \exp\left(-y(1-\sum\limits_{i=1}^n x_i)\right) y^n \\
% &\propto \prod_{i=1}^n[x_i]^{\frac{1}{p} - 1} \exp\left(y-\sum\limits_{i=1}^n x_i\right)y^{n\left(\frac{1}{p} - 1\right)}\exp\left(-y + y\sum\limits_{i=1}^nx_i\right)y^n \\
&\propto \biggl(\prod_{i=1}^n x_i^{\frac{1}{p} - 1}\biggr) y^{n/p}.
\end{align*}
%Thus the joint density is a constant factor of $\prod_{i=1}^n[x_i]y^{1/p}$, the product of independent functions of $x$ and $y$ respectively. 
The result follows.
\end{proof}

With this in mind, we show now our analog of Theorem \ref{thm:simplexscaling} for $B_p^{\dimension}$.

\begin{theorem}\label{thm:lpscaling}%\label{lemma:reverse}  %[reverse]
Let $X$ be a uniformly random vector in $B_p^{\dimension}$. Let $T$ be a random scalar distributed as $\mathrm{Gamma}((n/p)+1,1)$. Then the coordinates of $T^{1/p} X$ are iid, each with density proportional to $\exp(-\abs{t}^p)$.
Moreover, if $A: \RR^{\dimension} \rightarrow \RR^{\dimension}$ is an invertible linear transformation, then the random vector given by $T^{1/p}A(X)$ has independent components.
\end{theorem}

\begin{proof}
Let $G = (G_1, \dotsc, G_{\dimension})$ where each $G_i$ is  iid as $\mathrm{Gamma}(1/p, 1)$. Also, let $W$ be an independent random variable distributed as $\expdist{1}$. Let $S = \bigl(\sum_{i=1}^n |G_i|^p + W \bigr)^{1/p}$. 

By Lemma \ref{lemma:independence} and Theorem \ref{theorem:fullnaor} we know $(G/S,S)$ has the same joint distribution as $(X,T^{1/p})$. Then for the measurable function $f(x,t) = xt$, we immediately have $f(X,T^{1/p})$ has the same distribution as $f(G/S, S)$ and thus $XT^{1/p}$ has the same distribution as $G$. 

For the second part, since $T$ is a scalar, we have $T^{1/p}A(X) = A(T^{1/p}X)$. By the previous argument we have that the coordinates of $T^{1/p}X$ are independent. Thus, $A(T^{1/p}X)$ has independent components. 
\end{proof}

%\begin{theorem}\label{thm:lpscaling}
%Let $X$ be a uniformly distributed random vector in $B_p^{\dimension}$ and let $A: \RR^{\dimension} \rightarrow \RR^{\dimension}$ be an invertible linear transformation. Let $T$ be a random scalar distributed as $\mathrm{Gamma}(n/p + 1, 1)$. Then the random vector given by $T^{1/p}A(X)$ has independent components.
%\end{theorem}
%
%\begin{proof}
%Since $T$ is a scalar, we have $T^{1/p}A(X) = A(T^{1/p}X)$. By Lemma \ref{lemma:reverse} we have that the coordinates of $T^{1/p}X$ are independent. Thus, $A(T^{1/p}X)$ has independent components. 
%\end{proof}

This result shows that one can obtain a vector with independent components from a sample in a linearly transformed $\ell_p$ ball.
In Section \ref{sec:simplex-reduction} we show that they are related in such as way that one can recover the linear transformation from the independent components via ICA.

%even when the sample does not necessarily have independent components. Moreover, as required by the reduction in section \ref{sec:reduction}, the sample obtained in this way can be used by ICA to learn the original components. 

\section{Reducing simplex learning to ICA}\label{sec:simplex-reduction}

We now show randomized reductions from the following two natural statistical estimation problems to ICA:

\begin{problem}[simplex] 
Given uniformly random points from an $n$-dimensional simplex, estimate the simplex.
\end{problem} 

This is the same problem of learning a simplex as in the rest of the section, we just restate it here for clarity.

To simplify the presentation for the second problem, we ignore the estimation of the mean of an affinely transformed distribution. That is, we assume that the $\ell_p^n$ ball to be learned has only been \emph{linearly} transformed.

\begin{problem}[linearly transformed $\ell_p^n$ balls]
Given uniformly random points from a linear transformation of the  $\ell_p^n$-ball, estimate the linear transformation.
\end{problem}

These problems do not have an obvious independence structure. Nevertheless, known representations of the uniform measure in an $\ell_p^n$ ball and the \emph{cone measure} (defined in Section \ref{sec:simplex-preliminaries}) on the surface of an $\ell_p^n$ ball can be slightly extended to map a sample from those distributions into a sample with independent components by a non-linear scaling step.
The use of a non-linear scaling step to turn a distribution into one having independent components has been done before \cite{MR2756189, DBLP:conf/nips/SinzB08}, but there it is applied \emph{after} finding a transformation that makes the distribution axis-aligned. This alignment is attempted using ICA (or variations of PCA) on the original distribution \cite{MR2756189, DBLP:conf/nips/SinzB08}, without independent components, and therefore the use of ICA is somewhat heuristic. One of the contributions of our reduction is that the rescaling we apply is ``blind'', namely, it can be applied to the original distribution. In fact, the distribution does not even need to be isotropic (``whitened''). The distribution resulting from the reduction has independent components and therefore the use of ICA on it is well justified.

%The problem of learning affine transformations of distributions with independent components is discussed in [fjk, comon, cardoso solum...,hyvarinen]. Learning intersection of halfpaces xxx... . A complete algorithm for learning a simplex (Problem 1) is given in [goyal rademacher xxx]. On the other hand, to the best of our knowledge, the problem of learning affine transformations of $\ell_p$ balls has not been studied 
The reductions are given in Algorithms \ref{alg:simplexreduction} and \ref{alg:lpreduction}. To state the reductions, we denote by $ICA(s(1), s(2), \ldots)$ the invocation of an ICA routine. 
It takes samples $s(1), s(2), \ldots$ of a random vector $Y=AX + \mu$, where the coordinates of $X$ are independent, and returns an approximation to a square matrix $M$ such that $M(Y-\e(Y))$ is isotropic and has independent coordinates. 
The theory of ICA \cite[Theorem 11]{comon1994independent} implies that if $X$ is isotropic and at most one coordinate is distributed as a Gaussian, then such an $M$ exists and it satisfies $M A = DP$, where $P$ is a permutation matrix and $D$ is a diagonal matrix with diagonal entries in $\{-1, 1\}$. We thus need the following definition to state our reduction: Let $c_{p,n} = (\e_{X \in B_p^n}(X_1^2))^{1/2}$. That is, the uniform distribution in $B_p^n/c_{p,n}$ is isotropic.

As we do not state a full analysis of any particular ICA routine, we do not state explicit approximation guarantees.

\begin{algorithm}
\caption{Reduction from Problem 1 to ICA}\label{alg:simplexreduction}

\begin{algorithmic}
\State Input: A uniformly random sample $p(1), \dotsc, p(t)$ from an $n$-dimensional simplex $S$.
\State Output: Vectors $\tilde v(1), \dotsc, \tilde v(n+1)$ such that their convex hull is close to $S$.
\vspace{.2em}
\hrule
\vspace{.2em}
%Estimate the mean:
%\[
%\mu = \frac{1}{t_1} \sum_i p(i),
%\]
\State Embed the sample in $\RR^{n+1}$: Let $p'(i) = (p(i), 1)$.

\State For every $i = 1, \dotsc, t$, generate a random scalar $T(i)$ distributed as $\mathrm{Gamma}(n+1, 1)$. Let $q(i) = p'(i) T(i)$.
\State Invoke $ICA(q(1), \dotsc, q(t))$ to obtain a approximately separating matrix $\tilde M$.
\State Compute the inverse of $\tilde M$ and multiply every column by the sign of its last entry to get a matrix $\tilde A$.
\State Remove the last row of $\tilde A$ and return the columns of the resulting matrix as $\tilde v(1), \dotsc, \tilde v(n+1)$.
%xxx how to get back the vertices or A.
\end{algorithmic}
\end{algorithm}
Algorithm \ref{alg:simplexreduction} works as follows: Let $X$ be an $(n+1)$-dimensional random vector with iid coordinates distributed as $\expdist{1}$. Let $V$ be the matrix having columns $(v(i),1)$ for $i=1, \dotsc, n+1$. Let $Y$ be random according to the distribution that results from scaling in the algorithm. Theorem \ref{thm:simplexscaling} implies that $Y$ and $VX$ have the same distribution. Also, $X-\ones$ is isotropic and $Y$ and $V(X-\ones) + V \ones$ have the same distribution. Thus, the discussion about ICA earlier in this section gives that the only separating matrices $M$ are such that $M V = D P$ where $P$ is a permutation matrix and $D$ is a diagonal matrix with diagonal entries in $\{-1, 1\}$. That is, $V P^T = M^{-1} D$. As the last row of $V$ is all ones, the sign change step in Algorithm \ref{alg:simplexreduction} undoes the effect of $D$ and recovers the correct orientation.
 
\begin{algorithm}
\caption{Reduction from Problem 2 to ICA}\label{alg:lpreduction}
\begin{algorithmic}
\State Input: A uniformly random sample $p(1), \dotsc, p(t)$ from $A(B_p^n)$ for a known parameter $p \in [1, \infty)$, where $A:\RR^n \to \RR^n$ is an unknown invertible linear transformation.
\State Output: A matrix $\tilde A$ such that $\tilde A B_p^n$ is close to $A(B_p^n)$.
\vspace{.2em}
\hrule
\vspace{.2em}

%Estimate the mean:
%\[
%\mu = \frac{1}{t_1} \sum_i p(i),
%\]
\State For every $i = 1, \dotsc, t$, generate a random scalar $T(i)$ distributed as $\mathrm{Gamma}((n/p)+1,1)$. Let $q(i) = p(i)  T(i)^{1/p}$.
\State Invoke $ICA(q(1), \dotsc, q(t))$ to obtain an approximately separating matrix $\tilde M$. 
\State Output $\tilde A = c_{p,n}^{-1} \tilde M^{-1}$.
\end{algorithmic}
\end{algorithm}
%xxx <<<<<<< Joe
% 
%Theorems \ref{thm:simplexscaling} and \ref{thm:lpscaling} show that the invocations of $ICA(\cdot)$ in the reductions get samples from a distribution with independent components. Moreover, the independent components of the input vectors are precisely those from the original space. It is easy to see that for the case of a simplex, the independent components in the lifted space ($\RR^{n+1}$) are given by the vertices of the simplex. Similarly, for a linearly transformed $\ell_p^n$ ball, they are the transformed axes of the ball.  This is because the ICA algorithm will find some rotation of the axis, but they will still be a basis of the original ball. 
%
%======= Luis

Similarly, Algorithm \ref{alg:lpreduction} works as follows: Let $X$ be a random vector with iid coordinates, each with density proportional to $\exp(-\abs{t}^p)$.
Let $Y$ be random according to the distribution that results from scaling in the algorithm.
Theorem \ref{thm:lpscaling} implies that $Y$ and $AX$ have the same distribution.
Also, $X/c_{p,n}$ is isotropic and we have $Y$ and $Ac_{p,n}(X/c_{p,n})$ have the same distribution.
Thus, the discussion about ICA earlier in this section gives that the only separating matrices $M$ are such that $M A c_{p,n} = D P$ where $P$ is a permutation matrix and $D$ is a diagonal matrix with diagonal entries in $\{-1, 1\}$.
That is, $A P^{T} D^{-1} = c_{p,n}^{-1} M^{-1}$.
The fact that $B_p^n$ is symmetric with respect to coordinate permutations and sign changes implies that $A P^{T} D^{-1} B_p^n = A B_p^n$ and is the same as $c_{p,n}^{-1} M^{-1}$.
When $p \neq 2$, the assumptions in the discussion above about ICA are satisfied and Algorithm \ref{alg:lpreduction} is correct.
When $p = 2$, the distribution of the scaled sample is Gaussian and this introduces ambiguity with respect to rotations in the definition of $M$, but this ambiguity is no problem as it is counteracted by the fact that the $\ell_2$ ball is symmetric with respect to rotations.

%Theorem \ref{thm:lpscaling} shows that the invocation of $ICA(\cdot)$ in the reduction gets a sample from a distribution with independent components.  When $p \neq 2$, the assumptions in the discussion above about ICA are satisfied and Algorithm \ref{alg:lpreduction} is correct. When $p = 2$, the distribution of the sample is Gaussian and this introduces ambiguity with respect to rotations in the definition of $M$, but this ambiguity is no problem as it is counteracted by the fact that the $ell_2$ ball is symmetric with respect to rotations.

%It is easy to see that for the case of a simplex the independent components in the lifted space ($\RR^{n+1}$) are precisely given by the vertices of the simplex. Similarly, for a linearly transformed $l_p^n$ ball, they are the transformed axes of the ball.

\chapter{Learning Gaussian Mixture Models}\label{ch:gmm}

\iffalse
Most problems become harder in high dimension, often exponentially harder, a behavior known as ``the curse of dimensionality".
Showing that a complex problem does not become exponentially harder often constitutes major progress in its understanding.
In this chapter we demonstrate a reversal of this curse, showing that  the lower dimensional instances are exponentially harder than those in high dimension.
This seems to be a rare situation in statistical inference and computation.
In particular, while high-dimensional concentration of mass can sometimes be a blessing of dimensionality, in our case the generic computational efficiency of our problem comes from anti-concentration.
The key ingredient of this result is, again, a simple but non-trivial reduction to ICA.
We demonstrate that with some pre-processing, we can transform the observed data into samples which come from a distribution appropriate for ICA.
Furthermore, after we perform ICA on this new set of data, the result can be used to solve the original parameter estimation problem.
\fi

The question of  recovering a probability distribution from a finite set of samples is one of the most fundamental questions of statistical inference. While classically such problems have  been considered in 
low dimension, more recently inference in high dimension has drawn significant attention in statistics and computer science literature. 

In particular, an active line of investigation in theoretical computer science has dealt with the question of learning a Gaussian Mixture Model in high dimension. This line of work was started in~\cite{dasgupta99} where  the first algorithm to recover parameters using a number of samples polynomial in the dimension was presented. The method relied on random projections to a low dimensional space and required certain separation conditions for the means of the Gaussians.  Significant work was done in order to weaken the separation conditions and to generalize the  result (see e.g.,~\cite{dasgupta00,arora01,vempala02, achlioptas05,feldman06}). 
Much of this work has polynomial sample and time complexity but requires strong separation conditions on the Gaussian components. A completion of the attempts to weaken the separation conditions was achieved 
in~\cite{BelkinFOCS10} and~\cite{moitra10}, where it was shown that arbitrarily small separation was sufficient for learning a general mixture with a fixed number of components in polynomial time. Moreover, a one-dimensional example given in~\cite{moitra10} showed  that  an exponential dependence on the number of components was unavoidable unless strong separation requirements were imposed.  Thus the question of polynomial learnability appeared to be settled.  It is worth noting that  while quite different in many aspects, all of these  papers used a general scheme similar to that in the original work~\cite{dasgupta00} by reducing  high-dimensional inference to a small number of low-dimensional problems through appropriate projections.

However,  a surprising result was recently proved in~\cite{HsuK13}. The authors showed that a 
mixture of $d$ Gaussians
 in dimension $d$ 
%\vnote{$d$ Gaussians in dimension $d$ maybe?}\lnote{yes, $d$, changed it to $d$} 
could be learned using a polynomial number of samples, assuming a non-degeneracy condition 
on the configuration of the means.  The result in~\cite{HsuK13} is inherently high-dimensional as that condition is 
never satisfied when  the means belong to  a lower-dimensional space. 
Thus the problem of learning a mixture gets progressively  computationally easier as the dimension increases, a ``blessing of dimensionality!" It is important to note that this was quite different from much of the previous work, which had primarily used projections to lower-dimension spaces. 

Still, there remained a large gap between the worst case impossibility of efficiently learning more than a fixed number of Gaussians  in low dimension and the situation when the number of components is equal to the dimension. Moreover, it was not completely clear whether the underlying problem was  genuinely easier in high dimension or our algorithms in low dimension were suboptimal. The one-dimensional example  in~\cite{moitra10} cannot answer this question as it is a specific worst-case scenario, which can be potentially ruled out by some genericity condition. 

We take a step to eliminate this gap by showing that even very large mixtures of Gaussians can be polynomially learned. More precisely, we show that a mixture of $m$ Gaussians with equal known covariance can be polynomially learned as long as $m$ is bounded from above by a polynomial of the dimension $n$ and a  certain more complex non-degeneracy condition for the means is satisfied. We show that if $n$ is high enough, these non-degeneracy conditions are  generic in the smoothed complexity sense. Thus for any fixed $d$, $O(n^d)$ generic Gaussians can be polynomially learned in dimension $n$.%, as long as $n$ is large enough. 

Further, we prove that no such condition can exist in low dimension.  A  measure of non-degeneracy must be monotone in the sense that adding Gaussian components must make the condition number worse. 
%We show that the lack of polynomial identifiability is not just a worst case scenario as in~\cite{moitra10} but is generic.
However, we show that for $k^2$ points uniformly sampled from $[0,1]$ there are (with high probability) two mixtures of    unit Gaussians with means on non-intersecting subsets of these points, whose $L^1$ distance is $O^*(e^{-{k}})$ and which are thus not polynomially identifiable. More generally, in dimension $n$ the distance becomes  $O^*(\exp({-\sqrt[n]{k}}))$.  That is, the conditioning improves as the dimension increases, which is consistent with our algorithmic results.

%We show that for almost any configuration of the centers on an interval there exist two exponentially close mixtures of Gaussians with the same covariance, supported on non-overlapping subsets of the means (Theorem xxx).

To summarize, our contributions are as follows:

%\begin{enumerate}
%\item 
\noindent (1)
We show that for any $q$, a mixture of $n^q$ Gaussians in dimension $n$ can be learned in time and  number of samples polynomial in $n$ and a certain ``condition number" $\sigma$. 
For sufficiently high dimension, this results in an algorithm polynomial from the smoothed analysis point of view (Theorem \ref{thm:gmm-correctness}). To do that we provide smoothed analysis of the condition number using certain results from~\cite{RudelsonVershynin} and anti-concentration inequalities.  
    The main technical ingredient of the algorithm  is a new ``Poissonization" technique to reduce Gaussian mixture estimation to a problem of recovering a linear map of a product distribution known as underdetermined Independent Component Analysis (ICA). 
%which  is a well-studied problem in signal processing.\footnote{See Chapter 9 of \cite{ComonJutten} for a recent comprehensive account} 
We combine this with the recent work on efficient algorithms for underdetermined ICA from~\cite{GVX} to obtain the necessary bounds.

%\item 
\noindent (2)
We show that in low dimension polynomial identifiability fails in a certain generic sense (see Theorem~\ref{thm:low-dim-identifiability}).  Thus the efficiency of our main algorithm is truly a consequence of the ``blessing of dimensionality" and no comparable algorithm exists in low dimension. The analysis is based on  results from approximation theory and Reproducing Kernel Hilbert Spaces.

Moreover, we combine the approximation theory results with the Poissonization-based  technique to show how to embed difficult instances of low-dimensional Gaussian mixtures into the ICA setting, thus establishing exponential information-theoretic lower bounds for underdetermined ICA in low dimension. To the best of our knowledge, this is the first such result  in the literature.

%Building on that, we show how to use the Poissonization-based reduction from  GMMs to ICA 
%%in the reverse direction 
%to show that information-theoretic hardness of learning GMMs in 
%low dimensions implies information-theoretic ICA when the observation random variables are in small dimensions.
%\end{enumerate}

We discuss our main contributions more formally now.
%The notion of cumulant used in the theorem below is defined in Sec.~\ref{sec:Prelims}; so is the 
%notion of the Khatri--Rao power $A^{\odot d}$ of a matrix $A$. 
The notion of Khatri--Rao power $A^{\odot d}$ of a matrix $A$ is defined in Section~\ref{sec:gmm-preliminaries}.
\begin{theorem}[Learning a GMM with Known Identical Covariance] \label{thm:gmm-correctness}
Suppose $\means \geq \dim$ and
% \lnote{there was here an assumption that $m\geq n$. Is it used now?}
let $\epsilon, \delta > 0$.
Let $w_1 \Norm(\mu_1, \Sigma) + \dotsb + w_m \Norm(\mu_m, \Sigma)$ be an $\dim$-dimensional GMM, i.e.
$\mu_i \in \R^n$, $w_i > 0$, and $\Sigma \in \R^{n \times n}$. 
%% Let $\eta_i \sim \Norm(\mu_i, \Sigma)$ for $i = 1, 2, \dots, \means$ define an $\dim$-dimensional GMM.
% with $m$ components in which the means are $\mu_1, \mu_2, \dots, \mu_{\means}$.
Let $B$ be the $\dim \times \means$ matrix whose $i^{th}$ column is $\mu_i / \norm{\mu_i}$.
If there exists $d \in \N$ so that $\sigma_m\left(B^{\odot d} \right) > 0$, then Algorithm \ref{alg:reduction} recovers each $\mu_i$ to within $\epsilon$ accuracy with probability $1 - \delta$.
%using polynomial time and $N$ samples, where
Its sample and time complexity are at most
\begin{align*}
%N & \geq \poly\left(n^{2d+1}, m^{d^2},
%      (\tau \sigma)^{d^2}, u^{d^2}, w^{d^2},
%      (d+1)^{d^2}, \right. \notag \\
%      & \quad \quad \quad \quad \left.
%      r^{(d/2)(d+1)}, \tau, 1/\epsilon, 1/\delta \right)
%N & \leq 
\poly\left(m^{d^2},
      \sigma^{d^2}, u^{d^2}, w^{d^2},
%      d^{d^2}, r^{d^2}, 1/\epsilon, 1/\delta, 1/b, \log^{d^2}\left(\frac{1}{b \eps \delta}\right) \right)
      d^{d^2}, r^{d^2}, 1/\epsilon, 1/\delta, 1/b, \log^{d^2}\bigl({1}/{(b \eps \delta)}\bigr) \right)
\end{align*}
\iffalse
where 
% $\tau$ is a constant chosen by the algorithm,
$w \geq \max_{i}(w_i)/\min_{i}(w_i)$,
$u \geq \max_i \norm{\mu_i}$,
$r \geq \big(\max_i\norm{\mu_i} + 1)/(\min_i\norm{\mu_i})\big)$,
$0 < b \leq \sigma_m(B^{\odot d/2})$ (a lower bound on $\sigma_m(B^{\odot d/2})$ provided to the algorithm), 
and $\sigma = \sup_{v \in S^{n-1}} \sqrt{\Var(v^T\eta(1))}$, for $\eta(1) \sim \Norm(0, \Sigma)$ 
(the maximum directional standard deviation).
\fi
where 
% $\tau$ is a constant chosen by the algorithm,
$w \geq \max_{i}(w_i)/\min_{i}(w_i)$,
$u \geq \max_i \norm{\mu_i}$,
$r \geq \big(\max_i\norm{\mu_i} + 1)/(\min_i\norm{\mu_i})\big)$,
$0 < b \leq \sigma_m(B^{\odot d})$ are bounds provided to the algorithm, and
$\sigma = \sqrt{\lambda_{\max}(\Sigma)}$.
\end{theorem}

We note that the requirement that $m \geq n$ is due to the invocation of Theorem 1.3 of \cite{GVX}; it should not be difficult, however, to adapt the algorithm to use a method similar to that of \cite{HsuK13} to handle the case where $m < n$.

%% The only previous work we are aware of in this regime, namely GMMs with more components 
%% than the dimensions along with some genericity condition, is the recent work\processifversion{vstd}{ of 
%% Bhaskara et al}~\cite{bhaskara2013uniqueness}. They show that in dimension $n$, when the 
%% number of components
%% (with identical known covariance matrices) is $O(n)$, and a certain genericity condition is 
%% satisfied, the GMM is identifiable using $\text{poly}(n)$ samples. That work does not provide an efficient
%% algorithm to learn the GMM.

Given that the means have been estimated, the weights can be recovered using the tensor structure of higher order cumulants (see Section~\ref{sec:gmm-preliminaries} for the definition of cumulants). 
This is shown in Appendix~\ref{sec:WeightRecovery}.

We show that $\sigma_{\min}(A^{\odot d})$ is large in the smoothed analysis sense, namely, if we 
start with a base matrix $A$ and perturb each entry randomly to get $\tilde A$, then
$\sigma_{\min}(\tilde A^{\odot d})$ is likely to be large:

\begin{theorem}\label{thm:smoothed-sigma-min-intro}
For $n > 1$, let $M \in \R^{n \times \binom{n}{2}}$ be an arbitrary matrix. Let 
$N \in \R^{n \times \binom{n}{2}}$ 
be a randomly sampled matrix with each entry iid from $\Normal(0, \sigma^2)$, for 
$\sigma > 0$. Then, for some absolute constant $C$,
\begin{vstd}
\begin{align*}
\prob{\sigma_{\min}((M+N)^{\odot 2}) \leq {\sigma^2}/{n^7}} \leq {2C}/{n}.
\end{align*}
\end{vstd}
\end{theorem}

%With Theorem \ref{thm:smoothed-sigma-min-intro} in hand, a weaker (the case where $d = 2$), but more intuitive version of Theorem \ref{thm:correctness} can be seen as follows.
\iffalse
With Theorem \ref{thm:smoothed-sigma-min-intro} in hand, we state for clarity a simpler to interpret version of Theorem \ref{thm:gmm-correctness} for the case $d=2$ and with a guarantee in the smoothed sense:
\begin{corollary}
Suppose $\means = \binom{\dim}{2}$ and
% \lnote{there was here an assumption that $m\geq n$. Is it used now?}
let $\epsilon, \sigma > 0$.
For $i = 1, \dots, \means$, let $\eta_i \in \R^\dim$ be a random vector with each coordinate independently distributed as $\Norm(0, \sigma^2)$.
Let $B$ be the $\dim \times \means$ matrix whose $i^{th}$ column is $(\mu_i + \eta_i) / \norm{\mu_i + \eta_i}$.
Let $\mathcal{D}$ be an $\dim$-dimensional GMM with $\means$ components whose $i^{th}$ component is the Gaussian $\Norm(B_i, \Sigma)$ with weight $w_i$, i.e., $\mathcal{D}$ is a GMM whose means are perturbed.
% Let $w_1 \Norm(\mu_1, \Sigma) + \ldots + w_m \Norm(\mu_m, \Sigma)$ be an $\dim$-dimensional GMM, i.e.
% $\mu_i \in \R^n$, $w_i > 0$, and $\Sigma \in \R^{n \times n}$.
%% Let $\eta_i \sim \Norm(\mu_i, \Sigma)$ for $i = 1, 2, \dots, \means$ define an $\dim$-dimensional GMM.
% with $m$ components in which the means are $\mu_1, \mu_2, \dots, \mu_{\means}$.
Suppose that Algorithm~\ref{alg:reduction} is given values 
$w \geq \max_{i}(w_i)/\min_{i}(w_i)$,
$u \geq \max_i \norm{B_i}$,
$r \geq \big(\max_i\norm{B_i} + 1)/(\min_i\norm{B_i})\big)$, 
and fixed values
$b = \sigma^2 / n^7$, 
$d=2$,
and $\delta = 1/n$.
Then Algorithm~\ref{alg:reduction} recovers each $B_i$ to within $\epsilon$ accuracy with probability at least $1 - O(1/n)$.
%using polynomial time and $N$ samples, where
Its sample and time complexity are at most
\begin{align*}
\poly\left(m,
      u, w,
      r, \sqrt{\lambda_{\max}(\Sigma)}, 1/\epsilon \right).
\end{align*}
\end{corollary}
\fi

We point out the simultaneous and independent work of \cite{Bhaskara2013}, where the authors prove learnability results related to our Theorems~\ref{thm:gmm-correctness} and \ref{thm:smoothed-sigma-min-intro}. We now provide a comparison. The results in 
\cite{Bhaskara2013}, which are based on tensor decompositions,  are stronger in that they can learn mixtures of axis-aligned Gaussians (with
non-identical covariance matrices) without requiring to know the covariance matrices in advance. Their results hold
under a smoothed analysis setting similar to ours. To learn a mixture of roughly $n^{\ell/2}$ Gaussians up to an 
accuracy of $\epsilon$ their algorithm has running time and sample complexity 
$\mathrm{poly}_\ell(n, 1/\epsilon, 1/\rho)$ and succeeds with probability at least $1- \exp(-Cn^{1/3^\ell})$, 
where the means are perturbed by adding an $n$-dimensional Gaussian from $\Normal(0, I_n \rho^2/n )$. 
On the one hand, the success probability of their algorithm is much better (as a function of $n$, exponentially close to $1$ as opposed to polynomially close to $1$, as in our result). 
On the other hand, this comes at a high price in terms of the running time and sample complexity: The polynomial $\mathrm{poly}_\ell(n, 1/\epsilon, 1/\rho)$ above has degree
exponential in $\ell$, unlike the degree of our bound which is polynomial in $\ell$.
Thus, in this respect, the two results can be regarded as incomparable points on an error vs running time 
(and sample complexity) trade-off curve. 
%The results in \cite{DBLP:journals/corr/BhaskaraCMV13} are obtained
%from tensor decomposition algorithms. 
Our result is based on a reduction from learning GMMs to ICA  which could be of  independent interest given that both 
problems are extensively studied in somewhat disjoint communities.
Moreover, our analysis can be used in the reverse direction to obtain hardness results for ICA.

The technique of Poissonization is, to the best of our knowledge, new in the GMM setting, though we note that it has been previously applied in computational learning. For instance, in \cite{ValiantV13}, it has been used for the estimation of properties of discrete probability distributions such as the support size and the entropy.
% In addition, \cite{Chaudhuri_beyondgaussians:} give a method of learning heavy-tailed distributions which maps samples from a symmetric multivariate distribution to a new distribution over a higher-dimensional Hamming cube; this is done in a way which provides the conditions necessary for distance-based clustering algorithms.

Finally, in Section~\ref{identifiability_low_dimension} we show that in low dimension the situation is very different from the high-dimensional generic efficiency given by Theorems \ref{thm:gmm-correctness} and \ref{thm:smoothed-sigma-min-intro}: The problem is generically hard. More precisely, we show:
%\begin{theorem}\label{thm:low-dim-identifiability-intro}
%Let X be a set of $d^2$ points sampled from the uniform probability distribution on the cube $[0,1]^n \subset \R^n$. Then with probability nearly $1$ there exist two mixtures of unit Gaussians with the same number of components $p$ and $q$, centered at non-intersecting subsets of these points, 
% such that 
%\begin{equation*}
%\| p - q\|_{L^1} = O^*\left(e^{-d^{1/n}}\right)
%\end{equation*}
%where $O^*$ hides logarithmic factors. 
%\end{theorem}
\begin{theorem}\label{thm:low-dim-identifiability}
Let $X$ be a set of $4k^2$ points uniformly sampled from $[0,1]^n$. Then with high probability there exist two mixtures with equal number of unit Gaussians $p$, $q$ centered on disjoint subsets of $X$, such that, for some $C>0$,
%$$
%\|p-q\|_{L^1(\R^n)} < \exp\left(-C \left(\frac{k}{\log k}\right)^{\frac{1}{n}} \right)
%$$
\begin{vstd}
\[
%\|p-q\|_{L^1(\R^n)} < e^{\left(-C \left(\frac{k}{\log k}\right)^{\frac{1}{n}} \right)}
\|p-q\|_{L^1(\R^n)} < e^{-C \left({k}/{\log k}\right)^{{1}/{n}} }.
\]
\end{vstd}
\end{theorem}

Here we would like to note that the assumption  that $X$ is random is convenient as it provides a natural model for {\it genericity}, guarantees (with high probability) small {\it fill}  (Section~\ref{identifiability_low_dimension})  and also ensures that the means of the Gaussian mixture components of $p$ and $q$ are not too close. In particular, it is not difficult to verify that with high probability any pair of the random means  are at least $1 / \poly(k)$ separated. However the randomness assumption  is not essential. In fact, the above theorem will hold for an arbitrary set of points with sufficiently small fill (see Section~\ref{identifiability_low_dimension}).

%In particular, it is not difficult to show that for $\delta \in (0, \frac 1 2)$, then with probability at least $1-\delta$ the Gaussian means are $\delta / \poly(k)$ separated.
%More specifically, any two means are at distance at least $1/\poly(n,k)$ with probability $1-exp(-\Omega(n))$.

Combining the above lower bound with our reduction provides a similar lower bound for ICA; 
%see the end of Sec.~\ref{identifiability_low_dimension}. 
see a discussion on the connection with ICA below.
Our lower bound gives an information-theoretic barrier. This is in contrast to conjectured computational 
barriers that arise in related settings based on the noisy parity problem (see \cite{HsuK13} for pointers).
The only previous information-theoretic lower bound for learning GMMs we are aware of is due to \cite{moitra10}
and holds for two specially designed one-dimensional mixtures. 

\section{Preliminaries}
\label{sec:gmm-preliminaries}
The singular values of a 
matrix $A \in \R^{m \times n}$ will be ordered in the decreasing order: $\sigma_1 \geq
\sigma_2 \geq \dotsb \geq \sigma_{\min(m, n)}$. By $\sigma_{\min}(A)$ we mean $\sigma_{\min(m,n)}$. 

For a real-valued random variable $X$, the \emph{cumulants} of
$X$ are certain polynomials in the moments of $X$. For $j \geq 1$, the $j$th
cumulant is denoted $\cum_j(X)$. Denoting $\m_j := \E{X^j}$, we have, for
example: $\cum_1(X) = \m_1,\ \cum_2(X) = \m_2 - \m_1^2,$ and $\cum_3(X) =
\m_3 - 3 \m_2\m_1 + 2\m_1^3$. In general, cumulants can be defined 
as certain coefficients of a Taylor expansion of the logarithm of the moment generating function of $X$:
$% \begin{align*}
%\log(\mathbb{E}_X(e^{itX})) % = \sum_{i=1}^{\infty}\cum_i(X) \frac{t^i}{i!}. 
\log(\mathbb{E}_X(e^{tX})) = \sum_{j=1}^{\infty}\cum_j(X) \frac{t^j}{j!} 
$. % \end{align*}
The first two cumulants are the same as the expectation and the
variance, resp. Cumulants have the property that for two independent
random variables $X, Y$ we have $\cum_j(X+Y) = \cum_j(X) + \cum_j(Y)$ (assuming
that the first $j$ moments exist for both $X$ and $Y$).  Cumulants are degree-$j$ homogeneous, i.e.~if $\alpha \in \R$ and $X$ is a random variable, then
$\cum_j(\alpha X) = \alpha^j\cum_j(X)$.  The third and higher
cumulants of the Gaussian distribution are 0.
%and all subsequent Gaussian cumulants have value $0$. 

\subsection{Gaussian Mixture Model.}

% \begin{definition}[GMM]
% A Gaussian Mixture Model (GMM) is defined by random variables $X_i \in \R^{\dim}$, $i = 1, 2, \dots, m$, where each $X_i$ is distributed according to $\Normal(\mu_i, \sigma_i I_{\dim})$ and each $X_i$ is assigned a weight $w_i > 0$ so that $\sum_i w_i = 1$.
% A random variable $X$ distributed according to a GMM is sampled from $X_i$ with probability $w_i$. That is, $X$ gives a sample in $\R^{\dim}$ through the $X_i$ where $P(X = X_i) = w_i$ and $X$ can only take values from the $X_i$ (xxx redundant information? spherical xxx).
% \end{definition}

For $i = 1, 2, \dots, m$, define Gaussian random vectors
$\eta_i \in \R^n$ with distribution $\eta_i \sim \Normal(\mu_i, \Sigma_i)$ where
$\mu_i \in \R^{n}$ and $\Sigma_i\in \R^{n\times n}$.  Let
$h$ be an integer-valued random variable which takes on value $i \in [m]$ with probability 
$w_i > 0$, henceforth called weights. (Hence $\sum_{i=1}^m w_i = 1$.)  Then, the random vector drawn as $Z = \eta_h$ is said
to be a Gaussian Mixture Model (GMM) 
$w_1 \Normal(\mu_1, \Sigma_1) + \ldots + w_m \Normal(\mu_m, \Sigma_m)$. 
The sampling of $Z$ can be interpreted as first picking
one of the components $i \in [m]$ according to the weights, and then sampling a Gaussian vector from 
component $i$.  We will be primarily interested in the mixture of
identical Gaussians of known covariance.  In particular, there exists known $\Sigma \in \R^{n\times n}$ such that $\Sigma_i = \Sigma$ for each $i$.
Letting $\eta \sim \Normal(0, \Sigma)$, and denoting by $\mathbf{e}_h$ the 
random variable which takes on the $i$\textsuperscript{th} canonical vector
$\mathbf{e}_i$ with probability $w_i$, we can write the GMM model as follows:
\begin{equation} \label{eq:GMM_Model}
  Z = [\mu_1 | \mu_2 | \cdots | \mu_m] \mathbf{e}_h + \eta \ .
\end{equation}
In this formulation, $\mathbf{e}_h$ acts as a selector of a Gaussian mean.  Conditioning on $h=i$, we have $Z \sim \Normal(\mu_i, \Sigma)$, which is consistent with the GMM model.

Given samples from the GMM, the goal is to recover the unknown parameters of the GMM, namely the means $\mu_1, \dots, \mu_m$ and the weights $w_1, \dots, w_m$.
%% Our method involves reducing the Gaussian Mixture Model to the problem of underdetermined Independent Component Analysis (ICA) and using an existing polynomial ICA algorithm to recover the means.
%% We also provide an identifiability argument for the Gaussian weights  using
%% cumulant-based techniques related to underdetermined ICA.  

\subsection{Underdetermined ICA.}
In the basic formulation of ICA, the observed random variable $X \in \R^n$ is drawn according to the model $X = AS$, where $S \in \R^m$ is a latent random vector whose components $S_i$ are independent random variables, and $A \in \R^{n \times m}$ is an unknown \textit{mixing matrix}.
The probability distributions of the $S_i$ are unknown except that they are not Gaussian. 
The ICA problem is to
recover $A$ to the extent possible. The underdetermined ICA problem corresponds the case $m \geq n$.
% xxx TODO:  The following lines which are commented out probably belong in the introduction rather than here.  xxx
% ICA and in particular underdetermined ICA is a well studied problem in the ICA community (see \cite{ComonJutten} for a recent comprehensive review), although it has lacked theoretical CS style rigorous analysis. 
% A recent paper \cite{GVX} provides an algorithm that is applicable under
% certain quite general nondegeneracy conditions on the data along with a rigorous analysis.
We cannot hope to recover $A$ fully because if we flip the sign of the $i$\textsuperscript{th} column of $A$, or scale this column by some
nonzero factor, then the resulting mixing matrix with an appropriately scaled $S_i$ will again generate the same distribution on $X$ as before.  
There is an additional ambiguity that arises from not having an ordering on the coordinates $S_i$:
%% (e.g., if we interchange $S_i$ and $S_j$ and columns $A_i$ and $A_j$, the resulting distribution of $X$ remains unchanged).
If $P$ is a permutation matrix, then $PS$ gives a new random vector with independent reordered coordinates, $AP^T$ gives a new mixing matrix with reordered columns, and $X = AP^TPS$ provides the same samples as $X = AS$ since $P^T$ is the inverse of $P$.
As $AP^T$ is a permutation of the columns of $A$, this ambiguity implies that we cannot recover the order of the columns of $A$.
However, it turns out that under certain genericity requirements, we can recover $A$ up to these necessary ambiguities, that is to say we can recover the directions (up to sign) of the columns of $A$, even in the underdetermined setting. 

It will be important for us to work with an ICA model where there is Gaussian noise
in the data: 
%The earlier ICA model is now modified to
%\begin{align*}
$X = AS + \eta$,
%\end{align*}
where $\eta \sim \Normal(0, \Sigma)$ is an additive Gaussian noise independent of $S$, and the covariance of $\eta$ given by $\Sigma \in \R^{n \times n}$ is in general unknown and not necessarily spherical. 
We will refer to this model as the noisy ICA model.% 
% But it turns out that apart from these necessary ambiguities (and the permutation of the columns) we can recover
% $A$ efficiently provided that $A$ satisfies certain genericity requirements:

We define the flattening operation $\vec{\cdot}$ from a tensor to a vector in the natural way.  Namely, when $T \in \R^{n^\ell}$ is a tensor, then $\vec{T}_{\delta(i_1, \dotsc, i_\ell)} = T_{i_1, \dotsc, i_\ell}$ where $\delta(i_1, \dotsc, i_\ell) = 1 + \sum_{j=1}^\ell n^{\ell-j} (i_j - 1)$ is a bijection with indices $i_j$ running from $1$ to $n$.
Roughly speaking, each index is being converted into a digit in a base $n$ number up to the final offset by 1.
This is the same flattening that occurs to go from a tensor outer product of vectors to the Kronecker product of vectors.

%\paragraph{The Khatri-Rao Product.}
The ICA algorithm from \cite{GVX} to which we will be reducing learning a GMM relies on the shared tensor structure of the derivatives of the second characteristic function and the higher order multi-variate cumulants. This tensor structure motivates the following form of the Khatri-Rao product:
% (defined in Appendix \ref{app:Cumulants}).
% The following form of the Khatri-Rao product arises naturally in the ICA setting:
\begin{definition}
  Given matrices $A\in \R^{n_1 \times m}, B \in \R^{n_2\times m}$, a column-wise Khatri-Rao product is defined by $A \odot B := [\vec{A_1 \otimes B_1} | \cdots | \vec{A_m \otimes B_m}]$, where $A_i$ is the $i$\textsuperscript{th}
column of $A$, $B_i$ is the $i$\textsuperscript{th} column of $B$, $\otimes$ denotes the Kronecker product and 
$\vec{A_1 \otimes B_1}$ is flattening of the tensor $A_1 \otimes B_1$ into a vector.
  The related Khatri-Rao power is defined by $A^{\odot \ell} = A \odot \cdots \odot A$ ($\ell$ times).
\end{definition}
This form of the Khatri-Rao product arises when performing a change of coordinates under the ICA model using either higher order cumulants or higher order derivative tensors of the second characteristic function.
%For instance, if $X = AS$ gives an ICA model and $\vec{\cumtns{X}{}}$ denotes the flattening of $\cumtns{X}{}$ into a vector, then for order-$\ell$ cumulants it can be shown that $\vec{\cumtns{X}{}} = A^{\odot \ell} (\cum_\ell(S_1), \dotsc, \cum_\ell(S_m))^T$ holds (see Lemma \ref{lem:cum_COV}).

\subsection{ICA Results.}
\vnote{We should have a brief discussion here on the typical sizes of $d$ and $k$.}
Theorem \ref{thm:UICA_noisy} (Theorem~\ref{thm:UICA_noisy}, from \cite{GVX}) allows us to recover $A$ up to the necessary ambiguities in the noisy ICA setting. 
The theorem establishes guarantees for an algorithm from \cite{GVX} for noisy underdetermined ICA, \textbf{UnderdeterminedICA}. This algorithm takes as input a tensor order parameter $d$, number of signals $m$, access to samples according to the noisy underdetermined ICA model with unknown noise, accuracy parameter $\epsilon$, confidence parameter $\delta$, bounds on moments and cumulants $M$ and $\Delta$, a bound on the conditioning parameter $\sigma_m$, and a bound on the cumulant order $k$. It returns approximations to the columns of $A$ up to sign and permutation. 

%\section{Reducing learning GMMs to underdetermined ICA}

\paragraph{ICA Results.}
The following theorem (from \cite{GVX}) allows us to recover $A$ up to the necessary ambiguities in the noisy ICA setting. 
The theorem establishes guarantees for an algorithm from \cite{GVX} for noisy underdetermined ICA, \textbf{UnderdeterminedICA}. This algorithm takes as input a tensor order parameter $d$, number of signals $m$, access to samples according to the noisy underdetermined ICA model with unknown noise, accuracy parameter $\epsilon$, confidence parameter $\delta$, bounds on moments and cumulants $M$ and $\Delta$, a bound on the conditioning parameter $\sigma_m$, and moment order $k$. It returns approximations to the columns of $A$ up to sign and permutation. 

\begin{theorem}[\cite{GVX}]\label{thm:UICA_noisy}
Let a random vector $x \in \R^n$ be given by an underdetermined ICA model with unknown Gaussian noise $x= As + \eta$ where $A \in \R^{n \times m}$, with unit norm columns, and the covariance matrix $\Sigma \in \R^{n \times n}$
are unknown. Let $d \in 2 \N$ be such that $\sigma_m(A^{\odot d/2}) > 0$. Let $k > d$ be such that
 for each $s_i$, there is a $k_i$ satisfying $d < k_i \le k$ and $\abs{\cum_{k_i}(s_i)} \ge
  \Delta$, and $\E{ \abs{s_i}^k} \le M$.  
Moreover, suppose that the noise also satisfies the same moment condition: $\E{\abs{\angles{u, \eta_i}}^k} \le M$ for any unit
vector $u \in \R^n$ (this is satisfied if we have $k! \sigma^k \leq M$ where $\sigma^2$ is the maximum eigenvalue of 
$\Sigma$).  
Then algorithm \textbf{UnderdeterminedICA} returns a set of $n$-dimensional vectors $(\tilde A_i)_{i=1}^m$ so that for some permutation $\pi$ of $[m]$ and signs $\alpha_i \in \{-1, 1\}$ we have $\norm{\alpha_i \tilde A_{\pi(i)} - A_i} \leq \eps$ for all $i \in [m]$.
Its sample and time complexity are $\text{poly}\left( n^{k} , m^{k^2}, M^k, 1/ \Delta^k, 1/\sigma_m(A^{\odot d/2})^k, 1/\eps, 1/\delta \right)$.
%*** Shouldn't the depedendence above on \delta be of the form polylog(1/\delta) rather than \log(1/\delta)?  
%When does it make sense to make $k$ larger than $\max k_i$? add explicit confidence paramenter
%why is $d$ even but then you immediately use $d/2$?
\end{theorem}
%\vnote{I thought we were using tildes rather than hats for the recovered columns.}

\section{Learning GMM means using underdetermined ICA: The basic idea}
\label{sec:gmm-reduction}
In this section we give an informal outline of the proof of our main result, namely learning the means of 
the components in GMMs via reduction to the underdetermined ICA problem.
Our reduction will be discussed in two parts. The first part gives the main idea of the reduction and will demonstrate how to recover the means $\mu_i$ up to their norms and signs, i.e. we will get $\pm \mu_i/\norm{\mu_i}$. We will then present the reduction in full. It combines the basic reduction with some preprocessing
of the data to recover the $\mu_i$'s themselves.
The reduction relies on some well-known properties of the Poisson distribution stated in the 
lemma below; its proof can be found in Appendix~\ref{sec:Poisson-lemmas}. 

%% \begin{lemma} \label{lem:Poisson-basic}
%% If $X \sim \Pois(\lambda)$ and $Y |_{X=x} \sim \Bin(x,p)$ then $Y \sim \Pois(p \lambda)$.
%% \end{lemma}

\begin{lemma} \label{lem:Poisson-independence}
Fix a positive integer $k$, and let $p_i \geq 0$ be such that $p_1 + \dotsb + p_k =1$. If $X \sim \Pois(\lambda)$ and
$(Y_1, \ldots, Y_k)|_{X = x} \sim \Multinom(x; p_1, \ldots, p_k)$ then $Y_i \sim \Pois(p_i \lambda)$ for all $i$ and
$Y_1, \ldots, Y_k$ are mutually independent.
\end{lemma}

\paragraph{Basic Reduction: The main idea.}
Recall the GMM from equation \eqref{eq:GMM_Model} is given by 
$Z = [\mu_1 | \cdots | \mu_m] \mathbf{e}_h + \eta$. %% where $Z$ is the observed random variable, $\eta \sim \Normal(0, \Sigma)$, and $h$ is drawn according to the multinomial distribution [Navin: multinomial is incorrect distribution, it doesn't make sense in the present context.] with weights $w_1, \dots, w_m$ respectively.
Henceforth, we will set $A = [\mu_1 | \cdots | \mu_m]$.  
We can write the GMM in the form $Z = A\mathbf{e}_h + \eta$, which is similar in form to the noisy ICA model, except that $\mathbf{e}_h$ does not have independent coordinates.
% However, by appropriately sampling from the GMM, we will be able to replace $\mathbf{e}_h$ with a random vector with independent coordinates.
We now describe how a single sample of an approximate noisy ICA problem is generated.

%In the reduction, each input to the ICA problem will be constructed from several independent samples of the GMM.
%We now describe how a single sample of the ICA problem is generated.

The reduction involves two internal parameters $\lambda$ and $\tau$ that we will set later.
We generate a Poisson random variable $R \sim \Poisson(\lambda)$, 
and we run the following experiment $R$ times: At the $i$\textsuperscript{th} step, generate
sample $Z_i$ from the GMM.
Output the sum of the outcomes of these experiments: $Y = Z_1 + \cdots + Z_R$. 

Let $S_i$ be the random variable denoting the number of times samples from the $i$\textsuperscript{th} component were chosen in the above experiment.  Thus $S_1 + \cdots + S_m = R$.
Note that $S_1, \dots, S_m$ are not observable although we know their sum. By 
Lemma~\ref{lem:Poisson-independence}, each $S_i$ has distribution $\Poisson(w_i \lambda)$, 
and the random variables $S_i$ are mutually independent.
Let $S := (S_1, \dots, S_m)^T$.

For a non-negative integer $t$, we define $\eta(t) := \sum_{i=1}^t \eta_i$ where the $\eta_i$ 
are iid according to $\eta_i \sim \Normal(0, \Sigma)$. In this definition $t$ can be a r.v., in 
which case the $\eta_i$ are sampled independent of $t$. 
%In particular, $\eta(t) \sim \Normal(0, t\Sigma)$ for any non-negative integer $t$.
Using $\deq$ to indicate that two random variables have the same distribution, then 
\begin{equation} \label{eq:sum_model}
  Y \deq AS + \eta(R) \ .
\end{equation}

If there were no Gaussian noise in the GMM (i.e.{}~if we were sampling from a discrete set of points) then the model in \eqref{eq:sum_model} becomes $Y = AS$, which is the ICA model without noise,
and so we could recover $A$ up to necessary ambiguities.
%It is true that conditioned on $R$, $\eta(R) \sim \Normal(0, R\Sigma)$. 
However, the model $Y \deq AS + \eta(R)$ fails to satisfy even the assumptions of the noisy ICA model, both because $\eta(R)$ is not independent of $S$ and because $\eta(R)$ is not distributed as a Gaussian random vector.
%$\eta(R)$ is a mixture of Gaussians $\eta(1), \eta(2), \dotsc$ with the choice of Gaussian dependent on $R=\sum_{i=1}^m S_i$.

%However, we must take into account
%the Gaussian noise. Let us first understand the effect of noise and then show what modification is needed to get an ICA
%model.
%Let $\eta(R) \in \R^n$ be the sum of $R = S_1 + \ldots + S_m$ indpependent samples of $\Normal(0, \sigma^2I_n)$.
%
%Then it's immediate from the previous discussion that
%\begin{align} \label{eqn:ICA-like}
%Y = AS + \eta(R).
%\end{align}
% \vnote{The following comments are now partially refering to text that has been modified and/or commented out.}
% \lnote{no, they are equally distributed, but not equal as random variables}
% %This is not the same thing as ICA with unknown Gaussian noise because $\eta(R)$ is not Gaussian 
% xxx should this say that $\eta(R)$ is not independent instead of not Gaussian? It is still the sum of $R$ Gaussians, which is still Gaussian, but not independent of the sample, as is required by ICA xxx.
% xxx No -- the text is correct.  $\eta(R)$ is not a Gaussian random variable because $R$ is a random Poisson variable in this particular formulation.  In particular, $\eta(R)$ is an infinite mixture of Gaussians. xxx
% \lnote{while it is not Gaussian, it is also not independent, and this is a more fundamental problem. should say "not independent"}

As the covariance of the additive Gaussian noise is known, we may add additional noise to the samples of $Y$ to obtain a good
%We will add extra noise to get an 
approximation of the noisy ICA model.
Parameter $\tau$, the second parameter of the reduction, is chosen so that with high probability we have $R \leq \tau$.
Conditioning on the event $R \leq \tau$ we draw $X$ according to the rule
\begin{align*}
X = Y + \eta(\tau-R) \deq AS + \eta(R) + \eta(\tau-R),
\end{align*}
\lnote{same as before, not equal}
%where $\eta(\tau-R)$ is the sum of $\tau - R$ independent samples of $\Normal(0, \sigma^2 I_n)$ (and these samples are
%independent of the samples for $\eta(R)$), and similarly $\eta(\tau)$ is the sum of $\tau$ many independent samples.
where $\eta(R)$, $\eta(\tau - R)$, and $S$ are drawn independently conditioned on $R$.
Then, conditioned on $R \leq \tau$, we have $X \deq AS + \eta(\tau)$.

Note that we have only created an approximation to the ICA model.
In particular, restricting $\sum_{i=1}^m S_i = R \leq \tau$ can be accomplished using rejection sampling, but the coordinate random variables $S_1, \dots, S_m$ would no longer be independent.
We have two models of interest: (1) $X \deq AS + \eta(\tau)$, a noisy ICA model with no restriction on $R = \sum_{i=1}^m S_i$, and (2) $X \deq (AS + \eta(\tau))|_{R \leq \tau}$ the restricted model.

We are unable to produce samples from the first model, but it meets the assumptions of the noisy ICA problem.
Pretending we have samples from model (1), we can apply Theorem \ref{thm:UICA_noisy} to recover the Gaussian means up to sign and scaling.
On the other hand, we can produce samples from model (2), and depending on the choice of $\tau$, the statistical distance between models (1) and (2) can be made arbitrarily close to zero.
It will be demonstrated that given an appropriate choice of $\tau$, running \textbf{UnderdeterminedICA} on samples from model (2) is equivalent to running \textbf{UnderdeterminedICA} on samples from model (1) with high probability, allowing for recovery of the Gaussian mean directions $\pm \mu_i/\norm{\mu_i}$ up to some error.

%Thus, conditioned on $R \leq \tau$, we get an ICA model with Gaussian noise. Now applying Theorem~\ref{thm:UICA_noisy} we recover
%$\pm \mu_i/\norm{\mu_i}$ (namely, the directions of the means).

\paragraph{Full reduction.} To be able to recover the $\mu_i$ without sign or scaling ambiguities, we add an extra coordinate to the GMM as follows. The new
means $\mu_i'$ are $\mu_i$ with an additional coordinate whose value is $1$ for all $i$, i.e.~%
$\mu_i' := \left(\begin{array}{c} \mu_i \\ 1  \end{array}\right)$.
Moreover, this coordinate has no noise. In
other words, each Gaussian component now has an $(n+1)\times (n+1)$ covariance matrix  $\Sigma' := \left( \begin{array}{cc} \Sigma & 0 \\ 0 & 0 \end{array} \right)$.
It is easy to construct samples from this new GMM given samples from the original: If the original samples were
$u_1, u_2 \ldots$, then the new samples are $u'_1, u'_2 \ldots$ where
$u'_i  := \left(\begin{array}{c} u_i \\ 1  \end{array}\right)$.  
The reduction proceeds similarly to the above on the new inputs.

Unlike before, we will define the ICA mixing matrix to be 
$A' := \left[ \frac{\mu_1'}{\norm{\mu_1'}} | \cdots | \frac{\mu_m'}{\norm{\mu_m'}}\right]$ such that it has unit norm columns.
The role of matrix $A$ in the basic reduction will now be played by $A'$.
Since we are normalizing the columns of $A'$, we have to
scale the ICA signal $S$ obtained in the basic reduction to compensate for this: Define $S'_i := \norm{\mu'_i} S_i$. Thus, the ICA models obtained in the full reduction are:
\begin{align}
X' &= A' S' + \eta'(\tau),  \label{eq:full-ideal-model} \\
X' &= (A' S' + \eta'(\tau)), |_{R \leq \tau} \label{eq:full-approximate-model}
\end{align} 
where we define $\eta'(\tau) = \left(\begin{array}{c} \eta(\tau) \\ 0  \end{array}\right)$.
As before, we have an ideal noisy ICA model \eqref{eq:full-ideal-model} from which we cannot sample, and an approximate noisy ICA model \eqref{eq:full-approximate-model} which can be made arbitrarly close to 
\eqref{eq:full-ideal-model} in statistical distance by choosing $\tau$ appropriately.  
With appropriate application of Theorem~\ref{thm:UICA_noisy} to these models, we can recover estimates (up to sign) $\{\tilde A_1', \dotsc, \tilde A_m'\}$ of the columns of $A'$.
%We now need to check for the conditions of
%Theorem~\ref{thm:UICA_noisy}. First we need to understand the cumulants and moments of the $S'_i$. 
%Since our
%signal random variables $S_i'$ are (scaled) Poisson, we can precisely compute their cumulants easily.
%For positive integer $\ell$ we have
%\begin{align*}
%\cum_\ell(S'_i) = \norm{\mu'_i}^\ell \cum_\ell(S_i) = \norm{\mu'_i}^\ell w_i \lambda.
%\end{align*}
%
%For bounding the moments we will use some known bounds. It is known \cite{Riordan} that
%\begin{align*}
%\E{R^k} = \sum_{i=1}^k \lambda^i {k \brace i},
%\end{align*}
%where ${k \brace i}$ is the Stirling number of the second kind. An upper bound due to  \cite{Stirling} is
%${k \brace i} \leq \frac{1}{2} {k \choose i} i^{k-i}$. Hence
%\begin{align*}
%\E{{S'_i}^\ell} = \norm{\mu'_i}^\ell \E{S_i^\ell} \leq \norm{\mu'_i}^\ell  \ell \lambda^\ell \max_{i \leq \ell}{\ell \choose i} i^{\ell-i}.
%\end{align*}
%
% Note that $\sigma_m(A^{\odot d/2}) \leq \sigma_m({A'}^{\odot d/2})$. This is because
% \begin{align*}
% \sigma_m(A^{\odot d/2}) = \min_{x \in \R^m, \norm{x}=1} \norm{Ax} \leq \min_{x \in \R^m, \norm{x}=1} \norm{A'x} = \sigma_m({A'}^{\odot d/2}),
% \end{align*}
% where in the expression for $\sigma_m$ we used that the number of rows in $A^{\odot d/2}$ is at least the number of columns.
%
%Moments of univariate Gaussian $\eta \sim \Normal(0, \sigma^2)$ are given by the well-known formula
%$\E{\eta^k} = \sigma^k (k-1)!!$ for even $k$ (moments are $0$ for odd $k$).
%

By construction, the last coordinate of $\tilde A_i'$ now
tells us both the signs and magnitudes of the $\mu_i$: Let $\tilde{A}'_i(1:n) \in \R^n$ be the vector consisting of the
first $n$ coordinates of $\tilde{A}'_i$, and let $\tilde{A}'_i(n+1)$ be the last coordinate of $\tilde{A}'_i$. Then
$$\mu_i = \frac{A_i'(1:n)}{A_i'(n+1)} \approx \frac{\tilde{A}'_i(1:n)}{\tilde{A}'_i(n+1)}, $$
with the sign indeterminancy canceling in the division.

\floatname{algorithm}{Subroutine}
\begin{algorithm} %\label{alg:ica-sampling}
\caption{Single sample reduction from GMM to approximate ICA} \label{sub:independentsamples}
\begin{algorithmic}[1]
\Require Covariance parameter $\Sigma$,
access to samples from a mixture of $\means$ identical Gaussians in $\R^{\dim}$ with variance $\Sigma$,
Poisson threshold $\tau$,
Poisson parameter $\lambda$,
% confidence parameter $\delta_2 > 0$,
% number of means $\means$, smallest weight in the mixture $w_{\min}$,
% largest norm of the mixture means $U_\mu = \max_{i = 1, \dots, \means} \norm{\mu_i}$,
% singular value $\sigma_m(A^{\odot d/2})$,
\Ensure $Y$ (a sample from model (\ref{A'_ICAModel_actual})).
\vspace{.2em}
\hrule
\vspace{.2em}
% \For{$i \leftarrow 1$ \TO $N = \poly(\cdot)$ xxx insert final polynomial}
  \State Generate $R$ according to $\Poisson(\lambda)$.
  % \WHILE{$R > \tau$}
  %   \State Generate $R$ according to $\Poisson(\lambda)$.
  % \ENDWHILE
  \If{$R > \tau$}
    \Return failure.
  \EndIf
  \State Let $Y = 0$.
  \For{$j = 1$ to $R$}
    \State Get a sample $Z_j$ from the GMM.
    \State Let $Z'_j = (Z_j, 1)$ to embed the sample in $\R^{\dim+1}$
    \State $Y = Y + Z_j$.
  \EndFor
  \State Let $\Sigma' = \begin{pmatrix} \Sigma & 0 \\ 0 & 0 \end{pmatrix}$ (add a row and column of all zeros)
  \State Generate $\eta$ according to $\Norm(0,(\tau - R) \Sigma')$.
  \State $Y = Y + \eta$.
% \EndFor
\Return $Y$.
\end{algorithmic}
\end{algorithm}
\floatname{algorithm}{Algorithm}
\begin{algorithm}

\caption{Use ICA to learn the means of a GMM} \label{alg:reduction}
\begin{algorithmic}[1]
\Require Covariance matrix $\Sigma$, number of components $m$,
upper bound on tensor order parameter $d$,
access to samples from a mixture of $\means$ identical,
spherical Gaussians in $\R^{\dim}$ with covariance $\Sigma$,
confidence parameter $\delta$,
accuracy parameter $\epsilon$,
upper bound $w \geq \max_{i}(w_i) / \min_{i}(w_i)$,
upper bound on the norm of the mixture means $u$,
$r \geq (\max_i\norm{\mu_i} + 1)/(\min_i\norm{\mu_i})$, and
lower bound $b$ so $0 < b \leq \sigma_m(A^{\odot d/2})$.
\Ensure $\{\tilde{\mu}_1, \tilde{\mu}_2, \dots, \tilde{\mu}_\means\} \subseteq \R^\dim$ (approximations to the means of the GMM).
\vspace{.2em}
\hrule
\vspace{.2em}
% \State Choose $\delta_1, \delta_2 > 0$ satisfying $\delta_1 + \delta_2 > \delta$.
\State Let $\delta_2 = \delta_1 = \delta/2$.
\State Let $\sigma = \sup_{v \in S^{n-1}}\sqrt{\Var(v^T\eta(1))}$, for $\eta(1) \sim \Norm(0, \Sigma)$.
% \State Let
% \begin{align*} p(\tau, \Theta) & = \poly\left(n^{2d+1}, m^{d^2},
%       (\tau \sigma)^{d^2}, u^{d^2}, w^{d^2},
%       (d+1)^{d^2}, \right. \notag \\
%       & \quad \quad \quad \quad \left.
%       r^{(d+1)d/2}, 1/v^{d+1}, \tau, 1/\epsilon^*, 1/\delta_1 \right).
% \end{align*}
\State Let $\lambda = m$ be the parameter to be used to generate the Poisson random variable in Subroutine \ref{sub:independentsamples}.
%\State Let $t$ be the largest power of $\tau$ so that $p(\tau^{td^2}, \Theta) \leq \tau^{td^2}q(\Theta)$, where $q(\Theta)$ is the polynomial $p$ without the factors of $\tau$.
%\State Let $t$ be the smallest integer such that $p(\tau, \Theta) \leq \tau^{td^2} q(\Theta)$.
\State Let $\tau = 4\big(\log(1/\delta_2) + \log(q(\Theta))\big)\max\left((e\lambda)^2, 4Cd^2\right)$
(the threshold used to add noise in the samples from Subroutine \ref{sub:independentsamples}, $C$ is a universal constant, and $q(\Theta)$ is a polynomial defined as (\ref{eq:q-theta}) in the proof of Theorem \ref{thm:gmm-correctness}).
\State Let $\epsilon^* = \epsilon \left(\sqrt{1 + u^2} + {2 (1 + u^2)}\right)^{-1}$.
\State Let $M = \max \left((\tau \sigma)^{d+1}, (w/(\sqrt{1+u^2})^{d+1}\right) (d+1)^{d+1}$.
% \State Let $M = \max(\tau \sigma, u w \lambda)^{d+1} (d+1)^{d+1}$.
% \State Invoke Subroutine \ref{sub:independentsamples} to get $N$ samples $\{X_1, \dots, X_N\}$.
\State Let $k = d + 1$.
\State Let $\Delta$ = $w$.
% \State Invoke Subroutine 1 to get $N = p(\tau^{d^2}, \Theta)$ samples $\{\}$.
\State Invoke \textbf{UnderdeterminedICA} with access to Subroutine \ref{sub:independentsamples}, parameters $\delta_1, \epsilon^*$, $\Delta$, $M$, and $d$ to obtain $\tilde{A'}$ (whose columns approximate the normalized means up to sign and permutation).  If any calls to Subroutine \ref{sub:independentsamples} result in failure, the algorithm will halt completely.
\State Divide each column of $\tilde{A'}$ by the value of its last entry.
\State Remove the last row of $\tilde{A'}$ to obtain $\tilde{B}$.
\Return the columns of $\tilde{B}$ as $\{\tilde{\mu}_1, \tilde{\mu}_2, \dots, \tilde{\mu}_\means\}$.
\end{algorithmic}
\end{algorithm}

% \vnote{This transition is somehow jarring.}
% Subroutine \ref{sub:independentsamples} works as follows: it generates

\section{Correctness of the Algorithm and Reduction}

Subroutine \ref{sub:independentsamples} is used to capture the main process of the reduction:
Let $\Sigma$ be the covariance matrix of the GMM, $\lambda$ be an integer chosen as input, and a threshold value $\tau$ also computed elsewhere and provided as input.
Let $R \sim \Poisson(\lambda)$.
If $R$ is larger $\tau$, the subroutine returns a failure notice and the calling algorithm halts immediately.
A requirement, then, should be that the threshold is chosen so that the chance of failure is very small; in our case, $\tau$ is chosen so that the chance of failure is half of the confidence parameter given to Algorithm \ref{alg:reduction}.
The algorithm then goes through the process described in the full reduction: sampling from the GMM, lifting the sample by appending a 1, then adding a lifted gaussian noise so that the total noise has distribution $\Norm(0, \tau \Sigma)$.
The resulting sample is from the model given by (\ref{eq:full-approximate-model}).

Algorithm \ref{alg:reduction} works as follows: it takes as input the parameters of the GMM (covariance matrix, number of means), tensor order (as required by \textbf{UnderdeterminedICA}), error parameters, and bounds on certain properties of the weights and means.
The algorithm then calculates various internal paremeters: a bound on directional covariances, error parameters to be split between the ``Poissonization'' process and the call to \textbf{UnderdeterminedICA}, the threshold parameter $\tau$ and Poisson parameter $\lambda$ to be used in Subroutine \ref{sub:independentsamples}, and values explicitly needed by the proof of \textbf{UnderdeterminedICA} in \cite{GVX}.
Other internal values are given by a constanct $C$ and a polynomial $q(\Theta)$; these are determined by the proof of Theorem \ref{thm:gmm-correctness}.
Essentially, $C$ is a constant so that one can cleanly compute a value of $\tau$ that will involve a polynomial, $q(\Theta)$, of all the other parameters.
The algorithm then calls \textbf{UnderdeterminedICA}, but instead of giving samples from the GMM, it allows access to Subroutine \ref{sub:independentsamples}.
It is then up to \textbf{UnderdeterminedICA} to generate samples as needed (bounded by the polynomial in Theorem \ref{thm:gmm-correctness}).
In the case that Subroutine \ref{sub:independentsamples} returns a failure, the entire algorithm process halts, and returns nothing.
If no failure occurs, the matrix returned by \textbf{UnderdeterminedICA} will be the matrix of normalized means embedded in $\R^{\dim+1}$, and the algorithm de-normalizes, removes the last row, and then has approximations to the means of the GMM.

The bounds are used instead of actual values to allow flexibility -- in the context under which the algorithm is invoked -- on what the algorithm needs to succeed.
However, the closer the bounds are to the actual values, the more efficient the algorithm will be.

We show that the reduction results in a model that allows one to use the ICA algorithm \textbf{UnderdeterminedICA} presented in \cite{GVX} (see Section \ref{sec:preliminaries}, Theorem \ref{thm:UICA_noisy}).  That is, we can choose the parameters, $\tau$ and $\lambda$, for the Poissonization and threshold that yield a model which will satisfy the assumptions of Theorem \ref{thm:UICA_noisy}. This also involves translating the bounds on cumulants and higher moments of our Poisson signals and added noise to obtain feasible bounds for Theorem \ref{thm:UICA_noisy}.

\paragraph{Sketch of correctness argument.}
The proof of correctness of Algorithm \ref{alg:reduction} has two main parts.
In the first part (section \ref{subsec:IdealErrorAnalysis}), we analyze the sample complexity of recovering the Gaussian means using \textbf{UnderdeterminedICA} when samples are taken from the ideal noisy ICA model \eqref{eq:full-ideal-model}.
In the second part, we note that we do not have access to the ideal model \eqref{eq:full-ideal-model}, and that we can only sample from the approximate noisy ICA model \eqref{eq:full-approximate-model} using the full reduction.
Choosing $\tau$ appropriately, we use total variation distance to argue that with high probability, running \textbf{UnderdeterminedICA} with samples from the approximate noisy ICA model will produce equally valid results as running \textbf{UnderdeterminedICA} with samples from the ideal noisy ICA model.
The total variation distance bound is explored in section \ref{subsec:TotalVarDist}.

These ideas are combined in section \ref{subsec:CorrectnessProof} to prove the correctness of Algorithm \ref{alg:reduction}.
One additional technicality arises from the implementation of Algorithm \ref{alg:reduction}.
Samples can be drawn from the noisy ICA model $X' = (AS' + \eta'(\tau))|_{R \leq \tau}$ using rejection sampling on $R$.
In order to guarantee Algorithm \ref{alg:reduction} executes in polynomial time, when a sample of $R$ needs to be rejected, Algorithm \ref{alg:reduction} terminates in explicit failure.
To complete the proof, we argue that with high probability, Algorithm \ref{alg:reduction} does not explicitly fail.

%
% \begin{proof}[Proof Idea]
% This proof has three main parts.
% In the first part, we assume that the reduction works as desired.
% In particular, it is assumed that for each sample, $S_i \sim \Poisson(w_i\lambda)$ is drawn independently, and the random variable $X = BS + \eta(\tau)$ is observed.
% Under these assumptions, then we demonstrate that with polynomial samples, we can recover the columns of $B$ including sign and scaling with high probability to $\epsilon$ accuracy.
% In the second part, for technical reasons, we require the sample to be from a distribution where $\sum_{i=1}^m S_i$ is bounded, and hence $S$ no longer has independent coordinates.
% We show that given sufficiently large value of $\tau$ and small value of $\lambda$, the set of samples from the restricted distribution are no different than samples from the distribution meeting the ICA assumptions.
% Finally, it needs to be demonstrated that the resulting upper and lower bounds on $N$ are consistent, and that $\tau$ and $\lambda$ are both polynomially bounded.
% \end{proof}

% \begin{lemma} \label{lem:pois-bounds}
% Let $\{\mu_i\}_{i=1}^{\means} \subseteq \R^{\dim}$ such that $\norm{\mu_i} \geq 1$.
% Let $\{w_i\}_{i=1}^{\means}$ be a set of weights so that $\sum_{i} w_i = 1$.
% Let $S_i \sim \Poisson(w_i \lambda)$.
% Let $S'_i = \norm{\mu_i} S_i$.  Then there exists scalars $k$, $M$, and $\Delta$ such that for every $S'_i$, one of its cumulants $d < k_i \leq k$ satisfies $\cum_{k_i}(S_i') \geq \Delta$ and $\E{|s_i|^k} \leq M$.
% \end{lemma}

\paragraph{Error Analysis of the Ideal Noisy ICA Model}
\label{subsec:IdealErrorAnalysis}

The proposed full reduction from Section \ref{sec:gmm-reduction} provides us with two models.
The first is a noisy ICA model from which we cannot sample:
\begin{align}
\text{(Ideal ICA)} \qquad  X' &= A'S' + \eta'(\tau) \ . \label{A'_ICAModel_ideal}
\end{align}
The second is a model that fails to satisfy the assumption that $S'$ has independent coordinates, but it is a model from which we can sample:
\begin{align}
\text{(Approximate ICA)} \qquad  X' &= (A'S' + \eta'(\tau))|_{R \leq \tau} \ . \label{A'_ICAModel_actual}
\end{align}
Both models rely on the choice of two parameters, $\lambda$ and $\tau$.
The dependence on $\tau$ is explicit in the models.  
The dependence on $\lambda$ can be summarized in the unrestricted model as $S_i = \frac 1 {\norm{\mu_i'}} S_i' \sim \Poisson(w_i \lambda)$ independently of each other, and $R = \sum_{i=1}^m S_i \sim \Poisson(\lambda)$.

The probability of choosing $R > \tau$ will be
seen to be exponentially small in $\tau$.  For this reason, running \textbf{UnderdeterminedICA}
with polynomially many samples from model
\eqref{A'_ICAModel_ideal} will with high probability be equivalent to
running the ICA Algorithm with samples from model
\eqref{A'_ICAModel_actual}.  This notion will be made
precise later using total variation distance.

%In the proposed reduction from the problem of learning a mixture of
%identical Gaussians with known covariance $\Sigma$ to an ICA-like
%model, we have that $R \sim \Poisson(\lambda)$ and $(S_1, \dotsc, S_m)
%\sim \Multinom(R, w_1, \dotsc, w_m)$.  Conditioning on $R \leq \tau$ 
%for $\tau$ a constant parameter, we
%obtain the following two noisy ICA-like models:
%\begin{align}
%  X &= AS + \eta'(\tau) & \text{and} & &
%  X' &= A'S' + \eta'(\tau) \ . \label{ICAModels_ideal}
%\end{align}
%If no restrictions were placed on $R$, then the variables $S_1,
%\dotsc, S_m$ would represent independent random variables such that
%$S_i \sim \Poisson(w_i \lambda)$, giving rise to a Noisy ICA model
%from which we cannot sample.  The reason that we cannot sample from
%this distribution is because when $R > \tau$, we sample from $R$
%Gaussian random variables in the reduction, giving a noise term of
%$\eta(R)$ for that particular sample.  We can sample from related
%the distributions:
%\begin{align}
%  X &= AS + \eta(\max(\tau, R)) & \text{and} & &
%  X' &= A'S' + \eta(\max(\tau, R)) \ . \label{ICAModels_actual}
%\end{align}
%which does not meet the ICA noise model.  However, when $\tau$ is
%sufficiently large, the probability of choosing $R > \tau$ will be
%seen to be exponentially small.  For this reason, running the ICA
%Algorithm xxx specify xxx with samples from the models in
%\eqref{ICAModels_ideal} will with high probability be equivalent to
%running the ICA Algorithm with samples from the models in
%\eqref{ICAModels_actual} respectively, a notion that will be later made
%precise using total variation distance.

For the remainder of this subsection, we proceed as if samples are drawn from
the ideal noisy ICA model \eqref{A'_ICAModel_ideal}.  Thus, to recover the columns of $A'$, it suffices to run \textbf{UnderdeterminedICA} on samples of $X'$.
Theorem \ref{thm:UICA_noisy} can be used for this analysis so long as we can
obtain the necessary bounds on the cumulants of $S'$, moments of $S'$, and the moments of $\eta'(\tau)$.
We define $w_{\min} := \min_{i} w_i$ and $w_{\max} := \max_{i} w_i$.
Then, the cumulants of $S'$ are bounded by the following lemma:

\begin{lemma} \label{lem:cumS_i'}
  Given $\ell \in \Z^+$, $\cum_\ell(S_i') \geq w_i \lambda$ for each $S_i'$.
  In particular, then $\cum_\ell(S_i') \geq w_{\min} \lambda$.
\end{lemma}
\begin{proof}
  By construction, $S'_i = \norm{\mu_i'}S_i$.
  By the homogeneity property of univariate cumulants,
  \[
    \cum_\ell(S'_i) = \cum_\ell(\norm{\mu_i'}S_i) 
    	= \norm{\mu_i'}^\ell \cum_\ell(S_i)
  \]
  As $\mu_i'(n+1) = 1$, $\norm{\mu_i'} \geq 1$. \lnote{Aha! This is one of the steps that would kill invariance under scaling of the algorithm.}
  The cumulants of the Poisson distribution are given in Lemma \ref{lem:PoissonCumulants}.  It follows that $\cum_{\ell}(S_i') \geq \cum_{\ell}(S_i) = w_i \lambda$.
\end{proof}

The bounds on the moments of $S'_i$ for each $i$ can be computed using the following lemma:
\begin{lemma} \label{lem:expectation_S_i'}
For $\ell \in \Z^+$, we have $\E{S_i'^\ell} \leq (\norm{\mu_i'}w_i\lambda)^\ell \ell^\ell$.
\end{lemma}
\begin{proof}
  Let $Y$ denote a random variable drawn from $\Poisson(\alpha)$.
  It is known (see \cite{Riordan}) that
  \[
    \E{Y^\ell} = \sum_{i=1}^\ell \alpha^i \stirling{\ell}{i}
  \]
  where $\stirling{\ell}{i}$ denotes Stirling number of the second kind.
  Using Lemma \ref{lem:StirlingBound_n}, it follows that
  \[
    \E{Y^\ell} \leq \sum_{i=1}^\ell \alpha^{i} \ell^{\ell-1}
      \leq \ell \alpha^{\ell} \ell^{\ell-1} 
      = \alpha^\ell\ell^\ell. 
  \]
  Since $S_i' = \mu_i'S_i$ where $S_i \sim \Poisson(\lambda w_i)$, it follows that $\E{S_i'^\ell} = \norm{\mu_i'}^\ell\E{S_i^\ell} \leq \norm{\mu_i'}^\ell (w_i \lambda_i)^\ell \ell^\ell$.
\end{proof}

The absolute moments of Gaussian random variables are well known.  For
completeness, the bounds are provided in Lemma \ref{lem:GaussianMoments} of Appendix~\ref{sec:GaussMoments}.

Defining $\sigma = \sup_{v \in S^{n-1}}\sqrt{\Var(v^T\eta'(1))}$; vectors $\mu_{\max}' = \argmax_{i}\norm{\mu_i'}$, $\mu_{\min}' = \argmin_{i}\norm{\mu_i'}$, and similarly $\mu_{\max}$ and $\mu_{\min}$ for later; and choosing $\lambda = m$, we can now show a polynomial bound for the error in recovering the columns of $A'$
using \textbf{UnderdeterminedICA}.
\begin{theorem}[ICA specialized to the ideal case] \label{thm:ICA_ideal_lifted}
  Suppose that samples of $X'$ are taken from the 
  unrestricted ICA model \eqref{A'_ICAModel_ideal} choosing parameter $\lambda = m$ and $\tau$ a constant.
  Suppose that \textbf{UnderdeterminedICA} is run using these samples.
  Suppose $\sigma_m(A'^{\odot d/2}) > 0$.  Fix $\epsilon \in (0, 1/2)$ and $\delta \in (0, 1/2)$. 
  Then with probability $1-\delta$, when the number of samples $N$ is:
   \begin{align}
    N & \geq \poly\left(n^d, m^{d^2}, 
      (\tau\sigma)^{d^2}, \norm{\mu'_{\max}}^{d^2}, \left({w_{\max}}/{w_{\min}}\right)^{d^2},
      d^{d^2}, %\right. \notag \\
      %& \quad \quad \quad \quad \left.
      1/\sigma_m(A'^{\odot d/2})^d, 1/\epsilon, 1/\delta \right) \label{eq:poly_bound_lifted_recovery}
  \end{align}
  the columns of $A'$ are recovered within error $\epsilon$ up to their signs.
  That is, denoting the
  columns returned from \textbf{UnderdeterminedICA} by $\tilde A_1', \dotsc, \tilde A_m'$, 
  there exists $\alpha_1, \cdots, \alpha_m,  \in \{-1, +1\}$ and a permutation $p$ of $[m]$ such that
    $\norm{A'_i - \alpha_i\tilde A'_{p(i)}} < \epsilon$
  for each $i$\lnote{prime/no prime?}.
\end{theorem}
\begin{proof}
   Obtaining the sample bound is an exercise of rewriting the parameters associated with the model $X' = A'S' + \eta'(\tau)$ in a way which can be used by Theorem \ref{thm:UICA_noisy}.
  In what follows, where new parameters are introduced without being described, they will correspond to parameters of the same name defined in and used by the statement of Theorem \ref{thm:UICA_noisy}.
  
  Parameter $d$ is fixed.  We must choose $k_1, \dotsc, k_m$ and $k$ such that $d < k_i \leq k$ and $\cum_{k_i}(S_i')$ is bounded away from $0$.
  It suffices to choose $k_1 = \dotsb = k_m = k = d+1$.
  By Lemma \ref{lem:cumS_i'}, $\cum_{d+1}(S_i') \geq w_{\min} \lambda = w_{\min} m$ for each $i$.  As 
  $w_{\max} \geq \frac 1 m \sum_{i=1}^m w_i = \frac 1 m$, we have that $\cum_{d+1}(S_i') \geq \frac {w_{\min}}{w_{\max}}$ for each $i$, giving a somewhat more natural condition number.  
  In the notation of Theorem \ref{thm:UICA_noisy}, we have a constant
  \begin{equation} \label{eq:DeltaChoice}
    \Delta = \frac{w_{\min}}{w_{\max}}
  \end{equation}
  such that $\cum_{d+1}(S_i') \geq \Delta$ for each $i$.
  
  Now we consider the upper bound $M$ on the absolute moments of both 
  $\eta'(\tau)$ and on $S_i'$.  As the Poisson distribution takes on non-negative
  values, it follows that $S_i' = \norm{\mu_i'}S_i$ takes on non-negative values.
  Thus, the moments and absolute moments of $S_i'$ coincide.
  Using Lemma \ref{lem:expectation_S_i'}, we have that $\E{\abs{S_i'}^{d+1}}
  = \E{(S_i')^{d+1}} \leq (\norm{\mu_i'}w_i\lambda)^{d+1} (d+1)^{d+1}$.  Thus,
  for $M$ to bound the $(d+1)$\textsuperscript{th} moment of $S_i'$, it suffices
  that $M \geq (\norm{\mu_{\max}'}w_{\max} \lambda)^{d+1} (d+1)^{d+1}$.
  Noting that 
  \[
    w_{\max} \lambda = w_{\max} m = \frac{w_{\max}}{1/m} \leq \frac{w_{\max}}{w_{\min}}
  \]
  it suffices that $M \geq (\norm{\mu_{\max}'}\frac{w_{\max}}{w_{\min}})^{d+1} (d+1)^{d+1}$,
  giving a more natural condition number.
  
  Now we bound the absolute moments of the Gaussian distribution.  
  As $d \in 2\N$, it follows that $d+1$ is odd.
  Given a unit vector $u \in \R^n$, it follows from Lemma \ref{lem:GaussianMoments} that 
  \begin{align*}
    \E{\abs{\langle u, \eta'(\tau) \rangle}^{d+1}}
      &= \Var(\langle u, \eta'(\tau)\rangle)^{\frac {(d+1)} 2} 2^{d/2}\left(d / 2\right) ! \frac 1 {\sqrt \pi} \\ &= \tau^{d+1}\Var(\langle u, \eta'(1) \rangle)^{\frac{(d+1)}2} 2^{d/2}\left(d / 2\right) ! \frac 1 {\sqrt \pi} \ .
    \end{align*}
  $\sigma$ gives a clear upper bound for $\Var(\langle u, \eta'(1) \rangle)^{1/2}$, and $(d+1)^{d+1}$ 
  gives a clear upper bound to $\frac 1 {\sqrt \pi} 2^{d/2}(d / 2)!$. 
  As such, it suffices that $M \geq (\tau \sigma)^{d+1}(d+1)^{d+1}$ in order to
  guarantee that $M \geq \E{\abs{\langle u, \eta'(\tau) \rangle}^{d+1}}$.
  Using the obtained bounds for $M$ from the Poisson and Normal variables,
  it suffices that $M$ be taken such that 
  \begin{equation} \label{eq:MChoice}
        M \geq \max \left((\tau \sigma)^{d+1}, (\norm{\mu_{\max}'} \frac{w_{\max}}{w_{\min}})^{d+1}\right) (d+1)^{d+1}
  \end{equation}
  to guarantee that $M$ bounds all required order $d+1$ absolute moments.
  
  We can now apply Theorem \ref{thm:UICA_noisy}, using the parameter values $k = d+1$,
  $\Delta$ from \eqref{eq:DeltaChoice}, and $M$ from \eqref{eq:MChoice}.
  Then with probability $1 - \delta$,
  \begin{align}
    N &\geq \poly\left(n^{2d+1}, m^{d^2}, 
      (\tau \sigma)^{d^2}, \norm{\mu_{\max}'}^{d^2}, ({w_{\max}}/{w_{\min}})^{d^2},
      (d+1)^{d^2}, \right. \notag \\
      & \quad \quad \quad \quad \left.
      1/\sigma_m(A'^{\odot d/2})^{d+1}, 1/\epsilon, 1/\delta \right) \label{eq:N_poly_with_mu_prime}
  \end{align} 
  samples suffice to recover up to sign the columns of $A'$ within $\epsilon$ accuracy.  
  More precisely, letting 
  $\tilde A_1', \dotsc, \tilde A_m'$ give the columns produced by 
   \textbf{UnderdeterminedICA}, then there exists parameters $\alpha_1, \dotsc, \alpha_m$ such that $\alpha_i \in \{-1, +1\}$ captures the sign 
  indeterminacy, and a permutation $p$ on $[m]$ such that $\norm{A_i' - \tilde A_{p(i)}'} < \epsilon$ for each $i$.
  
  The poly bound in \eqref{eq:N_poly_with_mu_prime} is equivalent to the poly bound in \eqref{eq:poly_bound_lifted_recovery}.
\end{proof}

Theorem \ref{thm:ICA_ideal_lifted} allows us to recover the columns of $A'$ up
to sign.  However, what we really want to recover are the means of the original
Gaussian mixture model, which are the columns of $A$.  Recalling the
correspondence between $A'$ and $A$ laid out in section \ref{sec:gmm-reduction},
the Gaussian means $\mu_1, \dotsc, \mu_m$ which form the columns of $A$ are
related to the columns $\mu_1', \dotsc, \mu_m'$ of $A'$ by the rule 
$\mu_i = \mu_i'(1:n) / \mu_i'(n+1)$.  Using this rule, we can construct estimate
the Gaussian means from the estimates of the columns of $A'$.  By propogating
the errors from Theorem \ref{thm:ICA_ideal_lifted}, we arrive at the following
result:

\begin{theorem}[Recovery of Gaussian means in Ideal Case] \label{thm:MeanRecoveryIdeal}
  Suppose that \textbf{UnderdeterminedICA} is run using samples of $X'$ from the
  ideal noisy ICA model \eqref{A'_ICAModel_ideal} choosing parameters $\lambda = m$ and $\tau$ a constant.
  Define $B\in \R^{n \times m}$ such that $B_i = A_i / \norm{A_i}$.  
  Suppose further that $\sigma_m(B^{\odot d/2}) > 0$.  Let 
  $\tilde A_1', \cdots, \tilde A_m'$ be the returned estimates of the columns of $A'$ (from model \eqref{A'_ICAModel_ideal}) by \textbf{UnderdeterminedICA}.
  Let $\tilde \mu_i = \tilde A_i'(1:n) / \tilde A_i'(n+1)$ for each $i$.
  Fix error parameters $\epsilon \in (0, 1/2)$ and $\delta \in (0, 1/2)$.
  When at least
  \begin{align}
    N & \geq \\ & \poly\left(n^{d}, m^{d^2}, 
      (\tau \sigma)^{d^2}, \norm{\mu_{\max}}^{d^2}, \left(\frac{w_{\max}}{w_{\min}}\right)^{d^2},
      d^{d^2}, % \right. \notag \\
      % & \quad \quad \quad \quad \left.
      \left(\frac{\norm{\mu_{\max}} + 1}{\norm{\mu_{\min}}}\right)^{d^2}, \frac 1 {\sigma_m(B^{\odot d/2})^{d}}, \frac 1 \epsilon, \frac 1 \delta \right)
    \label{eq:poly_bound_ideal_means}
  \end{align}
  samples are used, then with probability $1 - \delta$ there exists a permutation
  $p$ of $[m]$ such that $\norm{\tilde \mu_{p(i)} - \mu_i} < \epsilon$ for each $i$.
\end{theorem}
\begin{proof}
  Let $\epsilon^* > 0$ (to be chosen later) give a desired bound on the 
  errors of the columns of $A'$.  Then, from Theorem \ref{thm:ICA_ideal_lifted},
  using
  \begin{align}
    N &\geq \poly\left(n^{d}, m^{d^2}, 
      (\tau\sigma)^{d^2}, \norm{\mu_{\max}'}^{d^2}, \left({w_{\max}}/{w_{\min}}\right)^{d^2},
      d^{d^2}, % \right. \notag \\
      % & \quad \quad \quad \quad \left.
      1/\sigma_m(A'^{\odot d/2})^{d}, 1/\epsilon^*, 1/\delta \right) \label{eq:sample_bound_for_err_prop0}
  \end{align}
  samples suffices with probability $1 - \delta$ to produce column estimates $\tilde A_1', \dotsc, \tilde A_m'$ such
  that for an unknown permutation $p$ and signs $\alpha_1, \dotsc, \alpha_m$,
  $\alpha_{p(1)}\tilde A_{p(1)}', \dotsc, \alpha_{p(m)}\tilde A_{p(m)}'$ give
  $\epsilon^*$-close estimates of the columns $A_1', \dotsc, A_m'$ 
  respectively of $A'$.
  In order to avoid notational clutter, we will assume without
  loss of generality that $p$ is the identity map, and hence that 
  $\norm{\alpha_i\tilde A_i' - \alpha A_i'} < \epsilon^*$ holds.
  
  This proof proceeds in two steps.  
  First, we replace the dependencies in \eqref{eq:sample_bound_for_err_prop0} on parameters from the lifted GMM model generated by the full reduction with dependencies based on the GMM model we are trying to learn.
  Then, we propagate the error from recovering the columns $\tilde A_i'$ to that of recovering $\tilde \mu_i$.
  
  \paragraph{Step 1:  GMM Dependency Replacements.}
  In the following two claims, we consider alternative lower bounds for $N$ for recovering column estimators $\tilde A_1', \dotsc, \tilde A_m'$ which are $\epsilon^*$-close up to sign to the columns of $A'$.
  In particular, so long as we use at least as many samples of $X'$ as in \eqref{eq:sample_bound_for_err_prop0} when calling \textbf{UnderdeterminedICA}, then $A'$ will be recovered with the desired precision with probability $1 - \delta$.
%  \begin{description}
%  \item[Claim:] 
\begin{claim*}
The $\poly(\norm{\mu_{\max}'}^{d^2}, d^{d^2})$ dependence in \eqref{eq:sample_bound_for_err_prop0} can be replaced by a \\ $\poly(\norm{\mu_{\max}}^{d^2}, \allowbreak d^{d^2})$ dependence.
 \end{claim*}
\begin{proof}[Proof of Claim.] \renewcommand{\qedsymbol}{$\blacktriangle$}
    By construction, $\mu_{\max}' = \left( \begin{array}{c} \mu_{\max} \\ 1 \end{array} \right)$.
    By the triangle inequality,
    \[
      \norm{\mu_{\max}'}^{d^2} \leq (\norm{\mu_{\max}} + 1)^{d^2}
%       \left(\sqrt{\norm{\mu_{\max}}^2+1}\right)^{d^2} = \left((\norm{\mu_{\max}}^2 + 1)^{d^2} \right)^{1/2}
    \]
    where $(\norm{\mu_{\max}} + 1)^{d^2}$ is a polynomial $q$ of $\norm{\mu_{\max}}$ with coefficients bounded by $(d^2)^{d^2} = d^{2d^2} = \allowbreak \poly(d^{d^2})$.  
   The maximal power of $\norm{\mu_{\max}}$ in $q(\norm{\mu_{\max}})$ is $d^{d^2}$. It follows that $q(\norm{\mu_{\max}}) = \poly(\norm{\mu_{\max}}^{d^2}, d^{d^2})$. 
%    where $(\norm{\mu_{\max}}^2 + 1)^{d^2}$ is a polynomial $q$ of $\norm{\mu_{\max}}$ with coefficients bounded by $(d^2)^{d^2} = d^{2d^2} = \poly(d^{d^2})$.  
%   Also, the maximal power of $\norm{\mu_{\max}}$ in $q(\norm{\mu_{\max}})$ is $d^{2d^2}$, making $\sqrt{q(\norm{\mu_{\max}})} = \poly(\norm{\mu_{\max}}^{d^2}, d^{d^2})$.  
\end{proof}
%   \item[Claim:] 
\begin{claim*} \renewcommand{\qedsymbol}{}
The $\poly(1/\sigma_m(A'^{\odot d/2})^d)$ in \eqref{eq:sample_bound_for_err_prop0} can be replaced by a $\poly((\frac{\norm{\mu_{\max}} + 1}{\norm{\mu_{\min}}})^{d^2}, \ \allowbreak 1/\sigma_m(B^{\odot d/2})^{d})$ dependence.
\end{claim*}
\begin{proof}[Proof of Claim]\renewcommand{\qedsymbol}{$\blacktriangle$}    
    First define $\underbar A'$ to be the unnormalized version of $A'$.  
    That is, $\underbar A'_i := \mu_i'$.
    Then, $\underbar A' = A' \diag{\norm{\mu_1'}, \dotsc, \norm{\mu_m'}}$ implies $\underbar A'^{\odot d/2} = A'^{\odot d/2} \diag{ \norm{\mu_1'}^{d/2}, \dotsc \norm{\mu_m'}^{d/2}}$.
    Thus, $\sigma_m(\underbar A'^{\odot d/2}) \leq \sigma_m(A'^{\odot d/2})\norm{\mu_{\max}'}^{d/2}$.
    
    Next, we note that $\underbar A' = \left( \begin{array}{c} A \\ \mathbf 1 \end{array}\right)$ where $\mathbf 1$ is an all ones row vector.
    It follows that the rows of $A^{\odot d/2}$ are a strict subset of the rows of $\underbar A'^{\odot d/2}$.  Thus,
    \[
      \sigma_m(A^{\odot d/2}) 
      	= \inf_{\norm{u} = 1} \norm{A^{\odot d/2}u}
      	\leq \inf_{\norm{u} = 1} \norm{\underbar A'^{\odot d/2}u}
      	= \sigma_m(\underbar A'^{\odot d/2}) \ .
    \]
    
    Finally, $B = A \diag{\left ( \frac 1 {\norm{\mu_1}}, \dotsc, \frac 1 {\norm{\mu_m}}\right )}$ and $B^{\odot d/2} = A^{\odot d/2} \diag{\left ( \frac 1 {\norm{\mu_1}^{d/2}}, \dotsc, \frac 1 {\norm{\mu_m}^{d/2}}\right )}$.
    It follows that $\sigma_m(B^{\odot d/2}) \leq \sigma_m(A^{\odot d/2})\frac 1 {\norm{\mu_{\min}}^{d/2}}$.
    Chaining together inequalities yields:  
    \begin{align*}
      \sigma_m(B^{\odot d/2}) &\leq \frac{\norm{\mu_{\max}'}^{d/2}}{\norm{\mu_{\min}}^{d/2}} \sigma_m(A'^{\odot d/2}) & \text{or alternatively} & & \\ & & \frac{\norm{\mu_{\max}'}^{d/2}}{\norm{\mu_{\min}}^{d/2}} \cdot \frac 1 {\sigma_m(B^{\odot d/2})} &\geq \frac 1 {\sigma_m(A'^{\odot d/2})} \ .
    \end{align*}
    As $\mu_{\max}' = (\mu_{\max}^T\ 1)^T$, the triangle inequality implies $\norm{\mu'_{\max}} \leq \norm{\mu_{\max}} + 1$.
    As we require the dependency of at least $N > \poly((1/\sigma_m(A'^{\odot d/2}))^d)$ samples, it suffices to have the replacement dependency of $N > \poly((\frac{\norm{\mu_{\max}} + 1}{\norm{\mu_{\min}}})^{\frac d 2 \cdot d}(1/\sigma_m(B^{\odot d/2})^d) = \allowbreak \poly((\frac{\norm{\mu_{\max}} + 1}{\norm{\mu_{\min}}})^{d^2}(1/\sigma_m(B^{\odot d/2})^d)$ samples. 
%  \end{description}
\end{proof}
  
  Thus, it is sufficient to call \textbf{UnderdeterminedICA} with
  \begin{align}
    N &\geq \\ & \poly\left(n^{d}, m^{d^2}, 
      (\tau\sigma)^{d^2}, \norm{\mu_{\max}}^{d^2}, \Big(\frac {w_{\max}} {w_{\min}}\Big)^{d^2},
      d^{d^2}, % \right. \notag \\
      % & \quad \quad \quad \quad \left.
      \left(\frac {{\norm{\mu_{\max}} + 1}}{\norm{\mu_{\min}}}\right)^{d^2}, \frac 1{\sigma_m(B^{\odot d/2})^{d}}, \frac 1 {\epsilon^*}, \frac 1 \delta \right) \label{eq:sample_bound_for_err_prop}
  \end{align}
  samples to achieve the desired $\epsilon^*$ accuracy on the returned estimates of the columns of $A'$ with probability $1-\delta$.

  \paragraph{Step 2:  Error propagation.}
  
    What remains to be
  shown is that an appropriate choice of $\epsilon^*$ enforces 
  $\norm{\mu_i - \tilde \mu_i} < \epsilon$ by propagating the error.
  
  Recall that $A_i' = \left(\begin{array}{c}\mu_i \\ 1\end{array}\right) \cdot \norm{\left(\begin{array}{c}\mu_i \\ 1\end{array}\right)}^{-1}$, making
  $A_i'(n+1) = \frac 1 {\sqrt{1 + \norm{\mu_i}^2}}$.  Thus,
  \begin{align} 
%    \mu_i'(n+1) & \leq \frac{1}{\sqrt{1 + \norm{\mu_{\min}}^2}} 
%    & \text{and} & &
    A_i'(n+1) & \geq \frac 1 {\sqrt{1 + \norm{\mu_{\max}}^2}}
         \ . \label{eq:ineq1}
  \end{align}
  
  We have that:
\begin{align*}
  \norm{\mu_i - \tilde \mu_i} 
  &= \norm{\frac{A_i'(1:n)}{A_i'(n+1)} - \frac{\tilde A_i'(1:n)}{\tilde A_i'(n+1)}} \\
  &= \norm{\frac{A_i'(1:n)}{A_i'(n+1)} - \frac{\alpha_i\tilde A_i'(1:n)}{A_i'(n+1)} + \frac{\alpha_i\tilde A_i'(1:n)}{A_i'(n+1)} - \frac{\alpha_i\tilde A_i'(1:n)}{\alpha_i\tilde A_i'(n+1)}} \\
  & \leq \frac{\norm{A_i'(1:n)-\alpha_i\tilde A_i'(1:n)}}{\abs{A_i'(n+1)}} +
\frac{\norm{\tilde A_i'(1:n)}\abs{\alpha_i\tilde A_i'(n+1)-A_i'(n+1)}}{\abs{A_i'(n+1)\alpha_i\tilde A_i'(n+1)}} \\
  & \leq \epsilon^* \sqrt{1 + \norm{\mu_{\max}}^2} + \frac {\abs{\alpha_i\tilde A_i'(n+1)-A_i'(n+1)}}{\abs{A_i'(n+1)}[\abs{A_i'(n+1)} - \abs{\alpha_i\tilde A_i'(n+1) - A_i'(n+1)}]}
\end{align*}
which follows in part by applying \eqref{eq:ineq1} for the left summand and noting that $\tilde A_i'$ is a unit vector for the right summand, giving the bound $\norm{\tilde A_i'(1:n)} \leq 1$.  Continuing with the restriction that $\epsilon^* < \frac 1 2 \frac 1 {\sqrt{1 + \norm{\mu_{\max}}^2}}$,
\begin{align*}
  \norm{\mu_i - \tilde\mu_i} 
  & \leq \epsilon^* \sqrt{1 + \norm{\mu_{\max}}^2} + \frac {\epsilon^*\sqrt{1 + \norm{\mu_{\max}}^2}}{\left[\frac 1 {\sqrt{1 + \norm{\mu_{\max}}^2}} - \epsilon^*\right]} \\
  & \leq \epsilon^* \left(\sqrt{1 + \norm{\mu_{\max}}^2} + 2 (1 + \norm{\mu_{\max}}^2)\right).
\end{align*}

Then, in order to guarantee that $\norm{\mu_i - \tilde\mu_i} < \epsilon$, it suffices to choose $\epsilon^*$ such that 
\[
  \epsilon^*\left(\sqrt{1 + \norm{\mu_{\max}}^2} + {2 (1 + \norm{\mu_{\max}}^2)}\right) \leq \epsilon, 
\]
which occurs when
\begin{equation} \label{eq:eps*choice}
  \epsilon^* \leq \frac {\epsilon}{\left(\sqrt{1 + \norm{\mu_{\max}}^2} + {2 (1 + \norm{\mu_{\max}}^2)}\right)}  \ .
\end{equation}
As $\epsilon < \frac 1 2$, the restriction $\epsilon^* < \frac 1 2 \sqrt{1 + \norm{\mu_{\max}^2}}$ holds automatically for the choice of $\epsilon^*$ in \eqref{eq:eps*choice}. 
The sample bound from \eqref{eq:sample_bound_for_err_prop} contains the dependency $N > \poly(\frac 1 {\epsilon^*}, \norm{\mu_{\max}}^{d^2})$.  Propagating the error gives a replacement dependency of \[N > \poly\left(\frac 1 \epsilon, \sqrt{1 + \norm{\mu_{\max}}^2}, \norm{\mu_{\max}}^{d^2}\right) = \poly( \frac 1 \epsilon, \norm{\mu_{\max}}^{d^2})\] as $d$ is non-negative.  This propagated dependency is reflected in \eqref{eq:poly_bound_ideal_means}.
\end{proof}

% The following are some lemmas that will be useful in the proof of Theorem \ref{thm:correctness}.
% xxx Most of this is redoing technical aspects of the reduction sketched earlier.  Basically, this is to organize things in a way that is most meaninful for the statement and proof of Theorem \ref{thm:correctness}. xxx

\paragraph{Distance of the Sampled Model to the Ideal Model}
\label{subsec:TotalVarDist}

An important part of the reduction is that the coordinates of $S$ are mutually independent. Without the threshold $\tau$, this is true (c.f. Lemma \ref{lem:Poisson-independence}).
However, without the threshold, one cannot know how to add more noise so that the total noise on each sample is iid.
% Moreover, when we impose a threshold $\tau$ on the total number of times each Gaussian can appear in the sample, the resulting distribution no longer has independent coordinates.
We show that we can choose the threshold $\tau$ large enough that the samples still come from a distribution with arbitrarily small total variation distance to the one with truly independent coordinates.

\begin{lemma} \label{lem:poisson-threshold}
Fix $\delta > 0$. Let $S \sim \Poisson(\lambda)$ for $\lambda \geq \ln \delta$.  Let $b = e \lambda$, If $\tau > e \lambda$, $\tau \geq 1$, and $\tau \geq \ln(1/\delta) - \lambda$, then $\prob{S > \tau} < \delta$.
\end{lemma}

\begin{proof}
By the Chernoff bound (See Theorem A.1.15 in \cite{alon2004probabilistic}),
\[
  \prob{S > \lambda(1+\epsilon)} \leq \left( e^\epsilon (1+\epsilon)^{-(1+\epsilon)} \right)^\lambda.
\]
For any $\tau > \lambda$, letting $\epsilon = \tau/\lambda - 1$, we get $$\prob{S > \tau} \leq \frac{e^{-\lambda} (e\lambda)^{\tau}}{\tau^\tau}.$$
%Note that since $\tau \geq 1$, $\log_b \tau \geq 0$.
To get $\prob{S > \tau} < \delta$, it suffices that $\tau - \tau \log_b \tau \leq \log_b(\delta e^\lambda)$. Note that $$\tau(1 - \log_b \tau) = \tau - \tau \log_b \tau =
\log_b \big(b^{\tau} (1/\tau)^{\tau}\big).$$ If $\tau - \tau \log_b \tau \leq \log_b \left( \delta e^{\lambda} \right)$, then we have
\begin{align*}
\log_b \big(b^{\tau} (1/\tau)^{\tau}\big) &\leq \log_b(\delta e^{\lambda})
\end{align*}
which then implies it suffices that
\begin{align*}
\frac{b^{\tau}}{\tau^{\tau}} = \frac{(e\lambda)^\tau}{\tau^\tau} \leq \lambda^\tau / \tau^\tau \leq (1/e)^\tau &\leq \delta e^{\lambda}
%\frac{(e \lambda)^{\tau}}{\tau^{\tau}} &\leq \delta e^{\lambda} \\
%\frac{e^{-\lambda} (e\lambda)^{\tau}}{\tau^{\tau}} &\leq \delta
\end{align*}
which holds for $\tau \geq \ln \left( \frac{1}{\delta e^\lambda} \right) = \ln(1/\delta) - \lambda$, giving the desired result.

\end{proof}

\begin{lemma} \label{lem:union-bound}
Let $N, \delta > 0$, $N \in \N$, and $T_1, T_2, \dots, T_N$ be iid with distribution $\Poisson(\lambda)$.
If $\tau \geq \ln (N/\delta) - \lambda$ then $$\prob{ \bigcup_{i} \left\{ T_i > \tau \right\}} < \delta.$$
\end{lemma}

\begin{proof}
By Lemma \ref{lem:poisson-threshold} $\tau \geq \ln (N/\delta) - \lambda$ implies $\prob{T_i > \tau} < \delta/N$ for every $i$.
The union bound gives us the desired result.
\end{proof}

We have now that if we choose our threshold $\tau$ large enough, our samples can be statistically close (See Appendix \ref{app:total-variation}) to ones that would come from the truly independent distribution.
This claim is made formal now:

\begin{lemma} \label{lem:total-variation}
Fix $\delta > 0$.
Let $\tau > 0$.
Let $F$ be a poisson distribution with parameter $\lambda$ and have corresponding density $\density_F$.
% Let $\tau > e\lambda$, $\tau \geq 1$, and $\tau \geq \ln(1/\delta) - \lambda$.
Let $G$ be a discrete distribution with density $\density_G(x) = \density_F(x) / F(\tau)$ when $0 \leq x \leq \tau$ and 0 otherwise.
Then $\d_{TV}(F,G) = 1 - F(\tau)$.
\end{lemma}

\begin{proof}
Since we are working with discrete distributions, we can write $$d_{TV}(F,G) = \frac{1}{2} \sum_{i=0}^{\infty}| f(i) - g(i) |.$$
Then we can compute
\begin{align*}
\d_{TV}(F,G) &= \frac{|F(\tau) - 1|}{2F(\tau)} \sum_{i=0}^{\tau} \density_F(i) + \frac{1}{2} \sum_{i=\tau+1}^{\infty} \density_F(i) = \frac{|F(\tau) - 1|}{2} + \frac{1 - F(\tau)}{2} = 1 - F(\tau).
\end{align*}
% By Lemma \ref{lem:union-bound}, we know that due to our choice of $\tau$, $1 - F(\tau) < \delta$ which completes the proof.
\end{proof}

\paragraph{Proof of Theorem \ref{thm:gmm-correctness}}
\label{subsec:CorrectnessProof}

We now show that after the reduction is applied, we can use the ICA routine given in \cite{GVX} to learn the GMM.
Instead of requiring exact values of each parameter, we simply require a bound on each.
The algorithm remains polynomial on those bounds, and hence polynomial on the true values.

\begin{proof}
The algorithm is provided parameters:
Covariance matrix $\Sigma$,
upper bound on tensor order parameter $d$,
access to samples from a mixture of $\means$ identical,
spherical Gaussians in $\R^{\dim}$ with covariance $\Sigma$,
confidence parameter $\delta$,
accuracy parameter $\epsilon$,
upper bound $w \geq \max_{i}(w_i) / \min_{i}(w_i)$,
upper bound on the norm of the mixture means $u$,
lower bound $v$ so $0 < b \leq \sigma_m(A^{\odot d/2})$, and
$r \geq \big(\max_i\norm{\mu_i} + 1)/(\min_i\norm{\mu_i})\big)$.
% lower bound $0 < r \leq \sigma_m(A^{\odot d/2})$.

The algorithm then needs to fix the number of samples $N$, sampling threshold $\tau$, Poisson parameter $\lambda$, and two new errors $\delta_1$ and $\delta_2$ so that $\delta_1 + \delta_2 \leq \delta$.
For simplicity, we will take $\delta_1 = \delta_2 = \delta/2$. Then fix $\sigma = \sup_{v \in S^{n-1}}\sqrt{\Var(v^T\eta(1))}$ for $\eta(1) \sim \Norm(0, \Sigma)$.
Recall that $B$ is the matrix whose $i$th column is $\mu_i/\norm{\mu_i}$. Let $A'$ be the matrix whose $i$th column is $(\mu_i, 1) / \norm{(\mu_i, 1)}$.

\item \paragraph{Step 1}
Assume that after drawing samples from Subroutine \ref{sub:independentsamples}, the signals $S_i$ are mutually independent (as in the ``ideal'' model given by (\ref{A'_ICAModel_ideal})) and the mean matrix $B$ satisfies $\sigma_m(B^{\odot d/2}) \geq b > 0$.
% Set
% \[
%   \epsilon^* = \frac {\epsilon}{\left(\sqrt{1 + u^2} + {2 (1 + u^2)}\right)}
% \]
Then by Theorem \ref{thm:MeanRecoveryIdeal}, with probability of error $\delta_1$, the call to \textbf{UnderdeterminedICA} in Algorithm \ref{alg:reduction} recovers the columns of $B$ to within $\epsilon$ and up to a permutation using $N$ samples where
\begin{align*} N & \geq p\left(\tau^{d^2}, \Theta \right) = \poly\left(n^{d}, m^{d^2},
      (\tau \sigma)^{d^2}, u^{d^2}, w^{d^2},
      d^{d^2}, r^{d^2}, 1/b^{d}, 1/\epsilon, 1/\delta_1 \right)
\end{align*}
where $p(\tau^{d^2}, \Theta)$ is the bound on $N$ promised by Theorem \ref{thm:MeanRecoveryIdeal} and $\Theta$ is all its arguments except the dependence in $\tau$.
So then we have that with at least $N$ samples in this ``ideal'' case, we can recover approximations to the true means in $\R^\dim$ up to a permutation and within $\epsilon$ distance.

\item \paragraph{Step 2}
We need to show that after getting $N$ samples from the reduction, the resulting distribution is still close in total variation to the independent one.
We will choose a new $\delta' = \delta_2/(2N)$.
Let $R \sim \Pois(\lambda)$.
Given $\delta'$, Lemma \ref{lem:total-variation} shows that for $\tau \geq \ln(1/\delta') - \lambda$, with probability $1-\delta'$, $R \leq \tau$.

Take $N$ iid random variables $X_1, X_2, \dots, X_N$ with distribution $F = \Poisson(\lambda)$ and density $\density_F$.
Let $G$ be a distribution with density function $\density_G(x) = (\density_F(x) \ind_{0 \leq x \leq \tau})/F(\tau)$.
Let $Y_1, Y_2, \dots, Y_N$ be iid random variables with distribution $G$.
Denote the joint distribution of the $X_i$'s by $F'$ \details{with density $\density_{F'}$,} and the joint distribution of the $Y_i$'s as $G'$ \details{with density $\density_{G'}$}.
By the union bound and the fact that total variation distance satisfies the triangle inequality, $$\d_{TV}(F', G') \leq \sum_{i=1}^{N} \d_{TV}(F,G) = N d_{TV}(F,G).$$
Then for our choice of $\tau$, by Lemma \ref{lem:poisson-threshold} and Lemma \ref{lem:total-variation}, we have $$\d_{TV}(F',G') \leq N d_{TV}(F, G) = N \prob{X_1 > \tau} \leq N\delta' = \delta_2/2. $$

By the same union bound argument, the probability that the algorithm fails (when $R > \tau$) is at most $\delta_2/2$, since it has to draw $N$ samples.
So with high probability, the algorithm does not fail; otherwise, it still does not take more than polynomial time, and will terminate instead of returning a false result.

\item \paragraph{Step 3}
We know that $N$ is at least a polynomial which can be written in terms of the dependence on $\tau$ as $p(\tau^{d^2}, \Theta)$.
This means there will be a power of $\tau$ which dominates all of the $\tau$ factors in $p$, and in particular, will be $\tau^{Cd^2}$ for some $C$.
It then suffices to choose $C$ so that $p\left(\tau^{d^2}, \Theta \right) \leq \tau^{Cd^2} q(\Theta) \leq N$, where
\begin{align} q(\Theta) & = \poly\left(n^{d}, m^{d^2},
      \sigma^{d^2}, u^{d^2}, w^{d^2},
      d^{d^2}, r^{d^2}, 1/b^{d}, 1/\epsilon, 1/\delta_1\right) \label{eq:q-theta}.
\end{align}
Then, with the proper choice of $\tau$ (to be specified shortly), from step 2 we have
$$p\left(\tau^{d^2}, \Theta \right) \leq \tau^{Cd^2}q(\Theta) \leq N = \frac{\delta_2}{\delta'} \leq \frac{\delta_2 \tau^\tau e^\lambda}{(e\lambda)^\tau} = \frac{\delta \tau^\tau e^\lambda}{2(e\lambda)^\tau}.$$
Since $\lambda \geq 1$ it suffices to choose $\tau$ so that
\begin{equation} \label{eq:tau-sample-bound}
\frac{2}{\delta}q(\Theta)\tau^{Cd^2} \leq \frac{\tau^\tau}{\tau^{Cd^2}(e\lambda)^\tau}.
\end{equation}
Finally, we claim that
$$
%\tau = \max\left\{ \ln\left(\frac{2N}{\delta} \right) - \lambda, 4\big(\log(2/\delta) + \log(p)\big)\max\{(e\lambda)^2, 4td^2\}\right\} = O\left((\lambda^2 + d^2)\log \frac{p}{\delta}\right)
\tau = 4\big(\log(2/\delta) + \log(q(\Theta))\big)\max\left((e\lambda)^2, 4Cd^2\right) = O\left((\lambda^2 + d^2)\log \frac{q(\Theta)}{\delta}\right)
$$
is enough for the desired bound on the sample size. Observe that $4(\log(2/\delta) + \log(q(\Theta))) \geq 1$.

An useful fact is that for general $x, a, b \geq 1$, $x \geq \max(2a, b^2)$ satisfies $x^a \leq x^x/b^x$.
This captures the essence of our situation nicely.
Letting $e\lambda$ play the role of $b$, $Cd^2$ play the role of $a$ and $x$ play the role of $\tau$, to satisfy (\ref{eq:tau-sample-bound}), it suffices that
\begin{align*}
\frac{2}{\delta}q(\Theta) &\leq \frac{\tau^{\tau/2} \tau^{\tau/4} \tau^{\tau/4}}{\tau^{Cd^2}(e\lambda)^2}.
\end{align*}
We can see that $\tau^{\tau/2} \geq (e\lambda)^2$ and $\tau^{\tau/4} \geq \tau^{Cd^2}$ by construction.
But we also get $\tau/4 \geq \log(2/\delta) + \log{q(\Theta)}$ which implies $\tau^{\tau/4} \geq e^{\tau/4} \geq \frac{2}{\delta} q(\Theta)$.
Thus for our choice of $\tau$, which also preserves the requirement in Step 2, there is a corresponding set of choices for $N$, where the required sample size remains polynomial as
\begin{align*}
& \poly\left(n^{d}, m^{d^2},
      (\tau \sigma)^{d^2}, u^{d^2}, w^{d^2},
      d^{d^2}, r^{d^2}, 1/b^{d}, 1/\epsilon, 1/\delta \right)
\end{align*}
where we used the bound $q(\Theta) \leq (n^d m^{d^2} \sigma^{d^2} u^{d^2} w^{d^2} (d+1)^{d^2} r^{d^2} / b^{d} \delta_1 \epsilon)^{O(1)}$.
\end{proof}

\section{Smoothed Analysis} \label{sec:smoothed}

In this section, we prove that in high enough dimension, a Gaussian Mixture will satisfy the non-degeneracy conditions discussed above.
We start with a base matrix $M \in \R^{n \times \binom{n}{2}}$ and add a perturbation
matrix $N \in \R^{n \times \binom{n}{2}}$ with each entry coming iid from $\Normal(0, \sigma^2)$ for some 
$\sigma > 0$. (We restrict the discussion to the second power for simplicity; extension to higher power is
straightforward.) As in \cite{GVX}, it will be convenient to work with the multilinear part of the Khatri--Rao 
product: For a column vector $A_k \in \R^n$ define $A_k^{\ominus 2} \in \R^{\binom{n}{2}}$, a subvector of 
$A_k^{\odot 2} \in \R^{n^2}$, given by $(A_k^{\ominus 2})_{ij} := (A_k)_i (A_k)_j$ for $1 \leq i < j \leq n$. Then 
for a matrix $A = [A_1, \ldots, A_m]$ we have $A^{\ominus 2} := [A_1^{\ominus 2}, \ldots, A_m^{\ominus 2}]$. 

\begin{theorem} \label{thm:smoothed-sigma-min-multilinear}
With the above notation, for any base matrix $M$ with dimensions as above, we have, for some absolute constant $C$,
\begin{vstd}
\begin{align*}
\prob{\sigma_{\min}((M+N)^{\ominus 2}) \leq \frac{\sigma^2}{n^7}} \leq \frac{2C}{n}.
%\mathsf{Pr} \biggl(\sigma_{\min}((M+N)^{\ominus 2}) \leq \frac{\sigma^2}{n^7}\biggr) \leq \frac{2C}{n},
\end{align*}
\end{vstd}
\end{theorem}

Theorem~\ref{thm:smoothed-sigma-min-intro} follows by noting
that $\sigma_{\min}(A^{\odot 2}) \geq \sigma_{\min}(A^{\ominus 2})$. 

\begin{proof}
In the following, for a vector space $V$ (over the reals) $\dist(v, V')$ denotes the distance between vector
$v \in V$ and subspace $V' \subseteq V$; more precisely, $\dist(v, V') := \min_{v' \in V'} \norm{v-v'}_2$.
We will use a lower bound on $\sigma_{\min}(A)$, found in Appendix \ref{subsec:rudelson-vershynin}.

With probability $1$, the columns of the matrix $(M+N)^{\ominus 2}$ are linearly 
independent. This can be proved along the lines of a similar result in \cite{GVX}. 
Fix $k \in {\binom{n}{2}}$ and let $u \in \R^{{\binom{n}{2}}}$ be a 
unit vector orthogonal to the subspace spanned by the columns of $(M+N)^{\ominus 2}$ other than column $k$. Vector
$u$ is well-defined with probability $1$. Then the distance of the $k$'th column $C_k$ from the span of the 
rest of the columns is given by 
\begin{align}
u^T C_k &= u^T (M_k+N_k)^{\ominus 2} \nonumber = \sum_{1 \leq i < j \leq n} u_{ij} (M_{ik}+N_{ik})(M_{jk}+N_{jk}) \nonumber \\
&=  \sum_{1 \leq i < j \leq n} u_{ij} M_{ik}M_{jk} + \sum_{1 \leq i < j \leq n} u_{ij}M_{ik}N_{jk} 
\\ & \quad + \sum_{1 \leq i < j \leq n} u_{ij}N_{ik}M_{jk} + \sum_{1 \leq i < j \leq n} u_{ij}N_{ik}N_{jk} \nonumber \\
&=: P(N_{1k}, \ldots, N_{nk}). \label{eqn:polynomial}
\end{align}

Now note that this is a quadratic polynomial in the random variables $N_{ik}$. We will apply the anticoncentration
inequality of \processifversion{vstd}{Carbery--Wright~}\cite{CarberyWright} to this polynomial to conclude that the distance between 
the $k$'th column of $(M+N)^{\ominus 2}$ and the span of the rest of the columns is unlikely to be very small (see Appendix \ref{subsec:carbery-wright} for the precise result).

Using $\norm{u}_2 = 1$, the variance of our polynomial in \eqref{eqn:polynomial} becomes 
\begin{align*}
\Vr{P(N_{1k}, \ldots, N_{nk})} 
&= \\ & \sigma^2 \biggl(\sum_j \Bigl(\sum_{i:i < j} u_{ij}M_{ik}\Bigr)^2 + \sum_i \Bigl(\sum_{j: i < j} u_{ij}M_{jk}\Bigr)^2 \biggr) 
+ \sigma^4 \sum_{i < j} u_{ij}^2  \geq \sigma^4.
\end{align*}
In our application, random variables $N_{ik}$ for $i \in [n]$ are not standard
Gaussians but are iid Gaussian with variance $\sigma^2$, and our polynomial does not have unit variance. 
After adjusting for these differences using 
% an easy argument and using 
the estimate on the variance of $P$ above, Lemma~\ref{lem:CarberyWright} gives 
%\break
%\begin{align*}
$
\prob{\abs{P(N_{1k}, \ldots, N_{nk})-t} \leq \epsilon} \leq 
2C \sqrt{{\epsilon}/{\sigma^2}} = 2C \sqrt{\epsilon}/\sigma$.
%\end{align*}
Therefore, $\prob{\exists k \mathrel{:} \dist(C_k, C_{-k}) \leq \epsilon} 
\leq {\binom{n}{2}} 2C  \sqrt{\epsilon}/\sigma$ by the union bound over the choice of $k$.
%\begin{align*}
%\end{align*}

Now choosing $\epsilon = \sigma^2/n^6$, Lemma~\ref{lem:RudelsonVershynin} gives
%\begin{align*}
$\prob{\sigma_{\min}((M+N)^{\ominus 2}) \leq {\sigma^2}/{n^7}} \leq {2C}/{n}$. 
%\end{align*}
\end{proof}

We note that while the above discussion is restricted to Gaussian perturbation, the same technique would work
for a much larger class of perturbations. To this end, we would require a version of the Carbery-Wright
anticoncentration inequality which is applicable in more general situations. We omit such generalizations here.

%%% Local Variables:
%%% TeX-master: "paper_with_new_reduction.tex"
%%% End:

\subsubsection{The curse of low dimensionality for Gaussian mixtures}
\label{identifiability_low_dimension}
\lnote{domain of gaussian kernel, H. shouldn't the kernel be in Rn instead of R?}
In this section we prove that for small $n$ there is a large class of superpolynomially close mixtures in $\R^n$ with fixed variance.  This goes beyond the specific example of exponential closeness  given in~\cite{moitra10} as we demonstrate that such mixtures are ubiquitous as long as there is no lower bound on the separation between the components.

Specifically, let $S$ be a cube $[0,1]^n \subset \R^n$. 
We will show that for any two sets of $k$ points $X$ and $Y$ in $S$, with fill $h$ (we say that $X$ has fill $h$, if there is a point of $X$ within distance $h$ of any point of $S$), there exist two mixtures $p,q$ with means on non-overlapping subsets of $X\cup Y$, which are (nearly) exponentially close in $1/h$ in the $L^1(\R^n)$ norm. 
Note that the fill of a sample from the uniform distribution on the cube can be bounded (with high probability) by $O(\frac{\log k}{k^{1/n}})$. 

%with fill $h$\max_{x \in [0,1]} \min_{x_i \in X} \|x - x_i\|$
We start by defining some of the key objects.  Let $K(x,z)$ be the unit Gaussian kernel.
Let ${\cal K}$ be the integral operator corresponding to the convolution with a unit Gaussian: ${\cal K}g(z)= \int_\R K(x,z)g(x) dx$.
Let $X$ be any subset of $k$ points in $[0,1]^n$.
%Let $g$ be any positive function with $L_2$ norm $1$ supported on $[0,1]$. We put  $f={\cal K}g$.\\
Let $K_X$ be the kernel matrix corresponding to $X$, $(K_X)_{ij} = K(x_i,x_j)$. 
It is known to be positive definite.
The \emph{interpolant} is defined as $f_{X,k}(x) = \sum w_i K(x_i,x)$, where the coefficients $w_i$ are chosen so that  $(\forall i) f_{X,k}(x_i)=f(x_i)$. 
It is easy to see that such interpolant exists and is unique, obtained by solving a linear system involving $K_X$.

We will  need some properties of the Reproducing Kernel Hilbert Space $H$ corresponding to the kernel $K$.
In particular, we need the bound $\|f\|_\infty \le \|f\|_H$ and  the reproducing property, $\langle f(\cdot), K(x,\cdot)\rangle_H = f(x), \forall f \in H$. 

%We first prove the following
\begin{lemma} 
Let $g$ be any positive function with $L_2$ norm $1$ supported on $[0,1]^n$ and let  $f={\cal K}g$.
If fill $X$ is $h$, there exists $A>0$ such that
%Let $X$ be any subset of $k$ points in $[0,1]$ with fill $h$.
%Let $g$ be any positive function with $L_2$ norm $1$ supported on $[0,1]$, $f=Kg$.\\
$$
\| f - f_{X,k}\|_{L^\infty(\R^n)}  < \exp (A \frac{\log h}{h})
$$ 
\end{lemma}
\begin{proof}
%Take $f_{X,k}$ to be the interpolant of $f$. Explain ???.
From~\cite{rieger2010sampling}, Corollary 5.1 (taking $\lambda=0$)\lnote{is it really 5.1? or tmh 6.1? it seems to be already in the book for lambda = 0} we have that for some $A>0$ and $h$ sufficiently small\lnote{what's the use of the l infinity estimate here?}
$$
\|f - f_{X,k}\|_{L^\infty([0,1]^n)} < \exp (A \frac{\log h}{h})
$$
and 
$$
\|f - f_{X,k}\|_{L^2([0,1]^n)} < \exp (A \frac{\log h}{h})
$$
Note that the norm is on $[0,1]$ while we need to control the norm on $\R$.\lnote{$^n$?}

That, however, does not allow us to directly control the function behavior outside of the interval\lnote{box, you mean?}. To do that we need a bound on the RKHS norm of $f - f_{X,k}$. 
We first observe for any $x_i \in X$, $f(x_i) - f_{X,k}(x_i) = 0$. Thus, from the reproducing property of RKHS, $\langle f - f_{X,k}, f_{X,k}\rangle_H = 0$. Using properties of RKHS with respect to the operator ${\cal K}$ (see, e.g., Proposition 10.28 of ~\cite{wendland2005scattered})\lnote{you really mean $L^2(X)$ here?}
\begin{align*}
\|f - f_{X,k}\|_H^2 
&= \langle f - f_{X,k}, f - f_{X,k}\rangle_H \\
&= \langle f - f_{X,k}, f \rangle_H \\
&= \langle f - f_{X,k}, {\cal K}g\rangle_H \\
&= \langle f - f_{X,k}, g \rangle_{L_2([0,1]^n)} \\
&\le  \|f - f_{X,k}\|_{L^2(X)} \|g\|_{L^2(X)} < \exp (A \frac{\log h}{h}) 
\end{align*}
Thus\lnote{}
$$
\|f - f_{X,k}\|_{L^\infty(\R^n)} \le \|f - f_{X,k}\|_H < \exp (A \frac{\log h}{h}) 
$$
\end{proof}

%Now we are ready to prove the 
\begin{theorem}\label{thm:differentmixtures}
Let   $X$ and $Y$ be any two subsets of $[0,1]^n$ with fill $h$. 
Then there exist two Gaussian mixtures $p$ and $q$ (with positive coefficients summing to one, but not necessarily the same number of components), which are centered on
two non-intersecting\lnote{do you mean disjoint?} subsets of $X\cup Y$ and such that 
$$
\|p-q\|_{L^1(\R^n)} < \exp(B \frac{\log h}{h})
$$
for some constant $B>0$.
 
 %and  let $f_{X,k}$ and $f_{Y,k}$ be the interpolants as above. 
%Then 
\end{theorem}
\begin{proof}
To simplify the notation we assume that $n=1$.  The general case follows verbatim, except that the interval 
of integration, $[-1/h,1/h]$, and its complement need to be replaced by the sphere of radius $1/h$ and its complement respectively.

Let  $f_{X,k}$ and $f_{Y,k}$ be the interpolants  as above\lnote{for what g or f?}. 
We see that $\|f_{X,k} - f_{Y,k}\|_{L^\infty(\R)} < 2\exp (A \frac{\log h}{h}) $. $f_{X,k}$ and $f_{Y,k}$ are both linear combinations of Gaussians possibly with negative coefficients and so is $f_{X,k} -  f_{Y,k}$ .  By collecting positive and negative coefficients  we write  $f_{X,k} -  f_{Y,k} = p_1 - p_2$, where, $p_1$ and $p_2$ are mixtures with positive coefficients only.

Put $p_1 = \sum_{i \in S_1} \alpha_i K(x_i, x)$, $p_2 = \sum_{i \in S_2} \beta_i K(x_i, x)$, where $S_1$ and $S_2$ are non-overlapping subsets of $X \cup Y$. Now we need to ensure that the coefficients can be normalized to sum to $1$.

Let $\alpha = \sum \alpha_i$, $\beta = \sum \beta_i$. By integrating over the interval $[0,1]$, and since $f$ is strictly positive on the interval,  it is easy to see that $\alpha >C$ (and by the same token $\beta>C$), where $C$ is some universal constant\lnote{???}. We have\lnote{is there a constant missing in the first equality? or the kernel is normalized somehow?}
 $$
|\alpha - \beta| = \left | \int_{\R} p_1(x) - p_2(x) dx \right| \le \|p_1 - p_2\|_{L^1} 
$$
$$
\|p_1 - p_2\|_{L^1}  \le \int_{[-1/h,1/h]} \|f_{X,k} - f_{Y,k}\|_{L^\infty(\R)} dx + (\alpha + \beta) \int_{x \notin [-1/h,1/h]} K(0,x-1)dx
$$
%|\frac{1}{\alpha}\int_{[-k,k]} p_1(x) - p_2(x) dx| < 2k+2

Noticing that the first summand is bounded by $\frac{2}{h} \exp (A \frac{\log h}{h})$ and the integral in the second summand is smaller than $e^{-1/h}$ for $h$ sufficiently small, 
it follows immediately, that $|1 - \frac{\beta}{\alpha}| < \exp(A' \frac{\log h}{h})$ \lnote{I don't think you get the log here (because of the second term)}for some $A'$ and $h$ sufficiently small.

Hence,  we have
$$
\left \|\frac{1}{\alpha} p_1 - \frac{1}{\beta}p_2 \right\|_{L^{1}} \le 
\frac{1}{C}\left \|\frac{\beta}{\alpha} p_1 -  p_2 \right\|_{L^{1}} \le \frac{1}{C}\left |1 - \frac{\beta}{\alpha}\right| \|p_1\|_{L^1}+
\frac{1}{C}\left \| p_1 -  p_2 \right\|_{L^{1}} 
%\le \frac{1}{C} \exp(-A'' \frac{\log h}{h}) 
$$

Collecting exponential inequalities completes the proof.

\end{proof}

\begin{proof}[Theorem \ref{thm:low-dim-identifiability}]
For convenience we will use a set of $4k^2$ points instead of $k^2$. Clearly it does not affect the exponential rate. 
By a simple covering set argument (cutting the cube into $m^n$ cubes with size $1/m$) and basic probability (the coupon collector's problem), we see that the fill $h$ of $2 n m^n \log m$ points is at most $O(\sqrt{n}/m)$ with probability $1-o(1)$\details{Coupon collector problem bound used (pg 59 Motwani and Raghavan)}. 
Hence, given $k$ points, we have $h = O(\sqrt{n}(\frac{\log k}{k})^{1/n})$. 
We see that with a smaller probability (but still close to $1$ for large $k$), we can sample $k$ points $4k$ times and still have the same fill on each group of $k$\lnote{union bound? what if $k$ depends on $n$?}.

%Partitioning the set of $4k^2$ points into $2k$ disjoint subsets of $2k$ points and applying Theorem \ref{thm:differentmixtures} (to $k+k$ points) we obtain $2k$ pairs of exponentially close mixtures with at most $2k$ components each. 
Pairing the sets of $k$ points into $2k$ pairs of sets arbitrarily and applying Theorem \ref{thm:differentmixtures} (to $k+k$ points) we obtain $2k$ pairs of exponentially close mixtures with at most $2k$ components each. 
If one of the pairs has the same number of components, we are done. If not, 
by the pigeon-hole principle for at least two pairs of mixtures $p_1 \approx q_1$ and $p_2 \approx q_2$ the differences of the number of components (an integer  number between $0$ and $2k-2$) must coincide.  
Assume without loss of generality that $p_1$ has no more components than $q_1$ and $p_2$ has no more components than $q_2$.Taking $p = \frac{1}{2}(p_1 + q_2)$ and $q= \frac{1}{2}(p_2 + q_1)$ completes the proof.
\end{proof}

%\end{document}

\section{Recovery of Gaussian Weights} \label{sec:WeightRecovery}

\paragraph{Multivariate cumulant tensors and their properties.}
%This section will make use of higher order multivariate cumulants which are 
%formally defined in Appendix \ref{app:Cumulants}.
%=======
%First, we recall the notations introduced in section \ref{app:Cumulants} for multivariate cumulants.  
%>>>>>>> .r200
%For a random vector $Y \in \R^n$, both $\cumtns{Y}{i_1, \dotsc, i_\ell}$ and $\cum(Y_{i_1}, \cdots, Y_{i_\ell})$ denote the cross-cumulant between the random variables $Y_{i_1}, \ldots, Y_{i_\ell}$.
%All summations will be explicitly written (that is, Einstein notation will not be used).
%Dropping the indices, then $\cumtns Y {}$ gives the cumulant tensor containing entries corresponding to the cross cumulants of coordinate random vectors of $Y$.
%When $i_1 = \cdots = i_\ell = i$, then we arrive at the order-$\ell$  univariate cumulant denoted by $\cum_\ell(Y_{i})$.
Our technique for the recovery of the Gaussian weights relies on the tensor properties of multivariate cumulants that have been used in the ICA literature.

Given a random vector $Y \in \R^n$, the moment generating function of $Y$ is defined as $M_Y(t) := \mathbb{E}_Y(\exp(t^TY))$.
The \textit{cumulant generating function} is the logarithm of the moment generating function: $g_{\ds Y}(t) := \log(\mathbb{E}_{\ds Y}(\exp(t^T Y))$.

%The following properties can be shown using just the definition of the cumulant generating function:
%\begin{itemize}
%\item (Independent Vectors) For $Y$ and $Z$ independent random vectors, $g_{\ds Y+Z}(t) = g_{\ds Y}(t) + g_{\ds Z}(t)$.
%\item (Independent Coordinates) If $Y \in \R^n$ is a random vector with coordinates giving independent random variables, then $g_{\ds Y_i}(t) = \sum_{i=1}^n g_{\ds Y_i}(t_i)$
%\item (Vanishing Gaussians) If $\eta$ is a Gaussian random variable, then the derivatives of order $\ell \geq 3$ of $g_{\ds \eta}$ are all zero.
%\end{itemize}
%These properties provide for a nice algebraic separation of independent random variables which makes the second characteristic function and statistics derived from it useful in the formulation of ICA algorithms.
%Also apparent is that an ICA algorithm which only relies on higher order ($\ell \geq 3$) derivatives of $g$ is unaffected by an additive Gaussian noise up to the effects of sample estimation.

Similarly to the univariate case, multivariate cumulants are defined using the coefficients of the Taylor expansion of the cumulant generating function.
We use both $\cumtns{Y}{j_1, \dots, j_\ell}$ and $\cum(Y_{j_1}, \dots, Y_{j_\ell})$ to denote the order-$\ell$ cross cumulant between the random variables $Y_{j_1}, Y_{j_2}, \dots, Y_{j_\ell}$.
Then, the cross-cumulants $\cumtns{Y}{j_1, \dots, j_\ell}$ 
are given by the formula $\cumtns{Y}{j_1, \dots, j_\ell} = \frac{\partial}{\partial t_{j_1}} \cdots \frac{\partial}{\partial t_{j_\ell}} g_{\ds Y}(t) \big|_{t=0}$.
When unindexed, $\cumtns{Y}{}$ will denote the full order-$\ell$ tensor containing all cross-cumulants, with the order of the tensor being made clear by context.
In the special case where $j_1 = \cdots = j_\ell = j$, we obtain the order-$\ell$ univariate cumulant $\cum_{\ell}(Y_j) = \cumtns{Y}{j, \dots, j}$ ($j$ repeated $\ell$ times) previously defined.
We will use some well known properties of multivariate cumulants, found in Appendix~\ref{sec:cumulant-properties}.

The most theoretically justified ICA algorithms have relied on the tensor structure of multivariate cumulants, including
the early, popular practical algorithm JADE \cite{CardosoS93}.
In the fully determined ICA setting in which the number source signals does not exceed the ambient dimension, the papers \cite{AroraGMS12} and \cite{Belkin2012} demonstrate that ICA with additive Gaussian noise can be solved in polynomial time and using polynomial samples.  The tensor structure of the cumulants was (to the best of our knowledge) first exploited in \cite{FOOBI} and later in \cite{BIOME} to solve underdetermined ICA.
Finally, \cite{GVX} provides an algorithm with rigorous polynomial time and sampling bounds \vnote{same exponential caveats as our proposed technique in this paper} for underdetermined ICA in the presence of Gaussian noise.

\paragraph{Weight recovery (main idea).}
Under the basic ICA reduction (see section~\ref{sec:gmm-reduction}) using the Poisson distribution with parameter $\lambda$, we have that $X = AS + \eta$ is observed such that $A = [\mu_1 | \cdots | \mu_m]$ and $S_i \sim \Poisson (w_i \lambda)$.
As $A$ has already been recovered, what remains to be recovered are the weights $w_1, \cdots, w_m$.
These can be recovered using the tensor structure of higher order cumulants.
The critical relationship is captured by the following Lemma:% (proven in Appendix~\ref{subsec:proof-cum_COV}):
\begin{lemma} \label{lem:cum_COV}
  Suppose that $X = AS + \eta$ gives a noisy ICA model.
  When $\cumtns{X}{}$ is of order $\ell > 2$, then $ \vec{\cumtns{X}{}} = A^{\odot \ell} (\cum_\ell(S_1), \dotsc, \cum_\ell(S_m))^T$.
\end{lemma}
\begin{proof}
  It is easily seen that the Gaussian component has no effect on the cumulant:
  \begin{equation*}
    \cumtns{X}{} = \cumtns{AS + \eta}{} = \cumtns{AS}{}  + \cumtns{\eta}{}
      = \cumtns{AS}{}
  \end{equation*}
  Then, we expand $\cumtns X {}$:
  \begin{align*}
    \cumtns{X}{i_1, \cdots, i_\ell} &= \cumtns{AS}{i_1, \cdots, i_\ell} =\cum((AS)_{i_1}, \cdots, (AS)_{i_\ell}) \\
    &= \cum\left(\sum_{j_1 = 1}^m A_{i_1j_1}S_{j_1}, \cdots, \sum_{j_\ell = 1}^m A_{i_\ell j_\ell}S_{j_\ell}\right) \\
    &= \sum_{j_1, \cdots, j_\ell \in [m]} \left(\prod_{k=1}^\ell A_{i_kj_k}\right) \cum(S_{j_1}, \cdots, S_{j_\ell}) & \text{by multilinearity}
  \end{align*}
  But, by independence, $\cum(S_{j_1}, \cdots, S_{j_m}) = 0$ whenever $j_1 = j_2 = \cdots = j_\ell$ fails to hold.
  Thus, 
  \begin{align*}
    \cumtns{X}{i_1, \cdots, i_\ell} &= \sum_{j=1}^m \left(\prod_{k=1}^\ell A_{i_kj}\right) \cum_\ell(S_j) = \sum_{j=1}^m \big((A_j)^{\otimes \ell}\big)_{i_1, \cdots, i_\ell} \cum_\ell(S_j)
    % &= (A^{\odot \ell})_{\delta(i_1, \dots, i_\ell)} (\cum_\ell(S_1), \cdots, \cum_\ell(S_m))^T.
  \end{align*}
  Flattening yields: $\vec{\cumtns{X}{}} = A^{\odot \ell}(\cum_\ell(S_1), \cdots, \cum_\ell(S_m))^T$.
\end{proof}

% \begin{lemma} \label{lem:cum_COV}
%   When $\cumtns{X}{}$ is of order $\ell > 2$, then $ \vec{\cumtns{X}{}} = \lambda A^{\odot \ell} w $ where $w = (w_1, \cdots, w_m)^T$.
% \end{lemma}

% See Appendix \ref{subsec:proof-cum_COV} for the proof of Lemma \ref{lem:cum_COV}.

% In particular, we are interested in the ideal noisy ICA model $X = AS + \eta(\tau)$
% from the basic reduction (section \ref{sec:reduction}). 
In particular, we have
that $S_i \sim \Poisson(w_i \lambda)$ with $w_i$ the probability of sampling from the $i$\textsuperscript{th} Gaussian.  Given knowledge of $A$ and the cumulants
of the Poisson distribution, we can recover the Gaussian weights.
  \begin{theorem}
    Suppose that $X = AS + \eta(\tau)$ is the unrestricted noisy ICA model from the basic reduction (see section~\ref{sec:gmm-reduction}).
    Let $\ell > 2$ be such that $A^{\odot \ell}$ has linearly independent columns, % (for instance $\ell = d$ from before suffices), 
and let $(A^{\odot \ell})^\dagger$ be its Moore-Penrose pseudoinverse.  
    Let $\cumtns{X}{}$ be of order $\ell$.
    Then $\frac 1 \lambda (A^{\odot \ell})^\dagger \vec{\cumtns{X}{}}$ is the vector of mixing weights $(w_1, \ldots, w_m)^T$ of the Gaussian mixture model.
  \end{theorem}
  \begin{proof}
  From Lemma \ref{lem:PoissonCumulants}, $\cum_{\ell}(S_i) = \lambda w_i$.  
  Then Lemma~\ref{lem:cum_COV} implies that $\vec{\cumtns{X}{}} = \lambda A^{\odot \ell} (w_1, \dotsc, w_m)^T$.
  Multiplying on the left by $\frac 1 \lambda (A^{\odot \ell})^\dagger$ gives the result.
  \end{proof}
% As $\lambda$ is a known parameter in the full ICA reduction (section \ref{sec:reduction}), this suffices to recover the Gaussian weights.

%%% Local Variables:
%%% TeX-master: "paper_with_new_reduction.tex"
%%% End:

\section{Addendum}

\subsection{Properties of Cumulants}\label{sec:cumulant-properties}

The following properties of multivariate cumulants are well known and are largely inherited from the definition of the cumulant generating function:
\begin{itemize}
  \item (Symmetry) Let $\sigma$ give a permutation of $k$ indices.  
  Then, $\cumtns Y {i_1, \cdots, i_\ell} = \cumtns Y {\sigma(i_1), \cdots, \sigma(i_\ell)}$.
  \item (Multilinearity of coordinate random variables)
  Given constants $\alpha_1, \cdots, \alpha_\ell$, then
  \[
    \cum(\alpha_1 Y_{i_1}, \cdots, \alpha_\ell Y_{i_\ell}) = 
      \left(\prod_{i=1}^\ell \alpha_i\right)\cum(Y_{i_1}, \cdots, Y_{i_\ell}) \ .
  \]
  Also, given a scalar random variable $Z$, then
  \[
    \cum(Y_{i_1}+Z, Y_{i_2}, \cdots, Y_{i_\ell}) = 
      \cum(Y_{i_1}, Y_{i_2}, \cdots, Y_{i_\ell}) + \cum(Z, Y_{i_2}, \cdots, Y_{i_\ell})
  \]
  with symmetry implying the additive multilinear property for all other coordinates.
  \item (Independence)
  If there exists $i_j, i_k$ such that $Y_{i_j}$ and $Y_{i_k}$ are independent
  random variables, then the cross-cumulant $\cumtns Y {i_1, \cdots, i_\ell} = 0$.
  Combined with multilinearity, it follows that when there are two independent random vectors $Y$ and $Z$, then $\cumtns {Y+Z} {} = \cumtns Y {} + \cumtns Z {}$.
  \item (Vanishing Gaussians)
  When $\ell \geq 3$, then for the Gaussian random variable $\eta$, $\cumtns \eta {} = 0$.
\end{itemize}

\subsection{Rudelson-Vershynin subspace bound}\label{subsec:rudelson-vershynin}

\begin{lemma}[\processifversion{vstd}{Rudelson--Vershynin~}\cite{RudelsonVershynin}] \label{lem:RudelsonVershynin}
If $A \in \R^{n \times m}$ has columns $C_1, \ldots, C_m$, then denoting $C_{-i} = \spn{C_j: j \neq i}$, we have 

\begin{align*}
\frac{1}{\sqrt{m}} \min_{i \in [m]}\dist(C_i, C_{-i}) \leq \sigma_{\min}(A),
\end{align*}

where as usual $\sigma_{\min}(A) = \sigma_{\min(m,n)}(A)$.
\end{lemma}

\subsection{Carbery-Wright anticoncentration}\label{subsec:carbery-wright}

The version of the anticoncentration inequality we use is explicitly given in \cite{MOS} which
in turn follows immediately from \cite{CarberyWright}:

\begin{lemma}[\cite{MOS}]\label{lem:CarberyWright}
Let $Q(x_1, \ldots, x_n)$ be a multilinear polynomial of degree $d$. 
Suppose that $\Vr{Q} = 1$ when $x_i \sim \Normal(0,1)$ for all $i$. Then there exists an absolute constant $C$
such that for $t \in \R$ and $\epsilon > 0$,
\begin{align*}
\Pr_{(x_1, \ldots, x_n) \sim \Normal(0,I_n)}(\abs{Q(x_1, \ldots, x_n)-t} \leq \epsilon) \leq C d \epsilon^{1/d}. 
\end{align*}
\end{lemma}
\subsection{Lemmas on the Poisson Distribution} \label{sec:Poisson-lemmas}
The following lemmas are well-known; see, e.g., \cite{bookDasgupta}. We provide proofs for 
completeness.
\begin{lemma} \label{lem:appendix-Poisson-basic}
If $X \sim \Pois(\lambda)$ and $Y |_{X=x} \sim \Bin(x,p)$ then $Y \sim \Pois(p \lambda)$.
\end{lemma}

\begin{proof}
\begin{align*}
\prob{Y=y}
&= \sum_{x: x \geq y}^{\infty} \prob{Y = y \suchthat X = x} \prob{X = x} \\
&= \sum_{x: x \geq y}^{\infty} {x \choose y} p^y (1-p)^{x-y} \frac{\lambda^x e^{-\lambda}}{x!} \\
&= p^y e^{-\lambda} \sum_{x: x \geq y}^{\infty} \frac{\lambda^x}{x!} {x \choose y} (1-p)^{x-y} \\
&= \frac{(p \lambda)^y e^{-\lambda}}{y!} \sum_{x: x \geq y}^{\infty} \frac{(\lambda(1-p))^{x-y}}{(x-y)!} \\
&= \frac{(p \lambda)^y e^{-\lambda}}{y!} e^{(1-p)\lambda} \\
&= \frac{(p \lambda)^y e^{-p \lambda}}{y!}.
\end{align*}
\end{proof}

\begin{lemma} \label{lem:appendix-poisson-independence}
Fix a positive integer $k$, and let $p_i \geq 0$ be such that $p_1 + \dotsb + p_k =1$. If $X \sim \Pois(\lambda)$ and
$(Y_1, \ldots, Y_k)|_{X = x} \sim \Multinom(x; p_1, \ldots, p_k)$ then $Y_i \sim \Pois(p_i \lambda)$ for all $i$ and
$Y_1, \ldots, Y_k$ are mutually independent.
\end{lemma}

\begin{proof}
  The first part of the lemma (i.e., $Y_i \sim \Pois(p_i \lambda)$ for all $i$) follows from Lemma~\ref{lem:appendix-Poisson-basic}.
For the second part, let's prove it for the binomial case ($k=2$); the general case is similar.
\begin{align*}
\prob{Y_1 = y_1, Y_2 = y_2} &= \prob{Y_1 = y_1, Y_2 = y_2 \suchthat X = y_1+y_2} \prob{X = y_1+y_2} \\
&= {y_1+y_2 \choose y_1} p^{y_1}(1-p)^{y_2} \cdot \frac{\lambda^{y_1+y_2} e^{-\lambda}}{(y_1+y_2)!} \\
&= \frac{(p\lambda)^{y_1} e^{-p\lambda}}{y_1!} \cdot \frac{((1-p)\lambda)^{y_2} e^{-(1-p)\lambda}}{y_2!} \\
&= \prob{Y_1 = y_1} \cdot \prob{Y_2 = y_2}.
\end{align*}
\end{proof}

\subsection{Bounds on Stirling Numbers of the Second Kind}
The following bound comes from \cite{Stirling} Theorem 3.
\begin{lemma} \label{lem:StirlingBound_nr}
  If $n \geq 2$ and $1 \leq r \leq n-1$ are integers, then $\stirling{n}{r} \leq \frac 1 2 {n \choose r} r^{n-r}$.
\end{lemma}

From this, we can derive a somewhat looser bound on the Stirling numbers of the second kind which does not depend on $r$:
\begin{lemma} \label{lem:StirlingBound_n}
  If $n, r \in \Z^+$ such that $r \leq n$, then $\stirling{n}{r} \leq n^{n-1}$.
\end{lemma}
\begin{proof}
  The Stirling number $\stirling{n}{k}$ of the second kind gives a count of the number of ways of splitting a set of $n$ labeled objects into $k$ unlabeled subsets.
  In the case where $r = n$, then $\stirling{n}{r} = 1$  
  As $n \geq 1$, it is clear that for these choices of $n$ and $r$, $\stirling{n}{r} \leq n^{n-1}$.
  By the restriction $1 \leq r \leq n$, when $n=1$, then $n=r$ giving that $\stirling{n}{r} = 1$.  As such, the only remaining cases to consider are when $n \geq 2$ and $1 \leq r \leq n-1$, the cases where Lemma \ref{lem:StirlingBound_nr} applies.
  
  When $n \geq 2$ and $1 \leq r \leq n-1$, then
  \begin{align*}
\stirling{n}{r} &\leq \frac 1 2 {n \choose r}r^{n-r} = \frac 1 2 \frac {n!}{r!(n-r)!} r^{n-r}
      \leq \frac 1 2 n^{r}r^{n-r-1} < \frac 1 2 n^r n^{n-r-1}
       = \frac 1 2 n^{n-1} \ ,
  \end{align*}
  which is slightly stronger than the desired upper bound.
\end{proof}

\subsection{Values of Higher Order Statistics}\label{sec:GaussMoments} \label{app:HOS}

In this appendix, we gather together some of the explicit values for higher order
statistics of the Poisson and Normal distributions required for the analysis
of our reduction from learning a Gaussian Mixture Model to learning an ICA model
from samples.

\begin{lemma}[Cumulants of the Poisson distribution] \label{lem:PoissonCumulants}
  Let $X \sim \Poisson(\lambda) $.  
  Then, $\cum_{\ell}(X) = \lambda$ for every positive integer $\ell$.
\end{lemma}
\begin{proof}
  The moment generating function of the Poisson distribution is given by $M(t) = \exp(\lambda (e^{t} - 1))$.
  The cumulant generating function is thus $g(t) = \log(M(t)) = \lambda(e^{t} - 1)$.
  The $\ell$\textsuperscript{th} derivative $(\ell \geq 1)$ is given by $g^{(\ell)}(t) = \lambda e^{t}$.
  
  By definition, $\cum_{\ell}(X) = g^{(\ell)}(0) = \lambda$.
\end{proof}

\begin{lemma}[Absolute moments of the Gaussian distribution] \label{lem:GaussianMoments}
For a Gaussian random variable $\eta \sim N(0, \sigma^2)$\anonnote{xxx}, the absolute moments of $\eta$ are given by:
\[
\E{\abs{\eta}^\ell} =  
\begin{cases}
    	\sigma^\ell \frac{\ell!}{2^{\frac \ell 2}(\frac \ell 2)!} & \text{if $\ell$ is even} \\
    	\sigma^\ell 2^{\frac \ell 2}(\frac{\ell-1}2)! \frac 1 {\sqrt \pi } & \text{if $\ell$ is odd}.
\end{cases}
\]
\end{lemma}
The case that $\ell$ is even in Lemma \ref{lem:GaussianMoments} is well known, and can be found for instance in \cite[Section 3.4]{Kendall94}. For general $\ell$, it is known (see \cite{Winkelbauer2012}) that 
  %\lnote{absolute value missing?}
  \[
    \E{\abs{\eta}^{\ell}} = \sigma^\ell 2^{\frac \ell 2} \Gamma\left(\frac {\ell+1} 2 \right) \frac 1 {\sqrt \pi }  \ .
  \]
  When $\ell$ is odd, $\frac{\ell + 1} 2$ is an integer, allowing the Gamma function to simplify to a factorial:  $\Gamma\left(\frac {\ell + 1} 2\right) = \left(\frac{\ell - 1}{2}\right)!$.
  This gives the case where $\ell$ is odd in Lemma \ref{lem:GaussianMoments}.

\subsection{Total Variation Distance} \label{app:total-variation}

Total variation is a type of statistical distance metric between probability distributions.
In words, the total variation between two measures is the largest difference between the measures on a single event.
Clearly, this distance is bounded above by 1.

For probability measures $F$ and $G$ on a sample space $\Omega$ with sigma-algebra $\Sigma$, the total variation is denoted and defined as:
$$\d_{TV}(F,G) := \sup_{A \in \Sigma} |F(A) - G(A)|. $$

Equivalently, when $F$ and $G$ are distribution functions having densities $f$ and $g$, respectively, $$ \d_{TV}(F,G) = \frac{1}{2} \int_{\Omega} |f - g| d\mu $$
where $\mu$ is an arbitrary positive measure for which $F$ and $G$ are absolutely continuous.

More specifically, when $F$ and $G$ are discrete distributions with known densities, we can write $$ \d_{TV}(F,G) = \frac{1}{2} \sum_{k=0}^{\infty} |f(k) - g(k)| $$
where we choose $\mu$ that simply assigns unit measure to each atom of $\Omega$ (in this case, absolute continuity is trivial since $\mu(A) = 0$ only when $A$ is empty and thus $F(A)$ must also be 0). For more discussion, one can see Definition 15.3 in \cite{nielsen1997introduction} and Sect. 11.6 in \cite{royden1988real}.

%\include{ch2.problem}   % describe the problem statement
%\include{ch3.implem}    % what is your solution
%\include{ch4.end}       % conclusion stuff

%
% If you have appendices in your dissertation, you will need the
% following, else keep it commented. The following appendices are in
% files called ``app1.tex'', and ``app2.tex'', and they
% look just like any chapter.
%

\appendix
%\include{app1}
%\include{app2}

%
% The all important bibliography file at the end of your document!! Use
% the bibstyle you (your department) like in the \bibliographystyle{}
% statement and list the name of your bibliography database file in
% the \bibliography{} statement.  In this example, ``bibfile.bib'' is
% the name of the database.  See the LaTeX manual appendix B for details
% about the bibliography database and BibTeX.
%

\bibliographystyle{plain}
\bibliography{IEEEabrv,ICA_bibliography,GMM_bibliography,proposal}

\end{document}